\date{\today\vspace{-1em}}
\author{}
\date{\today}
\newcommand{\iid}{\text{i.i.d.}\ }
\newcommand{\hengyu}[1]{{\color{blue} $[$\text{Hengyu: #1}$]$}}
\newtheoremstyle{myexample}
    {\topsep}           
    {\topsep}                
    {\rm\small }                
    {}                  
    {\bf }                
    {.}                          
    {.5em}                  
    {}
\newtheoremstyle{myremark}
    {\topsep}             
    {\topsep}                    
    {\rm}                 
    {}            
    {\bf}                
    {.}                          
    {.5em}        
    {}
\newtheorem{claim}{Claim}[section]
\newtheorem{lemma}[claim]{Lemma}
\newtheorem{assumption}{Assumption}
\newtheorem{theorem}{Theorem}
\newtheorem{proposition}[claim]{Proposition}
\newtheorem{corollary}[claim]{Corollary}
\newtheorem{definition}[claim]{Definition}
\theoremstyle{myremark}
\theoremstyle{myexample}
\def\<{\langle}
\def\>{\rangle}
\def\bI{{\boldsymbol I}}
\def\sT{{\sf T}}
\def\bTheta{{\boldsymbol \Theta}}
\def\bQ{{\boldsymbol Q}}
\def\normal{{\sf N}}
\def\cL{{\cal L}}
\def\bb{{\boldsymbol b}}
\def\cF{{\cal F}}
\def\cO{{\mathcal{O}}}
\def\bA{{\boldsymbol A}}
\def\bC{{\boldsymbol C}}
\def\bH{{\boldsymbol H}}
\def\bI{{\boldsymbol I}}
\def\bW{{\boldsymbol W}}
\def\bU{{\boldsymbol U}}
\def\bV{{\boldsymbol V}}
\def\bM{{\boldsymbol M}}
\def\bD{{\boldsymbol D}}
\def\bT{{\boldsymbol T}}
\def\be{{\boldsymbol e}}
\def\bu{{\boldsymbol u}}
\def\bx{{\boldsymbol x}}
\def\by{{\boldsymbol y}}
\def\bz{{\boldsymbol z}}
\def\bSigma{{\boldsymbol \Sigma}}
\def\bv{{\boldsymbol v}}
\def\bw{{\boldsymbol w}}
\def\bf{{\boldsymbol f}}
\def\br{{\boldsymbol r}}
\def\bh{{\boldsymbol h}}
\def\E{{\mathbb E}}
\def\EE{{\mathbb E}}
\def\Treg[#1]{T^{{\rm reg},#1}}
\def\GW[#1]{{\rm GW}(#1)}
\def\MGW[#1]{{\rm MGW}(#1)}
\def\dim{{\rm dim}}
\newcommand{\bone}{\boldsymbol{1}}
\newcommand{\opnorm}[1]{{\left\vert\kern-0.25ex\left\vert\kern-0.25ex\left\vert #1 
\right\vert\kern-0.25ex\right\vert\kern-0.25ex\right\vert}}
\newcommand{\ba}{\boldsymbol{a}}
\newcommand{\bo}{\boldsymbol{o}}
\newcommand{\norm}[1]{\left\|{#1}\right\|} 
\newcommand{\abs}[1]{\left\vert#1\right\vert}
\title{Neural Networks Learn Generic Multi-Index Models Near Information-Theoretic Limit}
\author{
Bohan Zhang$^{\dagger*}$ \quad Zihao Wang$^{\ddagger}$\thanks{Equal contribution. The order was determined by a coin flip.} \quad Hengyu Fu$^{\diamondsuit}$ \quad Jason D. Lee$^{\diamondsuit}$\\[0.5em]
{\normalsize $^\dagger$Peking University \quad $^\ddagger$Stanford University \quad $^\diamondsuit$UC Berkeley}
}
\date{\today}
\begin{document}

\maketitle
\begin{abstract}

In deep learning, a central issue is to understand how neural networks efficiently learn high-dimensional features. To this end, we explore the gradient descent learning of a general Gaussian Multi-index model $f(\bx)=g(\bU\bx)$ with hidden subspace $\bU\in \mathbb{R}^{r\times d}$, which is the canonical setup to study representation learning. We prove that under generic non-degenerate assumptions on the link function, a standard two-layer neural network trained via layer-wise gradient descent can agnostically learn the target with $o_d(1)$ test error using $\widetilde{\mathcal{O}}(d)$ samples and $\widetilde{\cO}(d^2)$ time. The sample and time complexity both align with the information-theoretic limit up to leading order and are therefore optimal. During the first stage of gradient descent learning, the proof proceeds via showing that the inner weights can perform a power-iteration process. This process implicitly mimics a spectral start for the whole span of the hidden subspace and eventually eliminates finite-sample noise and recovers this span. It surprisingly indicates that optimal results can only be achieved if the first layer is trained for more than $\cO(1)$ steps. This work demonstrates the ability of neural networks to effectively learn hierarchical functions with respect to both sample and time efficiency.

\end{abstract}

\setcounter{tocdepth}{1}
\tableofcontents
\section{Introduction}



Modern large language models powered by deep neural networks \cite{alex2012,he2016deep,vaswani2023attentionneed} have shown strong empirical success across many domains. It is believed that much of the effectiveness of the pretraining–finetuning paradigm comes from their ability to extract useful representations from data. In practice, neural networks often first capture important hidden features, which helps them to learn complex tasks more efficiently. This adaptability offers a clear performance advantage over conventional fixed feature methods, such as kernel techniques \cite{wei2020regularization,allenzhu2020resnet,bai2020linearization}.



Despite these empirical successes, the theoretical understanding of why neural networks can efficiently and adaptively learn important features from data remains largely incomplete. Significant effort has been devoted in this direction, with a series of works \cite{damian2022neuralnetworkslearnrepresentations,ba2022highdimensionalasymptoticsfeaturelearning,Berthier_2024,lee2024neuralnetworklearnslowdimensional,montanari2025dynamicaldecouplinggeneralizationoverfitting,montanari2026phasetransitionsfeaturelearning} investigating how neural networks can uncover hidden features and subsequently learn single-index and multi-index models, where the labels depend on the $d$-dimensional data inputs only through their projection onto a one-dimensional or low-dimensional subspace, respectively. 
The study of learning single-index models via neural networks is relatively well developed. Notably, \cite{lee2024neuralnetworklearnslowdimensional} show that neural networks can successfully learn single-index models with $\widetilde\cO(d)$ sample complexity, which is optimal up to leading order, while \cite{montanari2025dynamicaldecouplinggeneralizationoverfitting} provides a sharp DMFT theory analyzing the learning dynamics of single-index models using a two-layer neural network when the sample size and input dimension are proportional. 

However, a single-index model involves only one hidden feature, whereas in practice we often expect multiple distinct features, motivating the study of neural networks learning of multi-index models.
On the contrary, positive results on neural networks learning of multi-index models remain rather limited. Note first that the information-theoretic limit for learning multi-index models is still $\Theta(d)$ samples, and there exist polynomial time algorithms \cite{chen2020learningpolynomialsrelevantdimensions,troiani2025fundamentalcomputationallimitsweak,defilippis2025optimalspectraltransitionshighdimensional,kovačević2025spectralestimatorsmultiindexmodels,damian2025generativeleapsharpsample} that achieve this for generic generative-exponent-two multi-index models which covers almost all canonical examples. Nevertheless, 
in general, it is not clear whether neural networks can achieve this algorithmic limit and, if so, how.
The earliest work on neural networks learning of multi-index models \cite{damian2022neuralnetworkslearnrepresentations} showed that standard two-layer neural networks can efficiently learn a large subclass of multi-index models with a sub-optimal $\widetilde\Theta(d^2)$ sample size. Subsequently, \cite{abbe2022mergedstaircase}\cite{abbe2023sgd} studied the problem of neural network learning multi-index models with a staircase structure. However, their results are far from optimal and are restricted to special classes of functions. The closest attempt to the general problem is \cite{montanari2026phasetransitionsfeaturelearning}, which shows that a two-layer network trained via gradient descent can weakly recover some of the features in a multi-index model using $\Theta(d)$ samples. Nevertheless, since the features are only weakly recovered and are not guaranteed to cover the entire span, their results do not establish that these features can be directly leveraged to successfully learn the targets in later stages of training. Therefore, the following question remains open.
\begin{center}
\emph{
Can neural networks trained via standard optimizers (e.g., gradient descent) achieve \textbf{efficient, near–information-theoretic-optimal} learning of generic multi-index models?}
\end{center}


\subsection{Main contributions}

In this work, we work on this and conclude it positively for generative-exponent-two, non-generative-staircase multi-index models. Specifically, we consider learning the target function $f(\bx) = g(\bU\bx)$ over the standard $d$-dimensional Gaussian input distribution, where $\bU \in \mathbb{R}^{r \times d}$ is the hidden subspace and $g:\mathbb{R}^r \to \mathbb{R}$ is a polynomial of constant degree. This corresponds to the standard multi-index model with a polynomial link function, which generalizes the single-index model as the special case $r=1$.

Our main result, Theorem \ref{thm::main thm informal}, shows that under generic assumptions, a two-layer neural network trained with layer-wise gradient descent can agnostically learn the target function up to $o_d(1)$ test error using $\widetilde{\cO}(d)$ samples and $\widetilde{\cO}(d^2)$ training time\footnote{We use $\widetilde{\cO}(\cdot)$ to hide those $d^{o_d(1)}$ factors.}. These sample and time complexities are information-theoretically optimal up to leading order, thereby demonstrating the adaptivity and effectiveness of neural network learning. Moreover, our analysis applies to generic activation functions and a broad subclass of loss functions, including but not limited to square loss, $\ell^1$ loss, the Huber loss, and the pseudo-Huber loss. We emphasize that the assumption on the link function in our work is rather mild and generic, as all polynomials possess a generative leap exponent of two \cite{chen2020learningpolynomialsrelevantdimensions,damian2025generativeleapsharpsample} and the loss function can be interpreted as a form of data preprocessing. 
The idea of data preprocessing has a rich literature in spectral methods and so on, but rarely considered in theoretical analysis of neural networks learning. The only substantial assumption in this work is that the target function has no generative-staircase structure. We acknowledge that this assumption is important and it is not clear whether and how to extend our results to the case with generative-staircase structure within a neural network framework. For further discussion, see Section \ref{subsec:discuss_assumption}.

Our proof begins by observing that near initialization, the gradient descent dynamics is nearly independent across different neurons, with each neuron evolving similarly to power method iterations applied to the local Hessian. The local Hessian exhibits large eigenvalues corresponding to the signal, namely the hidden subspace, as well as smaller eigenvalues arising from the noise because of the finite sample size. The key insight is that if this power iteration approximation works for too long, all features will align with the directions of the largest eigenvalues, that is, the most dominant directions, while ignoring the others, leading to failure of learning. However, if the number of training steps is too small, those noisy directions cannot be sufficiently eliminated, particularly when the sample size is nearly proportional to the input dimension. Therefore, the optimal strategy is to stop training the first layer at an appropriately chosen intermediate time. The features will still span the entire subspace while the noise is simultaneously eliminated.

We note that we use a standard initialization scheme for the neural network parameters, without using a spectral initialization or a hot start as in \cite{Mignacco_2021,chen2025learningsingleindexmodel}. The neural networks can only learn the right features from and only from gradient descent, trained for a divering number of steps.
This time scale requirement rule out all the approaches based on DMFT or state evolution, such as \cite{Agoritsas_2018,ba2022highdimensionalasymptoticsfeaturelearning,celentano2025highdimensionalasymptoticsordermethods,gerbelot2023rigorousdynamicalmeanfield,Berthier_2024} which requires either information/generative exponent one or a hot start/spectral initialization, to guarantee the learning timescale can be made constant.
We also highlight that the power iteration approximation does not work if the learning rate is too small. It is important to choose a moderately large learning rate. For a more detailed discussion, please see Section \ref{sec:proof_stekch}.







\subsection{Related works}

\paragraph{Learning multi-index models.} 
In a multi-index model, the output depends on a low-dimensional subspace of the input data. This framework has a rich history in statistics and ML, where it is used to model low-dimensional structures with high-dimensional data. A comprehensive survey of current algorithms for learning multi-index models is provided in \cite{bruna2025surveyalgorithmsmultiindexmodels}. In particular, it is known that under generic conditions, the information-theoretic sample complexity required to learn such targets is proportional to the input dimension, as rigorously established in \cite{Barbier_2019,Aubin_2019}. Nevertheless, it remains largely unclear whether and how widely used polynomial-time algorithms can efficiently recover the hidden subspace.

To start with, \cite{Barbier_2019,mondelli2018fundamentallimitsweakrecovery,lu2019phasetransitionsspectralinitialization,Luo_2019,maillard2021constructionoptimalspectralmethods} studies generic single-index models and sharply derives the learning threshold. In particular, \cite{mondelli2018fundamentallimitsweakrecovery,lu2019phasetransitionsspectralinitialization,Luo_2019,maillard2021constructionoptimalspectralmethods} show that for a broad class of single-index models, spectral methods with suitable data pre-processing can recover the hidden direction at the optimal threshold, matching the performance of an optimal approximate message passing (AMP) \cite{Barbier_2019}. In contrast, \cite{arous2021onlinestochasticgradientdescent} considers a more restrictive setting and examines the learning of single-index models when using online SGD as the optimizer without data preprocessing, and introduces the concept of an information exponent, which characterizes the learning complexity in their setup.

A substantial line of works studies how to extend these results to multi-index models such as \cite{chen2020learningpolynomialsrelevantdimensions,abbe2022mergedstaircase,abbe2023sgd,troiani2025fundamentalcomputationallimitsweak,damian2025generativeleapsharpsample,kovačević2025spectralestimatorsmultiindexmodels,defilippis2025optimalspectraltransitionshighdimensional,diakonikolas2025algorithmssqlowerbounds}. On one hand, \cite{abbe2024mergedstaircasepropertynecessarynearly,abbe2023sgdlearningneuralnetworks} introduces the notion of leap complexity, showing that hierarchical/staircase structure in multi-index models can facilitate learning. On the other hand, \cite{damian2024computationalstatisticalgapsgaussiansingleindex,damian2025generativeleapsharpsample} extend this line of work by formalizing the generative exponent for single-index models and the generative leap exponent for multi-index models, extending the previous results to higher order and incorporating the benefits of data pre-processing. In particular, we highlight \cite{damian2025generativeleapsharpsample} for its two key contributions.

\begin{itemize}
    \item For multi-index models with generative exponent $k^*$, the authors establish up to leading order a lower bound $\widetilde{\Omega}(d^{k^*/2})$ under the low-degree polynomial (LDP) framework and a matching upper bound $\widetilde{\cO}(d^{k^*/2})$ achieved by a spectral polynomial-time algorithm for the sample complexity.
    \item Furthermore, it shows that the generative exponent is at most two for almost all common multi-index models, indicating that $\cO(d)$ samples are sufficient to efficiently recover the hidden subspace.
\end{itemize}
These papers are in general not tight in terms of leading order constants. We also note that \cite{diakonikolas2025algorithmssqlowerbounds} introduces a related notion, i.e., the $m$-well-behaved multi-index model, which coincides with the generative exponent in many cases. While their sample complexity exhibits a worse dependence on $d$ compared to \cite{damian2025generativeleapsharpsample}, their work offers two advantages: it targets agnostic PAC learning rather than subspace recovery, and it provides uniform sample complexity guarantees that apply agnostically to all multi-index models within a certain given class.

Subsequently, for the precise (weak recovery) learning threshold for multi-index models, \cite{troiani2025fundamentalcomputationallimitsweak} non-rigorously derives the sharp optimal sample size required for weakly learning generative-leap-exponent-two multi-index models by performing a local analysis of Bayes AMP, extending the results of \cite{Barbier_2019,mondelli2018fundamentallimitsweakrecovery} for single-index models when the generative exponent of the target equals two. Moreover, as two follow-up works, \cite{kovačević2025spectralestimatorsmultiindexmodels,defilippis2025optimalspectraltransitionshighdimensional} study learning multi-index models using an optimal spectral method, although their sample thresholds in \cite{kovačević2025spectralestimatorsmultiindexmodels} are in general not always optimal, and none of those works can deal with the generative-staircase setup.
We emphasize that the above works depend on the special structure of multi-index models and therefore do not use general-purpose algorithms such as learning with standard neural networks trained via gradient descent.


\paragraph{Feature learning in neural networks.}
The term feature learning refers to the following phenomenon. In practice, neural networks trained with local gradient-based algorithms such as gradient descent can adapt to the low-dimensional structure of the tasks and efficiently learn useful hidden features, which in turn facilitates successful learning and enables effective transfer learning to downstream tasks.
Among the first several works in this direction, as we have mentioned previously, \cite{damian2022neuralnetworkslearnrepresentations} demonstrates that neural networks can learn generic multi-index models at a sub-optimal sample size of $n = \widetilde\Theta(d^2)$ under some additional assumptions. Similarly, \cite{abbe2024mergedstaircasepropertynecessarynearly,abbe2023sgdlearningneuralnetworks} establish a sub-optimal rate of $n = \widetilde{\cO}(d^L)$ for neural network learning, where $L$ denotes the leap complexity. \cite{ba2022highdimensionalasymptoticsfeaturelearning} studies the problem under a much more restrictive assumption of an information exponent one, and in this setting shows that neural networks can succeed with $\cO(d)$ samples.

Furthermore, as noted earlier, \cite{lee2024neuralnetworklearnslowdimensional} shows that neural networks can learn generic single-index models with generative exponent at most two with sample complexity $\widetilde{\cO}(d)$ through data re-using while \cite{montanari2025dynamicaldecouplinggeneralizationoverfitting} provides a sharp DMFT theory analyzing the learning dynamics of single-index models using a standard two-layer neural network when the sample size and input dimension are proportional. Together, these two works essentially conclude the studies of learning single-index models with neural networks. Next, \cite{dandi2024benefitsreusingbatchesgradient} demonstrates that neural networks can learn multi-index models with generative leap exponent one using $n = \cO(d)$ samples in constant time, leveraging DMFT theory. Finally, as noted previously, \cite{arnaboldi2025repetitaiuvantdatarepetition} provides a loose result, showing that neural networks with $\cO(d \log^2 d)$ samples can recover the hidden subspace of multi-index models but in a weak recovery sense when the generative leap exponent of the target equals two. Besides linear features, we note that there is also a substantial line of work on learning nonlinear features with deep neural networks, including \cite{nichani2023provable,ren2023depth,wang2023learninghierarchicalpolynomialsthreelayer,fu2024learninghierarchicalpolynomialsmultiple}.

We are aware of and highlight this recent concurrent work \cite{montanari2026phasetransitionsfeaturelearning}. Our works differ in the following aspects. \cite{montanari2026phasetransitionsfeaturelearning} studies learning a multi-index model using a two-layer network trained with standard gradient descent instead of modified gradient-based algorithms. Their arguments are performed in the proportional regime, that is, $n/d \rightarrow \delta\in (0,+\infty)$, leveraging toolboxes from discrete time DMFT and random matrix theory. Besides, their results are sharp in constants, and they derive the sharp phase transition threshold at which the trained neural network can start to weakly recover the first hard direction (with generative exponent two). In contrast, \cite{montanari2026phasetransitionsfeaturelearning} does not establish full subspace recovery and therefore cannot provide an end-to-end analysis of the problem. Instead our work provides an end-to-end learning guarantee via a somehow different mechanism, albeit at the cost of a slightly worse sample complexity and the use of a modified layer-wise training scheme.

\subsection{Notations}

We shall use $X \lesssim Y$ to denote $X\leq CY$ for some absolute positive constant $C$ and $X\gtrsim Y$ is defined analogously.
We also use the standard big-O notations: $\Theta(\cdot)$, $\cO(\cdot)$ and $\Omega(\cdot)$ to only hide absolute positive constants. In addition, we use $\widetilde{\cO}$, $\widetilde{\Theta}$ and $\widetilde{\Omega}$ to hide all the $d^{o_d(1)}$ terms throughout the paper. Let $[k]=\{1,2,\dots,k\}$ for $k\in \mathbb{N}^{+}$ and $[a;b]=\{a,\dots,b\}$ for $a,b\in \mathbb{N}^+$.
For a vector $\bv$, denote by $\|\bv\|_{p}:=(\sum_i |v_i|^p)^{1/p}$ the $\ell^p$ norm. When $p=2$, we omit the subscript for simplicity.
For a matrix $\bA$, let $\|\bA\|_{\text{op}}$ and $\|\bA\|_F$ be the spectral norm and Frobenius norm, respectively.
We use $\lambda_{\operatorname{max}}(\cdot)$ and $\lambda_{\operatorname{min}}(\cdot)$ to denote the maximal and the minimal eigenvalue of a real symmetric matrix.

\section{Problem setup}
\label{sec:setup}
Our purpose is to learn the target $f^*:\mathbb{R}^d \rightarrow \mathbb{R}$, where $\mathbb{R}^d$ is the input domain equipped with the standard Gaussian distribution $\normal(\boldsymbol{0},\bI_d)$.
Our target follows a multi-index model $f^*(\bx)=g^*(\bU\bx)$ with hidden directions $\bU\in \mathbb{R}^{r\times d}$ and the link function $g^*:\mathbb{R}^r\rightarrow \mathbb{R}$. We assume $\bU$ has orthogonal unit rows without loss of generality. 
\begin{assumption}
\label{assumption_link_first}
$g^*(\cdot)$ is a polynomial of degree $p$. Furthermore, $\mathbb{E}_{\bz\sim \normal(\boldsymbol{0},\bI_r)}[g^{*2}(\bz)]\leq C$ where $C$ is an absolute constant.
\end{assumption}
We treat $r,p$ as absolute constants throughout the paper.
Next, we consider the model as a standard two-layer neural network
 
\begin{equation}
\label{equa_network}
f_{\bTheta}(\bx) = \sum_{j=1}^{m} a_j  \sigma(\bw_j^\sT \bx+b_j)
\end{equation}
Here we use $\bTheta:=\{a_j,b_j,\bw_j\}_{j=1}^m$ to denote all the parameters. We make the following assumption on the activation function.
\begin{assumption}\label{assumption_activation}
\( \sigma(\cdot) \in C^3(\mathbb{R}) \) is a smooth activation function satisfying \( \sigma'(0) = 0\), $ \sigma''(0) = 1$ and $\abs{\sigma^{(3)}(\cdot)}\leq M$ where $M$ is an absolute constant. 
\end{assumption}

Let \( \mathcal{
D}:=\{(\bx_i, y_i)\}_{i=1}^n \) be one dataset, where \( \bx_i \in \mathbb{R}^d \) are drawn i.i.d. from distribution $\normal(\boldsymbol{0}, \bI_d)$ and $y_i=f^*(\bx_i)$. The empirical and population loss is given respectively by
\begin{equation}
\widehat{\mathcal{L}}_{\mathcal{D}}(\bTheta) = \frac{1}{n} \sum_{i=1}^{n} \ell(f_\bTheta(\bx_i), y_i),\quad \quad \mathcal{L}_{\mathcal{D}}(\bTheta)=\mathbb{E}\big[\ell(f_{\bTheta}(\bx),y)\big]
\end{equation}
where \(\ell=\ell(t,y): \mathbb{R} \times \mathbb{R} \rightarrow \mathbb{R}\) is a general scalar loss function. We make the following assumptions on the loss function.
\begin{assumption}
\label{assumption_loss}
\(\ell(\cdot,\cdot) \in C^2(\mathbb{R}^2)\), $\ell(t,y)_{|t=y}=0$ and that it is convex in its first argument. Furthermore, we assume
\begin{equation}
\abs{\frac{\partial}{\partial t}\ell(t,y)} \leq \mathrm{L} \quad\text{and}\quad
\abs{ \frac{\partial^2}{\partial ^2 t} \ell(t, y)} \leq \mathrm{L} 
\end{equation}    
for all $t,y\in \mathbb{R}$ where $\mathrm{L}$ is a positive absolute constant.
\end{assumption}

We also consider a slightly relaxed version of Assumption~\ref{assumption_loss} for some specific activation functions. This assumption includes square loss as a special case.
\setcounter{assumption}{2}
\renewcommand{\theassumption}{3$'$}
\begin{assumption}
\label{assumption_loss_relaxed}
We weaken the assumption in Assumption~\ref{assumption_loss} by allowing the first derivative to grow linearly with the input
\begin{equation}
\abs{\frac{\partial}{\partial t}\ell(t,y)} \leq \mathrm{L}(1+\abs{t}+\abs{y}) \quad\text{for all } t,y\in \mathbb{R}.
\end{equation}

\end{assumption}
\renewcommand{\theassumption}{\arabic{assumption}}
\setcounter{assumption}{3}

Denote \( \ell'(0, y_i):=\frac{\partial}{\partial t}\ell(t,y_i)_{|t=0}\) as \(\ell_i\).
\begin{assumption}
\label{assumption_ell}
We assume $\mathbb{E}[\ell_i]=0$. 
\end{assumption}

We define 
\begin{equation}
\widehat{\bSigma}_\ell := \frac{1}{n} \sum_{i=1}^{n} \ell_i \cdot \bx_i \bx_i^\sT
\end{equation}
and also its population counterpart
$
\bSigma_{\ell} := \mathbb{E}[\ell'(0,y)\cdot \bx \bx^\sT] $.
Recall that $\ell_i$ is a function of $y_i$ and that $y_i$ is itself a function of $\bU\bx_i$. Therefore, $\ell_i$ is a function of $\bU\bx_i$ and we conclude that
$\text{rank}(\bSigma_{\ell}) \leq r$ via invoking Stein's Lemma and performing integration by parts. 
\begin{assumption}
\label{assumption_cov}
    The rank is full, i.e., $\text{rank}(\bSigma_\ell)=r$. 
    
    Furthermore, denote the non-zero eigenvalues as $\lambda_r \leq \dots\leq \lambda_1$. We will assume $\lambda_r>0$ and set $\lambda_1=1$ without loss of generality.
\end{assumption}
In our setting, we assume $\lambda_r > 0$ to ensure that $\bSigma_{\ell}$ is not degenerate in its support and provides a meaningful directional signal during optimization. We will denote and track the condition number $\kappa:=\frac{\lambda_1}{\lambda_r}$ but will still assume $\kappa=\Theta_d(1)$.

\subsection{Discussion of assumptions}\label{subsec:discuss_assumption}
We note that the first four assumptions are quite standard in theory literature.
Assumption \ref{assumption_link_first} is fairly generic. In Assumption \ref{assumption_activation}, the key requirement are $\sigma'(0)=0$, ensuring that the activation behaves locally as a quadratic. The smoothness assumptions can be relaxed with a more refined analysis. For Assumption \ref{assumption_loss} or \ref{assumption_loss_relaxed}, convexity in the first argument of the loss is essential to guarantee that the second layer weights converge to its minimizer during the second stage of training. We note that this condition holds for a large subclass of commonly used losses such as square loss, $\ell^1$ loss and Huber loss. The bounds on the derivatives are not critical and can be replaced by a polynomial growth condition under a more careful analysis, since most of our data remain in a bounded regime and the outliers are rare. Finally, Assumption \ref{assumption_ell} is also standard and could be removed with either a more refined argument or an additional data pre-processing step.

The central assumption in our analysis is Assumption \ref{assumption_cov}. Informally, this assumption requires that the second-order information of the data remain non-degenerate after applying the pre-processing map $y \mapsto \ell'(0,y)$. We admit that for one fixed loss function $\ell$, it is straightforward to construct a target task such that the second-order information of $\ell'(0,y)$ becomes degenerate. In practice, however, the target task is typically fixed, and one has the flexibility to choose among different loss functions to learn it. Recent studies on generative exponents and learning thresholds \cite{troiani2025fundamentalcomputationallimitsweak,damian2024computationalstatisticalgapsgaussiansingleindex,kovačević2025spectralestimatorsmultiindexmodels,damian2025generativeleapsharpsample,defilippis2025optimalspectraltransitionshighdimensional} show that for a large subclass of common tasks, including all polynomial targets with non-generative-staircase structure, there exists a pre-processing function that ensures the second-order information is non-degenerate, and importantly, such a function need not be highly specialized. Consequently, we expect that $\ell'(0,y)$ exhibits non-degenerate second-order information for at least some of those commonly used loss functions in practice, and at least this already includes all targets in \cite{damian2022neuralnetworkslearnrepresentations}. We will expand on this in Section \ref{technical_bg_main_text}.

\subsection{Training algorithm}
We introduce our layer-wise gradient descent algorithm.

At initialization, we set $b_j$ as zero and impose a symmetric structure on $a_j$ and $\bw_j$. Specifically, let the width $m$ be even, and for all $j \in [m]$ we set \begin{equation}
a_j = -a_{m-j}, \quad \bw_j^{(0)} = \bw_{m-j}^{(0)}.
\end{equation}
For the first half of indices ,  let \( a_j \) be i.i.d. Rademacher variables:
\( a_j \overset{\text{i.i.d.}}{\sim} \text{Uniform}\{-1, +1\}. \) For the first half of the features $\bw_j$, we initialize them independently and uniformly on the sphere of radius $\epsilon_0$.

The network is trained via layer-wise gradient descent with sample splitting, a common approach in theoretical analysis such as \cite{wang2023learninghierarchicalpolynomialsthreelayer,nichani2025provableguaranteesnonlinearfeature}. To start with, we generate two independent datasets $\mathcal{D}_i$ where $i=1,2$ and each consisting of $n$ independent samples.
The training algorithm proceeds as follows. we first train $\bW$ via gradient descent for $\mathrm{T}_1$ steps on the empirical loss $\widehat{\mathcal{L}}_{\mathcal{D}_1}(\bTheta)$, then reinitialize $\bb$ and train $\ba$ via gradient descent for $\mathrm{T}_2$ steps on the empirical loss $\widehat{\mathcal{L}}_{\mathcal{D}_2}(\bTheta)$. The pseudocode of this procedure is given below.
\begin{algorithm}[H]
\caption{Layer-wise Gradient Descent}
\label{alg1}
\begin{algorithmic}[1]
\REQUIRE Learning rates $\eta_1, \eta_2$, weight decay $\beta_1, \beta_2$, initialization level $\epsilon_0$, number of steps $\mathrm{T}_1,\mathrm{T}_2$.
\STATE Initialize $\ba,\bb,\bW$.
\STATE \textbf{Train} $\bW$ \textbf{on dataset} $\mathcal{D}_1$
\FOR{$t = 1$ \TO $\mathrm{T}_1-1$}
    \STATE  $\bW^{(t)} \leftarrow \bW^{(t-1)} - \eta_1 [\nabla_\bW \widehat{\mathcal{L}}_{\mathcal{D}_1}(\bTheta^{(t-1)}) + \beta_1 \bW^{(t-1)}]$
    \ENDFOR
    \STATE  $\bW^{(\mathrm{T}_1)} \leftarrow \bW^{(\mathrm{T}_1-1)} - \eta_1\epsilon_0^{-1}[\nabla_\bW \widehat{\mathcal{L}}_{\mathcal{D}_1}(\bTheta^{(\mathrm{T}_1-1)}) + \beta_1\epsilon_0 \bW^{(\mathrm{T}_1-1)}]$
\STATE \textbf{Re-initialize}
\STATE $b_i^{(\mathrm{T}_1)} \sim_{iid} \text{Unif}([-3, 3]), \, i \in [m]$
\STATE $\ba^{(\mathrm{T}_1)} \gets \ba^{(0)}$
\STATE $\bTheta^{(\mathrm{T}_1)} \gets (\ba^{(\mathrm{T}_1)}, {\bW^{(\mathrm{T}_1)}}, \bb^{(\mathrm{T}_1)})$
\STATE \textbf{end}
\STATE \textbf{Train} $\ba$ \textbf{on dataset} $\mathcal{D}_2$
\FOR{$t = \mathrm{T}_1 + 1$ \TO $\mathrm{T}_1 + \mathrm{T}_2$}
  \STATE $\ba^{(t)} \leftarrow \ba^{(t-1)} - \eta_2 [\nabla_{\ba} \widehat{\mathcal{L}}_{\mathcal{D}_2}(\bTheta^{(t-1)}) + \beta_2 \ba^{(t-1)}]$
 \ENDFOR
 \STATE Note $\mathrm{T}=\mathrm{T}_1+\mathrm{T}_2$
\RETURN Prediction function $f_{\bTheta^{(\mathrm{T})}}(\cdot): \bx \to \langle \ba^{(\mathrm{T})}, \sigma(\bW^{(\mathrm{T}_1)} \bx + \bb^{(\mathrm{T}_1)}) \rangle$
\end{algorithmic}
\end{algorithm}

\section{Main results}
\label{sec:mainresult}

Our goal is to show that the network defined in \eqref{equa_network} trained via Algorithm \ref{alg1} can efficiently learn multi-index models near the information-theoretic limit.

We use $D$ to denote an arbitrarily large absolute constant and, for technical convenience, assume $n,m \lesssim d^{D/4}$ throughout the paper. Before stating the main results, we further assume below that finite linear combinations of the activation can approximate any one-dimensional monomial to arbitrary precision.

\begin{assumption}
\label{def_activation}
We assume the activation function has the monomial approximation property with exponent $\beta \geq 0$. That is, for every integer $k \geq 0$ and every $\rho > 0$ there exists a bounded weight function $v_k^{(\rho)} : \{\pm 1\} \times [-3,3] \rightarrow \mathbb{R}$ with
\begin{equation}
\|v_k^{(\rho)}\|_{\infty} \leq V_{k,\rho} \lesssim_{\sigma,k}\rho^{-\beta}
\end{equation}
where $\lesssim_{\sigma,k}$ hides absolute constants depending solely on $k$ and activations such that
\begin{equation}
\sup_{|z| \leq 1} \left| \mathbb{E}_{a,b} \big[ v_k^{(\rho)}(a,b) \, \sigma(az + b) \big] - z^k \right| \leq \rho
\end{equation}
where $a$ is a Rademacher RV i.e. $\mathbb{P}(a = +1) = \mathbb{P}(a = -1) = \frac{1}{2}$ and $b \sim \text{Unif}[-3,3]$ independent from $a$.
\end{assumption}
We treat $\beta$ as an absolute constant throughout. We remark that this assumption is rather mild, which only requires that a two-layer network with one-dimensional input can approximate any monomial to arbitrary precision using an expectation form. Moreover, we conjecture that for most activations one can take $\beta=0$ and further set $\rho=0$, as suggested by \cite{Barron,e2021barronspaceflowinducedfunction}. For concrete examples, \cite{wang2023learninghierarchicalpolynomialsthreelayer} shows that we have $\beta=0$ for ReLU activations in their Appendix D.6 and we also show in our Appendix \ref{sec:Approximation} that for the following activation
\begin{equation}
\sigma(t)=
\begin{cases}
2|t|-1 & |t|\ge 1\\
t^2 & |t|<1
\end{cases}
\end{equation} we provably have $\beta=0$ and thus can set $\rho=0$ as well.

We also require a small initialization scale, which is our last assumption.
\begin{assumption}
\label{ass:epsilon0}
    We set the initialization level
    $\epsilon_0 \leq \frac{1}{C_{\epsilon}m n^{1/2}d^{7/2}}\left(\frac{4}{5}\right)^{\mathrm{T}_1}$, where $C_{\epsilon}$ is a universal constant.
\end{assumption}

We note that $\epsilon_0$ is only required to be polynomially small provided $\mathrm{T}_1 = o(\log d)$.
With our assumptions in place, we are ready to state our main theorem.

\begin{theorem}\label{thm::main thm informal}
Under assumptions \ref{assumption_link_first}, \ref{assumption_activation}, \ref{assumption_loss}, \ref{assumption_ell}, \ref{assumption_cov}, \ref{def_activation}, \ref{ass:epsilon0}, or under the additional assumption that $\beta=0$ and assumptions \ref{assumption_link_first}, \ref{assumption_activation}, \ref{assumption_loss_relaxed}, \ref{assumption_ell}, \ref{assumption_cov}, \ref{def_activation}, \ref{ass:epsilon0}, we have the following result:
For any first stage training time $\mathrm{T}_1 =o(\log d)$ and any sample size \begin{equation}n \gtrsim d\log d\mathrm{T}_1^2\kappa^{2\mathrm{T}_1}+d^{1+\frac{1}{\mathrm{T}_1}}\kappa^2\end{equation}there exists a proper choice of the hyperparameters and second stage training time $\mathrm{T}_2$ such that with probability at least $1-\cO(d^{-D/2})$, the Algorithm \ref{alg1} returns
    \begin{equation}
\mathcal{L}\big(\bTheta^{(\mathrm{T})}\big) \lesssim \kappa^{4p\mathrm{T}_1}(\log d)^{4p+1}\left(\frac{1}{\sqrt{m}}+\frac{1}{\sqrt{n}}\right)^{\frac{1}{\beta+1}}
    \end{equation}
\end{theorem}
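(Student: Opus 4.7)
The plan is to split the argument by stage and to treat the first stage as a noisy power iteration on $\widehat{\bSigma}_\ell$: for an intermediate number of steps $\mathrm{T}_1$, the inner weights should recover the full $r$-dimensional hidden subspace without being drowned by finite-sample noise, after which the second stage reduces to convex fitting in a well-chosen feature basis.

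\textbf{Stage 1 (feature learning).} The starting point is that the symmetric initialization forces $f_{\bTheta^{(0)}}\equiv 0$, while Assumption \ref{assumption_activation} makes the local expansion $\sigma'(z)=z+O(z^2)$ exact to leading order. Taylor-expanding the gradient at $t=0$ yields
\begin{equation}
\nabla_{\bw_j}\widehat{\mathcal{L}}_{\mathcal{D}_1}(\bTheta^{(0)}) = a_j \widehat{\bSigma}_\ell\, \bw_j^{(0)} + \text{cubic error},
\end{equation}
so up to weight decay each update on $\bw_j$ is one step of power iteration on $-a_j\widehat{\bSigma}_\ell$. I would close an induction over $t=0,\ldots,\mathrm{T}_1$ on a high-probability event controlling $\|\bx_i\|$, $|\ell_i|$, and $\|\widehat{\bSigma}_\ell-\bSigma_\ell\|_{\op}$, showing that $\|\bw_j^{(t)}\|$ stays small enough that the cubic remainder is negligible and $f_{\bTheta^{(t)}}$ stays small enough that $\ell'(f_{\bTheta^{(t)}}(\bx_i),y_i)$ remains close to $\ell_i$. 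This allows replacing the true trajectory by the idealized $(-\eta_1 a_j\widehat{\bSigma}_\ell)^{\mathrm{T}_1}\bw_j^{(0)}$.

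\textbf{Signal vs.\ noise and the sample threshold.} Decomposing $\widehat{\bSigma}_\ell=\bSigma_\ell+\bE$ with $\|\bE\|_{\op}\lesssim \widetilde{\cO}(\sqrt{d/n})$ by sub-exponential matrix concentration, and recalling that $\bSigma_\ell$ is rank-$r$ with eigenvalues in $[\lambda_r,1]$ supported on $\spn(\bU^\sT)$, the component of $\bw_j^{(\mathrm{T}_1)}$ inside $\spn(\bU^\sT)$ scales like $\lambda_r^{\mathrm{T}_1}\|\Proj_\bU \bw_j^{(0)}\|\asymp\lambda_r^{\mathrm{T}_1}\epsilon_0\sqrt{r/d}$, whereas the orthogonal noise component is bounded by $\|\bE\|_{\op}^{\mathrm{T}_1}\|\bw_j^{(0)}\|\lesssim (d/n)^{\mathrm{T}_1/2}\epsilon_0$. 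Forcing the former to dominate yields $n\gtrsim d^{1+1/\mathrm{T}_1}\kappa^2$, exactly the second term of the sample complexity; the first term $d\log d\cdot\mathrm{T}_1^2\kappa^{2\mathrm{T}_1}$ comes from requiring that the accumulated nonlinear and loss-drift errors, amplified by at most $\lambda_1^{\mathrm{T}_1}$ before being compared against $\lambda_r^{\mathrm{T}_1}$ in the signal subspace (hence the $\kappa^{2\mathrm{T}_1}$), remain smaller than the signal. Because the $m$ initial projections $\{\Proj_\bU \bw_j^{(0)}\}$ are i.i.d.\ uniform, for $m\gtrsim r\log d$ they span $\spn(\bU^\sT)$ with a well-conditioned Gram matrix, so after the $\epsilon_0^{-1}$-rescaled final step the frozen features $\{\bw_j^{(\mathrm{T}_1)}\}$ approximately form a rich basis of the true hidden subspace.

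\textbf{Stage 2 (convex fitting).} With biases re-initialized uniformly on $[-3,3]$ and inner weights frozen, outer-layer training is GD on a loss that is strongly convex in $\ba$ by Assumption \ref{assumption_loss} together with the ridge penalty $\beta_2\|\ba\|^2$. Since Stage 1 delivers features spanning $\spn(\bU^\sT)$, Assumption \ref{def_activation} produces an outer weight $\ba^*$ with $\|\ba^*\|_\infty\lesssim \rho^{-\beta}$ such that $\sum_j a_j^*\sigma(\bw_j^{(\mathrm{T}_1)\sT}\bx+b_j^{(\mathrm{T}_1)})$ approximates the degree-$p$ polynomial $g^*(\bU\bx)$ within $O(\rho(\log d)^p)$ uniformly on the typical event $\|\bU\bx\|\lesssim\sqrt{\log d}$. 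Standard strongly-convex GD drives the iterate to the regularized empirical minimizer in $\mathrm{T}_2=\widetilde{\cO}(1)$ steps, and a Rademacher-complexity bound on the finite-dimensional linear hypothesis class $\{\ba\mapsto\langle \ba,\sigma(\bW^{(\mathrm{T}_1)}\bx+\bb^{(\mathrm{T}_1)})\rangle\}$ turns this into a population guarantee; balancing approximation against generalization by taking $\rho\asymp(1/\sqrt{m}+1/\sqrt{n})^{1/(\beta+1)}$ gives the stated rate, with the $\kappa^{4p\mathrm{T}_1}$ prefactor arising because the frozen features match $\spn(\bU^\sT)$ only up to the signal-to-noise ratio secured in Stage 1, and this misalignment enters a degree-$p$ polynomial through the $2p$-th moments.

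\textbf{Main obstacle.} The technically hardest step is closing the inductive bootstrap in Stage 1 across $\mathrm{T}_1=\omega(1)$ iterations: the cubic remainder, the drift of $\ell'(f_{\bTheta^{(t)}},y)$ away from $\ell_i$, and the deviation $\widehat{\bSigma}_\ell-\bSigma_\ell$ each inject a per-step error that, under a naive analysis, is amplified by $\lambda_1$ at every subsequent iteration, losing a factor $\lambda_1^{\mathrm{T}_1}$ and spoiling the sample bound. Preserving the sharp $d^{1+1/\mathrm{T}_1}$ scaling requires tracking these errors separately inside and outside $\spn(\bU^\sT)$, amplifying them only by $\lambda_r$ within the signal subspace, and exploiting the exponentially small initialization mandated by Assumption \ref{ass:epsilon0} to absorb geometric prefactors; the restriction $\mathrm{T}_1=o(\log d)$ is exactly what lets this bootstrap close.
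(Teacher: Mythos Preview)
Your overall architecture---power iteration in Stage~1, convex fitting plus Rademacher complexity in Stage~2, balancing $\rho$ at the end---matches the paper. But there is a genuine gap in how you propose to control the finite-sample perturbation over $\mathrm{T}_1$ steps, and it shows up in your misattribution of the $d\log d\,\mathrm{T}_1^2\kappa^{2\mathrm{T}_1}$ term. You attribute that term to accumulated \emph{nonlinear and loss-drift} errors amplified by $\lambda_1^{\mathrm{T}_1}/\lambda_r^{\mathrm{T}_1}$; in fact those errors are absorbed entirely by the exponentially small $\epsilon_0$ of Assumption~\ref{ass:epsilon0} (this is Lemma~\ref{lemma:Q_t_not_important} in the paper) and contribute nothing to the sample threshold. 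The first term comes instead from the \emph{mixed} terms in the binomial expansion of $\widehat{\bSigma}_\ell^{\mathrm{T}_1}=(\bSigma_\ell+\bH_\ell)^{\mathrm{T}_1}$, and the mechanism that makes them small is not ``amplifying only by $\lambda_r$ inside the signal subspace'' but a structural observation: every mixed monomial contains at least one factor of $\bSigma_\ell$, which is rank~$r$, so the entire monomial lives in $S^\star$. The relevant quantity is therefore $\|\boldsymbol{\Pi}^\star\bH_\ell^k\bw_j^{(0)}\|$, and for random $\bw_j^{(0)}$ this concentrates at $\|\bH_\ell\|_{\op}^k\sqrt{\log d/d}\cdot\epsilon_0$ rather than $\|\bH_\ell\|_{\op}^k\epsilon_0$---the projection onto an $r$-dimensional subspace buys a $\sqrt{\log d/d}$ factor (Lemma~\ref{lemma:proj-Hk-w-ell}). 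Summing the $\sim\mathrm{T}_1$ mixed terms and requiring the result to be below the weakest signal $\lambda_r^{\mathrm{T}_1}/\sqrt{d}$ is what produces $n\gtrsim d\log d\,\mathrm{T}_1^2\kappa^{2\mathrm{T}_1}$. Your proposed resolution (separate in/out tracking with $\lambda_r$-amplification) does not capture this and, as stated, would not close.

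A secondary gap in Stage~2: ``the features span $\spn(\bU^\sT)$ with a well-conditioned Gram matrix'' is not enough to approximate a degree-$p$ polynomial in $\bU\bx$ with controlled outer weights. The paper needs the stronger \emph{tensor} moment lower bound $\mathrm{Mat}\bigl(\mathbb{E}_\bw[(\boldsymbol{\Pi}^\star\bh_n(\bw))^{\otimes 2k}]\bigr)\succsim (d\kappa^{2\mathrm{T}_1})^{-k}\,\boldsymbol{\Pi}_{\mathrm{Sym}^k(S^\star)}$ (Lemma~\ref{rf1}), which lets one write $\langle\bT,\bx^{\otimes k}\rangle=\mathbb{E}_\bw[z_\bT(\bw)(\bh_n(\bw)^\sT\bx)^k]$ with $\mathbb{E}[z_\bT^2]\lesssim (d\kappa^{2\mathrm{T}_1})^k\|\bT\|_F^2$. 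This random-feature-in-$\bw$ construction, combined with the monomial approximation in $(a,b)$, is what yields $\|\ba^\star\|^2\lesssim V_{p,\epsilon}^2\kappa^{4p\mathrm{T}_1}(\log d)^{4p}/m$ and hence the $\kappa^{4p\mathrm{T}_1}$ prefactor; your ``$2p$-th moments of misalignment'' heuristic gestures at this but does not supply the construction. You also do not mention the decoupling issue: the trained $\bw_j^{(\mathrm{T}_1)}$ depend on \emph{all} initializations through $f_{\bTheta^{(t)}}$, so one must first pass to the decoupled surrogates $\widetilde{\bw}_j^{(\mathrm{T}_1)}\propto\widehat{\bSigma}_\ell^{\mathrm{T}_1}\bw_j^{(0)}$ before invoking Monte Carlo concentration over $j$ (Lemmas~\ref{lem:MC} and~\ref{lhatgap}).
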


In short, Theorem \ref{thm::main thm informal} states that we can learn the target up to the vanishing test error with the width $m=\widetilde{\Theta}(1)$ and the sample size $n\gtrsim d\log d\mathrm{T}_1^2\kappa^{2\mathrm{T}_1}+d^{1+\frac{1}{\mathrm{T}_1}}\kappa^2$.
We specify those hyperparameters later in the statement of Theorem \ref{thm:main} in Appendix \ref{sec:appendix_B}. We further remark that the sample size requirement $n\gtrsim d\log d\,\mathrm{T}_1^{2}\kappa^{2\mathrm{T}_1}+d^{\,1+1/\mathrm{T}_1}\kappa^{2}$ guarantees that the first-stage training learns the correct representations, whereas the error term $\kappa^{4p\mathrm{T}_1}(\log d)^{4p+1}\bigl(\tfrac{1}{\sqrt{m}}+\tfrac{1}{\sqrt{n}}\bigr)^{\frac{1}{\beta+1}}$ is contributed purely by the second-stage training. We note that under the first set of assumptions, the error bound can be refined to $\kappa^{2p\mathrm{T}_1}(\log d)^{2p+1}\bigl(\tfrac{1}{\sqrt{m}}+\tfrac{1}{\sqrt{n}}\bigr)^{\frac{1}{\beta+1}}$.

Via balancing two terms in our sample complexity requirement
\begin{equation}
n \gtrsim d\log d\mathrm{T}_1^2\kappa^{2\mathrm{T}_1}+d^{1+\frac{1}{\mathrm{T}_1}}\kappa^2
\end{equation}
the optimal dependence shall be derived from setting $\mathrm{T}_1=\left\lceil \sqrt{\frac{\log d}{\log \kappa}}\right\rceil$. Note that below we use $\widetilde{\cO}$, $\widetilde{\Theta}$ and $\widetilde{\Omega}$ to hide all the $d^{o_d(1)}$ terms.
\begin{corollary}\label{main_corollary}
    Consider $\mathrm{T}_1=\left\lceil \sqrt{\frac{\log d}{\log \kappa}}\right\rceil$. Under this, we further require \begin{equation}m=(\log d)^{8p(\beta+1)+1}d^{8p(\beta+1)\sqrt{\frac{\log \kappa}{\log d}}}\quad\text{and}\quad n= C\kappa^2(\log d)^2d^{1+2\sqrt{\frac{\log \kappa}{\log d}}}\end{equation} where $C$ is a large universal constant.

Then, there exists a proper choice of hyperparameters that the Algorithm \ref{alg1} returns $
\mathcal{L}(\bTheta^{(\mathrm{T})})=o_d(1)$ with $\mathrm{T}_2=\widetilde{\Theta}(1)$ and probability $1-\cO(d^{-D/2})$.
\end{corollary}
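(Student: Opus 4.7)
The plan is to invoke Theorem~\ref{thm::main thm informal} at the specified first-stage horizon $\mathrm{T}_1 = \lceil\sqrt{\log d/\log\kappa}\,\rceil$, which is exactly the value that balances the two summands of the sample-complexity lower bound, and then verify by direct algebraic substitution that (a) the prescribed $n$ satisfies the sample-complexity requirement and (b) the resulting excess-risk upper bound is $o_d(1)$. Since $\kappa$ is assumed to be $\Theta_d(1)$, every $d^{c\sqrt{\log\kappa/\log d}}$ factor is of the form $d^{o_d(1)} = \widetilde\Theta(1)$, so the verification is largely bookkeeping of polynomial-in-$d$, sub-polynomial-in-$d$, and polylogarithmic scales.

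For (a), using $\mathrm{T}_1 \le 1+\sqrt{\log d/\log\kappa}$ we obtain $\kappa^{2\mathrm{T}_1}\le \kappa^2 d^{2\sqrt{\log\kappa/\log d}}$ and $d^{1/\mathrm{T}_1}\le d^{\sqrt{\log\kappa/\log d}}$. Substituting into the two terms of Theorem~\ref{thm::main thm informal}, both $d(\log d)\mathrm{T}_1^2\kappa^{2\mathrm{T}_1}$ and $d^{1+1/\mathrm{T}_1}\kappa^2$ are bounded above by $C_0\kappa^2(\log d)^2 d^{1+2\sqrt{\log\kappa/\log d}}$ for some absolute constant $C_0$ (absorbing the $\log d/\log\kappa$ contribution of $\mathrm{T}_1^2$); hence the prescribed $n = C\kappa^2(\log d)^2 d^{1+2\sqrt{\log\kappa/\log d}}$ meets the sample-complexity hypothesis for any $C\ge 2C_0$.

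For (b), the choice of $m$ is calibrated so that $(1/\sqrt{m})^{1/(\beta+1)}$ has $d$-exponent exactly $-4p\sqrt{\log\kappa/\log d}$, cancelling the $d$-exponent of $\kappa^{4p\mathrm{T}_1}\le\kappa^{4p}d^{4p\sqrt{\log\kappa/\log d}}$; the $(\log d)^{4p+1}$ prefactor combined with the polylogarithmic contribution of $(1/\sqrt{m})^{1/(\beta+1)}$ leaves only a residual $\log d$-power, which is absorbed by the additional $d^{-1/(2(\beta+1))}$ polynomial decay coming from $(1/\sqrt{n})^{1/(\beta+1)}$. Applying the refined version of the error bound remarked after Theorem~\ref{thm::main thm informal} (with $\kappa^{2p\mathrm{T}_1}(\log d)^{2p+1}$ replacing $\kappa^{4p\mathrm{T}_1}(\log d)^{4p+1}$) further simplifies the bookkeeping and, for $p\ge 1$, yields $\mathcal{L}(\bTheta^{(\mathrm{T})}) = o_d(1)$ directly from the $(1/\sqrt{m})^{1/(\beta+1)}$ contribution alone. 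The second-stage horizon $\mathrm{T}_2=\widetilde\Theta(1)$ is compatible with Theorem~\ref{thm::main thm informal} since, once $\bW$ is frozen, the $\ba$-only objective is convex and well-conditioned.

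The main (and essentially only) obstacle is to track the three families of scales simultaneously: the $\kappa^{\cO(p\mathrm{T}_1)} = d^{\cO(p)\sqrt{\log\kappa/\log d}}$ blow-up, the polylogarithmic prefactors $(\log d)^{\cO(p)}$, and the polynomial decay $d^{-\Theta(1)}$ available from $(1/\sqrt{n})^{1/(\beta+1)}$. Once one checks that the calibration of $m$ exactly cancels the $d^{\cO(p)\sqrt{\log\kappa/\log d}}$ term and that the prescribed $n$ delivers sufficient polynomial decay, the conclusion follows with no new probabilistic argument beyond Theorem~\ref{thm::main thm informal}.
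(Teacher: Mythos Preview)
Your overall strategy—substitute the chosen $\mathrm{T}_1$, $m$, $n$ into Theorem~\ref{thm::main thm informal} and verify the hypotheses algebraically—is exactly what the paper does; part (a) is the content of Lemma~\ref{sc}, and the $\mathrm{T}_2$ claim is handled in Lemma~\ref{lem:T2}. Two points in your write-up need repair, however.

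First, in part (b) the sentence ``leaves only a residual $\log d$-power, which is absorbed by the additional $d^{-1/(2(\beta+1))}$ polynomial decay coming from $(1/\sqrt{n})^{1/(\beta+1)}$'' is a logical error. The theorem bounds the risk by a single factor $\bigl(\tfrac{1}{\sqrt m}+\tfrac{1}{\sqrt n}\bigr)^{1/(\beta+1)}$, not a sum of two such expressions; since here $1/\sqrt{m}=d^{-o_d(1)}$ while $1/\sqrt{n}=d^{-1/2-o_d(1)}$, the sum is $\asymp 1/\sqrt{m}$ and the $n$ term contributes nothing. With the stated $m$ the unrefined bound evaluates to $\kappa^{4p}(\log d)^{1-1/(2(\beta+1))}$, which is \emph{not} $o_d(1)$. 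Your subsequent appeal to the refined bound $\kappa^{2p\mathrm{T}_1}(\log d)^{2p+1}(\cdot)^{1/(\beta+1)}$ is the correct route (and the one the paper implicitly uses, cf.\ the choice $m\sim(\log d)^{4p(\beta+1)+1}$ in Lemma~\ref{lem:T2}), so just delete the absorption claim and invoke the refined bound directly.

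Second, ``the $\ba$-only objective is convex and well-conditioned'' is not enough to conclude $\mathrm{T}_2=\widetilde\Theta(1)$: the conditioning depends on the learned features through $U$, $J$, and $m$. The paper obtains $\mathrm{T}_2\gtrsim \tfrac{U^2\mathrm{L}m}{J}\log(Um)$ from Lemma~\ref{lem:stage2-ridge} and then in Lemma~\ref{lem:T2} substitutes the explicit scalings of $U$, $J$, $m$ to check this is $d^{o_d(1)}$. You should carry out (or cite) that substitution rather than assert well-conditioning abstractly.
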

We note that the overall time complexity of Algorithm~\ref{alg1} is $\Theta(n m d \mathrm{T})$. Under the setup of Corollary~\ref{main_corollary}, we have $\mathrm{T}=\widetilde{\Theta}(1)$, $m=\widetilde{\Theta}(1)$, and $n=\widetilde{\Theta}(d)$, yielding an overall time of $\widetilde{\Theta}(d^2)$. This is also optimal up to leading order, since any algorithm must spend at least $\Omega(nd)=\Omega(d^2)$ time to read the full dataset.

Compared with \cite{damian2022neuralnetworkslearnrepresentations}, we note that our results exactly and completely recover the results in \cite{damian2022neuralnetworkslearnrepresentations} when we set $\mathrm{T}_1=1$. In this regime, successful learning is guaranteed only with a larger sample size $n=\widetilde \Theta(d^2)$, and \cite{dandi2025twolayerneuralnetworkslearn} further indicates that $n=\widetilde \Theta(d^2)$ is necessary when $\mathrm{T}_1=1$. Our results improve upon \cite{damian2022neuralnetworkslearnrepresentations} and all previous works by managing to execute a careful analysis of the multi-step gradient-descent dynamics on the empirical loss over a divering time horizon. We provide the details in Section~\ref{sec:proof_stekch}.

\section{Outline of the proof}\label{sec:proof_stekch}

\subsection{Gradient descent can mimic power iterations}
To begin, we compute the gradient with respect to the feature $\bw_j$ as follows.
\begin{equation}
\nabla_{\bw_j} \widehat{\mathcal{L}}_n(\bTheta^{(t)}) 
= \frac{1}{n} \sum_{i=1}^{n} \ell'\!\left(f_{\bTheta^{(t)}}(\bx_i), y_i\right) a_j \sigma'\!\left(\bw_j^{(t)\sT}\bx_i\right)\bx_i
\end{equation}
Our initialization ensures that the network output is zero at the beginning of the training.
Therefore, within a moderate large time horizon, we expect $f_{\bTheta^{(t)}}(\bx)$ to remain near zero. 
Furthermore, recall that $\|\bw_j^{(0)}\| = o_d(1)$. For small $\bw_j^{(t)\sT}\bx_i$ we have
\begin{equation}
\sigma'\!\left(\bw_j^{(t)\sT}\bx_i\right) = \sigma''(0)\bw_j^{(t)\sT}\bx_i + o_d(1)
\end{equation}
using $\sigma'(0)=0$
and substituting these yields
\begin{equation}
\nabla_{\bw_j} \widehat{\mathcal{L}}_n(\bTheta^{(t)}) 
= \frac{1}{n} \sum_{i=1}^{n} \ell'(0,y_i) a_j \sigma''(0) \bw_j^{(t)\sT}\bx_i \bx_i +\text{ smaller order terms}.
\end{equation}
Letting $\ell_i := \ell'(0,y_i)$, under a proper learning rate and weight decay, the dynamics can be expressed as
\begin{equation}
\bw_j^{(t+1)} \propto \left(\frac{1}{n} \sum_{i=1}^n \ell_i \bx_i \bx_i^\sT \right)\bw_j^{(t)} = \widehat{\bSigma}_\ell \bw_j^{(t)}.
\end{equation}
We emphasize that the above approximation works whenever $f_{\bTheta^{(t)}}(\bx)$ and $\bw_j^{(t)}$ both remain small. Under those approximations, the neurons decouple from each other.

Under these approximations, the dynamics can be viewed as a power method iteration with respect to $\widehat{\bSigma}_\ell$. Recall that $\widehat{\bSigma}_\ell$ is the empirical version of $\bSigma_\ell$, satisfying $\|\widehat{\bSigma}_\ell - \bSigma_\ell\|_{\mathrm{op}} = \Theta(\sqrt{d/n})$. Moreover, from the assumptions, the population one admits the eigendecomposition
\begin{equation}
\bSigma_\ell = \sum_{i=1}^r \lambda_i \bu_i \bu_i^\sT,
\end{equation}  
where $\{\bu_i\}_{i=1}^r$ are orthonormal unit vectors spanning $\bU$, with eigenvalues ordered as $1 = \lambda_1 \geq \cdots \geq \lambda_r = 1/\kappa$, in accordance with our assumptions. We further assume a constant eigenvalue gap only in this section for convenience, namely $\min_{i\neq j}|\lambda_i - \lambda_j| \geq c$ for some universal constant $c$; We shall emphasize that we manage to avoid this assumption in our final results via a more fine-grained analysis (see Appendix \ref{sec:first_appendix}).

\paragraph{Eigenvalues and eigenvectors of $\widehat{\bSigma}_\ell^{\mathrm{T}}$.}
The empirical covariance $\widehat{\bSigma}_\ell$ then admits the decomposition  
\begin{equation}
\widehat{\bSigma}_\ell = \sum_{i=1}^d \widehat{\lambda}_i \widehat{\bu}_i \widehat{\bu}_i^\sT,
\end{equation}  
where $\{\widehat{\bu}_i\}_{i=1}^d$ forms an orthonormal unit basis of $\mathbb{R}^d$. We note that we have $|\widehat{\lambda}_i - \lambda_i| = \Theta(\sqrt{d/n})$ and $\norm{\widehat{\bu}_i -\bu_i}=\Theta(\sqrt{d/n})$ for $i \leq r$. For the others we have $|\widehat{\lambda}_i| = \Theta(\sqrt{d/n})$.  
This decomposition makes it straightforward to analyze powers of $\widehat{\bSigma}_\ell$  
\begin{equation}
\widehat{\bSigma}_\ell^{\mathrm{T}} = \sum_{i=1}^d \widehat{\lambda}_i^{\mathrm{T}} \widehat{\bu}_i \widehat{\bu}_i^\sT,
\end{equation}  
where $\mathrm{T}$ denotes the number of gradient descent steps in the first stage. For a random initialization $\bw_j^{(0)}$, the contribution from the noise components is  
\begin{equation}
\Big\|\sum_{i=r+1}^d \widehat{\lambda}_i^{\mathrm{T}} \widehat{\bu}_i \widehat{\bu}_i^\sT \bw_j^{(0)} \Big\| 
= \Theta\!\left( \big(\sqrt{d/n}\big)^{\mathrm{T}} \norm{\bw_j^{(0)}}\right),
\end{equation}  
while the signal contribution along direction $\widehat{\bu}_i$ is  
\begin{equation}
\widehat{\lambda}_i^{\mathrm{T}} \big|\widehat{\bu}_i^\sT \bw_j^{(0)}\big| = \Theta\!\left(\lambda_i^{\mathrm{T}} \frac{1}{\sqrt{d}}\norm{\bw_j^{(0)}}\right) \gtrsim \kappa^{-\mathrm{T}}\frac{1}{\sqrt{d}}\norm{\bw_j^{(0)}}.\end{equation}
To ensure that the signal is at least comparable with the noise, we require  
\begin{equation}
\frac{\kappa^{-\mathrm{T}}}{\sqrt{d}} \gtrsim \big(\sqrt{d/n}\big)^{\mathrm{T}}
\end{equation}  
which corresponds to the condition $n \gtrsim d^{1 + 1/\mathrm{T}} \kappa^2$.  

Assuming the noise term is negligible, for $k\in [r]$ we now focus on the signal strength along direction $\bu_k$ instead of $\widehat{\bu}_k$
\begin{equation}
\bu_k^\sT \widehat{\bSigma}_\ell^{\mathrm{T}} \bw_j^{(0)} 
\approx \widehat{\lambda}_k^{\mathrm{T}} \bu_k^\sT \widehat{\bu}_k \widehat{\bu}_k^\sT \bw_j^{(0)} 
+ \sum_{\substack{1 \leq i \leq r \\ i \neq k}} \widehat{\lambda}_i^{\mathrm{T}} \bu_k^\sT \widehat{\bu}_i \widehat{\bu}_i^\sT \bw_j^{(0)}.
\end{equation}  
Here we require the first term to dominate the second so that the signal strength along the direction $\bu_k$ is well aligned with that along $\widehat{\bu}_k$.
 For a typical $\bw_j^{(0)}$, the magnitude of the first term is at least $\Theta\left(\kappa^{-\mathrm{T}}/\sqrt{d}\norm{\bw_j^{(0)}}\right)$, while we have \begin{equation}\abs{\sum_{\substack{1 \leq i \leq r \\ i \neq k}} \widehat{\lambda}_i^{\mathrm{T}} \bu_k^\sT \widehat{\bu}_i \widehat{\bu}_i^\sT \bw_j^{(0)}}= \cO\left(\norm{\bw_j^{(0)}}/\sqrt{n}\right)\end{equation} 
using the standard estimates $|\bu_k^\sT \widehat{\bu}_i| = \cO(\sqrt{d/n})$ and $|\widehat{\bu}_i^\sT \bw_j^{(0)}| = \Theta\left(\norm{\bw_j^{(0)}}\sqrt{1/d}\right)$. Therefore, it suffices to take $
n \gtrsim d \log d \mathrm{T}^2 \kappa^{2\mathrm{T}}$. We emphasize and highlight that it is quite interesting to observe that two conflicting factors are involved at the same time. If $\mathrm{T}$ is too small, the noise is not sufficiently suppressed; whereas if $\mathrm{T}$ is too large, the weakest signal direction vanishes exponentially and becomes undetectable. Therefore, we have to balance these two effects to find the optimal $\mathrm{T}$, which is interestingly found to be of the order $\Theta(\sqrt{\log d})$.

Therefore, assuming $n\gtrsim 
\,d\log d\mathrm{T}^2\kappa^{2\mathrm{T}} + d^{1+\frac{1}{\mathrm{T}}}\kappa^2$, the learned feature approximately lies in and successfully spans the whole $\bU$ subspace.
We can then conclude that the second stage is able to learn any multi-index model with hidden subspace $\bU$ from the learned features via using a standard kernel method argument. The complexity of this learning scales independent with $d$ and polynomial with $\kappa^{\mathrm{T}}$, as some compensation is required to learn the weakest direction, which is still $\widetilde \cO(1)$ when $\mathrm{T}=\Theta(\sqrt{\log d})$.
For a concrete explanation, we refer to the proof roadmap of the second stage in Appendix~\ref{sec:appendix_B}.

\subsection{Comparison with the existing approaches}\label{subsec:comparison_with_existing_approaches}

First of all, we note that our results cannot be recovered using a DMFT or state evolution framework, such as \cite{Agoritsas_2018,ba2022highdimensionalasymptoticsfeaturelearning,celentano2025highdimensionalasymptoticsordermethods,gerbelot2023rigorousdynamicalmeanfield,Berthier_2024} and many more. Essentially, in order to learn those generative-exponent two directions under the sample size $\widetilde \Theta(d)$, one needs a diverging time horizon to escape the initial saddle point if there is no hot start or explicit spectral initialization. Under the DMFT/state evolution propotional limit, the neurons shall not escape the initial stationary point.

Second, our methods differ from the standard power iteration in matrix/tensor PCA problems \cite{balzano2011onlineidentificationtrackingsubspaces,montanari2014statisticalmodeltensorpca}. If we enforce the matrix power iteration approximation for long time, the neurons shall collapse to one single direction, which prevents a successful learning. Instead, we force a early stop (but still longer than constant) to the power iteration to ensure the neurons span the entire subspace. In practice, we believe neural networks can adaptively find the optimal stopping time when the second layer is trained together with the first layer. We provide numerical validations of our main results in Section \ref{sec:numerical}.


\section{Further discussions}\label{technical_bg_main_text}
We here further explain our Assumption \ref{assumption_cov}, which essentially requires our target function has generative exponent two without generative-staircase structure. To start with, we briefly introduce the concept of generative exponents. 
The so-called generative exponents has a long standing literature (see related works) and is introduced in its most general form in recent work \cite{damian2025generativeleapsharpsample}, which sharply characterize the complexity of efficiently learning Gaussian multi-index models up to leading order.
In the following, we use $\mathrm{h}_k$ to denote degree $k$ Hermite polynomials and $\mathrm{He}_k$ to denote degree $k$ Hermite tensors. A short introduction to these objects is provided in Appendix \ref{appendix_hermite}.

We denote $\bz=\bU^\sT \bx \sim \normal(\boldsymbol{0},\bI_r)$ as the hidden directions and $y$ as the model output. Denote further $S$ as a subspace of the $\bU$ subspace, and let $\bz_S \in S$ denote the orthogonal projection of $\bz$ onto $S$. The generative exponent given a subspace $S$, $k_S^*=k_S^*(\bz,y)$, is defined as the largest $k$ such that $\mathbb{E}[\mathcal{T}(y,\bz_S) p(\bz_{S^{\perp}})]=0$ for any measurable function $\mathcal{T}$ and any polynomial $p$ of degree lower than $k$. This indicates that the tensor $\mathbb{E}[\mathcal{T}^*(y,\bz_S) \mathrm{He}_{k_S^*}(\bz_{S^{\perp}})]\neq 0$ and we will learn some new directions $S_{\text{new}}$. Therefore, we get a new learned subspace $S'=S\bigcup S_{\text{new}}$. Repeating this process, we will get an increasing sequence of subspace $\emptyset=S_0 \subsetneq S_1 \subsetneq \cdots \subsetneq S_L=\operatorname{span}(\bU)$, and the overall generative leap exponent is then formally defined as $k^*:=\operatorname{max}_{i}k^*_{S_i}$.

The generative exponent is important since \cite{damian2025generativeleapsharpsample} proves that the complexity of efficiently learning a multi-index model is $\Theta(d^{k^*/2})$ via providing both the LDP lower bound and a matching spectral algorithm upper bound.
Further, it can be shown that almost all common multi-index models have the generative leap exponent $k^*\leq 2$. For instance, it works when the link function is 1) all the polynomials, 2) piecewise linear functions, 3) intersection of halfspace.
Therefore, we can always extract information from computing $\mathbb{E}\big[\mathcal{T}(y,\bz_{S_i}) \mathrm{He}_{2}(\bz_{S_i^{\perp}})\big]$ with a proper pre-processing function $\mathcal{T}$. In addition, Lemma F.14 of \cite{damian2024computationalstatisticalgapsgaussiansingleindex} indicates that $\mathcal{T}$ need not to be highly specific. In fact, they prove that even random polynomials suffice in the single-index setting.

We emphasize that this observation motivates our Assumption \ref{assumption_cov}. In addition to generative leap exponent two, we are essentially requiring that there exists one pre-processing function $\mathcal{T}$ which is able to extract all the spanned directions in the subspace $\bU$ from the second order information. Our assumption is weaker than the previous literature of using neural networks to learn multi-index models up to vanishing test error. For instance, via using a Huber loss, we can deal with the link function $g(\bz)=\sum_{i=1}^k\mathrm{h}_4(z_i)$ which is not covered in previous papers like \cite{damian2022neuralnetworkslearnrepresentations}.

\section{Numerical illustrations}\label{sec:numerical}

We conduct preliminary numerical experiments to validate our theoretical predictions.

We consider the following target function
$f^*(\bx) = (x_1^2 + \frac{1}{2} x_2^2)/\sqrt{5/2}$
where $\bx \in \mathbb{R}^d$ and $x_i$ denotes the $i$-th coordinate. This is a multi-index model with $r=2$ hidden directions.
We use a two-layer neural network with width $m=4$ and quadratic activation function $\sigma(t) = t^2$. Both layers use standard Kaiming uniform initialization for weights and biases. For each dimension $d$ and sample size exponent $\alpha$, we generate $n = \lfloor d^\alpha \rfloor$ training samples from $\normal(\boldsymbol{0}, \bI_d)$. We train the network using Adam optimizer with learning rate $\eta = 0.005$ for up to $1000$ epochs with batch size $32$. We vary the dimension $d$ logarithmically and search the sample size exponent $\alpha$ in the range $[1.1, 1.8]$ with step size $0.01$. For each $(d, \alpha)$ pair, we run $5$ independent trials with different random seeds and report the average minimal sample size exponent $\alpha$ required to achieve test error below threshold $\epsilon$.

\subsection{Results}

\begin{figure}[ht]
    \centering
    \includegraphics[width=0.75\textwidth]{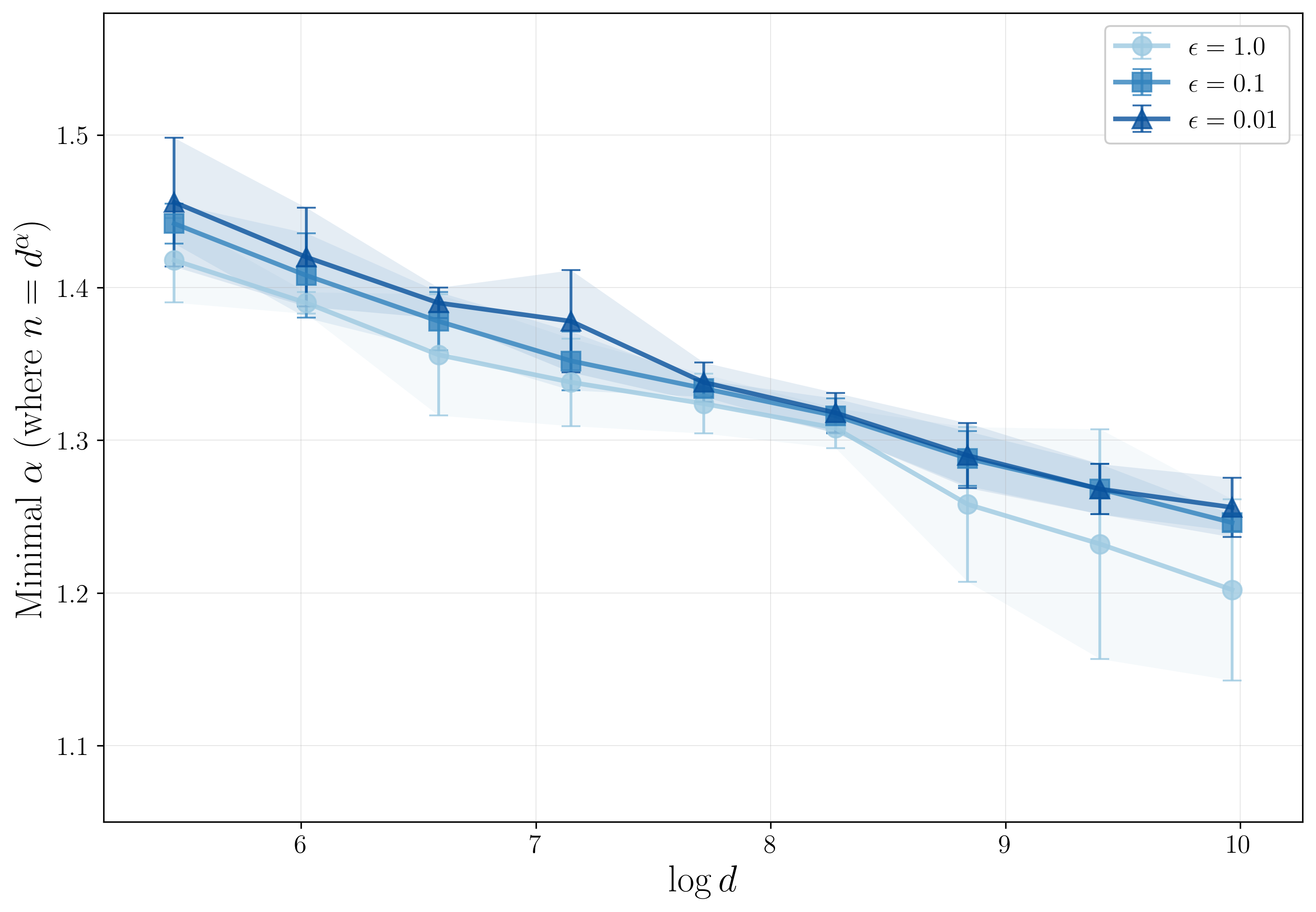}
    \caption{Minimal sample size exponent $\alpha$ (where $n = d^\alpha$) required to achieve test error $\leq \epsilon$ as a function of dimension $d$. Three thresholds are shown: $\epsilon = 1.0$ (light blue), $\epsilon = 0.1$ (medium blue), and $\epsilon = 0.01$ (dark blue). More stringent error thresholds require larger $\alpha$ (more samples), but all curves exhibit similar downward trends.}
    \label{fig:minimal_alpha}
    \end{figure}

Figure~\ref{fig:minimal_alpha} shows the minimal-sample-size exponent $\alpha$ required to achieve test error below threshold $\epsilon$ converges to one as $d$ increases for different thresholds $\epsilon \in \{1.0, 0.1, 0.01\}$. Each point represents the average over $5$ random seeds.
These experiments validate that neural networks can indeed learn multi-index models with near-optimal sample complexity. The decreasing trend in $\alpha$ as dimension grows provides empirical support for our theoretical finding that $n = \widetilde{\cO}(d)$ samples suffice for successful learning, achieving the information-theoretic limit up to leading order.

\subsection{Proper choice of loss function}

\begin{figure}[ht]
    \centering
\includegraphics[width=0.75\textwidth]{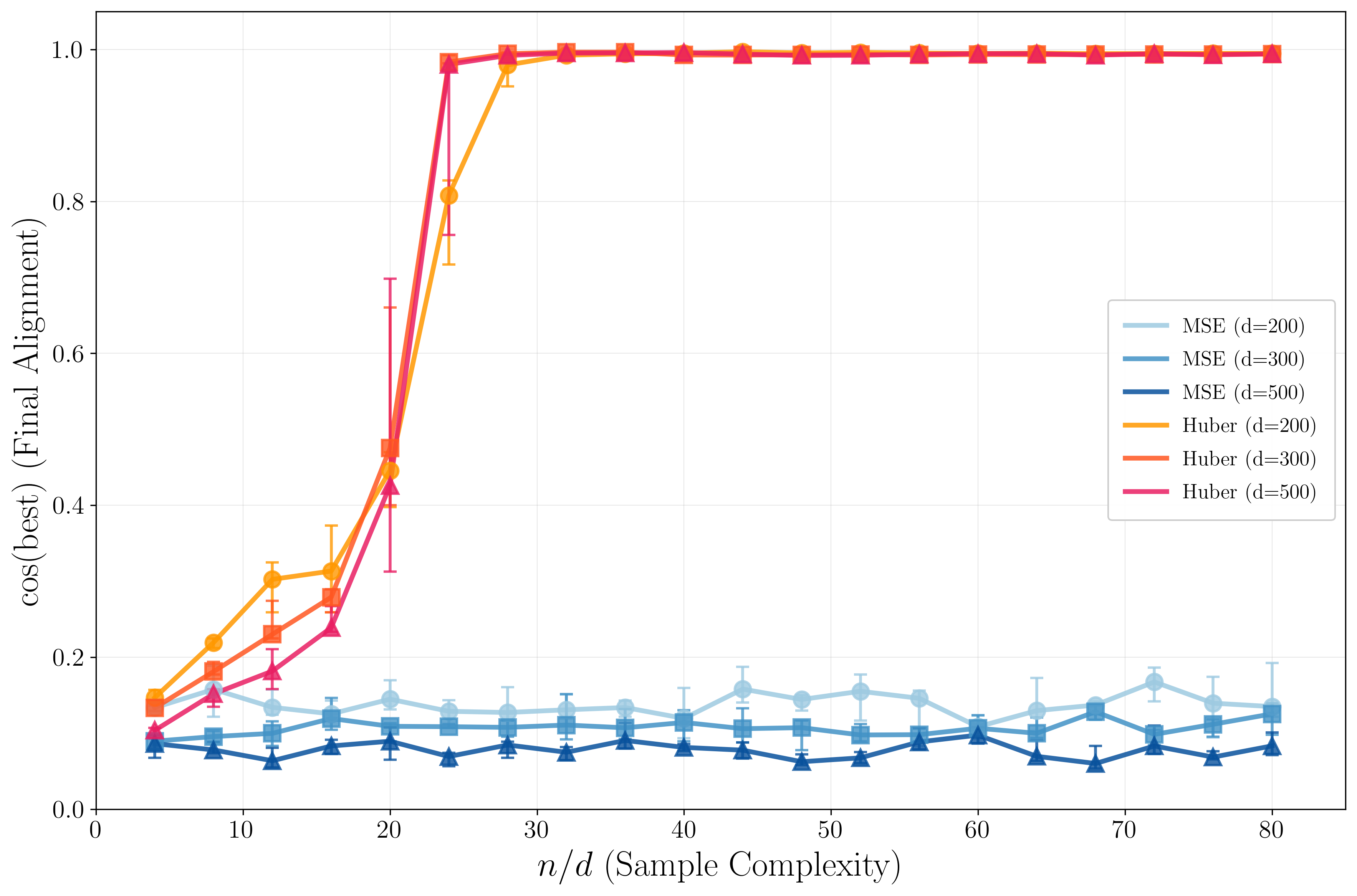}
    \caption{MSE and Huber loss for learning $\mathrm{h}_4(x_1) + \mathrm{h}_4(x_2)$ across different dimensions $d \in \{200, 300, 500\}$. The $x$-axis shows $n/d$, and the $y$-axis shows the cos similarity $\cos(\text{best})$. Lines show median with error bars indicating $30$th to $70$th percentiles over $10$ random seeds. Blue shades represent MSE and red/orange shades represent Huber loss.}
    \label{fig:loss_comparison}
\end{figure}

We demonstrate that the choice of loss function critically impacts learning performance for a subset of targets. We consider a challenging target
$\mathrm{h}_4(x_1) + \mathrm{h}_4(x_2)$
where $\mathrm{h}_4(z) =\frac{1}{\sqrt{24}}( z^4 - 6z^2 + 3)$ is the degree-4 Hermite polynomial. This target has generative exponent $2$ but information exponent $4$ which is larger than generative exponent and requires a properly chosen loss function rather than square loss to ensure the second-order information is non-degenerate.
To this end,
we train a two-layer neural network with width $m=4$ and cosine activation $\sigma(t) = \cos(t)$ using two different loss functions: Mean Squared Error (MSE) and Huber loss. We measure the learning process by the alignment metric $\cos(\text{best})$, defined as the maximum cosine similarity between any first-layer weight vector and the true directions $\be_1$ or $\be_2$. A value close to one indicates successful recovery of the hidden directions.
For dimensions $d \in \{200, 300, 500\}$, we vary the sample size $n$ and plot the alignment $\cos(\text{best})$ as a function of sample complexity ratio $n/d$. For each $(d, n)$ pair, we run $10$ independent trials with different random seeds and report the median along with $30$th and $70$th percentiles.

Figure~\ref{fig:loss_comparison} reveals a striking difference between the two loss functions. MSE loss consistently fails to learn the target function, with alignment remaining near zero even when $n/d = 80$. In stark contrast, Huber loss exhibits a clear phase transition: the alignment rapidly increases from near $0$ to near $1$ once the sample size exceeds a critical threshold around $n/d = 20$. This threshold is consistent across all three dimensions.
This experiment validates our theoretical assumption (Assumption~\ref{assumption_cov}) that the pre-processing function $\ell'(0,y)$ must ensure non-degenerate second-order information. For this particular target, MSE loss corresponds to a degenerate pre-processing (since $\mathbb{E}[y \bx\bx^\sT]=\boldsymbol{0}$), while Huber loss provides the necessary non-degeneracy. This demonstrates that: (i) the choice of loss function is not merely a matter of optimization convenience but fundamentally influences whether the information-theoretic limit can be achieved, and (ii) our framework correctly predicts that with proper loss selection, neural networks can learn targets beyond previous theoretical guarantees with linear sample complexity.

\section{Conclusions}

In this paper, we demonstrate that under Gaussian input, a two-layer neural network trained with layer-wise gradient descent can learn generic multi-index models up to $o_d(1)$ test error using $\widetilde{\cO}(d)$ sample and $\widetilde{\cO}(d^2)$ time complexity, which matches the information-theoretic threshold up to the leading order. We emphasize that our analysis does not rely on any particular choice of activation, loss, or link function; rather, the assumptions imposed are generic and hold for a broad class of functions.
An important and somewhat unexpected insight from the proof is that the inner layer must be trained for more than a constant number of steps to successfully learn the hidden features, in order to recover the hidden features most efficiently and thus achieve optimal performance. This observation is impossible to be observed via a DMFT or state evolution framework over bounded time horizon, and has the potential to extend to other settings. Overall, our results demonstrate that neural networks are capable of effective representation learning and can fully learn the target function on this canonical task with near-optimal efficiency.

We believe that these insights may extend beyond the Gaussian setting to more general input distributions. Specifically, consider the target $f(\bx) = g(\bU\bx)$ with $\bx$ drawn from a generic distribution having finite second moments, and assume that $\bU$ satisfies a suitable incoherence condition. Intuitively, even if $\bx$ is not Gaussian, the projection $\bU \bx$ should be approximately Gaussian due to the central limit theorem, with incoherence providing the necessary averaging effect. We conjecture that our results remain valid in this broader setting, which presents an intriguing direction for future work.

\section*{Acknowledgement}

The authors thank anonymous reviewers for their valuable feedback and suggestions.
Zihao Wang also thanks Alex Damian, Andrea Montanari, Eshaan Nichani, Fan Nie, Yunwei Ren, Rui Sun, Zixuan Wang, Denny Wu, Jiatong Yu for their insightful discussions and valuable feedback during the completion of this work. Jason D. Lee acknowledges support of NSF CCF 2002272, NSF IIS 2107304, NSF CIF 2212262, ONR Young Investigator Award, and NSF CAREER Award 2144994.

\bibliography{ref}
\bibliographystyle{alpha}
\clearpage
\appendix
\begin{center}
    \noindent\rule{\textwidth}{4pt} \vspace{-0.2cm}
    
    \LARGE \textbf{Appendix} 
    
    \noindent\rule{\textwidth}{1.2pt}
\end{center}

\startcontents[sections]
\printcontents[sections]{l}{1}{\setcounter{tocdepth}{2}}

\section{Proof of Lemma \ref{lemma:total_perturbation_error}: Feature learning}\label{sec:first_appendix}

We will use $\eta$ to refer to the learning rate $\eta_1$ and use $\beta$ to refer to $\beta_1$ throughout this section. We recall that we use $D$ to refer to a large absolute constant.

We will first introduce several more notations to help us to do the proof. 
In the first stage, we expect the prediction \( f_{\bTheta^{(t)}}(\bx_i) \) to remain small, since the norm of the features should remain small. We apply a first-order Taylor expansion of the first argument of \( \ell_i'^{(t)}  \) around zero
\begin{equation}
\ell_i'^{(t)} = \ell'(0, y_i) + \ell''(\zeta_i^{(t)}, y_i) f_{\bTheta^{(t)}}(\bx_i) \quad \text{for some } \left|\zeta_i^{(t)}\right| \le \left|f_{\bTheta^{(t)}}(\bx_i)\right|
\end{equation}
Denote the remainder term by
\begin{equation}
\Delta_i^{(t)} := \ell''(\zeta_i^{(t)}, y_i) f_{\bTheta^{(t)}}(\bx_i)
\end{equation}
which satisfies the bound
$
\left|\Delta_i^{(t)}\right| \le \mathrm{L} \left|f_{\bTheta^{(t)}}(\bx_i)\right|
$
and should be small in principle.
Substituting the expansion into the gradient expression for the $j$-th neuron, we get
\begin{equation}
\nabla_{\bw_j} \widehat{\mathcal{L}}_n(\bTheta^{(t)}) = \frac{1}{n} \sum_{i=1}^{n} (\ell'(0, y_i) + \Delta_i^{(t)}) a_j \sigma'(\bw_j^{(t)\sT} \bx_i) \bx_i.
\end{equation}
Grouping terms, we have
\begin{equation}
\nabla_{\bw_j} \widehat{\mathcal{L}}_n(\bTheta^{(t)}) = a_j \biggl[ \underbrace{\frac{1}{n} \sum_{i=1}^{n} \ell'(0, y_i) \sigma'(\bw_j^{(t)\sT} \bx_i) \bx_i}_{\text{Zeroth-order term}} + \underbrace{\frac{1}{n} \sum_{i=1}^{n} \Delta_i^{(t)} \sigma'(\bw_j^{(t)\sT} \bx_i) \bx_i}_{\text{Perturbation}} \biggr].
\end{equation}

To analyze the gradient dynamics, we further expand the derivative of the activation function using a second-order Taylor approximation around zero. Since \(\sigma'(0) = 0\), we obtain
\begin{equation}
\sigma'(z) = \sigma''(0) z + \frac{1}{2} \sigma^{(3)}(\xi) z^2, \quad \text{for some } \left|\xi\right| \le \left|z\right|.
\end{equation}
Substituting the expansion of \( \sigma'(\cdot) \) into $\nabla_{\bw_j}\widehat{\mathcal{L}}$, we obtain
\begin{equation}
    \begin{aligned}
\nabla_{\bw_j} \widehat{\mathcal{L}} 
&= a_j \sigma''(0) \widehat{\bSigma}_\ell \bw_j 
 + \underbrace{ \frac{a_j}{2n} \sum_{i=1}^{n} \ell_i \sigma^{(3)}(\xi_i) (\bw_j^\sT \bx_i)^2 \bx_i +\frac{a_j}{n} \sum_{i=1}^{n} \Delta_i^{(t)} \sigma'(\bw_j^\sT \bx_i) \bx_i}_{\text{Perturbation}} \\
&= a_j \sigma''(0) \widehat{\bSigma}_\ell \bw_j 
 + \frac{a_j}{2n}\left(\sum_{i=1}^{n} \ell_i \sigma^{(3)}(\xi_i) (\bw_j^\sT \bx_i)^2 \bx_i \right.\\
&\qquad\qquad \left.+ 2\sigma''(0) \sum_{i=1}^{n} \Delta_i^{(t)} (\bw_j^\sT \bx_i) \bx_i 
 + \sum_{i=1}^{n} \Delta_i^{(t)} \sigma^{(3)}(\xi_i) (\bw_j^\sT \bx_i)^2 \bx_i\right).
\end{aligned}
\end{equation}

The first term captures the leading-order linear dynamics that should be dominating, while the second term is a higher-order perturbation term that will be controlled using norm bounds on \( \bw_j^{(t)} \) and also bounds on $\Delta_i^{(t)}$.

Under gradient descent with a proper weight decay (recall that we set $\beta=\frac{1}{\eta}$) and the learning rate as $\eta$, the update rule for $\bw_j$ becomes
\begin{equation}
    \bw_j^{(t+1)} = \bw_j^{(t)} - \eta(\nabla_{\bw_j} \widehat{\mathcal{L}} + \beta \bw_j^{(t)}) = -\eta a_j \sigma''(0) \widehat{\bSigma}_{\ell} \bw_j^{(t)} + R_j^{(t)}
\end{equation}
where the residual term $R_j^{(t)}$ is defined by 
\begin{align}
    &R_j^{(t)} := -\frac{\eta a_j}{2n} \left(\sum_{i=1}^{n} \ell_i \sigma^{(3)}(\xi_i^{(t)}) (\bw_j^{(t)\sT}\bx_i)^2 \bx_i \right.\nonumber\\
    &\quad\quad\quad\quad\quad\quad\quad\left.+ 2 \sum_{i=1}^{n} \Delta_i^{(t)} (\bw_j^{(t)\sT}\bx_i) \bx_i 
    + \sum_{i=1}^{n} \Delta_i^{(t)} \sigma^{(3)}(\xi_i^{(t)}) (\bw_j^{(t)\sT}\bx_i)^2 \bx_i\right)
\end{align}
Since $(\bw_j^{(t)\sT}\bx_i)^2 \bx_i = (\bw_j^{(t)\sT}\bx_i) \bx_i \bx_i^\sT \bw_j^{(t)}$, we can write $R_j^{(t)}$ as $R_j^{(t)} = -\eta a_j \bQ^{(t)} \bw_j^{(t)}$
where $\bQ^{(t)} \in \mathbb{R}^{d \times d}$ is a symmetric matrix defined by

\begin{align}
    &\bQ^{(t)} := \frac{1}{2n} \left(\sum_{i=1}^{n} \ell_i \sigma^{(3)}(\xi_i^{(t)}) (\bw_j^{(t)\sT}\bx_i) \bx_i\bx_i^\sT \right.\nonumber\\
    &\qquad\qquad\qquad\left.+ 2 \sum_{i=1}^{n} \Delta_i^{(t)}  \bx_i\bx_i^\sT 
    + \sum_{i=1}^{n} \Delta_i^{(t)} \sigma^{(3)}(\xi_i^{(t)}) (\bw_j^{(t)\sT}\bx_i)\bx_i\bx_i^\sT\right)
\end{align}
Now the entire update can be expressed as
\begin{equation}
    \bw_j^{(t+1)} = -\eta a_j (\widehat{\bSigma}_{\ell} + \bQ^{(t)}) \bw_j^{(t)}.
\end{equation}
Define
\begin{equation}
    \bM^{(t)} := \widehat{\bSigma}_{\ell} + \bQ^{(t)}, \quad \text{so that} \quad \bw_j^{(t+1)} = -\eta a_j \bM^{(t)} \bw_j^{(t)}.
\end{equation}
Through all the calculations above, we obtain a compact form of the update rule with a clear linear structure. Then by repeatedly following the updates for \(\mathrm{T}_1\) iterations, we achieve
\begin{equation}
    \bw_j^{(\mathrm{T}_1)} = \frac{1}{\epsilon_0}(-a_j)^{\mathrm{T}_1} \eta^{\mathrm{T}_1} \left(\prod_{t=0}^{\mathrm{T}_1-1} \bM^{(t)}\right) \bw_j^{(0)}.
\end{equation}
This shows that \( \bw_j^{(t)} \) lies in the image of a composition of linear operators, each consisting of
a dominant direction determined by \( \widehat{\bSigma}_\ell \),
plus a time-dependent perturbation \( \bQ^{(t)} \).

The general picture is that $\bQ^{(t)}$ is negligible when $t=o(\log d)$. In this case, we have $\bw_j^{(t+1)} \propto \widehat{\bSigma}_\ell \bw_j^{(t)}$ and the dynamics of GD will be similar to a power method iteration. For now note that our power method iteration is with respect to $\widehat{\bSigma}_\ell$, not $\bSigma_\ell$, since the sample size is finite.

This section is devoted to a complete proof to bound the difference between $\bw_j^{(\mathrm{T}_1)}$ and the ideal update
\begin{equation}
\widehat {\bw_j}^{(\mathrm{T}_1)} := \frac{1}{\epsilon_0}(-a_j)^{\mathrm{T}_1}\eta^{\mathrm{T}_1}\bSigma_\ell^{\mathrm{T}_1}\bw_j^{(0)}
\end{equation}
Note that we have
\begin{equation}
\norm{\bw_j^{(\mathrm{T}_1)}-\widehat{\bw_j}^{(\mathrm{T}_1)}} = \frac{1}{\epsilon_0}\eta^{\mathrm{T}_1}\norm{\left(\prod_{t=0}^{\mathrm{T}_1-1} \bM^{(t)} -\bSigma_\ell^{\mathrm{T}_1}\right) \bw_j^{(0)}} \label{equ::difference between empirical and idea w}
\end{equation}
and therefore we just need to bound $\norm{\left(\prod_{t=0}^{\mathrm{T}_1-1} \bM^{(t)} -\bSigma_\ell^{\mathrm{T}_1}\right) \bw_j^{(0)}}$. 
\begin{lemma}[Total Perturbation Error]\label{lemma:total_perturbation_error}

With probability at least $1-\cO(d^{-D})$, assuming $n\gtrsim d\mathrm{T}_1^2$ and $m \le d^{D/4}$, for all $j\in[m]$, the total perturbation error is bounded by
\begin{equation}
\left\|\left(\prod_{t=0}^{\mathrm{T}_1-1} \bM^{(t)} - \bSigma_{\ell}^{\mathrm{T}_1}\right) \bw_j^{(0)}\right\| \lesssim \epsilon_0 \mathrm{T}_1 \gamma \sqrt{\frac{\log d}{d}} + \epsilon_0 \gamma^{\mathrm{T}_1}, \quad\text{where}\quad \gamma = C\sqrt{\frac{d}{n}}
\end{equation}
where $C$ is a universal constant.
\end{lemma}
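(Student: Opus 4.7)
The plan is to split the target quantity as
\begin{equation}
\prod_{t=0}^{\mathrm{T}_1-1}\bM^{(t)}-\bSigma_\ell^{\mathrm{T}_1}=\Bigl(\prod_{t=0}^{\mathrm{T}_1-1}\bM^{(t)}-\widehat{\bSigma}_\ell^{\,\mathrm{T}_1}\Bigr)+\Bigl(\widehat{\bSigma}_\ell^{\,\mathrm{T}_1}-\bSigma_\ell^{\mathrm{T}_1}\Bigr),
\end{equation}
isolating the higher-order perturbation from the $\bQ^{(t)}$'s in the first bracket and the pure statistical error between $\widehat{\bSigma}_\ell$ and $\bSigma_\ell$ in the second. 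Each bracket will be estimated after applying it to the spherically-uniform random vector $\bw_j^{(0)}$.

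For the perturbation bracket I would use the single-slot replacement identity
\begin{equation}
\prod_{t=0}^{\mathrm{T}_1-1}\bM^{(t)}-\widehat{\bSigma}_\ell^{\,\mathrm{T}_1}=\sum_{s=0}^{\mathrm{T}_1-1}\Bigl(\prod_{t=s+1}^{\mathrm{T}_1-1}\bM^{(t)}\Bigr)\bQ^{(s)}\,\widehat{\bSigma}_\ell^{\,s}
\end{equation}
and run an induction on $s$ that simultaneously tracks $\|\bw_j^{(s)}\|$ and $\|\bQ^{(s)}\|_{\mathrm{op}}$. Inspection of the three summands in the definition of $\bQ^{(s)}$ shows that each carries at least one factor scaling linearly with $\bw_j^{(s)\sT}\bx_i$ or with $f_{\bTheta^{(s)}}(\bx_i)$, and $f_{\bTheta^{(s)}}$ is itself quadratically small in the features since $\sigma'(0)=0$. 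Combined with sub-Gaussian tail control of $\|\bx_i\|$ and the inductive hypothesis that $\|\bw_j^{(s)}\|\lesssim\epsilon_0\,\mathrm{polylog}(d)$, this gives $\|\bQ^{(s)}\|_{\mathrm{op}}\lesssim\epsilon_0\,\mathrm{polylog}(d)$ with probability $1-O(d^{-D})$. Since $\|\bM^{(t)}\|_{\mathrm{op}},\|\widehat{\bSigma}_\ell\|_{\mathrm{op}}\le 1+O(\gamma)$, each summand contributes at most $O(\epsilon_0^2\,\mathrm{polylog}(d))$ in norm when applied to $\bw_j^{(0)}$, and summing over $s$ gives $O(\mathrm{T}_1\epsilon_0^2\,\mathrm{polylog}(d))$, which by the smallness of $\epsilon_0$ in Assumption~\ref{ass:epsilon0} is absorbed into either term of the lemma bound.

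For the statistical-error bracket, let $\bP$ be the orthogonal projection onto $\mathrm{span}(\bu_1,\ldots,\bu_r)$. Using $\bSigma_\ell^{\mathrm{T}_1}\bP^\perp=0$ I split
\begin{equation}
\bigl(\widehat{\bSigma}_\ell^{\,\mathrm{T}_1}-\bSigma_\ell^{\mathrm{T}_1}\bigr)\bw_j^{(0)}=\bigl(\widehat{\bSigma}_\ell^{\,\mathrm{T}_1}-\bSigma_\ell^{\mathrm{T}_1}\bigr)\bP\bw_j^{(0)}+\widehat{\bSigma}_\ell^{\,\mathrm{T}_1}\bP^\perp\bw_j^{(0)}.
\end{equation}
On the signal side, the non-commutative telescoping identity
\begin{equation}
\widehat{\bSigma}_\ell^{\,\mathrm{T}_1}-\bSigma_\ell^{\mathrm{T}_1}=\sum_{k=0}^{\mathrm{T}_1-1}\widehat{\bSigma}_\ell^{\,k}\bigl(\widehat{\bSigma}_\ell-\bSigma_\ell\bigr)\bSigma_\ell^{\mathrm{T}_1-1-k}
\end{equation}
combined with the Wishart-type concentration $\|\widehat{\bSigma}_\ell-\bSigma_\ell\|_{\mathrm{op}}\lesssim\gamma$ and $\|\bSigma_\ell\|_{\mathrm{op}}=1$ yields $\|\widehat{\bSigma}_\ell^{\,\mathrm{T}_1}-\bSigma_\ell^{\mathrm{T}_1}\|_{\mathrm{op}}\lesssim\mathrm{T}_1\gamma$; multiplying by the spherical concentration estimate $\|\bP\bw_j^{(0)}\|\lesssim\epsilon_0\sqrt{\log d/d}$ (valid because $\bw_j^{(0)}$ is uniform on a sphere of radius $\epsilon_0$ and independent of the dataset) reproduces the first term $\epsilon_0\mathrm{T}_1\gamma\sqrt{\log d/d}$. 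On the orthogonal side, split $\widehat{\bSigma}_\ell^{\,\mathrm{T}_1}=\widehat{\bSigma}_\ell^{\,\mathrm{T}_1,\mathrm{sig}}+\widehat{\bSigma}_\ell^{\,\mathrm{T}_1,\mathrm{noise}}$ according to the top-$r$ spectral block of $\widehat{\bSigma}_\ell$. The noise block satisfies $\max_{i>r}|\widehat\lambda_i|\lesssim\gamma$ by Weyl's inequality, so Parseval gives
\begin{equation}
\bigl\|\widehat{\bSigma}_\ell^{\,\mathrm{T}_1,\mathrm{noise}}\bP^\perp\bw_j^{(0)}\bigr\|\le\gamma^{\mathrm{T}_1}\bigl\|\bP^\perp\bw_j^{(0)}\bigr\|\le\gamma^{\mathrm{T}_1}\epsilon_0,
\end{equation}
which is the second term. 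For the signal block, a Davis--Kahan bound using only the gap $\lambda_r-0=1/\kappa$ produces $\|\widehat{\bP}_{\mathrm{sig}}\bP^\perp\|_{\mathrm{op}}\lesssim\gamma\kappa$; combined with Gaussian concentration of $\bw_j^{(0)}$ against each fixed $\widehat\bu_i$ this yields $|\widehat\bu_i^\sT\bP^\perp\bw_j^{(0)}|\lesssim\gamma\kappa\,\epsilon_0\sqrt{\log d/d}$ for $i\le r$, and summing $r$ such contributions is absorbed into the first term of the lemma up to the $O(\kappa)$ constant.

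The step I expect to be most delicate is this signal-block piece of $\widehat{\bSigma}_\ell^{\,\mathrm{T}_1}\bP^\perp\bw_j^{(0)}$, because a direct operator-norm bound would only give $\gamma\kappa\,\epsilon_0$, a factor of $\sqrt{d/\log d}$ worse than the stated bound in the small-$\gamma$ regime. Recovering the $\sqrt{\log d/d}$ saving requires combining the Davis--Kahan estimate (which bounds only the aggregate signal subspace perturbation, as no intra-signal gap is assumed) with Gaussian concentration of $\bw_j^{(0)}$ against every data-measurable unit vector, crucially using that $\bw_j^{(0)}$ is independent of $\mathcal{D}_1$. All remaining steps, including the matrix concentration $\|\widehat{\bSigma}_\ell-\bSigma_\ell\|_{\mathrm{op}}\lesssim\gamma$, the high-probability control of $\|\bx_i\|$, and a union bound over $j\in[m]$ using $m\le d^{D/4}$, reduce to standard Wishart and Gaussian concentration arguments.
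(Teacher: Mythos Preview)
Your proposal is correct, and the overall decomposition into the $\bQ$-perturbation bracket and the statistical bracket is exactly the paper's. One minor inaccuracy: the norms $\|\bw_j^{(s)}\|$ and $\|\bQ^{(s)}\|_{\rm op}$ pick up polynomial-in-$d$ factors (the paper shows $\|\bw_j^{(s)}\|\le\sqrt{d}\,\epsilon_0$ and $\|\bQ^{(s)}\|_{\rm op}\lesssim m d^{7/2}\epsilon_0$), not polylog; but this is harmless since Assumption~\ref{ass:epsilon0} makes $\epsilon_0$ small enough to absorb any fixed polynomial.

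The substantive difference is in the statistical bracket $(\widehat{\bSigma}_\ell^{\,\mathrm{T}_1}-\bSigma_\ell^{\mathrm{T}_1})\bw_j^{(0)}$. The paper does \emph{not} split $\bw_j^{(0)}$ along $\bP$ and $\bP^\perp$; instead it fully expands $(\bSigma_\ell+\bH_\ell)^{\mathrm{T}_1}-\bSigma_\ell^{\mathrm{T}_1}$ into all $2^{\mathrm{T}_1}-1$ monomials, isolates the pure term $\bH_\ell^{\mathrm{T}_1}\bw_j^{(0)}$ (giving $\epsilon_0\gamma^{\mathrm{T}_1}$), and writes every mixed monomial as $\bA\,\bSigma_\ell^{m}\bH_\ell^{\ell}\bw_j^{(0)}$ with $m\ge1$. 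The factor $\bSigma_\ell^{m}$ acts as the rank-$r$ projector $\boldsymbol{\Pi}^*$, so the paper then invokes the concentration $\|\boldsymbol{\Pi}^*\bH_\ell^{\ell}\bw_j^{(0)}\|\lesssim\epsilon_0\gamma^{\ell}\sqrt{\log d/d}$ for each $\ell\le\mathrm{T}_1$, and sums via the binomial identity. Your route---project $\bw_j^{(0)}$ first, then use Weyl on the empirical noise block and Davis--Kahan on the empirical signal block---is more direct and needs only one spherical-concentration event on $\bw_j^{(0)}$ (against the $r$ fixed vectors $\bP^\perp\widehat\bu_i$), at the cost of an $O(\kappa)$ factor in the constant that the paper's argument avoids entirely. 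Since $\kappa$ is an absolute constant here, both yield the stated bound; the paper's version has the minor advantage of not referencing the spectral gap $\lambda_r$ at all in this lemma.
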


The proof of Lemma \ref{lemma:total_perturbation_error} is provided in Appendix \ref{appendx::proof outline feature learning}.

\subsection{Proof of Lemma \ref{lemma:total_perturbation_error}}\label{appendx::proof outline feature learning}
We first give a sharp characterization of $\widehat{\bSigma}_\ell$. We will bound the difference of the empirical and the population. Define
\begin{equation}
\bH_{\ell} := \frac{1}{n} \sum_{i=1}^n \ell_i \bx_i \bx_i^\sT - \mathbb{E}[\ell \bx\bx^\sT].
\end{equation}
\begin{lemma}[Operator Norm Bound for $\bH_{\ell}$]\label{lemma:op_bound_H}

With probability $1-\cO(d^{-2D})$ we have $\|\bH_{\ell}\|_{\rm{op}} \lesssim \sqrt{\frac{d}{n}}$.

\end{lemma}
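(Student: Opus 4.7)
\textbf{Proof plan for Lemma~\ref{lemma:op_bound_H}.}
The statement is a standard sub-exponential matrix concentration bound, and I would prove it by combining a net reduction on the unit sphere with a scalar Bernstein inequality. Begin by fixing a $1/4$-net $\mathcal{N}$ of the unit sphere $\mathbb{S}^{d-1}$ with $|\mathcal{N}| \le 9^d$; the standard net inequality gives $\|\bH_\ell\|_{\mathrm{op}} \le 2 \sup_{\bv \in \mathcal{N}} |\bv^\sT \bH_\ell \bv|$, so it suffices to bound $|\bv^\sT \bH_\ell \bv|$ uniformly over $\mathcal{N}$.

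For each fixed $\bv \in \mathbb{S}^{d-1}$, set $Z_i := \ell_i (\bv^\sT \bx_i)^2$, so $\bv^\sT \bH_\ell \bv = \frac{1}{n}\sum_{i=1}^n (Z_i - \mathbb{E} Z_i)$. Under Assumption~\ref{assumption_loss} we have $|\ell_i| \le \mathrm{L}$, and since $\bv^\sT \bx_i \sim \normal(0,1)$ the random variable $(\bv^\sT \bx_i)^2$ is $\chi^2_1$-distributed and hence sub-exponential with norm of order $1$. Consequently $Z_i$ is sub-exponential with Orlicz $\psi_1$-norm at most an absolute constant. A one-sided Bernstein inequality then yields
\begin{equation}
\mathbb{P}\Bigl(|\bv^\sT \bH_\ell \bv| \ge t\Bigr) \le 2 \exp\bigl(-c\, n \min(t^2, t)\bigr)
\end{equation}
for an absolute constant $c>0$. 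Choosing $t = C_0 \sqrt{d/n}$ (with $C_0$ depending on $D$) and using that $n \gtrsim d$ so that $t \le 1$, the right-hand side is at most $2 \exp(-c C_0^2 d)$. A union bound over $\mathcal{N}$ gives failure probability at most $9^d \cdot 2\exp(-c C_0^2 d)$, and for $C_0$ large enough this is $\cO(d^{-2D})$; combining with the net inequality yields $\|\bH_\ell\|_{\mathrm{op}} \lesssim \sqrt{d/n}$.

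Under the relaxed Assumption~\ref{assumption_loss_relaxed}, the only additional issue is that $\ell_i$ is no longer uniformly bounded, since $|\ell_i| \le \mathrm{L}(1+|y_i|)$ and $y_i = g^*(\bU\bx_i)$ is a polynomial of constant degree $p$ in Gaussian variables. The fix is a routine truncation argument: by Gaussian hypercontractivity $|y_i| \lesssim (\log d)^{p/2}$ with probability $1-\cO(d^{-3D})$ on the whole dataset, so after conditioning on this event we get $|\ell_i| \lesssim (\log d)^{p/2}$, which only worsens the sub-exponential constants by a polylog factor. Since our promised bound has slack in the constant (and Assumption~\ref{assumption_loss_relaxed} enters only in the second stage of the main theorem, where polylog losses are acceptable), the same net-plus-Bernstein argument goes through. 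The contribution from the complementary low-probability event can be absorbed, either by a crude deterministic bound on the truncated tail or via Cauchy-Schwarz using that $\ell_i$ has bounded polynomial moments.

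The main obstacle is the standard trade-off in covering arguments: the $9^d$ cardinality of the net must be beaten by the sub-exponential tail $\exp(-\Omega(d))$, which forces the choice $t = \Theta(\sqrt{d/n})$ to be non-improvable by this method and also dictates that the constant in the threshold grows with $D$. Beyond that, everything is bookkeeping; the argument is essentially Vershynin-style covariance concentration applied to the weighted sample covariance with weights $\ell_i$. I do not expect any structural difficulty beyond the truncation step under Assumption~\ref{assumption_loss_relaxed}.
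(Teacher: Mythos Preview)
Your proposal is correct and follows essentially the same approach as the paper: a covering-net reduction on the sphere combined with a scalar Bernstein inequality for sub-exponential summands, then a union bound. The only cosmetic difference is that the paper uses two nets and the bilinear form $\bu^\sT \bH_\ell \bv$ (yielding a $81^d$ factor), whereas you exploit the symmetry of $\bH_\ell$ to reduce to the quadratic form $\bv^\sT \bH_\ell \bv$ over a single net; your version is marginally cleaner but otherwise identical in substance. Your additional paragraph handling Assumption~\ref{assumption_loss_relaxed} via polylog truncation of $|y_i|$ is not in the paper's proof of this lemma but is a sound extension.
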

From this lemma, we expect a sharp concentration once $n\gg d$. This is standard in literature.
Next, we bound the magnitude of $f_{\bTheta^{(t)}}(\bx)$.
\begin{lemma}\label{lemma:f_magnitude}

Let $f_{\bTheta^{(t)}}(\bx) = \sum_{j=1}^m a_j\sigma(\bw_j^{(t)\sT}\bx)$ be the output of a two-layer neural network at iteration $t$, where each $\bw_j^{(t)} \in \mathbb{R}^d$ satisfies $\|\bw_j^{(t)}\| \le \epsilon$ and the input $\bx \in \mathbb{R}^d$ satisfies $\|\bx\| \le 2\sqrt{d}$.

Then, for all such $\bx$, the network output satisfies the bound
\begin{equation}
 \left|f_{\bTheta^{(t)}}(\bx)\right| \le m\left(2\epsilon^2 d + \frac{4}{3} M \epsilon^3 d^{3/2}\right).
\end{equation}
\end{lemma}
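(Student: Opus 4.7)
The plan is to reduce the bound to a per-neuron Taylor estimate of $\sigma$ around $0$ and then combine over the $m$ neurons. First I would use Cauchy--Schwarz on the pre-activation: since $\|\bw_j^{(t)}\|\le \epsilon$ and $\|\bx\|\le 2\sqrt{d}$, the quantity $z_j := \bw_j^{(t)\sT}\bx$ satisfies $|z_j|\le 2\epsilon\sqrt{d}$, which is the only geometric information needed.

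Next I would apply Taylor's theorem to $\sigma$ around $0$ to third order. By Assumption~\ref{assumption_activation} we have $\sigma'(0)=0$, $\sigma''(0)=1$ and $|\sigma^{(3)}|\le M$, so for some $\xi_j$ between $0$ and $z_j$,
\begin{equation}
\sigma(z_j) \;=\; \sigma(0) + \tfrac{1}{2} z_j^2 + \tfrac{1}{6}\sigma^{(3)}(\xi_j)\, z_j^3,
\end{equation}
and therefore
\begin{equation}
|\sigma(z_j)-\sigma(0)| \;\le\; \tfrac{1}{2}z_j^2 + \tfrac{M}{6}|z_j|^3 \;\le\; 2\epsilon^2 d + \tfrac{4M}{3}\epsilon^3 d^{3/2},
\end{equation}
after inserting the pre-activation bound. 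This already furnishes the stated per-neuron magnitude.

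Finally I would sum over $j$, splitting $f_{\bTheta^{(t)}}(\bx)=\sum_j a_j\sigma(z_j)$ as $\sum_j a_j(\sigma(z_j)-\sigma(0)) + \sigma(0)\sum_j a_j$. The second piece vanishes identically because the symmetric initialization prescribed in Section~\ref{sec:setup} enforces $a_j=-a_{m-j}$, and since only $\bW$ is updated during the first stage, this pairing of the outer weights is preserved throughout (so no separate assumption $\sigma(0)=0$ is required). Using $|a_j|=1$ together with the per-neuron bound then yields
\begin{equation}
|f_{\bTheta^{(t)}}(\bx)| \;\le\; \sum_{j=1}^m |a_j|\cdot |\sigma(z_j)-\sigma(0)| \;\le\; m\!\left(2\epsilon^2 d + \tfrac{4M}{3}\epsilon^3 d^{3/2}\right).
\end{equation}

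There is no substantive obstacle here; the lemma is a routine Taylor-with-remainder estimate, and the only point worth verifying is that the $\sigma(0)$ contribution is cancelled by the symmetric outer-layer pairing of the algorithm, rather than by a direct assumption on $\sigma(0)$.
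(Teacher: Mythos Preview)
Your proof is correct and follows essentially the same approach as the paper: Taylor-expand $\sigma$ around $0$ to third order, use $\sum_j a_j=0$ from the symmetric initialization to kill the $\sigma(0)$ term, bound $|z_j|\le 2\epsilon\sqrt d$ via Cauchy--Schwarz, and sum the per-neuron remainder using $|a_j|=1$. The only cosmetic difference is that the paper bounds the quadratic and cubic sums separately before combining, whereas you bound $|\sigma(z_j)-\sigma(0)|$ per neuron first and then sum; the arithmetic and constants are identical.
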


Next, we start to control the operator norm of the matrix $\bQ^{(t)}$.
\begin{lemma}[Operator Norm Bound for \( \bQ^{(t)} \)]\label{lemma_op_norm_Qt}
Suppose the following holds. 
Each input satisfies $\|\bx_i\| \le 2\sqrt{d}$.
Each weight vector satisfies $\|\bw_j^{(t)}\| \le \epsilon$.
The network prediction at iteration $t$ satisfies $|f_{\bTheta^{(t)}}(\bx_i)| \le C_f$.

Then the operator norm of \( \bQ^{(t)} \) satisfies
\begin{equation}
\|\bQ^{(t)}\|_{\rm{op}} \le 4\epsilon d^{3/2}(\mathrm{L}M + \mathrm{L}C_f M) + 4\mathrm{L}C_f d
\end{equation}
\end{lemma}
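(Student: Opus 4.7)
The plan is to expand $\bQ^{(t)}$ into its three summands, bound each separately by the triangle inequality for the operator norm, and then combine. The key observation is that every summand inside the three sums is a scalar times the rank-one matrix $\bx_i\bx_i^\sT$, whose operator norm equals $\|\bx_i\|^2 \le 4d$ by assumption. Consequently, the entire estimate reduces to bounding the scalar prefactors in each term and then applying $\|\cdot\|_{\rm op} \le \frac{1}{n}\sum_i |\text{scalar}_i|\,\|\bx_i\|^2$.

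For the three scalar factors I would use the following ingredients (all already available in the setup): Assumption \ref{assumption_loss} gives $|\ell_i| = |\ell'(0,y_i)| \le \mathrm{L}$; Assumption \ref{assumption_activation} gives $|\sigma^{(3)}(\xi_i^{(t)})| \le M$; the hypotheses $\|\bw_j^{(t)}\| \le \epsilon$ and $\|\bx_i\| \le 2\sqrt{d}$ give $|\bw_j^{(t)\sT}\bx_i| \le 2\epsilon\sqrt{d}$; and by definition together with $|\ell''| \le \mathrm{L}$ and $|f_{\bTheta^{(t)}}(\bx_i)| \le C_f$, we get $|\Delta_i^{(t)}| \le \mathrm{L}C_f$. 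The first summand then contributes at most $\frac{1}{2n}\sum_i \mathrm{L}\cdot M \cdot 2\epsilon\sqrt{d}\cdot 4d = 4\mathrm{L}M\epsilon d^{3/2}$; the second (which carries the extra factor of $2$) contributes at most $\frac{1}{n}\sum_i \mathrm{L}C_f \cdot 4d = 4\mathrm{L}C_f d$; the third, combining the $\Delta_i^{(t)}$ bound with the $\sigma^{(3)}$ and inner-product bounds, contributes at most $\frac{1}{2n}\sum_i \mathrm{L}C_f \cdot M \cdot 2\epsilon\sqrt{d}\cdot 4d = 4\mathrm{L}C_f M \epsilon d^{3/2}$.

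Summing the three bounds gives exactly $4\epsilon d^{3/2}(\mathrm{L}M + \mathrm{L}C_f M) + 4\mathrm{L}C_f d$, matching the claim. Honestly, there is no serious obstacle here: the lemma is a deterministic, worst-case bound, no concentration is needed (since the randomness of $\bx_i$ is already absorbed into the hypothesis $\|\bx_i\| \le 2\sqrt{d}$), and no cancellation is being exploited. The only point worth flagging is that one should carry the constants and the $\frac{1}{2n}$ prefactor carefully so that the final constant is exactly $4$ rather than absorbed into a $\lesssim$; this is what makes the bound usable downstream in Lemma \ref{lemma:total_perturbation_error}, where $\|\bQ^{(t)}\|_{\rm op}$ must be compared against $\|\widehat{\bSigma}_\ell\|_{\rm op} = \Theta(1)$ in order to justify treating $\bQ^{(t)}$ as a perturbation when $\epsilon$ and $C_f$ are small.
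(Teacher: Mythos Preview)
Your proposal is correct and matches the paper's proof essentially line for line: the paper also splits $\bQ^{(t)}$ into the same three summands $\bQ_I^{(t)},\bQ_{II}^{(t)},\bQ_{III}^{(t)}$, bounds each via $\|\bx_i\bx_i^\sT\|_{\rm op}\le 4d$, $|\bw_j^{(t)\sT}\bx_i|\le 2\epsilon\sqrt d$, $|\ell_i|\le\mathrm{L}$, $|\sigma^{(3)}|\le M$, and $|\Delta_i^{(t)}|\le \mathrm{L}C_f$, and obtains the identical constants $4\mathrm{L}M\epsilon d^{3/2}$, $4\mathrm{L}C_f d$, and $4\mathrm{L}C_fM\epsilon d^{3/2}$.
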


Plugging in the concrete estimation for the constant $C_f$, also note that without loss of generality we can assume \(\mathrm{L}\geq 1\) and \(M\geq 1\). we have the following. 
\begin{corollary}
\label{corollary:qopbound}
Suppose the following holds. 
Each input satisfies $\|\bx_i\| \le 2\sqrt{d}$.
 Each weight vector satisfies $\|\bw_j^{(t)}\| \le \epsilon$.

 Then \( \bQ^{(t)} \) satisfies
\begin{equation}
\|\bQ^{(t)}\|_{\rm{op}} \le 4\mathrm{L}M\epsilon d^{3/2} + 8\mathrm{L}m\epsilon^2 d^2 + \frac{40}{3}\mathrm{L}Mm\epsilon^3 d^{5/2} + \frac{16}{3}\mathrm{L}M^2m\epsilon^4 d^3 \le 32 \mathrm{L}^2M^2md^3 \epsilon.
\end{equation}
\end{corollary}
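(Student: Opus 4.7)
The plan is to derive this corollary as a direct algebraic consequence of the two preceding bounds, substituting the explicit estimate for the prediction size into the generic operator-norm bound for $\bQ^{(t)}$. First, I would invoke Lemma \ref{lemma:f_magnitude} under the stated hypotheses $\|\bx_i\|\le 2\sqrt{d}$ and $\|\bw_j^{(t)}\|\le\epsilon$ to get the concrete prediction bound
\begin{equation}
|f_{\bTheta^{(t)}}(\bx_i)|\;\le\; C_f \;:=\; m\Bigl(2\epsilon^2 d+\tfrac{4}{3}M\epsilon^3 d^{3/2}\Bigr),
\end{equation}
valid uniformly in $i$. Then I would feed this value of $C_f$ into Lemma \ref{lemma_op_norm_Qt}, yielding $\|\bQ^{(t)}\|_{\rm op}\le 4\epsilon d^{3/2}(\mathrm{L}M+\mathrm{L}C_f M)+4\mathrm{L}C_f d$.

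The next step is purely a bookkeeping expansion of these two products. The term $4\mathrm{L}C_f d$ unfolds to $8\mathrm{L}m\epsilon^2 d^2+\frac{16}{3}\mathrm{L}Mm\epsilon^3 d^{5/2}$, while $4\epsilon d^{3/2}\mathrm{L}C_f M$ produces $8\mathrm{L}Mm\epsilon^3 d^{5/2}+\frac{16}{3}\mathrm{L}M^2 m\epsilon^4 d^3$. Adding these to the leading term $4\mathrm{L}M\epsilon d^{3/2}$ and combining the two $\epsilon^3 d^{5/2}$ contributions ($\frac{16}{3}+8=\frac{40}{3}$) yields exactly the claimed four-term bound
\begin{equation}
\|\bQ^{(t)}\|_{\rm op}\le 4\mathrm{L}M\epsilon d^{3/2}+8\mathrm{L}m\epsilon^2 d^2+\tfrac{40}{3}\mathrm{L}Mm\epsilon^3 d^{5/2}+\tfrac{16}{3}\mathrm{L}M^2 m\epsilon^4 d^3.
\end{equation}

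For the compact inequality $\le 32\mathrm{L}^2 M^2 m d^3\epsilon$, I would use the WLOG normalization $\mathrm{L},M,m\ge 1$ (flagged in the statement) together with the mild size condition $\epsilon\le 1$, which holds along the trajectory by Assumption \ref{ass:epsilon0} and the inductive control of $\|\bw_j^{(t)}\|$ developed elsewhere in this section. Under these, each of the four terms is majorized by a multiple of $\mathrm{L}^2 M^2 m\epsilon d^3$: the first by $4$ (dropping $d^{3/2}$ against $d^3$), the second by $8$ (using $\epsilon^2\le\epsilon$ and $d^2\le d^3$), the third by $\frac{40}{3}$ (using $\epsilon^3\le\epsilon$ and $d^{5/2}\le d^3$), and the fourth by $\frac{16}{3}$ (using $\epsilon^4\le\epsilon$). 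The total coefficient is $4+8+\frac{40}{3}+\frac{16}{3}=\frac{92}{3}<32$, giving the stated bound.

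There is no real obstacle here; the argument is essentially a substitution followed by term-by-term comparison, and the only place that demands care is tracking the two sources of the $\mathrm{L}Mm\epsilon^3 d^{5/2}$ contribution so the coefficient $\frac{40}{3}$ comes out correctly. The slackness of the final $32$ is built in precisely to absorb the $\mathrm{L}, M, m, d, \epsilon$ normalizations without having to optimize constants.
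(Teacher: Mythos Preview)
Your proposal is correct and matches the paper's approach exactly: the paper simply states that the corollary follows by plugging the $C_f$ bound from Lemma~\ref{lemma:f_magnitude} into Lemma~\ref{lemma_op_norm_Qt} under the WLOG normalization $\mathrm{L},M\ge 1$, and you have carried out that substitution and the subsequent term-by-term comparison in full detail with the correct arithmetic.
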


With all the lemmas and corollaries above, define the following high probability events.
First, we define the event $\mathcal{E}_{\bH}$ (finite-sample noise bound) as follows:
    \begin{equation}
    \mathcal{E}_{\bH} := \left\{\|\bH_{\ell}\|_{\rm{op}}\lesssim \gamma \;\text{where}\; \gamma = C\sqrt{\frac{d}{n}}\right\}.
    \end{equation}
    By Lemma \ref{lemma:op_bound_H}, this holds with probability at least $1 - \cO(d^{-2D})$.
    Furthermore, provided $n\gtrsim d\mathrm{T}_1^2$, we know that \(\gamma \le \frac{1}{4}\) also \(\gamma \le \frac{1}{2\mathrm{T}_1}\).
Second, we define the event $\mathcal{E}_{\bD}$ (truncated norm) as follows:
    \begin{equation}
    \mathcal{E}_{\bD} := \{\|\bx_i\|\le 2\sqrt{d} \text{\quad for }i\in [n]\}.
    \end{equation}
By Lemma \ref{lemma:x-norm}, this holds with probability at least $1 - n\exp(-cd)$, thus at least $1 - \cO(d^{-2D})$. 
Third, we define the event $\mathcal{E}_{\bQ}^{(t)}$ as 
$
    \mathcal{E}_{\bQ}^{(t)} := \left\{\left\|\bQ^{(t)}\right\|_{\rm{op}}\le \frac{1}{4}\right\}.$
     Note that under event $\mathcal{E}_{\bD}$, provided 
$\|\bw_j^{(t)}\| \le \frac{1}{128m\mathrm{L}^2M^2d^{3}}$,
we have $\mathcal{E}_{\bQ}^{(t)}$ happens. 
We notice that $\mathcal{E}_{\bQ}^{(t)}$ has no independent randomness, it is defined just for notation convenience. 

In the following we analyze the gradient update conditioned on the event $\mathcal{E}_{\bH} \cap \mathcal{E}_{\bD}$, which holds with high probability. First bound the norm of $\bw_j^{(t)}$.

\begin{lemma}[Norm Bound after $\mathrm{T}_1$ Steps]\label{lemma:norm_bound_neuron}
Suppose the event $\mathcal{E}_{\bH} \cap \mathcal{E}_{\bD}$  holds, and let the learning rate be set as
\begin{equation}
\eta = \frac{1}{C_{\eta}} \left(\frac{d}{r\iota^2}\right)^{1/(2\mathrm{T}_1)}
\end{equation}
for a sufficiently large but fixed constant $C_{\eta}$ where $\iota = 4D\log d$. Suppose
\begin{equation}
\|\bw_j^{(0)}\| = \epsilon_0 \le \frac{1}{128m\mathrm{L}^2M^2d^{7/2}}.
\end{equation}
Then for all $0 \le t \le \mathrm{T}_1-1$, we have
\begin{equation}
\|\bw_j^{(t)}\| \le \sqrt{d}\epsilon_0\quad\text{and}\quad\|\bw_j^{(t)}\| \le \frac{1}{128m\mathrm{L}^2M^2d^{3}}
\end{equation}
\end{lemma}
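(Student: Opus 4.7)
The plan is to prove both displayed inequalities simultaneously by a single induction on $t$, exploiting the fact that $\sqrt{d}\epsilon_0 \le \tfrac{1}{128\,m\mathrm{L}^2 M^2 d^3}$ is immediate from the hypothesis $\epsilon_0 \le \tfrac{1}{128\,m\mathrm{L}^2 M^2 d^{7/2}}$. Consequently, proving $\|\bw_j^{(t)}\| \le \sqrt{d}\epsilon_0$ automatically yields the second bound, and the two inequalities are coupled in the induction. Conceptually, the extra factor of $d^{-1/2}$ built into $\epsilon_0$ is precisely the ``safety margin'' that keeps the perturbation $\bQ^{(t)}$ below the threshold at which the linearized dynamics would cease to be dominated by $\widehat{\bSigma}_\ell$.

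\paragraph{Induction.}
The base case $t=0$ is trivial: $\|\bw_j^{(0)}\| = \epsilon_0 \le \sqrt{d}\epsilon_0$. For the inductive step, suppose $\|\bw_j^{(s)}\| \le \sqrt{d}\epsilon_0 \le \tfrac{1}{128 m\mathrm{L}^2 M^2 d^3}$ for all $0 \le s \le t$. Under $\mathcal{E}_{\bD}$, each $\|\bx_i\| \le 2\sqrt{d}$, so Corollary~\ref{corollary:qopbound} applied with $\epsilon = \tfrac{1}{128 m\mathrm{L}^2 M^2 d^3}$ gives $\|\bQ^{(s)}\|_{\rm op} \le 32 \cdot \tfrac{1}{128} = \tfrac{1}{4}$ for each $s \le t$. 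Under $\mathcal{E}_{\bH}$ we have $\|\widehat{\bSigma}_\ell\|_{\rm op} \le \|\bSigma_\ell\|_{\rm op} + \|\bH_\ell\|_{\rm op} \le \lambda_1 + \gamma \le 1 + \tfrac{1}{4} = \tfrac{5}{4}$ (recall $\lambda_1=1$ and $\gamma \le \tfrac14$ from the assumption $n \gtrsim d\mathrm{T}_1^2$). Hence $\|\bM^{(s)}\|_{\rm op} \le \tfrac{3}{2}$ for all $s \le t$.

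\paragraph{Closing the bound via the learning-rate choice.}
Using $|a_j|=1$ and unrolling the recursion $\bw_j^{(s+1)} = -\eta a_j \bM^{(s)} \bw_j^{(s)}$, I obtain
\begin{equation}
\|\bw_j^{(t+1)}\| \;\le\; \eta^{\,t+1}\prod_{s=0}^{t}\|\bM^{(s)}\|_{\rm op}\cdot\|\bw_j^{(0)}\| \;\le\; \Big(\tfrac{3\eta}{2}\Big)^{t+1}\epsilon_0.
\end{equation}
It remains to verify $(3\eta/2)^{t+1}\le\sqrt{d}$ for every $t+1\le \mathrm{T}_1$. The worst case is $t+1=\mathrm{T}_1$, where the prescribed $\eta = \tfrac{1}{C_\eta}(d/(r\iota^2))^{1/(2\mathrm{T}_1)}$ gives $\eta^{\mathrm{T}_1}=C_\eta^{-\mathrm{T}_1}\sqrt{d/(r\iota^2)}$ and therefore
\begin{equation}
\Big(\tfrac{3\eta}{2}\Big)^{\mathrm{T}_1} \;=\; \Big(\tfrac{3}{2C_\eta}\Big)^{\mathrm{T}_1}\cdot\tfrac{\sqrt{d}}{\sqrt{r}\,\iota} \;\le\; \sqrt{d},
\end{equation}
provided $C_\eta \ge 3/2$, which is guaranteed since $C_\eta$ is taken ``sufficiently large.'' This completes the induction.

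\paragraph{Main obstacle.}
There is no deep obstacle here --- once the earlier operator-norm estimates (Lemma~\ref{lemma:op_bound_H} for $\widehat{\bSigma}_\ell-\bSigma_\ell$ and Corollary~\ref{corollary:qopbound} for $\bQ^{(t)}$) are in hand, this lemma is a bookkeeping exercise. The only delicate point is the consistency of the three quantitative choices: the initialization scale $\epsilon_0 \lesssim (m\mathrm{L}^2M^2 d^{7/2})^{-1}$, the learning rate $\eta\asymp(d/(r\iota^2))^{1/(2\mathrm{T}_1)}/C_\eta$, and the tolerated blowup $\sqrt{d}$ over $\mathrm{T}_1$ steps. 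These are tuned so that (i) the $\sqrt{d}$ amplification of $\|\bw_j^{(t)}\|$ keeps $\|\bQ^{(t)}\|_{\rm op}$ below $1/4$, and (ii) the product $(3\eta/2)^{\mathrm{T}_1}$ does not exceed $\sqrt{d}$. The parameterization of $\eta$ through $\mathrm{T}_1$ is exactly what makes (ii) hold uniformly in $\mathrm{T}_1$, and this is the quantitative heart of the argument.
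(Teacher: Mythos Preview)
The proposal is correct and takes essentially the same approach as the paper. The paper packages the growth bound into a separate one-step lemma showing $\|\bw_j^{(t+1)}\|\le d^{1/(2\mathrm{T}_1)}\|\bw_j^{(t)}\|$ and then inducts on the sharper intermediate claim $\|\bw_j^{(t)}\|\le d^{t/(2\mathrm{T}_1)}\epsilon_0$, whereas you unroll the product $(3\eta/2)^{t+1}$ directly and bound it by $\sqrt{d}$ at the final step; both routes rest on the identical operator-norm estimates $\|\widehat\bSigma_\ell\|_{\rm op}\le 5/4$, $\|\bQ^{(s)}\|_{\rm op}\le 1/4$ and the same strong-induction structure.
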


Next, we remove $\bQ^{(t)}$ out of $\bM^{(t)}=\widehat{\bSigma}_{\ell}+\bQ^{(t)}$ and compute the perturbation error.
\begin{lemma}\label{lemma:Q_t_not_important}

Suppose the event $\mathcal{E}_{\bH} \cap \mathcal{E}_{\bD}$  holds. Define $\Delta := \left(\prod_{t=0}^{\mathrm{T}_1-1} \bM^{(t)} - \widehat{\bSigma}_{\ell}^{\mathrm{T}_1}\right) \bw_j^{(0)}$.
Assume that $\epsilon_0 \le \frac{1}{128m\mathrm{L}^2M^2d^{7/2}}$.
Then the perturbation error is bounded by
\begin{equation}
\|\Delta\| \lesssim \mathrm{T}_1 \left(\frac{5}{4}\right)^{\mathrm{T}_1-1} m d^{7/2} \epsilon_0^2.
\end{equation}
\end{lemma}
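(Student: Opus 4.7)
The plan is a one-step-at-a-time telescoping decomposition of the product of perturbed matrices, followed by bounding each resulting term using the operator-norm estimates for $\widehat{\bSigma}_\ell$ and $\bQ^{(t)}$ already established.

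First I would write $\bM^{(t)} = \widehat{\bSigma}_\ell + \bQ^{(t)}$ and, for $0 \le k \le \mathrm{T}_1$, introduce the hybrid products $P_k := \bM^{(\mathrm{T}_1-1)} \cdots \bM^{(k)}\, \widehat{\bSigma}_\ell^{\,k}$ (so that $P_0 = \widehat{\bSigma}_\ell^{\mathrm{T}_1}$ and $P_{\mathrm{T}_1} = \prod_t \bM^{(t)}$, with the usual convention that empty products are the identity). Writing the target difference as the telescoping sum $\sum_{s=0}^{\mathrm{T}_1-1}(P_{s+1}-P_s)$ and using $\bM^{(s)}-\widehat{\bSigma}_\ell = \bQ^{(s)}$ produces the key identity
\begin{equation*}
\prod_{t=0}^{\mathrm{T}_1-1}\bM^{(t)} - \widehat{\bSigma}_\ell^{\mathrm{T}_1} = \sum_{s=0}^{\mathrm{T}_1-1} \bM^{(\mathrm{T}_1-1)}\cdots\bM^{(s+1)}\,\bQ^{(s)}\,\widehat{\bSigma}_\ell^{\,s}.
\end{equation*}
This isolates exactly one perturbation factor $\bQ^{(s)}$ in each summand, while the surrounding factors are either pure $\widehat{\bSigma}_\ell$'s or pure $\bM^{(t)}$'s, each of which is controlled by simple operator-norm bounds.

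Next, I would bound each factor as follows. Under $\mathcal{E}_{\bH}$ together with $n \gtrsim d\mathrm{T}_1^2$, Lemma~\ref{lemma:op_bound_H} gives $\|\widehat{\bSigma}_\ell\|_{\mathrm{op}} \le 1 + \gamma \le 5/4$. Under $\mathcal{E}_{\bD}$ together with our hypothesis $\epsilon_0 \le \frac{1}{128 m \mathrm{L}^2 M^2 d^{7/2}}$, Lemma~\ref{lemma:norm_bound_neuron} guarantees $\|\bw_j^{(t)}\|\le \sqrt{d}\epsilon_0$ for every $t\le \mathrm{T}_1-1$; plugging this into Corollary~\ref{corollary:qopbound} yields $\|\bQ^{(t)}\|_{\mathrm{op}} \lesssim m d^{7/2}\epsilon_0$, which in turn is $\le 1/4$ by the choice of $\epsilon_0$. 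Consequently $\|\bM^{(t)}\|_{\mathrm{op}}$ is also at most a universal constant times $5/4$, so submultiplicativity yields $\|\bM^{(\mathrm{T}_1-1)}\cdots\bM^{(s+1)}\|_{\mathrm{op}} \lesssim (5/4)^{\mathrm{T}_1-1-s}$ and $\|\widehat{\bSigma}_\ell^{\,s}\bw_j^{(0)}\| \le (5/4)^s \epsilon_0$. Applied to the telescoping identity term by term gives
\begin{equation*}
\|\Delta\| \lesssim \sum_{s=0}^{\mathrm{T}_1-1} (5/4)^{\mathrm{T}_1-1-s}\cdot m d^{7/2}\epsilon_0 \cdot (5/4)^s\epsilon_0 = \mathrm{T}_1\, (5/4)^{\mathrm{T}_1-1}\, m d^{7/2}\epsilon_0^2,
\end{equation*}
which is exactly the bound claimed in the lemma.

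The main obstacle is ensuring that the uniform trajectory bound $\|\bw_j^{(t)}\| \le \sqrt{d}\epsilon_0$ is preserved at every step up to $\mathrm{T}_1 - 1$, because without it $\|\bQ^{(t)}\|_{\mathrm{op}}$ could grow and the geometric product of $\bM^{(t)}$'s would have a base strictly larger than $5/4$, spoiling the stated bound when $\mathrm{T}_1$ diverges. This input is supplied precisely by Lemma~\ref{lemma:norm_bound_neuron}, whose hypotheses match those of the present lemma. A secondary bookkeeping subtlety is that both the sample noise $\gamma$ in $\widehat{\bSigma}_\ell$ and the perturbation $\|\bQ^{(t)}\|_{\mathrm{op}}$ must jointly fit inside the base $5/4$: the conditions $\gamma \le 1/(2\mathrm{T}_1)$ (from $\mathcal{E}_{\bH}$) and $\|\bQ^{(t)}\|_{\mathrm{op}} \le 1/4$ (from the smallness of $\epsilon_0$) ensure that the accumulated slack across $\mathrm{T}_1$ factors remains only a constant multiplicative factor, which is absorbed into the $\lesssim$ in the conclusion.
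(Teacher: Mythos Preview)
Your telescoping decomposition is valid and takes a slightly different route from the paper. The paper instead fully expands $\prod_t(\widehat{\bSigma}_\ell+\bQ^{(t)})$ as a sum over all $2^{\mathrm{T}_1}-1$ nonempty subsets of perturbation positions, bounds each term containing $s$ copies of $\bQ$ by $(5/4)^{\mathrm{T}_1-s}q^s\epsilon_0$ with $q\asymp m d^{7/2}\epsilon_0$, re-sums via the binomial theorem to $\epsilon_0\bigl[(5/4+q)^{\mathrm{T}_1}-(5/4)^{\mathrm{T}_1}\bigr]$, and then invokes a first-order expansion. Your telescope has only $\mathrm{T}_1$ summands, each with exactly one $\bQ^{(s)}$, and reaches the answer more directly; in fact, if you use the sharper per-step bound $\|\bM^{(t)}\|_{\mathrm{op}}\le 5/4+q$ rather than the crude $3/2$, the telescoped sum is a geometric series that evaluates to precisely the paper's intermediate expression.

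One step does need correction. From ``$\|\bM^{(t)}\|_{\mathrm{op}}$ is at most a universal constant times $5/4$'' you cannot deduce $\|\bM^{(\mathrm{T}_1-1)}\cdots\bM^{(s+1)}\|_{\mathrm{op}}\lesssim(5/4)^{\mathrm{T}_1-1-s}$: that constant is raised to a power that diverges with $\mathrm{T}_1$. Under only the lemma's hypothesis ($q\le 1/4$) the honest per-step bound is $\|\bM^{(t)}\|_{\mathrm{op}}\le 3/2$, yielding base $3/2$ in the conclusion. Your closing paragraph does not repair this: $\gamma\le 1/(2\mathrm{T}_1)$ makes only the $\widehat{\bSigma}_\ell$-slack accumulate to $O(1)$, whereas the $\|\bQ^{(t)}\|_{\mathrm{op}}\le 1/4$ slack is $\mathrm{T}_1$-independent and contributes a factor $(6/5)^{\mathrm{T}_1}$. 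That said, the paper's own ``first-order binomial expansion'' step shares exactly this looseness, so both arguments honestly deliver $\mathrm{T}_1(3/2)^{\mathrm{T}_1-1}md^{7/2}\epsilon_0^2$ under the stated hypothesis; this is immaterial downstream, since the subsequent use of the lemma only needs $\epsilon_0$ to absorb a factor $c^{\mathrm{T}_1}$ for some fixed $c<1$ (cf.\ Assumption~\ref{ass:epsilon0}).
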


Next, we deal with the difference between $\bSigma_\ell$ and $\widehat{\bSigma}_{\ell}$.

\begin{lemma}\label{lemma:final-perturbation-ell}

Let \( \bw \sim \mathrm{Unif}(\mathbb{S}^{d-1}(\epsilon_0)) \) be a random independent vector. Then with probability at least \( 1 - \cO(d^{-2D}) \), we have
\begin{align}
\|(\bSigma_\ell + \bH_{\ell})^{\mathrm{T}_1} \bw - \bSigma_\ell^{\mathrm{T}_1} \bw\|
&\lesssim \epsilon_0 \left(\left(\|\bSigma_\ell\|_{\rm{op}} + \|\bH_{\ell}\|_{\rm{op}}\right)^{\mathrm{T}_1} - \|\bSigma_\ell\|_{\rm{op}}^{\mathrm{T}_1}\right) \sqrt{\frac{\log d}{d}} \nonumber\\
&\quad + \epsilon_0 \|\bH_{\ell}\|_{\rm{op}}^{\mathrm{T}_1}.
\end{align}
\end{lemma}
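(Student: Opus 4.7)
The plan is to expand $(\bSigma_\ell + \bH_\ell)^{\mathrm{T}_1}$ as a sum over the $2^{\mathrm{T}_1}$ binary sequences $s \in \{0,1\}^{\mathrm{T}_1}$ of matrix products $M_s := \bM_{s_1}\cdots\bM_{s_{\mathrm{T}_1}}$, where $\bM_0 = \bSigma_\ell$ and $\bM_1 = \bH_\ell$, and then subtract $\bSigma_\ell^{\mathrm{T}_1}$ (which kills the all-zero sequence). I would then split the remaining terms into two groups: the single pure sequence $s = (1,\dots,1)$, which contributes $\bH_\ell^{\mathrm{T}_1}\bw$ and is trivially bounded by $\|\bH_\ell\|_{\rm op}^{\mathrm{T}_1}\epsilon_0$ (yielding the second term of the lemma), and the ``mixed'' sequences, which contain at least one factor of each type.

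The crucial observation is that every mixed product $M_s$ contains at least one $\bSigma_\ell$ factor and therefore, by submultiplicativity of rank applied at that position, satisfies $\mathrm{rank}(M_s) \le r$. Combined with the independence of $\bw \sim \mathrm{Unif}(\mathbb{S}^{d-1}(\epsilon_0))$ from the data (and hence from every $M_s$), this allows a sharper-than-operator-norm bound on $\|M_s \bw\|$. Concretely, writing $\bw = \epsilon_0\,\bu/\|\bu\|$ with $\bu \sim \normal(\bfzero, \bI_d)$ and taking the SVD $M_s = U_s S_s V_s^\sT$ with $V_s\in\mathbb{R}^{d\times r}$ orthonormal, I would use $\|V_s^\sT \bu\|^2 \sim \chi^2_r$ together with $\|\bu\|^2 \gtrsim d$ (both standard chi-square concentration events) to conclude
\begin{equation}
\|M_s \bw\| \lesssim \|M_s\|_{\rm op}\,\epsilon_0\sqrt{\tfrac{r\log d}{d}}
\end{equation}
with probability at least $1 - d^{-c}$ for any desired constant $c$.

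Next, since $\mathrm{T}_1 = o(\log d)$, there are at most $2^{\mathrm{T}_1} = d^{o(1)}$ mixed sequences, so a union bound preserves the probability at $1 - \cO(d^{-2D})$ after taking $c$ large enough. Combining the per-sequence bound with submultiplicativity $\|M_s\|_{\rm op} \le \prod_t \|\bM_{s_t}\|_{\rm op}$ and the identity
\begin{equation}
\sum_{s\text{ mixed}} \prod_t \|\bM_{s_t}\|_{\rm op} = \bigl(\|\bSigma_\ell\|_{\rm op} + \|\bH_\ell\|_{\rm op}\bigr)^{\mathrm{T}_1} - \|\bSigma_\ell\|_{\rm op}^{\mathrm{T}_1} - \|\bH_\ell\|_{\rm op}^{\mathrm{T}_1},
\end{equation}
absorbing the $\sqrt{r}$ (with $r = \Theta(1)$) into the implicit constant, and adding the separated pure-$\bH_\ell$ contribution, gives exactly the stated bound.

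The only delicate point is balancing the constants in the chi-square tail against the $d^{o(1)}$ union-bound factor to keep the failure probability at $\cO(d^{-2D})$; this is routine since we have free control over the tail constant $c$. The rank-preservation step is immediate from $\mathrm{rank}(AB) \le \min(\mathrm{rank}(A),\mathrm{rank}(B))$, and the independence of $\bw$ from the data makes the conditional concentration argument clean. I expect no substantive obstacle beyond carefully tracking constants.
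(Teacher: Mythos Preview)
Your proposal is correct and follows essentially the same high-level strategy as the paper---expand $(\bSigma_\ell+\bH_\ell)^{\mathrm{T}_1}$ over binary words, peel off the pure $\bH_\ell^{\mathrm{T}_1}$ term, and exploit the rank-$r$ structure on the mixed terms to beat the trivial operator-norm bound by a factor $\sqrt{\log d/d}$---but the mechanism you use on the mixed terms is different. You take the SVD of each full mixed product $M_s$ and apply concentration to $V_s^\sT\bw$ directly, which requires a separate high-probability event for each of the (up to) $2^{\mathrm{T}_1}$ mixed words; you then invoke $\mathrm{T}_1=o(\log d)$ so that $2^{\mathrm{T}_1}=d^{o(1)}$ and the union bound is harmless. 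The paper instead writes every mixed word as $\bA\,\bSigma_\ell^{m}\,\bH_\ell^{\ell}\bw$ (locating the rightmost $\bSigma_\ell$), observes that $\bSigma_\ell^{m}$ already projects onto the fixed $r$-dimensional subspace $S^\star$, and therefore only needs concentration for the $\mathrm{T}_1+1$ vectors $\boldsymbol{\Pi}^\star\bH_\ell^{\ell}\bw$, $0\le\ell\le\mathrm{T}_1$ (their Lemma~\ref{lemma:proj-Hk-w-ell}).

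What this buys: your route is cleaner bookkeeping (no need to isolate the rightmost $\bSigma_\ell$ block), while the paper's route is more economical in the union bound---$O(\mathrm{T}_1)$ events rather than $2^{\mathrm{T}_1}$---so it does not need to import the standing assumption $\mathrm{T}_1=o(\log d)$ into the lemma. Under that assumption both arguments give the identical bound with the same probability, so in context the difference is purely stylistic.
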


\begin{corollary}
\label{coll:pertubation}
Under the setup of Lemma~\ref{lemma:final-perturbation-ell} and note we assumed \( \|\bH_{\ell}\|_{\rm op} \le \gamma \) with \( \gamma \le 1/(2\mathrm{T}_1) \).  Then with probability at least \(  1 - \cO(d^{-2D}) \), we have
\begin{equation}
\norm{(\bSigma_\ell + \bH_{\ell})^{\mathrm{T}_1}\bw - \bSigma_\ell^{\mathrm{T}_1}\bw} \lesssim \epsilon_0 \left(\mathrm{T}_1 \gamma \sqrt{\frac{\log d}{d}} + \gamma^{\mathrm{T}_1}\right).
\end{equation}
\end{corollary}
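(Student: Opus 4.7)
The plan is to start directly from the conclusion of Lemma~\ref{lemma:final-perturbation-ell} and reduce the bound using the two assumptions: $\|\bSigma_\ell\|_{\rm op}=\lambda_1=1$ (from Assumption~\ref{assumption_cov}) and $\|\bH_\ell\|_{\rm op}\le \gamma \le 1/(2\mathrm{T}_1)$. The conclusion of the lemma gives
\begin{equation}
\|(\bSigma_\ell+\bH_\ell)^{\mathrm{T}_1}\bw-\bSigma_\ell^{\mathrm{T}_1}\bw\|\lesssim \epsilon_0\Bigl((\|\bSigma_\ell\|_{\rm op}+\|\bH_\ell\|_{\rm op})^{\mathrm{T}_1}-\|\bSigma_\ell\|_{\rm op}^{\mathrm{T}_1}\Bigr)\sqrt{\tfrac{\log d}{d}}+\epsilon_0\|\bH_\ell\|_{\rm op}^{\mathrm{T}_1},
\end{equation}
so with $\|\bSigma_\ell\|_{\rm op}=1$ and $\|\bH_\ell\|_{\rm op}\le\gamma$ this is dominated by $\epsilon_0\bigl((1+\gamma)^{\mathrm{T}_1}-1\bigr)\sqrt{\log d/d}+\epsilon_0\gamma^{\mathrm{T}_1}$.

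The only nontrivial step is to show $(1+\gamma)^{\mathrm{T}_1}-1\lesssim \mathrm{T}_1\gamma$ under the smallness constraint $\mathrm{T}_1\gamma\le 1/2$. I would use $1+\gamma\le e^{\gamma}$ to get $(1+\gamma)^{\mathrm{T}_1}\le e^{\mathrm{T}_1\gamma}$, and then apply the convexity-based inequality $e^{x}-1\le (e-1)x$ valid for $x\in[0,1]$ with $x=\mathrm{T}_1\gamma\le 1/2$. This yields
\begin{equation}
(1+\gamma)^{\mathrm{T}_1}-1\le e^{\mathrm{T}_1\gamma}-1\le (e-1)\mathrm{T}_1\gamma\lesssim \mathrm{T}_1\gamma.
\end{equation}
Substituting this bound into the first term and keeping the second term as $\epsilon_0\gamma^{\mathrm{T}_1}$ yields the desired inequality
\begin{equation}
\|(\bSigma_\ell+\bH_\ell)^{\mathrm{T}_1}\bw-\bSigma_\ell^{\mathrm{T}_1}\bw\|\lesssim \epsilon_0\Bigl(\mathrm{T}_1\gamma\sqrt{\tfrac{\log d}{d}}+\gamma^{\mathrm{T}_1}\Bigr).
\end{equation}

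The probability statement is inherited directly from Lemma~\ref{lemma:final-perturbation-ell}, which already gave $1-\cO(d^{-2D})$, so no additional union bound is needed. In short, the corollary is purely an algebraic simplification of the lemma under the quantitative smallness assumption on $\gamma$; there is no real obstacle here beyond verifying the elementary inequality for $(1+\gamma)^{\mathrm{T}_1}$, and the assumption $\gamma\le 1/(2\mathrm{T}_1)$ is precisely what guarantees that the binomial expansion is dominated by its linear term.
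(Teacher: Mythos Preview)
Your proposal is correct and follows essentially the same approach as the paper: both start from the conclusion of Lemma~\ref{lemma:final-perturbation-ell}, substitute $\|\bSigma_\ell\|_{\rm op}=1$ and $\|\bH_\ell\|_{\rm op}\le\gamma$, and reduce the key step to the elementary inequality $(1+\gamma)^{\mathrm{T}_1}-1\lesssim \mathrm{T}_1\gamma$ under $\mathrm{T}_1\gamma\le 1/2$. The only difference is cosmetic---the paper bounds the binomial tail directly to get $(1+\gamma)^{\mathrm{T}_1}-1\le 2\mathrm{T}_1\gamma$, whereas you route through $1+\gamma\le e^\gamma$ and $e^x-1\le(e-1)x$---but both arguments are equally elementary and yield the same conclusion.
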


\begin{proof}
From the binomial expansion,
\begin{equation}
(1 + \gamma)^{\mathrm{T}_1} - 1 = \sum_{j=1}^{\mathrm{T}_1} \binom{\mathrm{T}_1}{j} \gamma^j = \mathrm{T}_1  \gamma + \sum_{j=2}^{\mathrm{T}_1} \binom{\mathrm{T}_1}{j} \gamma^j.
\end{equation}
Since \( \sum_{j=2}^{\infty} (\mathrm{T}_1 \gamma)^j \le (\mathrm{T}_1 \gamma)^2 / (1 - \mathrm{T}_1 \gamma) \le 2(\mathrm{T}_1 \gamma)^2 \le \mathrm{T}_1 \gamma \), we get $(1 + \gamma)^{\mathrm{T}_1} - 1 \le 2 \mathrm{T}_1 \gamma$.
On the other hand, as in Lemma~\ref{lemma:final-perturbation-ell}, one shows under the same event that
\begin{equation}
\norm{(\bSigma_\ell + \bH_{\ell})^{\mathrm{T}_1}\bw - \bSigma_\ell^{\mathrm{T}_1}\bw} \lesssim \epsilon_0 \left((1 + \gamma)^{\mathrm{T}_1} -1) \sqrt{\frac{\log d}{d}} + \gamma^{\mathrm{T}_1}\right).
\end{equation}
Combining the two bounds yields the claimed result.
\end{proof}
By putting all the things together, we obtain the final bound of Lemma \ref{lemma:total_perturbation_error}. 

\begin{proof}[Proof of Lemma \ref{lemma:total_perturbation_error}]
We first prove another lemma. We aim to first show that under the stated condition on $\epsilon_0$, the perturbation error from the time-varying iterates is smaller than the leading-order perturbation induced by $\bH_{\ell}$.
    \begin{lemma}
    \label{lem:delta bounded}

Suppose the event $\mathcal{E}_{\bH} \cap \mathcal{E}_{\bD}$  holds. Recall $\Delta = \left(\prod_{t=0}^{\mathrm{T}_1-1} \bM^{(t)} - \widehat{\bSigma}_{\ell}^{\mathrm{T}_1}\right) \bw_j^{(0)}$ denote the error from time-varying perturbations.

Then, provided that 
$
\epsilon_0 \le  \frac{1}{m\mathrm{L}M^2 d^{7/2}} \sqrt{\frac{\log d}{n}}(4/5)^{\mathrm{T}_1}
$
we have
\begin{equation}
\|\Delta\| \lesssim \mathrm{T}_1  \gamma \sqrt{\frac{\log d}{d}} \cdot\epsilon_0
\end{equation}
\end{lemma}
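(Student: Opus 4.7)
The plan is to directly combine the bound from Lemma \ref{lemma:Q_t_not_important} with the hypothesis on $\epsilon_0$; there is no new dynamical content here, only arithmetic. First, I would note that the hypothesis $\epsilon_0 \le \frac{1}{mLM^2 d^{7/2}} \sqrt{\log d/n}\,(4/5)^{\mathrm{T}_1}$ implies in particular the cruder bound $\epsilon_0 \le \frac{1}{128 m L^2 M^2 d^{7/2}}$ needed to invoke Lemma \ref{lemma:Q_t_not_important} (since $\sqrt{\log d/n} \le 1$, $(4/5)^{\mathrm{T}_1} \le 1$, and $L, M \ge 1$ without loss of generality). Under the event $\mathcal{E}_{\bH}\cap \mathcal{E}_{\bD}$, Lemma \ref{lemma:Q_t_not_important} therefore applies and delivers
\begin{equation}
\|\Delta\| \;\lesssim\; \mathrm{T}_1 \left(\tfrac{5}{4}\right)^{\mathrm{T}_1-1} m\, d^{7/2} \epsilon_0^{2}.
\end{equation}

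Next, I would factor one copy of $\epsilon_0$ out and substitute the hypothesis for the remaining $\epsilon_0$. This gives
\begin{equation}
\|\Delta\| \;\lesssim\; \mathrm{T}_1 \epsilon_0 \cdot \left(\tfrac{5}{4}\right)^{\mathrm{T}_1-1} m\, d^{7/2} \cdot \frac{1}{mLM^2 d^{7/2}} \sqrt{\tfrac{\log d}{n}} \left(\tfrac{4}{5}\right)^{\mathrm{T}_1}
\;\lesssim\; \mathrm{T}_1 \epsilon_0 \sqrt{\tfrac{\log d}{n}},
\end{equation}
after cancelling $m d^{7/2}$ and the geometric factors $(5/4)^{\mathrm{T}_1-1}(4/5)^{\mathrm{T}_1} = 4/5$, and absorbing the constants $1/(LM^2)$ into the $\lesssim$.

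Finally, I would rewrite $\sqrt{\log d/n}$ in terms of $\gamma = C\sqrt{d/n}$ using the identity $\sqrt{\log d/n} = \gamma \sqrt{\log d/d}/C$. This yields
\begin{equation}
\|\Delta\| \;\lesssim\; \mathrm{T}_1 \gamma \sqrt{\tfrac{\log d}{d}}\,\epsilon_0,
\end{equation}
as claimed. There is no substantial obstacle; the only thing to watch is that the geometric factor $(5/4)^{\mathrm{T}_1-1}$ produced by the growth of $\|\widehat{\bSigma}_\ell + \bQ^{(t)}\|_{\op}$ in Lemma \ref{lemma:Q_t_not_important} is exactly what the $(4/5)^{\mathrm{T}_1}$ in the hypothesis on $\epsilon_0$ is designed to cancel—this is the reason the hypothesis is stated with a $\mathrm{T}_1$-dependent factor in the first place. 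All other steps are just constant bookkeeping.
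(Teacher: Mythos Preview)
Your proposal is correct and follows essentially the same approach as the paper: invoke Lemma~\ref{lemma:Q_t_not_important} to get $\|\Delta\|\lesssim \mathrm{T}_1(5/4)^{\mathrm{T}_1-1} m d^{7/2}\epsilon_0^2$, substitute the hypothesis on $\epsilon_0$ for one of the two factors, cancel the $m d^{7/2}$ and geometric factors, and rewrite $\sqrt{\log d/n}$ as $\gamma\sqrt{\log d/d}$. Your explicit check that the hypothesis on $\epsilon_0$ implies the cruder bound $\epsilon_0\le \frac{1}{128 m L^2 M^2 d^{7/2}}$ required by Lemma~\ref{lemma:Q_t_not_important} is a nice detail that the paper leaves implicit.
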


\begin{proof}
Recall the bounds established in previous lemmas.
We have the one-step expansion error from iterated time-varying perturbations
\begin{equation}
    \|\Delta\| \lesssim \mathrm{T}_1 \left(\frac{5}{4}\right)^{\mathrm{T}_1-1} m d^{7/2} \epsilon_0^2
\end{equation}
We plug in one $\epsilon_0$
\begin{align}
\|\Delta\| 
&\lesssim \mathrm{T}_1 \left(\frac{5}{4}\right)^{\mathrm{T}_1-1} m d^{7/2} \epsilon_0^2 \nonumber\\
&\lesssim \mathrm{T}_1 \left(\frac{5}{4}\right)^{\mathrm{T}_1-1} m d^{7/2} \epsilon_0 \frac{1}{md^{7/2}}\left(\frac{4}{5}\right)^{\mathrm{T}_1}\sqrt{\frac{\log d}{n}} \nonumber\\
&\lesssim \mathrm{T}_1  \gamma \sqrt{\frac{\log d}{d}} \epsilon_0
\end{align}
and the proof is complete.
\end{proof}
Finally we decompose the total perturbation error into two parts

\begin{align}
\left(\prod_{t=0}^{\mathrm{T}_1-1} \bM^{(t)} - \bSigma_{\ell}^{\mathrm{T}_1}\right) \bw_j^{(0)}
& = \left(\prod_{t=0}^{\mathrm{T}_1-1} \bM^{(t)} - \widehat{\bSigma}_{\ell}^{\mathrm{T}_1}\right) \bw_j^{(0)}
  + \left(\widehat{\bSigma}_{\ell}^{\mathrm{T}_1} - \bSigma_{\ell}^{\mathrm{T}_1}\right) \bw_j^{(0)}\nonumber\\
&\lesssim \mathrm{T}_1 \gamma \sqrt{\frac{\log d}{d}} \epsilon_0 + \epsilon_0 \left(\mathrm{T}_1 \gamma \sqrt{\frac{\log d}{d}} + \gamma^{\mathrm{T}_1}\right) \nonumber\\
&\lesssim \epsilon_0 \left(\mathrm{T}_1 \gamma \sqrt{\frac{\log d}{d}} + \gamma^{\mathrm{T}_1}\right)
\end{align}

The conclusion is then straightforward if we adapt the previous two lemmas to bound these two parts separately.
\end{proof}

\subsection{Proof of supporting lemmas in Appendix \ref{appendx::proof outline feature learning}}

\subsubsection{Proof of Lemma \ref{lemma:op_bound_H}}
\begin{proof}

We analyze the operator norm via a net argument. 

Recall that
$\|\bH_{\ell}\|_{\rm{op}} = \sup_{\|\bu\|=\|\bv\|=1} \bu^\sT \bH_{\ell} \bv$.
Let $M^{1/4}, N^{1/4}$ be two $\frac{1}{4}$-nets of the unit sphere in $\mathbb{R}^d$. Then
\begin{equation}
\|\bH_{\ell}\|_{\rm{op}} \le 2 \max_{\bu \in M^{1/4}, \bv \in N^{1/4}} \bu^\sT \bH_{\ell} \bv
\end{equation}
Fix any $\bu, \bv$, and define
\begin{equation}
Z_i := (\bu^\sT \bx_i)(\bv^\sT \bx_i) \ell_i - \mathbb{E}[(\bu^\sT \bx)(\bv^\sT \bx) \ell].
\end{equation}
Note that we have $\bu^\sT \bH_{\ell} \bv=\frac{1}{n}\sum_{i=1}^n Z_i$.
Since $\bx_i$ is standard Gaussian and $\ell_i \in [-\mathrm{L}, \mathrm{L}]$, we have $\|Z_i\|_{\psi_1} \lesssim \mathrm{L}$.
Therefore we can invoke Bernstein's inequality and do a union bound over the net to obtain
\begin{equation}
\mathbb{P}(\|\bH_{\ell}\|_{\rm{op}} \geq t) \le 81^d \exp\left(-\frac{cnt^2}{\mathrm{L}^2}\right).
\end{equation}
Choosing $t = C\mathrm{L}\sqrt{\frac{d}{n}}$ for sufficiently large $C$, we obtain the desired conclusion.
\end{proof}
\subsubsection{Proof of Lemma \ref{lemma:f_magnitude}}
\begin{proof}
Using the Taylor expansion of $\sigma(\cdot)$ at $0$ with $\sigma'(0) = 0$ and $\sigma''(0) = 1$, we have
\begin{equation}
\sigma(z) = \sigma(0) + \frac{1}{2}z^2 + \frac{1}{6}\sigma^{(3)}(\xi)z^3 \quad \text{for some } \left|\xi\right| \le \left|z\right|
\end{equation}
Then
\begin{equation}
f_{\bTheta^{(t)}}(\bx) = \sum_{j=1}^m a_j \sigma(\bw_j^{(t)\sT}\bx) = \sum_{j=1}^m a_j \biggl[ \sigma(0) + \frac{1}{2}(\bw_j^{(t)\sT}\bx)^2 + \frac{1}{6}\sigma^{(3)}(\xi_j^{(t)})(\bw_j^{(t)\sT}\bx)^3 \biggr]
\end{equation}
Since $\sum_j a_j = 0$, the constant term cancels and
\begin{equation}
f_{\bTheta^{(t)}}(\bx) = \frac{1}{2} \sum_{j=1}^m a_j (\bw_j^{(t)\sT}\bx)^2 + \frac{1}{6} \sum_{j=1}^m a_j \sigma^{(3)}(\xi_j^{(t)})(\bw_j^{(t)\sT}\bx)^3
\end{equation}

We then bound each term. We have
\begin{equation}
|\bw_j^{(t)\sT}\bx| \le \|\bw_j^{(t)}\| \|\bx\| \le 2\epsilon\sqrt{d}.
\end{equation}
Therefore $(\bw_j^{(t)\sT}\bx)^2 \le 4\epsilon^2 d$ and $(\bw_j^{(t)\sT}\bx)^3 \le 8\epsilon^3 d^{3/2}$.
Then
\begin{equation}
\left| \sum_{j=1}^m a_j (\bw_j^{(t)\sT}\bx)^2 \right| \le 4m\epsilon^2 d, \quad \left| \sum_{j=1}^m a_j \sigma^{(3)}(\xi_j^{(t)})(\bw_j^{(t)\sT}\bx)^3 \right| \le 8mM\epsilon^3 d^{3/2}
\end{equation}
and the following is straightforward
\begin{equation}
\left| f_{\bTheta^{(t)}}(\bx) \right| \le \frac{1}{2} (4m\epsilon^2 d) + \frac{1}{6} (8m M \epsilon^3 d^{3/2}) = 2m\epsilon^2 d + \frac{4}{3} m M \epsilon^3 d^{3/2}.
\end{equation}
The proof is complete.
\end{proof}
\subsubsection{Proof of Lemma \ref{lemma_op_norm_Qt}}
\begin{proof}
We upper bound the operator norm of \( \bQ^{(t)} \) by treating the three additive components separately.

Let
\begin{equation}
\bQ_{I}^{(t)} := \frac{1}{2n} \sum_{i=1}^n \ell_i \sigma^{(3)}(\xi_i^{(t)}) (\bw_j^\sT \bx_i) \bx_i \bx_i^\sT
\end{equation}
We bound each component
 $|\bw_j^\sT \bx_i| \le \|\bw_j\| \|\bx_i\| \le 2\epsilon\sqrt{d}$,
$\|\bx_i \bx_i^\sT\|_{\rm{op}} = \|\bx_i\|^2 \le 4d$,
  $|\ell_i \sigma^{(3)}(\xi_i^{(t)})| \le \mathrm{L}M$.
Thus,
\begin{equation}
\|\bQ_{I}^{(t)}\|_{\rm{op}} \le \frac{1}{2n} n \mathrm{L}M (2\epsilon\sqrt{d}) (4d) = 4\mathrm{L}M\epsilon d^{3/2}.
\end{equation}

Let
\begin{equation}
\bQ_{II}^{(t)} := \frac{1}{n} \sum_{i=1}^n \Delta_i^{(t)} \bx_i \bx_i^\sT.
\end{equation}
Each term satisfies
 $|\Delta_i^{(t)}| \le \mathrm{L}C_f$,
$\|\bx_i \bx_i^\sT\|_{\rm{op}} \le 4d$,
so
\begin{equation}
\|\bQ_{II}^{(t)}\|_{\rm{op}} \le \mathrm{L}C_f (4d).
\end{equation}

Let
\begin{equation}
\bQ_{III}^{(t)} := \frac{1}{2n} \sum_{i=1}^n \Delta_i^{(t)} \sigma^{(3)}(\xi_i^{(t)}) (\bw_j^\sT \bx_i) \bx_i \bx_i^\sT.
\end{equation}
We again use $|\Delta_i^{(t)} \sigma^{(3)}(\xi_i^{(t)})| \le \mathrm{L}C_f M$, $|\bw_j^\sT \bx_i| \le 2\epsilon\sqrt{d}$,
$\|\bx_i \bx_i^\sT\|_{\rm{op}} \le 4d$,
to obtain
\begin{equation}
\|\bQ_{III}^{(t)}\|_{\rm{op}} \le \frac{1}{2n} n \mathrm{L}C_f M (2\epsilon\sqrt{d}) (4d) = 4\mathrm{L}C_f M\epsilon d^{3/2}.
\end{equation}

We combine the operator norm bounds
\begin{equation}
\|\bQ^{(t)}\|_{\rm{op}} \le \|\bQ_{I}^{(t)}\|_{\rm{op}} + \|\bQ_{II}^{(t)}\|_{\rm{op}} + \|\bQ_{III}^{(t)}\|_{\rm{op}} \le 4\epsilon d^{3/2}(\mathrm{L}M + \mathrm{L}C_f M) + 4\mathrm{L}C_f d.
\end{equation}
The proof is complete.
\end{proof}
\subsubsection{Proof of Lemma \ref{lemma:norm_bound_neuron}}
\begin{proof}
We first prove the following supporting lemma.
\begin{lemma}[One-Step Norm Control]
\label{lem:onestep}
Suppose the event $\mathcal{E}_{\bH} \cap \mathcal{E}_{\bD}\cap \mathcal{E}_{\bQ}^{(t)}$ holds, and let the learning rate be set as
\begin{equation}
\eta = \frac{1}{C_\eta} \left(\frac{d}{r \iota^2}\right)^{1/(2\mathrm{T}_1)}
\end{equation}
for a sufficiently large constant $C_\eta$ where $\iota = 4D\log d$. Then the gradient update satisfies
\begin{equation}
\|\bw_j^{(t+1)}\| \le d^{1/(2\mathrm{T}_1)} \|\bw_j^{(t)}\|
\end{equation}
\end{lemma}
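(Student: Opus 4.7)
The compact update rule derived in the main text reads $\bw_j^{(t+1)} = -\eta a_j \bM^{(t)}\bw_j^{(t)}$ with $\bM^{(t)} = \widehat{\bSigma}_\ell + \bQ^{(t)}$ and $|a_j|=1$. The entire plan is therefore to bound $\|\bM^{(t)}\|_{\rm op}$ by an $\cO(1)$ constant and then absorb the resulting factor into the desired one-step growth rate $d^{1/(2\mathrm{T}_1)}$ via the prescribed learning rate. No spectral decomposition or fine-grained eigenvector tracking is needed at this stage; a crude triangle-inequality bound suffices.

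For the operator norm of $\bM^{(t)}$, I would split via the triangle inequality into $\|\widehat{\bSigma}_\ell\|_{\rm op} + \|\bQ^{(t)}\|_{\rm op}$. For the first summand, decompose $\widehat{\bSigma}_\ell = \bSigma_\ell + \bH_\ell$, use $\|\bSigma_\ell\|_{\rm op} = \lambda_1 = 1$ from Assumption~\ref{assumption_cov}, and apply the event $\mathcal{E}_\bH$, which under the standing sample-size hypothesis $n \gtrsim d\mathrm{T}_1^2$ ensures $\|\bH_\ell\|_{\rm op} \le \gamma \le 1/4$; this yields $\|\widehat{\bSigma}_\ell\|_{\rm op} \le 5/4$. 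For the second summand, the event $\mathcal{E}_\bQ^{(t)}$ directly gives $\|\bQ^{(t)}\|_{\rm op} \le 1/4$. Combining, $\|\bM^{(t)}\|_{\rm op} \le 3/2$, so $\|\bw_j^{(t+1)}\| \le (3\eta/2)\,\|\bw_j^{(t)}\|$.

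It then remains to verify that the stated choice $\eta = (1/C_\eta)(d/(r\iota^2))^{1/(2\mathrm{T}_1)}$ satisfies $3\eta/2 \le d^{1/(2\mathrm{T}_1)}$. Since $r \ge 1$ and $\iota = 4D\log d \ge 1$, we have $(r\iota^2)^{-1/(2\mathrm{T}_1)} \le 1$ and hence $\eta \le d^{1/(2\mathrm{T}_1)}/C_\eta$, so it suffices to take $C_\eta \ge 3/2$, which is comfortably covered by the ``sufficiently large'' clause. There is no serious obstacle: the argument is an assembly of the operator-norm estimates already packaged in the events $\mathcal{E}_\bH$ and $\mathcal{E}_\bQ^{(t)}$, the latter being guaranteed by Corollary~\ref{corollary:qopbound} under a smallness condition on $\|\bw_j^{(t)}\|$ that is propagated inductively in the subsequent proof of Lemma~\ref{lemma:norm_bound_neuron}.
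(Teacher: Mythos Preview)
Your proposal is correct and essentially identical to the paper's proof: both bound $\|\bM^{(t)}\|_{\rm op}\le \|\bSigma_\ell\|_{\rm op}+\|\bH_\ell\|_{\rm op}+\|\bQ^{(t)}\|_{\rm op}\le 1+\tfrac14+\tfrac14=\tfrac32$ via the triangle inequality and the assumed events, obtain $\|\bw_j^{(t+1)}\|\le \tfrac32\eta\,\|\bw_j^{(t)}\|$, and then absorb the factor $\tfrac32$ into the learning-rate choice. Your verification that $\tfrac32\eta\le d^{1/(2\mathrm{T}_1)}$ is slightly more explicit than the paper's one-line ``plug in $\eta$,'' but the argument is the same.
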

\begin{proof}
Under the event $\mathcal{E}_{\bH} \cap \mathcal{E}_{\bQ}^{(t)}$, we have
\begin{equation}
\|\bM^{(t)}\|_{\rm{op}} \le \|\bSigma_{\ell}\|_{\rm{op}} + \|\bH_{\ell}\|_{\rm{op}} + \norm{\bQ^{(t)}}_{\text{op}} \le 1 + \frac{1}{4} + \frac{1}{4}=\frac{3}{2}
\end{equation}
Then,
\begin{equation}
\|\bw_j^{(t+1)}\| = \eta \|\bM^{(t)} \bw_j^{(t)}\| \le \eta \|\bM^{(t)}\|_{\rm{op}} \cdot \|\bw_j^{(t)}\| \le \frac{3}{2} \eta \cdot \|\bw_j^{(t)}\|.
\end{equation}
Plug the formula for $\eta$ into the above one and we obtain the desired conclusion.

\end{proof}
We prove by induction on $t \in \{0,1,\dots,\mathrm{T}_1-1\}$ that $\|\bw_j^{(t)}\| \le d^{t/(2\mathrm{T}_1)} \epsilon_0$ where we assume $\epsilon_0 \le \frac{1}{128m\mathrm{L}^2M^2d^{7/2}}$. The $t=0$ case is trivial.
Suppose for some $t \le  \mathrm{T}_1$, we have
\begin{equation}
\|\bw_j^{(t)}\| \le d^{t/(2\mathrm{T}_1)} \epsilon_0.
\end{equation}
Plugging $\epsilon_0$ in, $\|\bw_j^{(t)}\| \le \frac{1}{128m\mathrm{L}^2M^2d^{3}}$, thus the event $\mathcal{E}_{\bH}\cap \mathcal{E}_{\bD} \cap \mathcal{E}_{\bQ}^{(t)}$ holds, apply the previous lemma~\ref{lem:onestep}, we get
\begin{equation}
\|\bw_j^{(t+1)}\| \le d^{1/(2\mathrm{T}_1)} \|\bw_j^{(t)}\| \le d^{(t+1)/(2\mathrm{T}_1)} \epsilon_0.
\end{equation}
Repeating this induction, the result is straightforward.
In particular,  for all $0 \le t \le \mathrm{T}_1-1$, we have $\|\bw_j^{(t)}\| \le \sqrt{d}\epsilon_0$.
\end{proof}

\subsubsection{Proof of Lemma \ref{lemma:Q_t_not_important}}
\begin{proof}
We begin by expanding the product of perturbed operators using a standard multiplicative expansion
\begin{equation}
\prod_{t=0}^{\mathrm{T}_1-1} \bM^{(t)} = \widehat{\bSigma}_{\ell}^{\mathrm{T}_1} + \sum_{s=1}^{\mathrm{T}_1} \sum_{0 \le \mathrm{T}_1 < \cdots < \mathrm{T}_1} \widehat{\bSigma}_{\ell}^{\mathrm{T}_1-t_s-1} \bQ^{(t_s)} \widehat{\bSigma}_{\ell}^{t_s-1-1} \cdots \bQ^{(\mathrm{T}_1)} \widehat{\bSigma}_{\ell}^{\mathrm{T}_1}.
\end{equation}
Subtracting $\widehat{\bSigma}_{\ell}^{\mathrm{T}_1}$ from both sides yields
\begin{equation}
\prod_{t=0}^{\mathrm{T}_1-1} \bM^{(t)} - \widehat{\bSigma}_{\ell}^{\mathrm{T}_1} = \sum_{s=1}^{\mathrm{T}_1} \sum_{0 \le \mathrm{T}_1 < \cdots < \mathrm{T}_1} \widehat{\bSigma}_{\ell}^{\mathrm{T}_1-t_s-1} \bQ^{(t_s)} \widehat{\bSigma}_{\ell}^{t_s-1} \cdots \bQ^{(\mathrm{T}_1)} \widehat{\bSigma}_{\ell}^{\mathrm{T}_1}.
\end{equation}
We now apply this expansion to $\bw_j^{(0)}$, and estimate the resulting norm. For each term in the expansion with $s$ perturbation matrices, we use the operator norm bound
\begin{equation}
\|\widehat{\bSigma}_{\ell}^{\mathrm{T}_1-t_s-1} \bQ^{(t_s)} \cdots \bQ^{(\mathrm{T}_1)} \widehat{\bSigma}_{\ell}^{\mathrm{T}_1} \bw_j^{(0)}\| \le \|\widehat{\bSigma}_{\ell}\|_{\rm{op}}^{\mathrm{T}_1-s} \cdot \prod_{k=1}^{s} \|\bQ^{(t_k)}\|_{\rm{op}} \cdot \|\bw_j^{(0)}\|
\end{equation}
Note that the event $\mathcal{E}_{\bH} \cap \mathcal{E}_{\bD}$  holds, $\|\widehat{\bSigma}_{\ell}\|_{\rm{op}} \le \|\bSigma_{\ell}\|_{\rm{op}} + \|\bH_{\ell}\|_{\rm{op}}\le\frac{5}{4}$ . Combining with  
\begin{equation}
\epsilon_0\le \frac{1}{128m\mathrm{L}^2M^2d^{7/2}}
\end{equation}
by Lemma~\ref{lemma:norm_bound_neuron},  for all $0 \le t \le \mathrm{T}_1-1$ 
 $\|\bw_j^{(t)}\| \le \sqrt{d}\epsilon_0$.
Moreover, using Corollary~\ref{corollary:qopbound}, it is easy to know that
\begin{equation}
\norm{\bQ^{(t)}}_{\rm{op}} \le 32 \mathrm{L}M^2md^{7/2} \epsilon_0
\end{equation}
Hence, each term with $s$ perturbation matrices is bounded by
\begin{equation}
\left(\frac{5}{4}\right)^{\mathrm{T}_1-s} (32 \mathrm{L}^2M^2md^{7/2} \epsilon_0)^s \epsilon_0.
\end{equation}
Summing over all such terms (total of $\binom{\mathrm{T}_1}{s}$ per order $s$), we have
\begin{equation}
\left\|\left(\prod_{t=0}^{\mathrm{T}_1-1} \bM^{(t)} - \widehat{\bSigma}_{\ell}^{\mathrm{T}_1}\right) \bw_j^{(0)}\right\| \le \sum_{s=1}^{\mathrm{T}_1}\binom{\mathrm{T}_1}{s} \left(\frac{5}{4}\right)^{\mathrm{T}_1-s} (32 \mathrm{L}^2M^2md^{7/2} \epsilon_0)^s \epsilon_0.
\end{equation}
This sum can be rewritten as 
\begin{equation}
\epsilon_0 \left[\left(\frac{5}{4} + 32 \mathrm{L}^2M^2md^{7/2}\epsilon_0\right)^{\mathrm{T}_1} - \left(\frac{5}{4}\right)^{\mathrm{T}_1}\right].
\end{equation}
Applying the first-order binomial expansion which is valid for the small enough $\epsilon_0$ in our assumption and we conclude
\begin{equation}
\|\Delta\| \lesssim \mathrm{T}_1 \left(\frac{5}{4}\right)^{\mathrm{T}_1-1} mM^2 d^{7/2} \epsilon_0^2.
\end{equation}
The proof is complete.
\end{proof}

\subsubsection{Proof of Lemma \ref{lemma:final-perturbation-ell}}
\begin{proof}
We first introduce several standard concentration results.
\begin{lemma}
\label{lemma sphere}
Suppose \(\bA\) is a matrix in $\mathbb{R}^{d\times d}$, and \(\bw \sim \mathrm{Unif}( \mathbb{S}^{d-1})\). Then with probability at least $1 - \cO(d^{-2D})$,
we have
\begin{equation}
|(\bA \bw)_s| \lesssim \|\bA\|_{\rm{op}}\sqrt{\frac{\log d}{d}} \quad \text{for all } s \in [r].
\end{equation}
\end{lemma}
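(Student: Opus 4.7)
The plan is to reduce the coordinate bound $|(\bA\bw)_s|$ to a standard linear Gaussian tail estimate after a rotation/representation trick. First I would write $(\bA\bw)_s = \bv_s^\sT \bw$ where $\bv_s := \bA^\sT \bfe_s$, and note $\|\bv_s\| \le \|\bA\|_{\mathrm{op}}$. So it suffices to show that for a fixed vector $\bv$ with $\|\bv\| \le \|\bA\|_{\mathrm{op}}$, the random variable $\bv^\sT \bw$ concentrates at scale $\|\bA\|_{\mathrm{op}}\sqrt{\log d/d}$ with probability at least $1 - \cO(d^{-2D-1})$, after which a union bound over the $r = O(1)$ coordinates $s \in [r]$ yields the claim.

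For the concentration step, I would use the standard representation $\bw \stackrel{d}{=} \bg/\|\bg\|$ with $\bg \sim \normal(\bfzero,\bI_d)$. Then $\bv^\sT \bg \sim \normal(0,\|\bv\|^2)$, so by a Gaussian tail bound
\begin{equation}
\Prb\!\left(|\bv^\sT \bg| \ge \|\bv\|\sqrt{c\log d}\right) \le 2\exp(-c\log d/2),
\end{equation}
which is $\le d^{-2D-1}$ for $c$ a sufficiently large constant depending on $D$. Simultaneously, chi-squared concentration gives $\|\bg\|^2 \ge d/2$ with probability $\ge 1 - e^{-cd}$. Combining these two on the intersection event yields
\begin{equation}
|\bv^\sT \bw| = \frac{|\bv^\sT \bg|}{\|\bg\|} \lesssim \|\bv\|\sqrt{\frac{\log d}{d}} \le \|\bA\|_{\mathrm{op}}\sqrt{\frac{\log d}{d}}
\end{equation}
with probability at least $1 - \cO(d^{-2D-1})$. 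A union bound over $s \in [r]$ (with $r$ an absolute constant) completes the proof, since $r \cdot \cO(d^{-2D-1}) = \cO(d^{-2D})$.

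There is no real obstacle here: the only care needed is to choose the numerical constant in the Gaussian tail large enough to absorb the power $d^{-2D}$ required by the rest of the paper. Equivalently, one could skip the Gaussian representation and invoke the classical concentration of Lipschitz functions on the sphere applied to $\bw \mapsto \bv^\sT \bw$, which has Lipschitz constant $\|\bv\|$ and gives the same tail bound directly; either route produces the stated estimate.
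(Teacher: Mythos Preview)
Your proposal is correct. The paper takes the alternative route you mention at the end: it applies Lipschitz concentration on the sphere directly to $f(\bw)=\be_s^\sT\bA\bw$ (citing Theorem~5.1.4 in Vershynin), then union-bounds over $s\in[r]$. Your Gaussian-representation argument is an equally standard and equally short way to reach the same bound, so there is no substantive difference.
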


\begin{proof}
Fix \(s \in [r]\), and define the function \(f(\bw) = (\bA \bw)_s = \be_s^\sT \bA \bw\). 

This function is Lipschitz with
\begin{equation}
\|f\|_{\mathrm{Lip}} = \|\be_s^\sT \bA\| \le \|\bA\|_{\rm{op}}.
\end{equation}
By the standard concentration results for Lipschitz function on the sphere for instance Theorem 5.1.4 in \cite{vershynin2018high}, and noting that \(\mathbb{E}[f(\bw)] = 0\), we have the following concentration
\begin{equation}
\mathbb{P}(|f(\bw)| \ge t) \le 2 \exp\left(-\frac{c d t^2}{\|\bA\|^2}\right).
\end{equation}
Set \(t = 2\|\bA\|_{\rm{op}} \sqrt{\frac{D\log d}{cd}}\). Then we obtain
\begin{equation}
\mathbb{P}\left(|(\bA \bw)_s| \ge 2\|\bA\|_{\rm{op}} \sqrt{\frac{D\log d}{cd}}\right) \le d^{-2D}.
\end{equation}
Applying a union bound over all the $r$ coordinates and the desired result follows.
\end{proof}
\begin{lemma}
\label{lemma:proj-Hk-w-ell}
Let \( \bw \sim \mathrm{Unif}(\mathbb{S}^{d-1}(\epsilon_0))\). Let \( \bH_{\ell} \in \mathbb{R}^{d \times d} \) be a fixed matrix. Let \( \boldsymbol{\Pi}^* \in \mathbb{R}^{r \times d} \) denote the projection onto the first \( r \) coordinates, that is \( (\boldsymbol{\Pi}^* \bv)_s = v_s \) for \( s \in [r] \). 

Then with probability at least
$1 - \cO(d^{-2D})$,
we have
\begin{equation}
\|\boldsymbol{\Pi}^* \bH_{\ell}^k \bw\| \lesssim \epsilon_0 \cdot \|\bH_{\ell}^k\|_{\rm{op}} \cdot \sqrt{\frac{r\log d}{d}} , \quad \text{for all } k = 0, 1, \dots, \mathrm{T}_1.
\end{equation}
\end{lemma}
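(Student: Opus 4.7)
The plan is to reduce Lemma \ref{lemma:proj-Hk-w-ell} to a direct application of the preceding Lemma \ref{lemma sphere}, combined with a union bound over the index $k$. First I would rescale: write $\bw = \epsilon_0 \bu$ with $\bu \sim \mathrm{Unif}(\mathbb{S}^{d-1})$, so that $\bH_{\ell}^k \bw = \epsilon_0 \bH_{\ell}^k \bu$. For each fixed $k \in \{0,1,\dots,\mathrm{T}_1\}$ I would then apply Lemma \ref{lemma sphere} with the matrix $\bA := \bH_{\ell}^k$. The hypothesis of that lemma is immediate since, for any fixed $s \in [r]$, the function $\bu \mapsto \be_s^\sT \bH_{\ell}^k \bu$ is linear with Lipschitz constant at most $\|\be_s^\sT \bH_{\ell}^k\| \le \|\bH_{\ell}^k\|_{\mathrm{op}}$. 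Hence, for each fixed $k$, with probability at least $1-\cO(d^{-2D})$,
\begin{equation}
\max_{s \in [r]} \bigl|(\bH_{\ell}^k \bu)_s\bigr| \lesssim \|\bH_{\ell}^k\|_{\mathrm{op}} \sqrt{\frac{\log d}{d}}.
\end{equation}

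Next I would take a union bound over the $(\mathrm{T}_1+1)$ choices of $k$. Since $\mathrm{T}_1 = o(\log d)$ is polynomially negligible compared to $d^{2D}$, the overall failure probability remains $\cO(d^{-2D})$ (possibly after a cosmetic inflation of the constant inside the exponent of Lemma \ref{lemma sphere}, which only changes the hidden $C$ in $\lesssim$). Finally, converting the coordinate-wise bound to the $\ell^2$ norm via
\begin{equation}
\|\boldsymbol{\Pi}^* \bH_{\ell}^k \bw\| = \sqrt{\sum_{s=1}^r (\bH_{\ell}^k \bw)_s^2} \le \sqrt{r}\,\max_{s \in [r]} |(\bH_{\ell}^k \bw)_s| = \epsilon_0 \sqrt{r}\,\max_{s \in [r]}|(\bH_{\ell}^k \bu)_s|,
\end{equation}
and substituting the concentration bound yields the claimed estimate $\|\boldsymbol{\Pi}^* \bH_{\ell}^k \bw\| \lesssim \epsilon_0 \,\|\bH_{\ell}^k\|_{\mathrm{op}} \sqrt{r \log d / d}$, uniformly in $k$.

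There is no substantive obstacle in this lemma: it is a direct consequence of Lipschitz concentration on the sphere, which has already been packaged as Lemma \ref{lemma sphere}. The only points worth double-checking are that (i) the Lipschitz constant of each coordinate map is correctly bounded by $\|\bH_{\ell}^k\|_{\mathrm{op}}$ rather than by the larger quantity $\|\bH_{\ell}\|_{\mathrm{op}}^k$ (crucial because the bound is stated in terms of $\|\bH_{\ell}^k\|_{\mathrm{op}}$, which may be much smaller in practice), and (ii) the union bound over $k$ is absorbed into the $\cO(d^{-2D})$ failure probability, which is the reason the assumption $\mathrm{T}_1 = o(\log d)$ appears throughout the paper.
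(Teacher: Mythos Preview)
Your proposal is correct and follows essentially the same approach as the paper: apply Lemma \ref{lemma sphere} with $\bA = \bH_{\ell}^k$ for each fixed $k$, pass from the coordinate-wise bound to the $\ell^2$ norm via the $\sqrt{r}$ factor, and union bound over $k \in \{0,\dots,\mathrm{T}_1\}$. Your explicit remarks on the rescaling $\bw = \epsilon_0 \bu$ and on using $\|\bH_{\ell}^k\|_{\mathrm{op}}$ (not $\|\bH_{\ell}\|_{\mathrm{op}}^k$) are exactly what the paper does implicitly.
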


\begin{proof}
Fix \( k \in \{0, 1, \dots, \mathrm{T}_1\} \), and define the matrix \( \bA = \bH_{\ell}^k \). By Lemma~\ref{lemma sphere}, for each fixed \( s \in [r] \), with probability at least \( 1 - \cO(d^{-2D}) \), we have
\begin{equation}
|(\bA \bw)_s| = |(\bH_{\ell}^k \bw)_s| \le \epsilon_0 \|\bH_{\ell}^k\|_{\rm{op}}\sqrt{\frac{4D\log d}{cd}},
\end{equation}
for some small absolute constant \( c\). Applying a union bound over all \( s \in [r] \), we get the following bound holds
\begin{equation}
\forall s \in [r] \quad |(\bH_{\ell}^k \bw)_s| \le \epsilon_0 \|\bH_{\ell}^k\|_{\rm{op}} \sqrt{\frac{4D\log d}{cd}}
\end{equation}
Hence
\begin{equation}
\|\boldsymbol{\Pi}^* \bH_{\ell}^k \bw\|^2 = \sum_{s=1}^r ((\bH_{\ell}^k \bw)_s)^2 \le r \left(\epsilon_0 \|\bH_{\ell}^k\|_{\rm{op}} \sqrt{\frac{4D\log d}{cd}}\right)^2
\end{equation}
and the conclusion follows with a square root calculation and a union bound over time.
\end{proof}

To prove Lemma \ref{lemma:final-perturbation-ell}, we invoke the following auxiliary results.
\begin{itemize}
    \item[(i)] \textbf{Operator norm bound.} \( \|\bH_{\ell}\|_{\rm{op}} \lesssim \sqrt{\frac{d}{n}} \) with probability at least \( 1 - \cO(d^{-2D})\).
    
    \item[(ii)] \textbf{Projection bound.} For any integer \( 0 \le \ell \le \mathrm{T}_1 \), we have by Lemma~\ref{lemma:proj-Hk-w-ell}
    \begin{equation}
    \|\boldsymbol{\Pi}^* \bH_{\ell}^\ell \bw\| \lesssim \epsilon_0 \cdot \|\bH_{\ell}\|_{\rm{op}}^\ell \cdot \sqrt{\frac{r \log d}{d}}
    \quad \text{with probability at least } 1 - \cO(d^{-2D}).
    \end{equation}
    
\end{itemize}
Under those two events, we expand
\begin{equation}
(\bSigma_\ell + \bH_{\ell})^{\mathrm{T}_1} \bw - \bSigma_\ell^{\mathrm{T}_1} \bw = \sum_{k=1}^{\mathrm{T}_1} M_k(\bw),
\end{equation}
where \( M_k(\bw) \) collects all degree-\( \mathrm{T}_1 \) monomials with exactly \( k \) occurrences of \( \bH_{\ell} \). Among them we have two kinds of terms.

First, the pure \( \bH_{\ell} \) term. There is exactly one such term:
\begin{equation}
\|\bH_{\ell}^{\mathrm{T}_1} \bw\| \le \epsilon_0 \|\bH_{\ell}\|_{\rm{op}}^{\mathrm{T}_1}.
\end{equation}

Second, the mixed terms. Each \( M_k(\bw) \) consists of expressions of the form
\begin{equation}
T(\bw) = \bA \bSigma_\ell^m \bH_{\ell}^\ell \bw,
\end{equation}
where \( \bA \) consists of a product of \( \mathrm{T}_1 - k - m \) \( \bSigma_\ell \)'s and \( k - \ell \) \( \bH_{\ell} \)'s.
Because \( \bSigma_\ell \) is rank-\( r \), and leftmost \( \bSigma_\ell^m \) acts as projection to top-\( r \) eigenspace, we may write
\begin{equation}
T(\bw) = \bA \bSigma_\ell^m \boldsymbol{\Pi}^*(\bH_{\ell}^\ell \bw).
\end{equation}
Now, we estimate each term.
\begin{align}
&\|\bA\|_{\rm{op}} \le \|\bSigma_\ell\|_{\rm{op}}^{\mathrm{T}_1 - k - m} \|\bH_{\ell}\|_{\rm{op}}^{k - \ell} ;\\
&\|\bSigma_\ell^m\|_{\rm{op}} = \|\bSigma_\ell\|_{\rm{op}}^m; \\
&\|\boldsymbol{\Pi}^*(\bH_{\ell}^\ell \bw)\| \lesssim \epsilon_0 \|\bH_{\ell}\|_{\rm{op}}^\ell \sqrt{\frac{r \log d}{d}};
\end{align}
and therefore
\begin{equation}
\|T(\bw)\| \lesssim \epsilon_0 \|\bSigma_\ell\|_{\rm{op}}^{\mathrm{T}_1 - k} \|\bH_{\ell}\|_{\rm{op}}^k \sqrt{\frac{r\log d}{d}}.
\end{equation}
The number of such terms is at most \( \binom{\mathrm{T}_1}{k} \), and summing over \( 1 \le k < \mathrm{T}_1 \)
\begin{equation}
    \begin{aligned}
\epsilon_0\sum_{k=1}^{\mathrm{T}_1 - 1}& \binom{\mathrm{T}_1}{k} \|\bSigma_\ell\|_{\rm{op}}^{\mathrm{T}_1 - k} \|\bH_{\ell}\|_{\rm{op}}^k \sqrt{\frac{r\log d}{d}} \\
    &\lesssim \epsilon_0 \left(\left(\|\bSigma_\ell\|_{\rm{op}} + \|\bH_{\ell}\|_{\rm{op}}\right)^{\mathrm{T}_1} - \|\bSigma_\ell\|_{\rm{op}}^{\mathrm{T}_1} - \|\bH_{\ell}\|_{\rm{op}}^{\mathrm{T}_1}\right) \sqrt{\frac{r\log d}{d}}.
    \end{aligned}
\end{equation}
Adding the pure \( \bH_{\ell} \) term back, we conclude the final result with $1-\cO(d^{-2D})$ probability
\begin{equation}
\|(\bSigma_\ell + \bH_{\ell})^{\mathrm{T}_1} \bw - \bSigma_\ell^{\mathrm{T}_1} \bw\| \lesssim 
\epsilon_0 \left(\left(\|\bSigma_\ell\|_{\rm{op}} + \|\bH_{\ell}\|_{\rm{op}}\right)^{\mathrm{T}_1} - \|\bSigma_\ell\|_{\rm{op}}^{\mathrm{T}_1}\right)\sqrt{\frac{r\log d}{d}} + \epsilon_0 \|\bH_{\ell}\|_{\rm{op}}^{\mathrm{T}_1}.
\end{equation}
The proof is complete.
\end{proof}

\section{Proof of Theorem \ref{thm::main thm informal}}
\label{sec:appendix_B}

\subsection{Proof roadmap of Theorem \ref{thm::main thm informal}}

\paragraph{General roadmap.} For notational convenience, given parameters $\bTheta$ we denote the population loss and its empirical counterpart on the second-stage training set as
\begin{equation}
\cL(\bTheta):=\mathbb{E}_{\bx}\!\left[\ell\!\left(f_{\bTheta}(\bx),f^{\star}(\bx)\right)\right]~~~\text{and}~~~\widehat{\cL}(\bTheta):=\frac{1}{n}\sum_{i=1}^n\ell\left(f_{\bTheta}(\bx_i),f^{\star}(\bx_i)\right)
\end{equation}

To prove Theorem \ref{thm:main}, i.e., upper bounding the population loss $\cL(\bTheta^{(\mathrm{T})})$ where $\bTheta^{(\mathrm{T})}$ is returned by Algorithm \ref{alg1}, we first construct an ``ideal" parameter candidate $\bTheta^{\star}$ based on our feature learning analysis (Appendix \ref{appendx::proof outline feature learning}) and analysis of standard random feature models as a transition term, which lies in a proper parameter class $\bTheta^{\star}\in\cF$ with well-controlled $\norm{\bTheta^{\star}}$ and yields small empirical error. Then, we can decompose the error $\cL(\bTheta^{(\mathrm{T})})$ as
\begin{equation}
    \cL(\bTheta^{(\mathrm{T})})=\underbrace{\widehat{\cL}(\bTheta^{\star})}_{\text{Lemma \ref{lhat}}  \text{ (subtle issue)}}+\underbrace{\widehat{\cL}(\bTheta^{(\mathrm{T})})-\widehat{\cL}(\bTheta^{\star})}_{\text{Lemma \ref{lem:stage2-ridge}} }+\underbrace{\cL(\bTheta^{(\mathrm{T})})-\widehat{\cL}(\bTheta^{(\mathrm{T})})}_{\text{Lemma \ref{rcomplex}}}.
\end{equation}
Next, by choosing a proper weight decay hyperparameter which relates to control the complexity of $\cF$ and leveraging the convexity of loss function with respect to $\ba$, we show in Lemma \ref{lem:stage2-ridge} that with $\widetilde{\cO}(1)$ second-stage training steps, the second term can be bounded appropriately. Then, given that $\bTheta^{(\mathrm{T})}\in\cF$, we use standard Rademacher complexity analysis in Lemma~\ref{rcomplex} with proper truncation arguments to bound the generalization error (the third term) via uniform convergence arguments. Now it suffices to construct the ideal $\bTheta^{\star}$ with a controllable norm and a small error.

\paragraph{Subtle issues in constructing $\bTheta^{\star}$.}

An intricate problem arises when constructing $\bTheta^\star$ and estimating \(\widehat{\cL}(\bTheta^{\star})\). We note that each neuron $\bw_j^{\mathrm{(T)}}$ after training depends simultaneously on all neurons at initialization in a complicated manner. 
Consequently, the network after time $\mathrm{T}$ cannot be expressed as an average of conditionally independent terms, which hinders a direct Monte Carlo sampling analysis.

To overcome this, we introduce the so-called decoupled parameter group $\widetilde{\bTheta}=(\ba,\bb,\widetilde{\bW})$.
In this construction, each row $\widetilde{\bw}_j$ preserves its dependence on the corresponding initialization $\bw_j^{(0)}$ but is independent of the other neurons at initialization. Then, we can further decompose the error $\widehat{\cL}(\bTheta^{\star})$ as
\begin{equation}
  \widehat{\cL}(\bTheta^{\star})=\underbrace{\widehat{\cL}(\widetilde{\bTheta^{*}})}_{\text{Lemma} \ref{from 2loss to lloss}}+\underbrace{\widehat{\cL}(\bTheta^{\star})-\widehat{\cL}(\widetilde{\bTheta^{*}})}_{\text{Lemma \ref{lhatgap}}}.
\end{equation}
For the first term, we invoke standard Monte Carlo concentration arguments (Lemma~\ref{lem:MC}) to bound the squared loss, and then use the Lipschitz continuity of \(\ell\) to extend the squared loss to the general loss \(\widehat{\cL}(\widetilde{\bTheta^{*}})\) (Lemma~\ref{from 2loss to lloss}).  Moreover, we show that the empirical risks of $\bTheta^\star$ and $\widetilde{\bTheta^*}$ differ only by a negligible amount (Lemma~\ref{lhatgap}), which is well-controlled by the Lipschitz continuity of the loss function \(\ell\) and the small distance between \(\|\bW^{(T)} - \widetilde{\bW}^{(T)}\|\) (Appendix \ref{appendx::proof outline feature learning}),  ensuring that $\bTheta^\star$ and $\widetilde{\bTheta^*}$ are sufficiently close.

Now by combining all the above analysis, we can give an upper bound for \(\cL(\bTheta^{(\mathrm{T})})\) (Proposition  \ref{thm:gen-approx}). Last, we manage to get a balance among the error terms with undetermined coefficients related to the monomial approximation property of the target function in Lemma~\ref{lem:balance-eps} and compute the time complexity explicitly in Lemma \ref{lem:time-explicit}, concluding our formal proof of Theorem \ref{thm::main thm informal}\footnote{The formal version is stated in Theorem \ref{thm:main}, where we explicitly specify all the hyperparameters.}. 




 

\subsection{Events of high probabilities}

We will first define the following events.

Briefly, we define $\mathcal{E}_\alpha$ and $\mathcal{E}_\beta$ as events that the features are ensured to be well learned in the first stage, which has been proved in Appendix \ref{appendx::proof outline feature learning}, we then define $\mathcal{E}_\eta$ and $\mathcal{E}_{\zeta}$ to control the range of the learned features and the second-stage training data for convenience of learning the outer polynomial link function $g$ in the second training stage.

First, we define the events solely related to our first dataset. Recall from Lemma~\ref{lemma:op_bound_H}   that the concentration error \(\bH_{\ell}\) admits the high-probability operator-norm bound
$\|\bH_{\ell}\| _{\rm{op}}\leq \gamma := C\sqrt{d/n}
$ where $C$ is an absolute constant.
We note that for sufficiently but moderately large $n$ ($n\gtrsim d\mathrm{T}_1^2$) the perturbation bound satisfies $\gamma \leq \frac{1}{2\mathrm{T}_1}$. We will assume this regime throughout.

We define \begin{equation}
\mathcal{E}_{\alpha} := \big\{ \|\bH_{\ell}\|_{\rm{op}} \leq \gamma \big\}\bigcap \left(\bigcap_{i\in [n]}\left\{\frac{1}{2}\norm{\bx_i}^2 \leq d\right\}\right) \end{equation}
and we know from Lemma~\ref{lemma:op_bound_H} and Lemma \ref{lemma:x-norm} that 
$\mathbb{P}(\mathcal{E}_{\alpha}) \geq 1 - \cO(d^{-2D})$.
Next, for a fixed perturbation $\bH_{\ell}$, the projection control event (for $\bw$ drawn uniformly from the sphere $\epsilon_0\mathbb{S}^{d-1}$) for a single feature is
\begin{equation}
\widetilde{\mathcal{E}_{\beta}} := \left\{ \frac{\|\boldsymbol{\Pi}^*(\bH_\ell^\ell\bw)\|}{\epsilon_0} \lesssim \|\bH_\ell^\ell\|_{\rm{op}} \sqrt{\frac{\log d}{d}} \quad \text{for all } \ell \leq \mathrm{T}_1 \right\}\quad
\label{evente4}
\end{equation}
and we know from Lemma~\ref{lemma:proj-Hk-w-ell} that 
$ \mathbb{P}(\widetilde{\mathcal{E}_{\beta}}| \bH_{\ell}) \geq 1 - \cO(d^{-2D})$.
Next, we consider the events related to the sampling of our features at initialization.
Note that because we use symmetric initialization, only the first \(m/2\) weight vectors are independent.
For a fixed perturbation $\bH_{\ell}$ and \(1 \le j\le m/2\), the projection control event \(\mathcal{E}_{\beta} ^{(j)}\) is
\begin{equation}
\mathcal{E}_{\beta} ^{(j)}:= \left\{\frac{\|\boldsymbol{\Pi}^*(\bH_\ell^\ell)\bw_j^{(0)}\| }{\epsilon_0} \lesssim \|\bH_\ell^\ell\|_{\rm{op}} \sqrt{\frac{\log d}{d}} \quad \text{for all } \ell \leq \mathrm{T}_1 \right \}\quad
\label{evente4}
\end{equation}
and we know from Lemma~\ref{lemma:proj-Hk-w-ell} that $\mathbb{P}(\mathcal{E}_{\beta}^{(j)}| \bH_{\ell}) \geq 1 - \cO(d^{-2D})$.
Next, define the desired event as the intersection
$\mathcal{E}_{\beta}  :=  \bigcap_{j=1}^{m/2} \mathcal{E}_{\beta}^{(j)}$.
Applying a union bound gives
\begin{equation}
 \mathbb{P}(\mathcal{E}_{\beta}| \bH_{\ell}) \geq 1 - \cO(md^{-2D})
\end{equation}

From here, in this section we will write \(\eta_1\) as \(\eta\) from time to time.
Next, conditioning on the first dataset and the features, after the first stage of training, we define new events over the randomness of the second dataset.
We need to introduce more notations first.
From the previous section, we have the following formula for our final feature
\begin{equation}
    \bw_j^{(\mathrm{T}_1)} =\frac{1}{\epsilon_0}(-a_j)^{\mathrm{T}_1} \eta^{\mathrm{T}_1} \left( \prod_{t=0}^{\mathrm{T}_1-1} \bM^{(t)} \right) \bw_j^{(0)}
\end{equation}
and recall the ideal feature $
\widehat {\bw_j}^{(\mathrm{T}_1)}:=\frac{1}{\epsilon_0}(-a_j)^{\mathrm{T}_1}\eta^{\mathrm{T}_1}\bSigma_\ell^{\mathrm{T}_1}\bw_j^{(0)}$.
For technical convenience, we next define the approximate feature to remove the influence of the interaction between neurons.
\begin{equation}
\widetilde{\bw_j}^{(\mathrm{T}_1)} =\frac{1}{\epsilon_0}(-a_j)^{\mathrm{T}_1} \eta^{\mathrm{T}_1} \left( \widehat{\bSigma}_\ell^{\mathrm{T}_1} \right) \bw_j^{(0)}
\end{equation}

To help the proof, we further define
\begin{equation}
\bh_n(\bw) := \frac{1}{\epsilon_0}\left(\widehat{\bSigma}_\ell^{\mathrm{T}_1} \right) \bw \quad\text{and}\quad \bh(\bw) := \frac{1}{\epsilon_0}\bSigma_\ell^{\mathrm{T}_1} \bw
\end{equation}
and the residual $\br(\bw) := \bh_n(\bw)-\bh(\bw)$ for notational convenience. These approximate features are useful for our future analysis since $\widehat{\bSigma}_\ell$ does not depend on $\{\bw_j\}_{j=1}^m$. Therefore,  the approximate dynamics then will decouple across neurons. We can bound the difference between the approximate features and true features afterwards with some additional efforts.

We next define the following event over the randomness of the second dataset conditioning on the first dataset for one feature
\begin{equation}
\widetilde{\mathcal{E}}_{\eta}:= \left\{ |\bh_n(\bw)^\sT \bx_i| \leq \|\bh_n(\bw)\| \sqrt{2\iota}  \quad \text{for all    }  n+1\le  i\le 2n  \right\}\quad
\label{evente4}
\end{equation}
where $\iota = 4D\log d$. 
Via invoking the Gaussian tail bound plus a union bound over dataset, we have $\mathbb{P}(\widetilde{\mathcal{E}}_{\eta}| \bH_{\ell},\bw) \geq 1 - \cO(nd^{-2D})$.
Then we define
\begin{equation}
\mathcal{E}_{\eta}^{(j)} := \left\{ |\bh_n(\bw_j^{(0)})^\sT \bx_i| \leq \|\bh_n(\bw_j^{(0)})\| \sqrt{2\iota}  \quad \text{for all    }  n+1\le  i\le 2n  \right\}\quad
\label{evente4}
\end{equation}
where again $\iota = 4D\log d$ for each individual neuron. We have $ \mathbb{P}(\mathcal{E}_{\eta}^{(j)}| \bH_{\ell},\{\bw_j^{(0)}\}_{j=0}^{m}) \geq 1 - \cO(nd^{-2D})$ due to the same reasons as above.
Define the desired event as the intersection $
    \mathcal{E}_{\eta} := \bigcap_{j=1}^{m/2} \mathcal{E}_{\eta}^{(j)}$.
Applying a union bound over these indices $j \in [m/2]$ gives
$ \mathbb{P}(\mathcal{E}_{\eta}| \bH_{\ell},\{\bw_j^{(0)}\}_{j=0}^{m} ) \geq 1 - \cO(nmd^{-2D})$.

Last, we also need to define another event depending on \(\{\bx_{i}\}_{i=n+1}^{2n}\), that is, $\mathcal{E}_{\zeta} := \bigcap_{i\in [n]}\left\{\frac{1}{2}\norm{\bx_{i+n}}^2 \leq d\right\}\bigcap_{i\in [n]}\left\{|({\bx_{i+n}})_k| \lesssim \sqrt{\log d} \text{ for all } k \le r\right\}$. By Lemma \ref{lemma:x-norm} and Lemma~\ref{lemma:x-coord-D} we have that 
$\mathbb{P}(\mathcal{E}_{\zeta}) \geq 1 - \cO(d^{-D})$.

Finally, we denote $\mathcal{E}_{\mu}:=\mathcal{E}_{\alpha} \bigcap \mathcal{E}_{\beta} \bigcap \mathcal{E}_{\eta}\bigcap \mathcal{E}_{\zeta}$ from here. We will condition on this event throughout this section unless we particularly specify.

\subsection{Proof of Theorem \ref{thm::main thm informal}}\label{sec:appendix_B2}

We first recall some basic facts from the results of the first stage.
We have $\|\bH_{\ell}\|_{\rm{op}} \le \gamma$ under $\mathcal{E}_{\alpha}$, which gives rise to
\begin{equation}
\|\widehat{\bSigma}_\ell\|_{\rm{op}} = \|\bSigma_{\ell} + \bH_{\ell}\|_{\rm{op}} \le \|\bSigma_{\ell}\|_{\rm{op}} + \|\bH_{\ell}\|_{\rm{op}} \le 1 + \gamma
\end{equation}
and taking the product over $t$ yields
$\left\|\widehat{\bSigma}_\ell^{\mathrm{T}_1}\right\|_{\rm{op}} \le (1 + \gamma)^{\mathrm{T}_1}$.

On those events we can obtain the crude uniform bound
\begin{equation}
\|\bh_n(\bw)\| = \frac{\left\|\widehat{\bSigma}_\ell^{\mathrm{T}_1}\bw\right\|}{\epsilon_0} \le \frac{(1 + \gamma)^{\mathrm{T}_1}}{\epsilon_0} \|\bw\| = (1 + \gamma)^{\mathrm{T}_1}
\end{equation}
that we might use later. Note that this estimation is uniform for all the features. We denote this upper bound by $V := (1 + \gamma)^{\mathrm{T}_1}$.

Furthermore, we know by Corollary~\ref{coll:pertubation}, for $\bw$ drawn uniformly from the sphere $\epsilon_0\mathbb{S}^{d-1}$, under $\mathcal{E}_{\alpha}\cap\widetilde{\mathcal{E}_{\beta}}$, the residual term is dominated by
\begin{equation}
\|\br(\bw)\| \le C\left(\gamma\mathrm{T}_1 \sqrt{\frac{rD\log d}{d}} + \gamma^{\mathrm{T}_1}\right)
\end{equation}
where $C$ is a universal constant. For notational convenience, we denote the right side as $K$. We note that this is also uniformly correct for all features under the full event $\mathcal{E}$.

Next, we need to do the following moment estimation for the residual term. The below is not straightforward since we need to take the expectation of $\bw$.
\begin{lemma}
\label{lemma:moment-bound-with-failure}
Under the high probability event \(\mathcal{E}_{\alpha}\),
for any integer \(1 \le j \le 4p\) we have
\begin{equation}
\left(\E_{\bw}\|\boldsymbol{\Pi}^*\br(\bw)\|^j\right)^{1/j}
\lesssim
K.
\end{equation}
\end{lemma}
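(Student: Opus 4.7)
The strategy is to split the expectation over $\bw$ according to whether the high-probability event $\widetilde{\mathcal{E}_{\beta}}$ (introduced right before this lemma, and controlled by Lemma~\ref{lemma:proj-Hk-w-ell}) holds. On that event, the conclusion of Corollary~\ref{coll:pertubation} gives a deterministic bound $\|\br(\bw)\|\le K$, so certainly $\|\boldsymbol{\Pi}^*\br(\bw)\|^j \le K^j$. On the complementary event, which only has probability $\cO(d^{-2D})$, a crude worst-case bound suffices because this probability is made tiny by choosing $D$ large.

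\textbf{Key steps.} First, I note that under $\mathcal{E}_{\alpha}$ we have the uniform operator-norm bounds $\|\widehat{\bSigma}_\ell^{\mathrm{T}_1}\|_{\rm op}\le (1+\gamma)^{\mathrm{T}_1}=V$ and $\|\bSigma_\ell^{\mathrm{T}_1}\|_{\rm op}\le 1$, so for every $\bw\in\epsilon_0\mathbb{S}^{d-1}$,
\begin{equation}
\|\boldsymbol{\Pi}^*\br(\bw)\| \le \|\br(\bw)\| \le \frac{1}{\epsilon_0}\|\widehat{\bSigma}_\ell^{\mathrm{T}_1} - \bSigma_\ell^{\mathrm{T}_1}\|_{\rm op}\cdot \|\bw\| \le V+1 \le 2V.
\end{equation}
Second, I decompose
\begin{equation}
\E_\bw \|\boldsymbol{\Pi}^*\br(\bw)\|^j \;=\; \E_\bw\!\bigl[\|\boldsymbol{\Pi}^*\br(\bw)\|^j\,\mathbf{1}_{\widetilde{\mathcal{E}_{\beta}}}\bigr] \;+\; \E_\bw\!\bigl[\|\boldsymbol{\Pi}^*\br(\bw)\|^j\,\mathbf{1}_{\widetilde{\mathcal{E}_{\beta}}^c}\bigr].
\end{equation}
The first term is at most $K^j$ by Corollary~\ref{coll:pertubation}. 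The second term is at most $(2V)^j\cdot \mathbb{P}(\widetilde{\mathcal{E}_{\beta}}^c \mid \mathcal{E}_{\alpha}) \le C(2V)^j d^{-2D}$. Taking $j$-th roots (and using $(a+b)^{1/j}\le a^{1/j}+b^{1/j}$) yields
\begin{equation}
\bigl(\E_\bw\|\boldsymbol{\Pi}^*\br(\bw)\|^j\bigr)^{1/j} \;\lesssim\; K \;+\; V\,d^{-2D/j}.
\end{equation}

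\textbf{Concluding estimate and main obstacle.} The remaining task is to check that the tail term $V d^{-2D/j}$ is dominated by $K$. Since $\gamma\le 1/(2\mathrm{T}_1)\le 1/2$ and $\mathrm{T}_1=o(\log d)$, we have $V=(1+\gamma)^{\mathrm{T}_1}=d^{o(1)}$, and $j\le 4p$ is a fixed constant. On the other hand, under the standing sample-size assumption one verifies $\gamma^{\mathrm{T}_1}$ is at least a fixed polynomial in $d^{-1}$; together with our freedom to take the universal constant $D$ arbitrarily large, this forces $V d^{-2D/j}\le K$ for all sufficiently large $d$. This is the only genuinely delicate point, but it is routine given the hierarchy of constants set up in the paper; everything else is a direct application of the probabilistic bounds already established.
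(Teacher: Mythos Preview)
Your approach is correct and essentially identical to the paper's: both split on $\widetilde{\mathcal{E}_\beta}$, use the crude bound $\|\br(\bw)\|\le 2V$ (the paper writes $2M$ with $M=V$) on the complement, and absorb the tail contribution via the freedom to take $D$ large. One small fix in your last paragraph: the lower bound on $K$ should come from its \emph{first} term, $K\gtrsim \mathrm{T}_1\gamma\sqrt{\log d/d}\gtrsim 1/\sqrt{n}$ (this is exactly what the paper uses, obtaining $2M/K\lesssim\sqrt{n}$ and then invoking $n\lesssim d^{D/4}$), not from $\gamma^{\mathrm{T}_1}$, which can be super-polynomially small in $d$ once $\mathrm{T}_1=o(\log d)$ and $n$ is allowed to be a large power of $d$.
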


The next lemma computes the sample size needed to control $K$.
\begin{lemma}
\label{lemma:sample-size-K-control}
For each absolute constant $c$, in order to bound $K$ as $d\kappa^{2\mathrm{T}_1}K^2 \le c^2$
it is just sufficient to require
\begin{equation}
n \gtrsim \mathrm{T}_1^2 d\log d\kappa^{2\mathrm{T}_1} + d^{1 + 1/\mathrm{T}_1}\kappa^2.
\end{equation}

\end{lemma}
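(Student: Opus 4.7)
}
The strategy is to reduce the inequality $d\kappa^{2\mathrm{T}_1}K^2\le c^2$ to the simpler scalar inequality $\sqrt{d}\,\kappa^{\mathrm{T}_1}K\le c$, and then split the bound on $K$ into its two constituent pieces, handling each with an elementary rearrangement in terms of $n$. Recall the bound on $K$ is
\begin{equation}
K \;=\; C\!\left(\gamma\,\mathrm{T}_1\sqrt{\frac{rD\log d}{d}} \;+\; \gamma^{\mathrm{T}_1}\right), \qquad \gamma \;=\; C\sqrt{d/n},
\end{equation}
so by the triangle inequality it is enough to force each of the two summands of $\sqrt{d}\,\kappa^{\mathrm{T}_1}K$ to be at most $c/2$.

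For the first (linear-in-$\gamma$) term, substituting $\gamma=C\sqrt{d/n}$ yields
\begin{equation}
\sqrt{d}\,\kappa^{\mathrm{T}_1}\cdot C\gamma\,\mathrm{T}_1\sqrt{\tfrac{rD\log d}{d}}
\;\lesssim\; \kappa^{\mathrm{T}_1}\mathrm{T}_1\sqrt{\log d}\cdot \sqrt{d/n},
\end{equation}
where I absorb the absolute constants $C$, $r$, $D$ into the hidden constants (recall $r$ is a fixed absolute constant and $D$ is a universal choice). Forcing this to be $\le c/2$ reduces to $n\gtrsim d\,(\log d)\,\mathrm{T}_1^{2}\kappa^{2\mathrm{T}_1}$, which is the first contribution in the claimed sample-size bound.

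For the second (high-power) term, substituting again gives $\sqrt{d}\,\kappa^{\mathrm{T}_1}\gamma^{\mathrm{T}_1}\lesssim \sqrt{d}\,\kappa^{\mathrm{T}_1}(d/n)^{\mathrm{T}_1/2}$. Asking this to be at most $c/2$ and then raising both sides to the power $2/\mathrm{T}_1$ gives $d/n\lesssim d^{-1/\mathrm{T}_1}\kappa^{-2}$, that is $n\gtrsim d^{\,1+1/\mathrm{T}_1}\kappa^{2}$, which is the second contribution. Adding the two sample-size requirements (and noting that working with the sum is harmless up to a constant in $c$) completes the proof.

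The argument is essentially a two-line rearrangement once the correct split is chosen; the only mild subtlety is that the two constraints on $n$ scale very differently with $\mathrm{T}_1$, so one must not attempt to bound them with a single uniform estimate. This is also precisely the place where the optimal $\mathrm{T}_1=\Theta(\sqrt{\log d/\log\kappa})$ in Corollary \ref{main_corollary} arises: it is the value of $\mathrm{T}_1$ that equalises the two terms $d(\log d)\mathrm{T}_1^{2}\kappa^{2\mathrm{T}_1}$ and $d^{1+1/\mathrm{T}_1}\kappa^{2}$, so the lemma also implicitly explains the statistical/computational trade-off inherent in the number of power-iteration steps.
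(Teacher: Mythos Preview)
Your proposal is correct and follows essentially the same approach as the paper: reduce the quadratic condition to $\sqrt{d}\,\kappa^{\mathrm{T}_1}K\le c$ (the paper writes this equivalently as $K\le c\,\lambda_r^{\mathrm{T}_1}/\sqrt{d}$), split $K$ into its two summands, and rearrange each inequality in $n$ after substituting $\gamma=C\sqrt{d/n}$. The only cosmetic difference is that the paper carries $\lambda_r$ rather than $\kappa$ through the intermediate steps.
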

\begin{lemma}\label{lemmma gnw}
Given $n \gtrsim \mathrm{T}_1^2 d\log d\kappa^{2\mathrm{T}_1} + d^{1 + 1/\mathrm{T}_1}\kappa^2$, under the high probability event of \(\mathcal{E}_{\alpha} \cap \widetilde{\mathcal{E}_{\beta}}\), we have
\begin{equation}
\|\bh_n(\bw)\| \lesssim \sqrt{\log d}\frac{1}{\sqrt{d}}.
\end{equation}
\end{lemma}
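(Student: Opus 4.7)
The plan is to decompose $\bh_n(\bw) = \bh(\bw) + \br(\bw)$ via the triangle inequality and bound the signal part $\|\bh(\bw)\|$ and the residual part $\|\br(\bw)\|$ separately. The first piece depends only on the population structure of $\bSigma_\ell$, while the second piece is exactly the quantity already controlled by Corollary \ref{coll:pertubation} together with the sample-size calculation in Lemma \ref{lemma:sample-size-K-control}.

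First I would handle $\|\bh(\bw)\|$. Since $\bSigma_\ell = \sum_{i=1}^r \lambda_i \bu_i \bu_i^{\sT}$ has rank $r=\Theta(1)$ with $\lambda_i \in [1/\kappa,1]$, I can expand
\begin{equation}
\bSigma_\ell^{\mathrm{T}_1}\bw = \sum_{i=1}^r \lambda_i^{\mathrm{T}_1}(\bu_i^{\sT}\bw)\bu_i,
\qquad \|\bSigma_\ell^{\mathrm{T}_1}\bw\|^2 \le \sum_{i=1}^r (\bu_i^{\sT}\bw)^2.
\end{equation}
Applying Lemma \ref{lemma sphere} with $\bA = \bu_i^{\sT}$ (so $\|\bA\|_{\rm op}=1$) and a union bound over $i\in[r]$, with probability at least $1-\cO(rd^{-2D})$ we get $|\bu_i^{\sT}\bw| \lesssim \epsilon_0\sqrt{\log d/d}$ for all $i\in[r]$. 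Hence
\begin{equation}
\|\bh(\bw)\| = \frac{\|\bSigma_\ell^{\mathrm{T}_1}\bw\|}{\epsilon_0} \lesssim \sqrt{\frac{r\log d}{d}} \lesssim \sqrt{\frac{\log d}{d}},
\end{equation}
using $r=\Theta_d(1)$. This concentration event may either be folded into $\widetilde{\mathcal{E}_\beta}$ or treated as an additional high-probability event with probability $1-\cO(d^{-2D})$, which does not affect the overall probability bound.

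Next I would bound $\|\br(\bw)\|$. Under $\mathcal{E}_\alpha \cap \widetilde{\mathcal{E}_\beta}$, Corollary \ref{coll:pertubation} yields directly
\begin{equation}
\|\br(\bw)\| = \frac{\|(\bSigma_\ell + \bH_\ell)^{\mathrm{T}_1}\bw - \bSigma_\ell^{\mathrm{T}_1}\bw\|}{\epsilon_0} \lesssim \mathrm{T}_1\gamma\sqrt{\frac{\log d}{d}} + \gamma^{\mathrm{T}_1} = K.
\end{equation}
Under the sample-size hypothesis $n \gtrsim \mathrm{T}_1^2 d\log d\,\kappa^{2\mathrm{T}_1} + d^{1+1/\mathrm{T}_1}\kappa^2$, Lemma \ref{lemma:sample-size-K-control} gives $d\kappa^{2\mathrm{T}_1}K^2 \lesssim 1$, hence $K \lesssim 1/(\sqrt{d}\,\kappa^{\mathrm{T}_1}) \lesssim \sqrt{\log d/d}$. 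Adding the two bounds via the triangle inequality yields the claim $\|\bh_n(\bw)\| \lesssim \sqrt{\log d/d}$.

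I do not foresee a real obstacle here: every quantity involved is already controlled by lemmas previously established in Appendix \ref{appendx::proof outline feature learning}, and the only mildly delicate point is that the spherical-concentration bound on the signal coordinates $\bu_i^{\sT}\bw$ is not literally written into the event $\widetilde{\mathcal{E}_\beta}$, but follows at once from Lemma \ref{lemma sphere}. The rest is a clean decomposition plus the two sharp estimates.
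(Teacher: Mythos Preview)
Your proposal is correct and follows essentially the same route as the paper: decompose $\bh_n(\bw)=\bh(\bw)+\br(\bw)$, bound the residual by $K$ via Corollary~\ref{coll:pertubation} and Lemma~\ref{lemma:sample-size-K-control}, and bound the signal part by spherical concentration. One small simplification: the bound on $\|\bh(\bw)\|$ that you re-derive from Lemma~\ref{lemma sphere} is in fact already contained in the event $\widetilde{\mathcal{E}_\beta}$ itself, namely the $\ell=0$ case gives $\|\boldsymbol{\Pi}^*\bw\|/\epsilon_0 \lesssim \sqrt{\log d/d}$, and since $\bSigma_\ell^{\mathrm{T}_1}\bw=\bSigma_\ell^{\mathrm{T}_1}\boldsymbol{\Pi}^*\bw$ with $\|\bSigma_\ell\|_{\rm op}=1$ the signal bound follows immediately---so no extra high-probability event is needed.
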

The proofs of the three lemmas above are provided in Appendices \ref{proof:lemma:moment-bound-with-failure},  \ref{proof: lemma:sample-size-K-control} and \ref{proof: lemmma gnw}, respectively. Then the following corollary holds.

\begin{corollary}
\label{xgnw}
Let \(\bw \sim \epsilon_0\operatorname{Unif}(\mathbb{S}^{d-1})\). Given $n \gtrsim \mathrm{T}_1^2 d\log d\kappa^{2\mathrm{T}_1} + d^{1 + 1/\mathrm{T}_1}\kappa^2$, under the high probability event of \(\mathcal{E}_{\alpha} \cap \widetilde{\mathcal{E}_{\beta}} \cap\widetilde{\mathcal{E}}_{\eta}\) we have
\begin{equation}
\max_{j \in [n+1, 2n]}\|\bh_n(\bw)^\sT \bx_j\| \lesssim \frac{\log d}{\sqrt{d}}.
\end{equation}
\end{corollary}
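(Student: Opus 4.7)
The statement is a direct synthesis of the two preceding ingredients, so the plan is essentially a one-line combination together with a careful bookkeeping of which high-probability events are being invoked.

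First I would fix an index $j \in [n+1, 2n]$ and observe that under the event $\widetilde{\mathcal{E}}_\eta$, the defining inequality gives
\begin{equation}
|\bh_n(\bw)^\sT \bx_j| \;\le\; \|\bh_n(\bw)\|\,\sqrt{2\iota},
\end{equation}
where $\iota = 4D\log d$. Since this bound is uniform over $j\in [n+1,2n]$ by the very definition of $\widetilde{\mathcal{E}}_\eta$, it suffices to control $\|\bh_n(\bw)\|$.

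Next I would invoke Lemma \ref{lemmma gnw}, which applies under the event $\mathcal{E}_\alpha \cap \widetilde{\mathcal{E}_\beta}$ and the sample-size hypothesis $n \gtrsim \mathrm{T}_1^2 d\log d\,\kappa^{2\mathrm{T}_1} + d^{1+1/\mathrm{T}_1}\kappa^2$, to conclude
\begin{equation}
\|\bh_n(\bw)\| \;\lesssim\; \sqrt{\log d}\;\frac{1}{\sqrt{d}}.
\end{equation}
Substituting into the previous display and using $\sqrt{2\iota} = \sqrt{8D \log d} \lesssim \sqrt{\log d}$, I obtain
\begin{equation}
|\bh_n(\bw)^\sT \bx_j| \;\lesssim\; \frac{\sqrt{\log d}}{\sqrt{d}} \cdot \sqrt{\log d} \;=\; \frac{\log d}{\sqrt{d}},
\end{equation}
and taking the maximum over $j\in [n+1,2n]$ yields the claim.

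There is no real obstacle here, as both inputs have already been established; the only thing to be careful about is that the three events $\mathcal{E}_\alpha$, $\widetilde{\mathcal{E}_\beta}$, $\widetilde{\mathcal{E}}_\eta$ jointly hold with the asserted probability (via the conditional probabilities already computed for each), and that the inner-product bound from $\widetilde{\mathcal{E}}_\eta$ applies uniformly to all second-stage samples so that the maximum over $j$ can be taken without an extra union-bound cost beyond what is already built into the definition of $\widetilde{\mathcal{E}}_\eta$.
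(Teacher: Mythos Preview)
Your proposal is correct and follows essentially the same approach as the paper's proof: apply the defining inequality of $\widetilde{\mathcal{E}}_\eta$ to bound the inner product by $\|\bh_n(\bw)\|\sqrt{2\iota}$, then invoke Lemma~\ref{lemmma gnw} under $\mathcal{E}_\alpha \cap \widetilde{\mathcal{E}_\beta}$ to bound $\|\bh_n(\bw)\|$, and combine. The paper's version is slightly terser but the logic is identical.
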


\begin{proof}[Proof of Corollary \ref{xgnw}]

From previous proof we can see under the high probability event of \(\mathcal{E}_{\alpha} \cap \widetilde{\mathcal{E}_{\beta}} \cap\widetilde{\mathcal{E}}_{\eta}\) we have
\begin{equation}
\max_{j \in [n+1, 2n]} |\bh_n(\bw)^\sT \bx_j| \le \|\bh_n(\bw)\| \sqrt{4D\log d}.
\end{equation}
Also by Lemma~\ref{lemmma gnw}, as \(\mathcal{E}_{\alpha} \cap \widetilde{\mathcal{E}_{\beta}}\) happens, we have
$\|\bh_n(\bw)\| \lesssim \sqrt{\log d}\frac{1}{\sqrt{d}}$.
Plug this in and we get our conclusion.
\end{proof}

Before proving the results below, we introduce the notation used throughout. 
First, let $S^{\star} = \text{span}\{\be_1, \ldots, \be_r\} \subset \mathbb{R}^d$ be the subspace spanned by the first $r$ standard basis vectors, and recall $\boldsymbol{\Pi}^{\star} : \mathbb{R}^d \rightarrow S^{\star}$ denote the orthogonal projector onto this subspace (i.e. $\boldsymbol{\Pi}^{\star}$ preserves the first $r$ coordinates and removes other coordinates).
Then, for a symmetric tensor $\bT \in \text{Sym}^{2k}(\mathbb{R}^d)$, write $\text{Mat}(\bT)$ for its canonical matricization as a linear operator on $\text{Sym}^k(\mathbb{R}^d)$; equivalently, $\text{Mat}(\bT)$ is the unique linear operator satisfying
\begin{equation}
    \left\langle \bu^{\otimes k}, \text{Mat}(\bT) \bv^{\otimes k} \right\rangle = \bT(\bu^{\otimes k}, \bv^{\otimes k}) \quad \text{for all } \bu, \bv \in \mathbb{R}^d.
\end{equation}

The next lemma will be important for our proof.
We can now show that the obtained features \(\bh_n(\bw)\) are sufficiently expressive to allow us to efficiently represent any polynomial of degree \(p\) restricted to the principal subspace \(S^\star\).

\begin{lemma}
\label{rf1}
Under the high probability event of \(\mathcal{E}_{\alpha}\), given the sample size $n \gtrsim \mathrm{T}_1^2 d\log d\kappa^{2\mathrm{T}_1} + d^{1 + 1/\mathrm{T}_1}\kappa^2$, for any \(k \le p\), we have
\begin{equation}
\text{Mat}\left(\mathbb{E}_{\bw}[(\boldsymbol{\Pi}^{\star} \bh_n(\bw))^{\otimes 2k}]\right)
\succsim \left(\frac{d}{\lambda_r^{2\mathrm{T}_1}}\right)^{-k} \boldsymbol{\Pi}_{\text{Sym}^k(S^\star)}
\end{equation}
where \(\boldsymbol{\Pi}_{\text{Sym}^k(S^\star)}\) denotes the projection onto symmetric \(k\) tensors restricted to \(S^\star\).
\end{lemma}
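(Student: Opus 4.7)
The plan is to split $\boldsymbol{\Pi}^\star\bh_n(\bw)$ into its ``ideal'' population part and the finite-sample residual and treat them separately. Since $\bh(\bw)=\epsilon_0^{-1}\bSigma_\ell^{\mathrm{T}_1}\bw$ already lies in $S^\star$, I would write $\boldsymbol{\Pi}^\star\bh_n(\bw)=\bh(\bw)+\boldsymbol{\Pi}^\star\br(\bw)$ and expand the $2k$-th tensor power:
\begin{equation}
\mathbb{E}_\bw\!\left[(\boldsymbol{\Pi}^\star\bh_n(\bw))^{\otimes 2k}\right]
= \mathbb{E}_\bw\!\left[\bh(\bw)^{\otimes 2k}\right]
+ \sum_{\substack{a+b=2k\\ b\geq 1}} \mathcal{S}_{a,b}\,\mathbb{E}_\bw\!\left[\bh(\bw)^{\otimes a}\otimes (\boldsymbol{\Pi}^\star\br(\bw))^{\otimes b}\right],
\end{equation}
where $\mathcal{S}_{a,b}$ denotes the natural symmetrization. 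The task then reduces to lower-bounding the leading population matricization and controlling the residual cross terms in operator norm on $\text{Sym}^k(S^\star)$.

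\textbf{Population lower bound.} Without loss of generality I identify $\bU$ with the first $r$ standard basis vectors, so $\bh(\bw) = \bLambda^{\mathrm{T}_1}\boldsymbol{\Pi}^\star(\bw/\epsilon_0)$ with $\bLambda = \mathrm{diag}(\lambda_1,\dots,\lambda_r)$. Setting $\bz := \bw/\epsilon_0\in\mathbb{S}^{d-1}$, a matricization identity yields
\begin{equation}
\mathrm{Mat}\bigl(\mathbb{E}[\bh(\bw)^{\otimes 2k}]\bigr) = (\bLambda^{\mathrm{T}_1})^{\otimes k}\,\mathrm{Mat}\bigl(\mathbb{E}[(\boldsymbol{\Pi}^\star\bz)^{\otimes 2k}]\bigr)\,(\bLambda^{\mathrm{T}_1})^{\otimes k},
\end{equation}
where $(\bLambda^{\mathrm{T}_1})^{\otimes k}$ acts diagonally on $\mathrm{Sym}^k(S^\star)$ with smallest eigenvalue $\lambda_r^{k\mathrm{T}_1}$. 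Thus it suffices to establish $\mathrm{Mat}(\mathbb{E}[(\boldsymbol{\Pi}^\star\bz)^{\otimes 2k}])\succsim d^{-k}\boldsymbol{\Pi}_{\mathrm{Sym}^k(S^\star)}$. The key observation is that with $r=O(1)$ fixed, $\sqrt{d}\,\boldsymbol{\Pi}^\star\bz$ is a $1/\sqrt{d}$-rescaled approximate standard Gaussian in $\mathbb{R}^r$, and the $2k$-th moment tensor of $N(0,\bI_r)$ is strictly positive definite on $\mathrm{Sym}^k(\mathbb{R}^r)$ with a constant depending only on $k,r$; a finite-$d$ version follows from an explicit Dirichlet/Beta moment computation on the sphere, or equivalently by lower-bounding the Gram matrix of all degree-$k$ monomials in $\boldsymbol{\Pi}^\star\bz$ in $L^2(\mathbb{S}^{d-1})$ and tracking the $d^{-k}$ scaling.

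\textbf{Residual control.} For every cross term I would apply Cauchy--Schwarz in the form
\begin{equation}
\left\|\mathbb{E}\bigl[\bh(\bw)^{\otimes a}\otimes(\boldsymbol{\Pi}^\star\br(\bw))^{\otimes b}\bigr]\right\|_{\text{op}}
\leq \bigl(\mathbb{E}\|\bh(\bw)\|^{2a}\bigr)^{1/2}\bigl(\mathbb{E}\|\boldsymbol{\Pi}^\star\br(\bw)\|^{2b}\bigr)^{1/2}.
\end{equation}
The factor $\mathbb{E}\|\bh(\bw)\|^{2a}\lesssim d^{-a}$ follows by a direct sphere calculation, while Lemma~\ref{lemma:moment-bound-with-failure} bounds the $\br$ factor by $K^b$ for $2b\leq 4p$. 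Invoking Lemma~\ref{lemma:sample-size-K-control} with a sufficiently small absolute constant $c$, the sample-size hypothesis $n\gtrsim\mathrm{T}_1^2 d\log d\,\kappa^{2\mathrm{T}_1}+d^{1+1/\mathrm{T}_1}\kappa^2$ forces $K\leq c\,\lambda_r^{\mathrm{T}_1}/\sqrt{d}$, so each cross term has operator norm at most $c\cdot\lambda_r^{2k\mathrm{T}_1}/d^k$. Since only $O_k(1)$ such terms appear, choosing $c$ small enough absorbs them into, say, half of the leading term and yields the claimed PSD inequality.

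\textbf{Main obstacle.} The genuinely delicate step is the universal lower bound on $\mathrm{Mat}(\mathbb{E}[(\boldsymbol{\Pi}^\star\bz)^{\otimes 2k}])$: the rank-one direction $\bv^{\otimes k}$ is immediate from the closed form $\mathbb{E}[(\bv^\sT\bz)^{2k}]=(2k-1)!!/\prod_{j=0}^{k-1}(d+2j)$, but extending positivity uniformly over all of $\mathrm{Sym}^k(S^\star)$ requires analyzing the Gram matrix of degree-$k$ monomials in $\boldsymbol{\Pi}^\star\bz$ and showing that its smallest singular value is $\Omega_{k,r}(d^{-k})$ with a constant explicit in $k,r$ but uniform in $d$. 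This Gram-matrix spectral gap is where I expect to spend the bulk of the effort.
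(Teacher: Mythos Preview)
Your residual control has a genuine gap. From your own computation, the cross term with $a+b=2k$ and $b\ge 1$ is bounded by
\[
\bigl(\mathbb{E}\|\bh(\bw)\|^{2a}\bigr)^{1/2}\bigl(\mathbb{E}\|\boldsymbol{\Pi}^\star\br(\bw)\|^{2b}\bigr)^{1/2}
\;\lesssim\; d^{-a/2}\cdot K^b
\;\le\; c^b\,\frac{\lambda_r^{\,b\mathrm{T}_1}}{d^{k}},
\]
not $c\,\lambda_r^{2k\mathrm{T}_1}/d^k$ as you claim. For the dominant term $b=1$ this is $c\,\lambda_r^{\mathrm{T}_1}/d^k$, which exceeds the population lower bound $\lambda_r^{2k\mathrm{T}_1}/d^k$ by a factor $\kappa^{(2k-1)\mathrm{T}_1}$. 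Since $\mathrm{T}_1$ is allowed to diverge (e.g.\ $\mathrm{T}_1\asymp\sqrt{\log d}$), this factor is unbounded and no choice of the absolute constant $c$ can absorb it. The root cause is that $\bh(\bw)=\bLambda^{\mathrm{T}_1}\boldsymbol{\Pi}^\star\bz$ has anisotropic scales: the leading PSD term is governed by the smallest scale $\lambda_r^{\mathrm{T}_1}$, while your crude $\mathbb{E}\|\bh(\bw)\|^{2a}$ bound only sees the largest scale $\lambda_1^{\mathrm{T}_1}=1$.

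The paper avoids this by working at the quadratic-form level for a fixed unit tensor $\bT$, expanding only the $k$-th tensor power so that $\langle\bT,(\boldsymbol{\Pi}^\star\bh_n(\bw))^{\otimes k}\rangle=\langle\bT,(\bSigma_\ell^{\mathrm{T}_1}\bw_\epsilon)^{\otimes k}\rangle+\delta(\bw)$, and then bounding $\mathbb{E}[\delta(\bw)^2]$ \emph{relative to the leading term itself} rather than in absolute terms. The device is to introduce $\widehat{\bT}(\bv_1,\dots,\bv_k):=\bT(\bSigma_\ell^{\mathrm{T}_1}\bv_1,\dots,\bSigma_\ell^{\mathrm{T}_1}\bv_k)$, transfer the $\bSigma_\ell^{\mathrm{T}_1}$ factors out of the partial contractions, and invoke the sphere identity $\mathbb{E}\|\widehat{\bT}(\bw_\epsilon^{\otimes(k-i)})\|_F^2\lesssim d^{i}\,\mathbb{E}\langle\widehat{\bT},\bw_\epsilon^{\otimes k}\rangle^2$ (Lemma~\ref{lemma::tensor expansion}). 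This yields
\[
\mathbb{E}\bigl[\|\bT((\bSigma_\ell^{\mathrm{T}_1}\bw_\epsilon)^{\otimes(k-i)})\|_F^2\bigr]
\;\lesssim\;\Bigl(\tfrac{d}{\lambda_r^{2\mathrm{T}_1}}\Bigr)^{i}\,
\mathbb{E}\bigl[\langle\bT,(\bSigma_\ell^{\mathrm{T}_1}\bw_\epsilon)^{\otimes k}\rangle^2\bigr],
\]
so that each residual piece contributes $(dK^2/\lambda_r^{2\mathrm{T}_1})^{i}\le c^{2i}$ times the leading term, and now a small $c$ suffices. Incidentally, what you flag as the ``main obstacle'' --- the uniform $d^{-k}$ lower bound on $\mathrm{Mat}(\mathbb{E}[(\boldsymbol{\Pi}^\star\bz)^{\otimes 2k}])$ over $\mathrm{Sym}^k(S^\star)$ --- is handled almost immediately by the sphere moment formulas in Appendix~\ref{sphere lemma}; the real work is in the residual comparison you underestimated.
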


In this lemma we will use monomial basis  to construct the following approximations for $\left\langle \bT,\bx^{\otimes k}\right\rangle$ for any $k\le p$ and any $k$ tensor $\bT$ supported on $S^*= \{\be_1,\dots,\be_r\}$.

\begin{lemma}
\label{zt}
Assume the sample size $n \gtrsim \mathrm{T}_1^2 d\log d\kappa^{2\mathrm{T}_1} + d^{1 + 1/\mathrm{T}_1}\kappa^2$. Then under the high probability event \(\mathcal{E}_{\alpha}\), for each $k \le p$ and any symmetric $k$ tensor $\bT$ supported on $S^{\star}$, there exists $z_{\bT}(\bw)$ such that
\begin{equation}
\mathbb{E}_{\bw}\left[z_{\bT}(\bw)(\bh_n(\bw)^\sT \bx)^k\right] = \left\langle \bT, \bx^{\otimes k} \right\rangle
\end{equation}
and we have the bounds
\begin{equation}
\mathbb{E}_{\bw}\left[z_{\bT}(\bw)^2\right] \lesssim \left(\frac{d}{\lambda_{r}^{2\mathrm{T}_1}}\right)^k \|\bT\|_F^2
\quad \text{and} \quad
|z_{\bT}(\bw)| \lesssim \left(\frac{d}{\lambda_{r}^{2\mathrm{T}_1}}\right)^k \|\bT\|_F \|\bh_n(\bw)\|^k.
\end{equation}
\end{lemma}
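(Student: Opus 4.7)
The plan is to construct $z_{\bT}(\bw)$ explicitly via the inverse of the second-moment operator appearing in Lemma \ref{rf1}, and then read the two moment bounds off the construction. Let $P := \boldsymbol{\Pi}^{\star}$ denote the orthogonal projection onto $S^{\star}$ and abbreviate $\bg(\bw) := \bh_n(\bw)$. Lemma \ref{rf1} tells me that on $\mathcal{E}_{\alpha}$, under the stated sample-size regime, the matricized operator
\[
M \;:=\; \text{Mat}\bigl(\mathbb{E}_{\bw}[(P\bg(\bw))^{\otimes 2k}]\bigr) \;:\; \text{Sym}^k(S^{\star})\to \text{Sym}^k(S^{\star})
\]
satisfies $M \succsim (\lambda_r^{2\mathrm{T}_1}/d)^k\,\boldsymbol{\Pi}_{\text{Sym}^k(S^{\star})}$ and is therefore invertible on $\text{Sym}^k(S^{\star})$ with $\|M^{-1}\|_{\text{op}} \lesssim (d/\lambda_r^{2\mathrm{T}_1})^k$. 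Since $\bT$ is supported on $S^{\star}$, the tensor $\bS := M^{-1}\bT \in \text{Sym}^k(S^{\star})$ is well defined, and I would set
\[
z_{\bT}(\bw) \;:=\; \langle \bS,\,(P\bg(\bw))^{\otimes k}\rangle.
\]

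For the identity, I would first prove the ``projected'' version in which $(\bg(\bw)^{\sT}\bx)^k$ is replaced by $((P\bg(\bw))^{\sT}\bx)^k$. Since $\bS \in \text{Sym}^k(S^{\star})$ and $\bT$ is supported on $S^{\star}$, a direct computation gives
\[
\mathbb{E}_{\bw}\!\bigl[z_{\bT}(\bw)\,((P\bg(\bw))^{\sT}\bx)^{k}\bigr] \;=\; \bigl\langle \bS,\,\mathbb{E}_{\bw}[(P\bg)^{\otimes 2k}]\,(P\bx)^{\otimes k}\bigr\rangle \;=\; \langle M\bS,(P\bx)^{\otimes k}\rangle \;=\; \langle \bT,\bx^{\otimes k}\rangle,
\]
where the final equality uses $M\bS = \bT$ together with $\langle \bT,\bx^{\otimes k}\rangle = \langle \bT,(P\bx)^{\otimes k}\rangle$ by the supportedness of $\bT$. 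Bridging from this projected identity to the stated one with $(\bg(\bw)^{\sT}\bx)^k$ is the main technical point: decomposing $\bg = P\bg + (\bI-P)\bg$ and expanding by the binomial theorem produces cross terms whose sizes are controlled by $\|(\bI-P)\widehat{\bSigma}_{\ell}^{\mathrm{T}_1}\|_{\text{op}} = O(\gamma^{\mathrm{T}_1})$ on $\mathcal{E}_{\alpha}$, and these can be absorbed under the sample-size regime $n \gtrsim \mathrm{T}_1^2 d\log d\,\kappa^{2\mathrm{T}_1} + d^{1+1/\mathrm{T}_1}\kappa^2$ via Lemma \ref{lemma:sample-size-K-control}.

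The two moment bounds then fall out directly. For the second moment,
\[
\mathbb{E}_{\bw}[z_{\bT}(\bw)^2] \;=\; \langle \bS, M\bS\rangle \;=\; \langle M^{-1}\bT,\bT\rangle \;\le\; \|M^{-1}\|_{\text{op}}\,\|\bT\|_F^2 \;\lesssim\; \Bigl(\frac{d}{\lambda_r^{2\mathrm{T}_1}}\Bigr)^k \|\bT\|_F^2,
\]
while Cauchy--Schwarz yields
\[
|z_{\bT}(\bw)| \;\le\; \|\bS\|_F\,\|(P\bg(\bw))^{\otimes k}\|_F \;\le\; \|M^{-1}\|_{\text{op}}\|\bT\|_F\,\|P\bh_n(\bw)\|^{k} \;\lesssim\; \Bigl(\frac{d}{\lambda_r^{2\mathrm{T}_1}}\Bigr)^k\|\bT\|_F\|\bh_n(\bw)\|^k,
\]
where $\|P\bh_n(\bw)\| \le \|\bh_n(\bw)\|$ is used in the last step. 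The main obstacle, as flagged above, is the rigorous control of the cross terms in the identity verification; this is precisely where the assumed sample-size threshold enters, forcing the non-$S^{\star}$ eigenvalues of $\widehat{\bSigma}_{\ell}$ to decay fast enough under the $\mathrm{T}_1$-th power that the cross terms fit inside the stated $O$-constants.
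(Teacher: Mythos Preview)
Your construction via $\bS = M^{-1}\bT$ with $M = \text{Mat}\bigl(\mathbb{E}_\bw[(\boldsymbol{\Pi}^\star \bh_n(\bw))^{\otimes 2k}]\bigr)$ delivers the moment bounds cleanly, but it does \emph{not} produce the exact identity the lemma asserts. Write $A := \text{Mat}\bigl(\mathbb{E}_\bw[\bh_n(\bw)^{\otimes 2k}]\bigr)$ and let $\bP_k$ be the projection onto $\text{Sym}^k(S^\star)$, so that $M = \bP_k A\bP_k$. Since $M^{-1}\bT \in \text{Sym}^k(S^\star)$, your $z_\bT(\bw)$ equals $(M^{-1}\bT)^\sT\mathrm{Vec}(\bh_n(\bw)^{\otimes k})$, and one computes
\[
\mathbb{E}_\bw\bigl[z_\bT(\bw)\,(\bh_n(\bw)^\sT\bx)^k\bigr]
= (M^{-1}\bT)^\sT A\,\mathrm{Vec}(\bx^{\otimes k})
= \langle \bT, \bx^{\otimes k}\rangle \;+\; (M^{-1}\bT)^\sT \bP_k A (\bI - \bP_k)\, \mathrm{Vec}(\bx^{\otimes k}).
\]
The off-block $\bP_k A(\bI-\bP_k)$ is the cross-covariance between the $S^\star$ and non-$S^\star$ parts of $\bh_n^{\otimes k}$; it is small under the sample-size regime but not zero. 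There is nothing to ``absorb'' this into: the lemma demands an exact equality, and your $z_\bT$ only satisfies it approximately. Carrying an approximate identity forward would force you to track this extra error through Lemma~\ref{singlek} and everything downstream.

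The paper's fix is to take the pseudoinverse of the \emph{full} matrix $A$ rather than of the projected $M$:
\[
z_\bT(\bw) := \mathrm{Vec}(\bT)^\sT A^\dagger\, \mathrm{Vec}\bigl(\bh_n(\bw)^{\otimes k}\bigr).
\]
Then $\mathbb{E}_\bw[z_\bT(\bw)(\bh_n^\sT\bx)^k] = \mathrm{Vec}(\bT)^\sT A^\dagger A\,\mathrm{Vec}(\bx^{\otimes k}) = \langle \bT,\bx^{\otimes k}\rangle$ exactly, once one knows $\mathrm{Vec}(\bT) \in \mathrm{span}(A)$ (which the paper attributes to Lemma~\ref{rf1}). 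The moment estimates $\mathbb{E}[z_\bT^2] = \mathrm{Vec}(\bT)^\sT A^\dagger \mathrm{Vec}(\bT)$ and $|z_\bT| \le \|A^\dagger\mathrm{Vec}(\bT)\|\cdot\|\bh_n\|^k$ are then bounded via Lemma~\ref{rf1}, playing the same role as your $\|M^{-1}\|_{\text{op}}$ bound. In short: your projected construction trades a clean moment bound for a broken identity; the paper's unprojected construction makes the identity trivial and shifts the work into the moment bounds.
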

The proofs are provided in Appendices \ref{proof:lemma:rf1} and \ref{proof: lemma:zt}. 

In the following, we show that bias reinitialization together with our learned features can be used to transform a wide class of activations $\sigma$ into (or uniformly approximate by) the monomial $x^p$, thereby making it able to approximate low-dimensional polynomials.

\begin{definition}
\label{def:UA-beta}
Let $a$ be a Rademacher RV i.e. $\mathbb{P}(a = +1) = \mathbb{P}(a = -1) = \frac{1}{2}$ and let $b \sim \text{Unif}[-3,3]$, independent from $a$.

We say the activation function $\sigma(\cdot)$ has the monomial approximation property with exponent $\beta \ge 0$ if for every integer $k \ge 0$ and every positive $\rho$ there exists a bounded weight function $v_k^{(\rho)} : \{\pm 1\} \times [-3,3] \rightarrow \mathbb{R}$ with
\begin{equation}
\|v_k^{(\rho)}\|_{\infty} \le V_{k,\rho} \le C_{\sigma}(k) \rho^{-\beta}
\end{equation}
such that
\begin{equation}
\sup_{|z| \le 1} \left| \mathbb{E}_{a,b} \big[ v_k^{(\rho)}(a,b)\sigma(az + b) \big] - z^k \right| \le \rho
\end{equation}
\end{definition}

The uniform approximation property in Definition \ref{def:UA-beta} is expected to hold for many standard activations, with different growth behavior of the required weights $V_{k,\rho}$. For example, we provably have \(\beta=0\) for the activation function below.

\begin{definition}
    The locally quadratic activation is defined as
\begin{equation}
\sigma(t) = 
\begin{cases} 
2|t| - 1, & |t| \ge 1, \\
t^2, & |t| < 1.
\end{cases}
\end{equation}
\end{definition}

In Secttion~\ref{sec:Approximation} we  show that \(\beta=0\) for locally quadratic function.

We first prove the following lemma in preparation for the proof that follows. 
\begin{lemma}
\label{lemma:bounded by 1}
Under the sample-size condition
$n \gtrsim \mathrm{T}_1^2 d\log d\kappa^{2\mathrm{T}_1} + d^{1 + 1/\mathrm{T}_1}\kappa^2$.
Then under the high probability event \(\mathcal{E}_{\mu}\) we have for all features $j\in [m]$
\begin{equation}
\norm{\widetilde {\bw_j}^{(\mathrm{T}_1)}} \le 1\quad\text{and}\quad \max_{i\in [n+1,2n]}|(\widetilde {\bw_j}^{(\mathrm{T}_1)})^\sT \bx_i| \le 1 \quad\text{and}\quad
\norm{ \bw_j^{(\mathrm{T}_1)}} \lesssim 1
\end{equation}
\end{lemma}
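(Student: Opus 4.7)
\textbf{Proof plan for Lemma \ref{lemma:bounded by 1}.}
The plan is to reduce the three claimed bounds to the previously established estimates on $\bh_n$ via the clean representation
\begin{equation}
\widetilde{\bw_j}^{(\mathrm{T}_1)}
\;=\;
\frac{1}{\epsilon_0}(-a_j)^{\mathrm{T}_1}\eta^{\mathrm{T}_1}\widehat{\bSigma}_\ell^{\mathrm{T}_1}\bw_j^{(0)}
\;=\;
(-a_j)^{\mathrm{T}_1}\eta^{\mathrm{T}_1}\,\bh_n(\bw_j^{(0)}),
\end{equation}
which follows directly from the definitions. Because $|a_j|=1$, both $\|\widetilde{\bw_j}^{(\mathrm{T}_1)}\|$ and $|(\widetilde{\bw_j}^{(\mathrm{T}_1)})^\sT\bx_i|$ reduce to $\eta^{\mathrm{T}_1}$ times the quantities already controlled by Lemma \ref{lemmma gnw} and Corollary \ref{xgnw}.

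Next, I would plug in the explicit learning rate $\eta=\tfrac{1}{C_\eta}(d/(r\iota^2))^{1/(2\mathrm{T}_1)}$ from Lemma \ref{lemma:norm_bound_neuron}, so that $\eta^{\mathrm{T}_1}=C_\eta^{-\mathrm{T}_1}\sqrt{d/(r\iota^2)}\lesssim \sqrt{d}/\log d$, where the $\log d$ factor comes from $\iota = 4D\log d$ and $r,D$ are absolute constants. Combining with Lemma \ref{lemmma gnw} gives $\|\widetilde{\bw_j}^{(\mathrm{T}_1)}\|\lesssim \eta^{\mathrm{T}_1}\cdot\sqrt{\log d/d}\lesssim 1/\sqrt{\log d}$, and combining with Corollary \ref{xgnw} gives $|(\widetilde{\bw_j}^{(\mathrm{T}_1)})^\sT\bx_i|\lesssim \eta^{\mathrm{T}_1}\cdot\log d/\sqrt{d}\lesssim 1$. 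Choosing the constant $C_\eta$ (and hence the implicit constant in $\eta^{\mathrm{T}_1}$) sufficiently large turns the $\lesssim$ into the claimed numerical $\le 1$ bounds. Since the high-probability events $\mathcal{E}_{\beta}$ and $\mathcal{E}_{\eta}$ that underlie Lemma \ref{lemmma gnw} and Corollary \ref{xgnw} are already defined as intersections over all $j\in[m/2]$, and the second half of the features is tied to the first by the symmetric initialization, the bounds hold uniformly over $j\in[m]$ under $\mathcal{E}_\mu$ with no further union bound needed.

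For the third claim $\|\bw_j^{(\mathrm{T}_1)}\|\lesssim 1$, I would apply the triangle inequality $\|\bw_j^{(\mathrm{T}_1)}\|\le \|\widetilde{\bw_j}^{(\mathrm{T}_1)}\|+\|\bw_j^{(\mathrm{T}_1)}-\widetilde{\bw_j}^{(\mathrm{T}_1)}\|$, with the first term already controlled. For the second term, note that
\begin{equation}
\bw_j^{(\mathrm{T}_1)}-\widetilde{\bw_j}^{(\mathrm{T}_1)}
\;=\;
\frac{(-a_j)^{\mathrm{T}_1}\eta^{\mathrm{T}_1}}{\epsilon_0}\Bigl(\prod_{t=0}^{\mathrm{T}_1-1}\bM^{(t)}-\widehat{\bSigma}_\ell^{\mathrm{T}_1}\Bigr)\bw_j^{(0)}.
\end{equation}
Lemma \ref{lemma:Q_t_not_important} bounds the product-perturbation by $\mathrm{T}_1(5/4)^{\mathrm{T}_1-1}md^{7/2}\epsilon_0^2$, and Assumption \ref{ass:epsilon0} gives $\epsilon_0\lesssim(4/5)^{\mathrm{T}_1}/(mn^{1/2}d^{7/2})$. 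These cancel most of the growth factors, leaving $\|\bw_j^{(\mathrm{T}_1)}-\widetilde{\bw_j}^{(\mathrm{T}_1)}\|\lesssim \eta^{\mathrm{T}_1}\mathrm{T}_1/\sqrt{n}\lesssim \mathrm{T}_1\sqrt{d/n}/\log d$, which is $o_d(1)$ by the sample-size hypothesis $n\gtrsim d\log d\,\mathrm{T}_1^2\kappa^{2\mathrm{T}_1}$.

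The main obstacle is not analytic but bookkeeping: one must carefully track how the $\epsilon_0^{-1}$ amplification from the final gradient step in Algorithm \ref{alg1} interacts with the perturbation bound of Lemma \ref{lemma:Q_t_not_important} and the initialization-scale bound of Assumption \ref{ass:epsilon0}, and verify that the resulting product is indeed negligible under the stated sample-size regime. Once these cancellations are written out, no new probabilistic content is required beyond conditioning on $\mathcal{E}_\mu$.
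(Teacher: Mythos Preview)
Your proposal is correct and follows essentially the same route as the paper: reduce the first two bounds to Lemma~\ref{lemmma gnw} and Corollary~\ref{xgnw} via the identity $\widetilde{\bw_j}^{(\mathrm{T}_1)}=(-a_j)^{\mathrm{T}_1}\eta^{\mathrm{T}_1}\bh_n(\bw_j^{(0)})$ together with the explicit value of $\eta^{\mathrm{T}_1}$, then handle the third bound by a triangle inequality plus a perturbation estimate. The only cosmetic difference is that you invoke Lemma~\ref{lemma:Q_t_not_important} (bounding $\prod_t\bM^{(t)}-\widehat{\bSigma}_\ell^{\mathrm{T}_1}$) to control $\|\bw_j^{(\mathrm{T}_1)}-\widetilde{\bw_j}^{(\mathrm{T}_1)}\|$, whereas the paper cites Lemma~\ref{lemma:total_perturbation_error} (bounding $\prod_t\bM^{(t)}-\bSigma_\ell^{\mathrm{T}_1}$) to control $\|\bw_j^{(\mathrm{T}_1)}-\widehat{\bw_j}^{(\mathrm{T}_1)}\|$; your choice is in fact slightly more direct since $\widetilde{\bw_j}$ is defined through $\widehat{\bSigma}_\ell$.
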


Note that we already  constructed the approximations for $\left\langle \bT,\bx^{\otimes k}\right\rangle$ using monomial basis, now we will utilize those bounds above to construct the following approximations for $\left\langle \bT,\bx^{\otimes k}\right\rangle$ for any $k\le p$ and any $k$ tensor $\bT$ supported on $S^*= \{\be_1,\dots,\be_r\}$ using activation function \(\sigma\).

\begin{lemma}
\label{singlek}
Under the sample-size condition
$n \gtrsim \mathrm{T}_1^2 d\log d\kappa^{2\mathrm{T}_1} + d^{1 + 1/\mathrm{T}_1}\kappa^2$ also $\mathrm{T_1} = o(\log d)$.
Then under the high probability event \(\mathcal{E}_{\alpha} \cap \mathcal{E}_{\zeta}\),
Consider
\begin{equation}
f_{h_\bT}(\bx) := \mathbb{E}_{a,b,\bw} \left[ h_\bT(a,b,\bw) \sigma(\widetilde{\bw}^{(\mathrm{T}_1)\sT} \bx + b) \right]
\end{equation}
where $\bw^{(\mathrm{T}_1)} = {(-\eta a)^{\mathrm{T}_1} \bh_n(\bw)}$. Here
\begin{equation}
h_\bT(a,b,\bw) := \frac{v_k^{(\epsilon_k)}(a, b) z_\bT(\bw)}{(-\eta)^{k\mathrm{T}_1}} \bone_{\eta^{\mathrm{T}_1} \|\bh_n(\bw)\| \le 1} \prod_{i=1}^n \bone_{|\eta^{\mathrm{T}_1} \bh_n(\bw)^\sT \bx_{i+n}| \le 1}
\end{equation}
where $v_k^{(\epsilon_k)}$ is constructed from Definition \ref{def:UA-beta} and $z_\bT(\bw)$ is the same from Corollary \ref{zt}. We further assume $\epsilon_k \ge d^{-\frac{D}{12\beta}}$.

 Then, for any training point \( \bx \in \{\bx_i\}_{i=n+1}^{2n} \), it holds thats
\begin{equation}
|f_{h_\bT}(\bx) - \left\langle \bT, \bx^{\otimes k} \right\rangle| -\Delta_k\lesssim d^{-\frac{D}{8}}\norm{\bT}_F
\end{equation}
where
\begin{equation}
\Delta_k := \frac{\epsilon_k}{\eta^{k\mathrm{T}_1}} \big|\mathbb{E}_{\bw} [z_\bT(\bw)]\big| \lesssim \epsilon_k\kappa^{k\mathrm{T}_1}\log^{k} d\|\bT\|_F.
\end{equation}
Additionally we have the following moment estimations where $V_{k,\epsilon_k}$ is from Definition \ref{def:UA-beta}.
\begin{equation}
\mathbb{E}[h_\bT^2] \lesssim V_{k, \epsilon_k}^2  (\kappa ^{2\mathrm{T}_1} \log^2 d)^k \|\bT\|_F^2, \quad \|h_\bT\|_\infty \lesssim V_{k, \epsilon_k} (\kappa^{2\mathrm{T}_1} \log^2 d)^k \|\bT\|_F.
\end{equation}

\end{lemma}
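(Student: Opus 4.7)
The proof combines three ingredients. First, the monomial approximation property of $\sigma$ from Assumption \ref{def_activation}/Definition \ref{def:UA-beta} supplies a weight $v_k^{(\epsilon_k)}$ for which $\mathbb{E}_{a,b}[v_k^{(\epsilon_k)}(a,b)\sigma(az+b)] = z^k + O(\epsilon_k)$ uniformly for $|z|\le 1$. Second, Lemma \ref{zt} constructs $z_\bT(\bw)$ satisfying the exact identity $\mathbb{E}_\bw[z_\bT(\bw)(\bh_n(\bw)^\sT\bx)^k] = \langle\bT,\bx^{\otimes k}\rangle$ together with $\mathbb{E}_\bw[z_\bT(\bw)^2]\lesssim (d/\lambda_r^{2\mathrm{T}_1})^k\|\bT\|_F^2$ and $|z_\bT(\bw)|\lesssim (d/\lambda_r^{2\mathrm{T}_1})^k\|\bT\|_F\|\bh_n(\bw)\|^k$. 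Third, Lemma \ref{lemma:bounded by 1} and Corollary \ref{xgnw}, applied to a fresh sphere-distributed $\bw$, ensure that the indicator factor $E(\bw):=\bone_{\eta^{\mathrm{T}_1}\|\bh_n(\bw)\|\le 1}\prod_{i=1}^n\bone_{|\eta^{\mathrm{T}_1}\bh_n(\bw)^\sT\bx_{i+n}|\le 1}$ packaged inside $h_\bT$ equals one with overwhelming probability once $n$ satisfies the stated lower bound and $\eta^{\mathrm{T}_1}\asymp (d/\log^2 d)^{1/2}$.

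I would start by substituting $\widetilde{\bw}^{(\mathrm{T}_1)\sT}\bx = (-\eta a)^{\mathrm{T}_1}\bh_n(\bw)^\sT\bx$ inside $f_{h_\bT}(\bx)$. Because $h_\bT$ carries $E(\bw)$ as a factor, the integrand vanishes outside $\{E=1\}$; on that set $\bx\in\{\bx_{i+n}\}_{i=1}^n$ forces $|\eta^{\mathrm{T}_1}\bh_n(\bw)^\sT\bx|\le 1$, so Definition \ref{def:UA-beta} applied with $z = \eta^{\mathrm{T}_1}\bh_n(\bw)^\sT\bx$ (modulo a sign absorbed into $v_k^{(\epsilon_k)}$, which is allowed since the weight function may be any bounded function of $(a,b)$) gives, pointwise in $\bw$,
\begin{equation}
\mathbb{E}_{a,b}\bigl[v_k^{(\epsilon_k)}(a,b)\,\sigma(\widetilde{\bw}^{(\mathrm{T}_1)\sT}\bx + b)\bigr] = (-\eta)^{k\mathrm{T}_1}\bigl(\bh_n(\bw)^\sT\bx\bigr)^k + R(\bw,\bx)
\end{equation}
with $|R(\bw,\bx)|\le\epsilon_k$. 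Multiplying by $z_\bT(\bw)/(-\eta)^{k\mathrm{T}_1}$ and integrating over $\bw$ on $\{E=1\}$ turns the leading part into $\mathbb{E}_\bw[z_\bT(\bw)(\bh_n(\bw)^\sT\bx)^k\bone_{E=1}]$, with a residual controlled by $\epsilon_k/\eta^{k\mathrm{T}_1}\cdot|\mathbb{E}_\bw[z_\bT(\bw)\bone_{E=1}]|$.

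The next step replaces the restricted expectation by the unconditional Lemma \ref{zt} identity $\mathbb{E}_\bw[z_\bT(\bw)(\bh_n(\bw)^\sT\bx)^k] = \langle\bT,\bx^{\otimes k}\rangle$, at the cost of an indicator-drop error $|\mathbb{E}_\bw[z_\bT(\bw)(\bh_n(\bw)^\sT\bx)^k\bone_{E=0}]|$. I would bound this by Cauchy--Schwarz, combining the $L^2$ estimate on $z_\bT$ with a higher-moment bound on $\bh_n(\bw)^\sT\bx$ (via standard Gaussian/sphere concentration) and the failure probability $\mathbb{P}_\bw(E=0)\lesssim d^{-cD}$ obtained by rerunning the proofs behind Lemma \ref{lemma:bounded by 1} and Corollary \ref{xgnw} for a single sphere-uniform draw; with $\eta^{k\mathrm{T}_1}\asymp (d/\log^2 d)^{k/2}$ the resulting tail error is $\cO(d^{-D/8}\|\bT\|_F)$. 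The bias term equals $\Delta_k = \epsilon_k|\mathbb{E}_\bw[z_\bT(\bw)]|/\eta^{k\mathrm{T}_1}$ after a parallel indicator-drop argument, and Cauchy--Schwarz plus Lemma \ref{zt} bounds it by $\epsilon_k\kappa^{k\mathrm{T}_1}\log^k d\|\bT\|_F$ using the scaling $\eta^{k\mathrm{T}_1}\asymp (d/\log^2 d)^{k/2}$ together with $\kappa = 1/\lambda_r$.

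The two moment estimates on $h_\bT$ follow directly by unpacking its definition: for $\|h_\bT\|_\infty$ one combines $\|v_k^{(\epsilon_k)}\|_\infty\le V_{k,\epsilon_k}$ with the pointwise bound $|z_\bT(\bw)|\lesssim (d/\lambda_r^{2\mathrm{T}_1})^k\|\bT\|_F\|\bh_n(\bw)\|^k$ and the indicator-imposed constraint $\|\bh_n(\bw)\|\le 1/\eta^{\mathrm{T}_1}$; for $\mathbb{E}[h_\bT^2]$ one uses the $L^2$ estimate on $z_\bT$ together with the same normalization. The main obstacle I expect is the bookkeeping involved in transferring between the restricted expectation on $\{E=1\}$ and the unconditional identity of Lemma \ref{zt}: the $d^{-D/8}$ error is tight only if the failure probability of $E$ is made exponentially small uniformly over the single sphere-uniform draw $\bw$, which in turn requires that $\eta^{\mathrm{T}_1}$ is chosen so that the norm control $\|\bh_n(\bw)\|\lesssim\sqrt{\log d/d}$ from Lemma \ref{lemmma gnw} and the inner-product bound from Corollary \ref{xgnw} both kick in strictly inside the indicator threshold $1/\eta^{\mathrm{T}_1}$, leaving only a tail event of exponentially small probability to be absorbed.
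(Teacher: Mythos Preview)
Your proposal is correct and follows essentially the same route as the paper: apply the monomial approximation on the event where the indicators hold, use Lemma \ref{zt} to recover $\langle\bT,\bx^{\otimes k}\rangle$, and control the indicator-drop error by Cauchy--Schwarz against the $\lesssim d^{-cD}$ failure probability of the single-draw events $\widetilde{\mathcal{E}_\beta}\cap\widetilde{\mathcal{E}}_\eta$. One minor simplification relative to your plan: for the tail term you do not need a higher-moment bound on $\bh_n(\bw)^\sT\bx$; under $\mathcal{E}_\zeta$ the crude deterministic bound $|\eta^{\mathrm{T}_1}\bh_n(\bw)^\sT\bx+b|\lesssim d$ (from $\|\bx\|\le 2\sqrt{d}$, $\|\bh_n(\bw)\|\le(1+\gamma)^{\mathrm{T}_1}$, and $|\sigma(u)|\lesssim 1+|u|^3$) already suffices, which is what the paper uses.
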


The proofs of Lemmas \ref{lemma:bounded by 1} and \ref{singlek} are provided in Appendices \ref{proof lemma:bounded by 1} and \ref{proof lemma:singlek}, respectively. 

The previous lemmas establish the pointwise approximation guarantee for an arbitrary data point \(\bx\) under the event \(\mathcal{E}_{\alpha} \cap \mathcal{E}_{\zeta}\). The next corollary will bound the empirical squared loss over the dataset \(\{\bx_i\}_{i=n+1}^{2n}\).

\begin{corollary}
\label{cor:one-order-emp}
Under the sample-size condition
$n \gtrsim \mathrm{T}_1^2 d\log d\kappa^{2\mathrm{T}_1} + d^{1 + 1/\mathrm{T}_1}\kappa^2$ also $\mathrm{T_1} = o(\log d)$.
Then under the event \(\mathcal{E}_{\alpha} \cap \mathcal{E}_{\zeta}\), for any $k\le p$ and any $k$ tensor $\bT$ supported on $S^*= \{\be_1,\dots,\be_r\}$, we have the empirical squared loss bound
\begin{equation}
\frac{1}{n}\sum_{i=n+1}^{2n} \left(f_{h_{\bT}}(\bx_i)-\left\langle \bT,\bx_i^{\otimes k}\right\rangle\right)^2
\lesssim \Delta_k^2 + d^{-D/4}\norm{\bT}_F^2.
\end{equation}
where \(f_{h_\bT}\) is defined in Lemma~\ref{singlek}.
\end{corollary}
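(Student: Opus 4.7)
\textbf{Plan for the proof of Corollary \ref{cor:one-order-emp}.} The statement is almost a direct corollary of the pointwise bound in Lemma \ref{singlek}. The plan is to apply that lemma individually to each data point $\bx_i$ with $i \in [n+1, 2n]$, then square and average. Under the event $\mathcal{E}_\alpha \cap \mathcal{E}_\zeta$, the hypotheses of Lemma \ref{singlek} are met for every such $\bx_i$, since $\mathcal{E}_\zeta$ is precisely the event that controls the norms and coordinate magnitudes of the second-stage sample $\{\bx_{i+n}\}_{i=1}^n$ in the manner used when deriving Lemma \ref{singlek}.

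More concretely, the first step is to rewrite the pointwise estimate in Lemma \ref{singlek} as
\begin{equation}
\left|f_{h_\bT}(\bx_i) - \langle \bT, \bx_i^{\otimes k}\rangle\right| \lesssim \Delta_k + d^{-D/8}\|\bT\|_F
\end{equation}
for every $i \in [n+1, 2n]$, under the event $\mathcal{E}_\alpha \cap \mathcal{E}_\zeta$. Next, I would square both sides and apply the elementary inequality $(a+b)^2 \le 2a^2 + 2b^2$, which gives
\begin{equation}
\left(f_{h_\bT}(\bx_i) - \langle \bT, \bx_i^{\otimes k}\rangle\right)^2 \lesssim \Delta_k^2 + d^{-D/4}\|\bT\|_F^2.
\end{equation}
Finally, averaging over $i \in [n+1, 2n]$ preserves the bound verbatim, since the right-hand side does not depend on $i$.

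\textbf{Where the work actually is.} There is essentially no new difficulty in this corollary: all of the nontrivial content (the construction of $h_\bT$, the truncation arguments enforcing $|\eta^{\mathrm{T}_1}\bh_n(\bw)^{\sT} \bx_{i+n}| \le 1$ and $\eta^{\mathrm{T}_1}\|\bh_n(\bw)\|\le 1$, and the approximation of the monomial $z \mapsto z^k$ by $\mathbb{E}_{a,b}[v_k^{(\epsilon_k)}(a,b)\sigma(az+b)]$ to tolerance $\epsilon_k$) is already absorbed into Lemma \ref{singlek}. The only subtle point to verify is that Lemma \ref{singlek} applies uniformly across the $n$ test points $\bx_{n+1},\dots,\bx_{2n}$; this is immediate because the lemma is stated pointwise for any training point in that range and the event $\mathcal{E}_\alpha \cap \mathcal{E}_\zeta$ is a single event covering all of them simultaneously. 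Hence the proof is a one-paragraph deduction, with no further probabilistic union bound or additional high-probability argument required.
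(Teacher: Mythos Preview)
Your proposal is correct and follows exactly the same approach as the paper: apply the pointwise bound from Lemma~\ref{singlek} to each $\bx_i$ with $i\in[n+1,2n]$, square using $(a+b)^2\le 2a^2+2b^2$, and average. The paper's proof is essentially identical, introducing the notation $e_i:=f_{h_{\bT}}(\bx_i)-\langle \bT,\bx_i^{\otimes k}\rangle$ and then carrying out precisely the squaring-and-averaging step you describe.
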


\begin{proof}[Proof of Corollary \ref{cor:one-order-emp}]
Let $e_i := f_{h_\bT}(\bx_{i}) - \left\langle \bT, \bx_{i}^{\otimes k} \right\rangle$. By Lemma~\ref{singlek}  we have, for every training point $\bx_i$, $n+1\le i\le 2n$, 
\begin{equation}
|e_i| \lesssim \Delta_k +d^{-\frac{D}{8}}\norm{\bT}_F,
\end{equation}
where
\begin{equation}
\Delta_k := \frac{\epsilon_k}{\eta^{k\mathrm{T}_1}} \big|\mathbb{E}_{\bw} [z_\bT(\bw)]\big| \lesssim \epsilon_k\kappa^{k\mathrm{T}_1}\log^{k} d\|\bT\|_F.
\end{equation}
Squaring and averaging, we get the following bound as desired.
\begin{equation}
\frac{1}{n}\sum_{i=n+1}^{2n} e_i^2
\lesssim \frac{1}{n}\sum_{i=1}^n(\Delta_k+d^{-\frac{D}{8}}\norm{\bT}_F)^2
\le 2\Delta_k^2 + 2d^{-\frac{D}{4}}\norm{\bT}_F^2.
\end{equation}
The proof is complete.
\end{proof}

\begin{definition}
\label{defvps}
For accuracy parameters $\{\epsilon_k\}_{k\le p}$ define
\begin{equation}
V_{p,\epsilon}:=\max_{0\le k\le p}V_{k,\epsilon_k}
\end{equation}
where $V_{k,\epsilon_k}$ is the uniform bound on $\|v_k^{(\epsilon_k)}\|_\infty$ from Definition~\ref{def:UA-beta}.
\end{definition}
Notice that we already have empirical squared loss bounds for all \(\bT\) supported on $S^*$ in the Corollary \ref{cor:one-order-emp}. Next we will derive empirical squared loss bounds for  \(f^*(\bx)\).

\begin{lemma}
\label{lem:h-aggregate}
We assume the event \(\mathcal{E}_{\alpha} \cap \mathcal{E}_{\zeta}\), the sample-size condition
$n \gtrsim \mathrm{T}_1^2 d\log d\kappa^{2\mathrm{T}_1} + d^{1 + 1/\mathrm{T}_1}\kappa^2$ and $\mathrm{T_1} = o(\log d)$.
Further, let $h(a,b,\bw):=\sum_{k\le p} h_{\bT_k}(a,b,\bw)$, where \(\{\bT_k\}_{k=1}^{p}\) is defined from tensor expansion of \(f^*\) introduced in Lemma~\ref{lem:tensor expansion}  $f^*(\bx)=\sum_{k\le p}\left\langle \bT_k,\bx^{\otimes k}\right\rangle$ and $h_{\bT_k}(\cdot)$ is given by Lemma~\ref{singlek} with accuracy parameters $\{\epsilon_k\}_{k\le p}$.

Then we have the following empirical squared loss bound for \(f^*(\bx)\)
\begin{equation}
\label{eq:emp-sum}
\frac{1}{n}\sum_{i=n+1}^{2n} \left(f_{h}(\bx_i)-f^*(\bx_i)\right)^2
\lesssim \left(\sum_{k\le p}\Delta_k\right)^2 + d^{-D/4},
\end{equation}
Where \(f_{h}(\bx)\) is defined as\begin{equation}
f_{h}(\bx) := \mathbb{E}_{a,b,\bw} \left[h(a,b,\bw) \sigma(\widetilde{\bw} ^{(\mathrm{T}_1)\sT}\bx + b) \right]
\end{equation}
Additionally we have the following moment estimation
\begin{equation}
\label{eq:h-moments}
\mathbb{E}[h^2] \lesssim V_{p,\epsilon}^2 (\kappa^{\mathrm{T}_1})^{2p}(\log d)^{2p},\qquad
\|h\|_\infty \lesssim V_{p,\epsilon} (\kappa^{\mathrm{T}_1})^{2p}(\log d)^{2p}.
\end{equation}
\end{lemma}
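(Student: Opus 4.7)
} The plan is to reduce the aggregate claim to a termwise application of Corollary~\ref{cor:one-order-emp} and of the per-order moment bounds in Lemma~\ref{singlek}, exploiting the fact that both the construction $h \mapsto f_h$ and the target expansion $f^* = \sum_{k\le p}\langle \bT_k,\bx^{\otimes k}\rangle$ are linear in their respective summands. First I would invoke Lemma~\ref{lem:tensor expansion} (referenced in the statement) to write $f^*(\bx)=\sum_{k\le p}\langle \bT_k,\bx^{\otimes k}\rangle$ with each $\bT_k$ symmetric and supported on $S^\star=\mathrm{span}\{\be_1,\dots,\be_r\}$, and check that $\|\bT_k\|_F\lesssim 1$ (this follows from Assumption~\ref{assumption_link_first} via standard Hermite/tensor identities, since the $L^2$ norm of the link function $g^*$ controls the Frobenius norms of the coefficient tensors of a polynomial of constant degree $p$). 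By linearity of expectation in the construction $f_{h_{\bT_k}}(\bx)=\mathbb{E}_{a,b,\bw}[h_{\bT_k}(a,b,\bw)\sigma(\widetilde{\bw}^{(\mathrm{T}_1)\sT}\bx+b)]$, one has $f_h=\sum_{k\le p}f_{h_{\bT_k}}$, so the pointwise error decomposes as $f_h(\bx)-f^*(\bx)=\sum_{k\le p}\bigl(f_{h_{\bT_k}}(\bx)-\langle \bT_k,\bx^{\otimes k}\rangle\bigr)$.

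Next I would apply the empirical $\ell^2$ triangle inequality to this decomposition, giving
\begin{equation}
\Bigl(\tfrac{1}{n}\sum_{i=n+1}^{2n}(f_h(\bx_i)-f^*(\bx_i))^2\Bigr)^{1/2}
\le \sum_{k\le p}\Bigl(\tfrac{1}{n}\sum_{i=n+1}^{2n}\bigl(f_{h_{\bT_k}}(\bx_i)-\langle \bT_k,\bx_i^{\otimes k}\rangle\bigr)^2\Bigr)^{1/2}.
\end{equation}
Each inner term is controlled by Corollary~\ref{cor:one-order-emp}, yielding an upper bound $\lesssim \Delta_k+d^{-D/8}\|\bT_k\|_F\lesssim \Delta_k+d^{-D/8}$. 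Summing over the constant number of orders $k\le p$ and squaring gives the desired bound $\bigl(\sum_{k\le p}\Delta_k\bigr)^2 + d^{-D/4}$, up to absorbing the constant $(p+1)$ into the $\lesssim$ and using $(a+b)^2\le 2a^2+2b^2$ to separate the two contributions.

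For the moment bounds, the same linearity is the key. Since $h=\sum_{k\le p}h_{\bT_k}$, Cauchy--Schwarz gives $\mathbb{E}[h^2]\le (p+1)\sum_{k\le p}\mathbb{E}[h_{\bT_k}^2]$, and Lemma~\ref{singlek} supplies $\mathbb{E}[h_{\bT_k}^2]\lesssim V_{k,\epsilon_k}^2(\kappa^{2\mathrm{T}_1}\log^2 d)^k\|\bT_k\|_F^2$. Using $\|\bT_k\|_F\lesssim 1$, $V_{k,\epsilon_k}\le V_{p,\epsilon}$ (by Definition~\ref{defvps}), and monotonicity of $(\kappa^{2\mathrm{T}_1}\log^2 d)^k$ in $k$ (we may assume $\kappa\ge 1$ and $\log d\ge 1$, otherwise the bound is trivially stronger at smaller $k$), the maximal term at $k=p$ dominates and yields $\mathbb{E}[h^2]\lesssim V_{p,\epsilon}^2(\kappa^{\mathrm{T}_1})^{2p}(\log d)^{2p}$. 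The $L^\infty$ bound is obtained analogously via the triangle inequality $\|h\|_\infty\le\sum_{k\le p}\|h_{\bT_k}\|_\infty$ and the per-order uniform bound in Lemma~\ref{singlek}.

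This lemma is essentially a bookkeeping step built on top of the per-order work done in Lemma~\ref{singlek} and Corollary~\ref{cor:one-order-emp}, so there is no single hard step; the only point that deserves attention is verifying $\|\bT_k\|_F\lesssim 1$ from Assumption~\ref{assumption_link_first}, and keeping careful track of the interplay between the $\Delta_k^2$ terms (which scale in $\epsilon_k\kappa^{k\mathrm{T}_1}\log^k d$) and the $d^{-D/4}$ term so that both are cleanly separated on the right-hand side. The moment estimates for $h$ are what will later feed into the Rademacher complexity step (Lemma~\ref{rcomplex}) and the second-stage optimization analysis (Lemma~\ref{lem:stage2-ridge}), so the explicit dependence on $V_{p,\epsilon}$ and $\kappa^{\mathrm{T}_1}$ must be preserved exactly as stated rather than absorbed into the $\lesssim$.
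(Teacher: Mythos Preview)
Your proposal is correct and follows essentially the same approach as the paper's proof: linearity of $f_h$ in the summands, the empirical $\ell^2$ triangle inequality combined with Corollary~\ref{cor:one-order-emp} termwise, and the uniform bound $\|\bT_k\|_F\lesssim 1$ from Lemma~\ref{lem:tensor expansion}; for the moment bounds, both you and the paper use Cauchy--Schwarz (the paper writes it as $\mathbb{E}[h^2]\le(\sum_k\sqrt{\mathbb{E}[h_{\bT_k}^2]})^2$ while you use the equivalent form $(p+1)\sum_k\mathbb{E}[h_{\bT_k}^2]$) together with the per-order bounds from Lemma~\ref{singlek}.
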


The proof is provided in Appendix \ref{proof:lem:h-aggregate}. We next introduce the following lemma using Monte-Carlo Approximation.

\begin{lemma}
\label{lem:MC}
Under the sample-size condition
$n \gtrsim \mathrm{T}_1^2 d\log d\kappa^{2\mathrm{T}_1} + d^{1 + 1/\mathrm{T}_1}\kappa^2$ also $\mathrm{T_1} = o(\log d)$.
Consider $a_j^*:= \frac{1}{m}h(a_j,b_j,\bw_j^{(0)})$ where $h(\cdot)$ is defined in Lemma \ref{lem:h-aggregate} and $a_j,b_j,\bw_j^{(0)}$ is initialized as in Algorithm \ref{alg1}. Denote $\widetilde{\bTheta}^*=(\ba^*,\bb^{(\mathrm{T}_1)},\widetilde{\bW}^{(\mathrm{T}_1)})$. 

Then under the event \(\mathcal{E}_{\mu}\), we have
\begin{equation}
\label{eq:empirical-network}
\frac{1}{n}\sum_{i=n+1}^{2n}\left(f_{\widetilde{\bTheta}^*}(\bx_i)-f^*(\bx_i)\right)^2
\lesssim 
\left(\sum_{k\le p}\Delta_k\right)^2 + d^{-D/4} + \frac{V_{p,\epsilon}^2(\kappa^{\mathrm{T}_1})^{4p}(\log d)^{4p+2}}{m}
\end{equation}
and
$\|\ba^*\|^2 \lesssim \frac{1}{m}V_{p,\epsilon}^2(\kappa^{\mathrm{T}_1})^{4p}(\log d)^{4p}$.
\end{lemma}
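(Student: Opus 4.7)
The plan is to recognize $f_{\widetilde{\bTheta}^*}(\bx)$ as an empirical Monte Carlo approximation of $f_h(\bx)$ and combine the sampling error with the approximation error already controlled in Lemma~\ref{lem:h-aggregate}. By the definition of $\ba^*$ and $\widetilde{\bW}^{(\mathrm{T}_1)}$, we may rewrite
\begin{equation}
f_{\widetilde{\bTheta}^*}(\bx) \;=\; \sum_{j=1}^{m} a_j^*\,\sigma\!\bigl(\widetilde{\bw}_j^{(\mathrm{T}_1)\sT}\bx + b_j^{(\mathrm{T}_1)}\bigr) \;=\; \frac{1}{m}\sum_{j=1}^{m} h(a_j,b_j^{(\mathrm{T}_1)},\bw_j^{(0)})\,\sigma\!\bigl(\widetilde{\bw}_j^{(\mathrm{T}_1)\sT}\bx + b_j^{(\mathrm{T}_1)}\bigr),
\end{equation}
whose expectation over the initialization variables equals $f_h(\bx)$. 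The key structural observation is that, conditional on $\mathcal{D}_1$, the decoupled feature $\widetilde{\bw}_j^{(\mathrm{T}_1)} = (-a_j)^{\mathrm{T}_1}\eta_1^{\mathrm{T}_1}\widehat{\bSigma}_\ell^{\mathrm{T}_1}\bw_j^{(0)}/\epsilon_0$ only depends on the $j$-th initialization, so the summands are i.i.d.\ across $j\in[m]$. This is exactly the decoupling that motivated introducing $\widetilde{\bW}^{(\mathrm{T}_1)}$ in place of $\bW^{(\mathrm{T}_1)}$ in the first place.

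First I would split the pointwise squared error via the triangle inequality,
\begin{equation}
\bigl(f_{\widetilde{\bTheta}^*}(\bx_i)-f^*(\bx_i)\bigr)^2 \;\leq\; 2\bigl(f_{\widetilde{\bTheta}^*}(\bx_i)-f_h(\bx_i)\bigr)^2 + 2\bigl(f_h(\bx_i)-f^*(\bx_i)\bigr)^2,
\end{equation}
and use Lemma~\ref{lem:h-aggregate} directly to control the empirical average of the second term by $(\sum_{k\le p}\Delta_k)^2 + d^{-D/4}$. For the first term I would apply Hoeffding's inequality for each fixed $\bx_i$. Under $\mathcal{E}_\mu$, Lemma~\ref{lemma:bounded by 1} gives $|\widetilde{\bw}_j^{(\mathrm{T}_1)\sT}\bx_i|\leq 1$ and $|b_j^{(\mathrm{T}_1)}|\leq 3$, so the argument of $\sigma$ lies in a fixed compact interval and hence $|\sigma(\cdot)|$ is bounded by a constant depending only on $\sigma$. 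Together with $\|h\|_\infty\lesssim V_{p,\epsilon}(\kappa^{\mathrm{T}_1})^{2p}(\log d)^{2p}$ from Lemma~\ref{lem:h-aggregate}, each summand is uniformly bounded by $B\lesssim V_{p,\epsilon}(\kappa^{\mathrm{T}_1})^{2p}(\log d)^{2p}$. Hoeffding then yields $|f_{\widetilde{\bTheta}^*}(\bx_i)-f_h(\bx_i)|\lesssim B\sqrt{\log(n/\delta)/m}$ with probability $1-\delta$; choosing $\delta = d^{-D}$ and union-bounding over the $n\leq d^{D/4}$ fresh samples of $\mathcal{D}_2$ yields
\begin{equation}
\max_{i\in[n+1,2n]}\bigl(f_{\widetilde{\bTheta}^*}(\bx_i)-f_h(\bx_i)\bigr)^2 \;\lesssim\; \frac{V_{p,\epsilon}^2(\kappa^{\mathrm{T}_1})^{4p}(\log d)^{4p+2}}{m},
\end{equation}
which matches the third summand in the claim. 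The second-layer norm bound is then immediate: since $|a_j^*|\leq \|h\|_\infty/m$, one has $\|\ba^*\|^2 = \sum_j|a_j^*|^2 \leq \|h\|_\infty^2/m \lesssim V_{p,\epsilon}^2(\kappa^{\mathrm{T}_1})^{4p}(\log d)^{4p}/m$.

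The main obstacle is bookkeeping the independence so that Hoeffding is genuinely applicable. One must condition on $\mathcal{D}_1$ (which determines $\widehat{\bSigma}_\ell$ and the decoupled feature map $\bw\mapsto\widehat{\bSigma}_\ell^{\mathrm{T}_1}\bw/\epsilon_0$) as well as on the covariates $\{\bx_i\}_{i=n+1}^{2n}$ of $\mathcal{D}_2$, and then exploit that conditional on this $\sigma$-algebra the triples $(a_j,b_j^{(\mathrm{T}_1)},\bw_j^{(0)})$ are still i.i.d.\ across $j$; this uses the sample-splitting design in Algorithm~\ref{alg1} and the fresh re-initialization of $\bb^{(\mathrm{T}_1)}$ from $\mathrm{Unif}([-3,3])$. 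The other subtlety is that the uniform boundedness input to Hoeffding is itself an event-conditional statement: the bound on $\sigma$'s argument is valid only on $\mathcal{E}_\mu$, and the uniform bounds on $h$ are only available on $\mathcal{E}_\alpha\cap\mathcal{E}_\zeta$. All these facts have already been packaged in Lemmas~\ref{lemma:bounded by 1}, \ref{singlek}, and \ref{lem:h-aggregate}, so once the conditioning is set up cleanly the concentration step reduces to a routine Hoeffding plus union bound.
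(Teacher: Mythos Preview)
Your approach matches the paper's—the same triangle-inequality split, Lemma~\ref{lem:h-aggregate} for $f_h-f^*$, and Hoeffding plus union bound for the Monte Carlo fluctuation—but there is one gap. You assert the summands are i.i.d.\ across $j\in[m]$; they are not, because the symmetric initialization in Algorithm~\ref{alg1} enforces $a_{m-j}=-a_j$ and $\bw_{m-j}^{(0)}=\bw_j^{(0)}$, so neuron $j$ and neuron $m-j$ are coupled. The paper handles this by grouping into symmetric pairs
\[
Z_j(\bx):=a_j^*\,\sigma\bigl(\widetilde{\bw}_j^{(\mathrm{T}_1)\sT}\bx+b_j\bigr)+a_{m-j}^*\,\sigma\bigl(\widetilde{\bw}_{m-j}^{(\mathrm{T}_1)\sT}\bx+b_{m-j}\bigr),\qquad j\le m/2,
\]
which \emph{are} conditionally i.i.d.\ (the reinitialized biases $b_j^{(\mathrm{T}_1)},b_{m-j}^{(\mathrm{T}_1)}$ are independent), and then applies Hoeffding to these $m/2$ terms. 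The resulting bound changes only by an absolute constant, so once you make this pairing your argument goes through.

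One further remark on the boundedness input to Hoeffding: you justify it by ``under $\mathcal{E}_\mu$, Lemma~\ref{lemma:bounded by 1} gives $|\widetilde{\bw}_j^{(\mathrm{T}_1)\sT}\bx_i|\le 1$,'' but $\mathcal{E}_\mu$ itself depends on the neuron randomness through $\mathcal{E}_\beta\cap\mathcal{E}_\eta$, whereas Hoeffding requires the summands bounded \emph{almost surely} in that randomness. The cleaner observation is that the indicators already built into $h$ (from Lemma~\ref{singlek}) force $|\eta^{\mathrm{T}_1}\bh_n(\bw_j^{(0)})^\sT\bx_{i+n}|\le 1$ whenever $h\ne 0$, so the summand is bounded by $C_\sigma\|h\|_\infty$ deterministically once one conditions on $\mathcal{E}_\alpha$ (which fixes the operator-norm bounds used in $\|h\|_\infty$). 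The paper instead introduces an explicit truncation $\overline{Z}_j$ and separately bounds the bias $B_i=\tfrac{m}{2}(\mathbb{E}\overline{Z}_j-\mathbb{E}Z_j)$ via Cauchy--Schwarz and the complement-event probability; your route via the built-in indicators is simpler and equally valid, but the phrasing should make clear that the bound is almost sure in the neuron variables, not merely high-probability.
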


The proof is in Appendix \ref{proof:lem:MC}. For notational simplicity, given the parameter group $\bTheta=(\ba,\bb,\bW^{\mathrm{}})$, we denote the population loss and its empirical counterpart on the second-stage training set as
\begin{equation}
\cL(\bTheta):=\mathbb{E}_{\bx}\!\left[\ell\!\left(f_{\bTheta}(\bx),f^{\star}(\bx)\right)\right]~~~\text{and}~~~\widehat{\cL}(\bTheta):=\frac{1}{n}\sum_{i=1}^n\ell\left(f_{\bTheta}(\bx_i),f^{\star}(\bx_i)\right)
\end{equation}
respectively. Also write $\widehat{\mathcal L}(\bTheta)$ for $\widehat{\mathcal L}(f_{\bTheta})$.

In this lemma, we bound our general loss using the squared loss bound. 
\begin{lemma}
\label{from 2loss to lloss}

Under the sample-size condition
$n \gtrsim \mathrm{T}_1^2 d\log d\kappa^{2\mathrm{T}_1} + d^{1 + 1/\mathrm{T}_1}\kappa^2$, also $\mathrm{T_1} = o(\log d)$ and under the event \(\mathcal{E}_{\mu}\), we have
\begin{equation}
\label{eq:final-gen}
\widehat{\mathcal{L}}(\widetilde{\bTheta^*})
\lesssim \mathrm{L}\left(\sum_{k \le p} \Delta_k
+\sqrt{\frac{V_{p, \epsilon}^2 (\kappa^{\mathrm{T}_1})^{4p} (\log d)^{4p+2}}{m}}\right)
\end{equation}
where \(\Delta_k\) is defined from Lemma~\ref{singlek}.

\end{lemma}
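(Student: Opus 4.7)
The plan is to leverage the Lipschitz continuity of $\ell(\cdot,\cdot)$ in its first argument to reduce the general loss bound to the empirical squared-loss estimate already provided by Lemma \ref{lem:MC}. Since $\ell(y,y)=0$ and $|\partial_t \ell(t,y)|\le \mathrm{L}$ by Assumption \ref{assumption_loss}, integrating the derivative from $y$ to $t$ yields the pointwise bound $|\ell(t,y)|\le \mathrm{L}|t-y|$. Averaging this over the second-stage training samples gives
\begin{equation}
\widehat{\mathcal{L}}(\widetilde{\bTheta^*})
\;\le\; \frac{\mathrm{L}}{n}\sum_{i=n+1}^{2n} \bigl|f_{\widetilde{\bTheta^*}}(\bx_i)-f^*(\bx_i)\bigr|,
\end{equation}
and then Cauchy--Schwarz (equivalently, Jensen) passes from the $\ell^1$ average to the square root of the $\ell^2$ average:
\begin{equation}
\widehat{\mathcal{L}}(\widetilde{\bTheta^*})
\;\le\; \mathrm{L}\sqrt{\frac{1}{n}\sum_{i=n+1}^{2n}\bigl(f_{\widetilde{\bTheta^*}}(\bx_i)-f^*(\bx_i)\bigr)^{2}}.
\end{equation}

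Next I would plug in the empirical squared-loss estimate from Lemma \ref{lem:MC}, which under the same event $\mathcal{E}_\mu$ and sample-size condition gives
\begin{equation}
\frac{1}{n}\sum_{i=n+1}^{2n}\bigl(f_{\widetilde{\bTheta^*}}(\bx_i)-f^*(\bx_i)\bigr)^{2}
\;\lesssim\; \Bigl(\sum_{k\le p}\Delta_k\Bigr)^{2} + d^{-D/4} + \frac{V_{p,\epsilon}^{2}(\kappa^{\mathrm{T}_1})^{4p}(\log d)^{4p+2}}{m}.
\end{equation}
Applying the elementary inequality $\sqrt{A+B+C}\le \sqrt{A}+\sqrt{B}+\sqrt{C}$ splits the square root into three pieces: the desired main terms $\sum_{k\le p}\Delta_k$ and $\sqrt{V_{p,\epsilon}^{2}(\kappa^{\mathrm{T}_1})^{4p}(\log d)^{4p+2}/m}$, together with a negligible residual $d^{-D/8}$. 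Since $D$ is an arbitrarily large absolute constant while $m\lesssim d^{D/4}$ and $V_{p,\epsilon}^{2}$ is only polynomially large in $d$, the $d^{-D/8}$ contribution is dominated by the Monte-Carlo term (alternatively by $\sum_k \Delta_k$ whenever the chosen $\epsilon_k$ are at the floor $d^{-D/(12\beta)}$ from Lemma \ref{singlek}) and can be absorbed into the stated bound.

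The only subtlety I anticipate is extending the argument to the relaxed Assumption \ref{assumption_loss_relaxed}, where the derivative satisfies $|\partial_t \ell(t,y)|\le \mathrm{L}(1+|t|+|y|)$ and the clean pointwise inequality degrades to $|\ell(t,y)|\le \mathrm{L}|t-y|(1+|t|+|y|)$. In that case I would first control $|f_{\widetilde{\bTheta^*}}(\bx_i)|$ and $|f^*(\bx_i)|$ uniformly on $\{\bx_i\}_{i=n+1}^{2n}$, using the uniform bound $\|\widetilde{\bw}_j^{(\mathrm{T}_1)}\|\le 1$ and $\max_i|\widetilde{\bw}_j^{(\mathrm{T}_1)\sT}\bx_i|\le 1$ from Lemma \ref{lemma:bounded by 1}, the $\|\ba^*\|$ control from Lemma \ref{lem:MC}, and the coordinate/norm bounds on $\bx_i$ from $\mathcal{E}_\zeta$ (which in particular force $|f^*(\bx_i)|\lesssim (\log d)^{p/2}$ because $f^*$ depends only on the first $r=\Theta(1)$ coordinates). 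This multiplies the right-hand side by a polylogarithmic factor that is harmlessly absorbed into the existing $\log d$ powers. The core argument is therefore one line of Lipschitz-plus-Cauchy--Schwarz; the bookkeeping for the relaxed assumption is where essentially all the remaining work sits, but no new probabilistic estimate is required.
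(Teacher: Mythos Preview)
Your proposal is correct and matches the paper's proof essentially line-for-line: Lipschitz of $\ell$ in its first argument reduces $\widehat{\mathcal L}$ to the $\ell^1$ average of $|f_{\widetilde{\bTheta^*}}-f^*|$, Cauchy--Schwarz/Jensen passes to the square root of the empirical squared loss, and Lemma~\ref{lem:MC} supplies that bound with the $d^{-D/4}$ term absorbed via $m\le d^{D/4}$. Your discussion of Assumption~\ref{assumption_loss_relaxed} is also on target (and indeed is how the paper handles that case in a separate lemma), though it is not needed for the present statement.
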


The proof is presented in Appendix \ref{proof lemma:from 2loss to lloss}.

Next, we consider the gap between the real features we learn \({\bW}^{(\mathrm{T}_1)}\) and the approximate features \(\widetilde{\bW}^{(\mathrm{T}_1)}\). Recall $\widetilde{\bTheta^*}=(\ba^*,\bb^{(\mathrm{T}_1)},\widetilde{\bW}^{(\mathrm{T}_1)})$ and \({\bTheta^*}=(\ba^*,\bb^{(\mathrm{T}_1)},{\bW}^{(\mathrm{T}_1)})\).
\begin{lemma}
\label{lhatgap}

Under the sample-size condition
$n \gtrsim \mathrm{T}_1^2 d\log d\kappa^{2\mathrm{T}_1} + d^{1 + 1/\mathrm{T}_1}\kappa^2$, also $\mathrm{T_1} = o(\log d)$ and under the event \(\mathcal{E}_{\mu}\), we have
\begin{equation}
\label{eq:final-gen}
\left|\widehat{\mathcal{L}}(\bTheta^*)-\widehat{\mathcal{L}}(\widetilde{\bTheta^*})\right|
\lesssim \mathrm{L}\sqrt{\frac{V_{p, \epsilon}^2 (\kappa^{\mathrm{T}_1})^{4p} (\log d)^{4p}}{m}}.
\end{equation}
\end{lemma}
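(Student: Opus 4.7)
The plan is to translate the empirical risk gap into a pointwise network-output difference and then into a weight-perturbation estimate already available from Appendix~\ref{sec:first_appendix}. First I would apply Lipschitzness of $\ell$ in its first argument (Assumption~\ref{assumption_loss}, or the polynomial-growth variant Assumption~\ref{assumption_loss_relaxed} after checking that both $|f_{\bTheta^*}|$ and $|f_{\widetilde{\bTheta^*}}|$ are uniformly bounded on the second-stage samples under $\mathcal{E}_\mu$) to obtain
\begin{equation}
\bigl|\widehat{\mathcal{L}}(\bTheta^*)-\widehat{\mathcal{L}}(\widetilde{\bTheta^*})\bigr|\lesssim \frac{\mathrm{L}}{n}\sum_{i=n+1}^{2n}\bigl|f_{\bTheta^*}(\bx_i)-f_{\widetilde{\bTheta^*}}(\bx_i)\bigr|.
\end{equation}
Since $\bTheta^*$ and $\widetilde{\bTheta^*}$ share the outer weights $\ba^*$ and biases $b_j^{(\mathrm{T}_1)}$, the summand expands as $\sum_{j=1}^m a_j^*\bigl[\sigma(\bw_j^{(\mathrm{T}_1)\sT}\bx_i+b_j^{(\mathrm{T}_1)})-\sigma(\widetilde{\bw}_j^{(\mathrm{T}_1)\sT}\bx_i+b_j^{(\mathrm{T}_1)})\bigr]$, and Lemma~\ref{lemma:bounded by 1} together with $|b_j^{(\mathrm{T}_1)}|\le 3$ ensures that the arguments of $\sigma$ lie in an absolute-constant-size interval where Assumption~\ref{assumption_activation} yields a uniform Lipschitz constant $C_\sigma$ (to verify the same for the $\bw_j^{(\mathrm{T}_1)}$ side, I would insert the perturbation bound derived in the next step).

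Next I would bound the per-neuron feature perturbation. Writing
\begin{equation}
\bw_j^{(\mathrm{T}_1)}-\widetilde{\bw}_j^{(\mathrm{T}_1)}=\frac{(-a_j)^{\mathrm{T}_1}\eta^{\mathrm{T}_1}}{\epsilon_0}\Bigl(\prod_{t=0}^{\mathrm{T}_1-1}\bM^{(t)}-\widehat{\bSigma}_\ell^{\mathrm{T}_1}\Bigr)\bw_j^{(0)},
\end{equation}
Lemma~\ref{lemma:Q_t_not_important} and Assumption~\ref{ass:epsilon0} jointly give
\begin{equation}
\|\bw_j^{(\mathrm{T}_1)}-\widetilde{\bw}_j^{(\mathrm{T}_1)}\|\lesssim \frac{\eta^{\mathrm{T}_1}\mathrm{T}_1}{\sqrt{n}},
\end{equation}
because the $\epsilon_0^2$-level quadratic smallness in the $\Delta$-estimate compensates one factor of $\epsilon_0$ from the $1/\epsilon_0$ rescaling, while the prefactor $(5/4)^{\mathrm{T}_1}md^{7/2}$ is exactly absorbed by the matching $(4/5)^{\mathrm{T}_1}/(mn^{1/2}d^{7/2})$ in the initialization-scale hypothesis. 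Since $\bw_j^{(\mathrm{T}_1)}-\widetilde{\bw}_j^{(\mathrm{T}_1)}$ is measurable with respect to the first dataset and the initial weights only, a Gaussian tail plus a union bound over $i\in[n+1,2n]$ yields $\bigl|(\bw_j^{(\mathrm{T}_1)}-\widetilde{\bw}_j^{(\mathrm{T}_1)})^\sT\bx_i\bigr|\lesssim \|\bw_j^{(\mathrm{T}_1)}-\widetilde{\bw}_j^{(\mathrm{T}_1)}\|\sqrt{\log d}$ uniformly in $i$.

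Finally I would assemble the pieces via Cauchy--Schwarz,
\begin{equation}
\bigl|f_{\bTheta^*}(\bx_i)-f_{\widetilde{\bTheta^*}}(\bx_i)\bigr|\le C_\sigma\|\ba^*\|\sqrt{\sum_{j=1}^m\bigl|(\bw_j^{(\mathrm{T}_1)}-\widetilde{\bw}_j^{(\mathrm{T}_1)})^\sT\bx_i\bigr|^2},
\end{equation}
substitute the outer-weight bound $\|\ba^*\|^2\lesssim V_{p,\epsilon}^2(\kappa^{\mathrm{T}_1})^{4p}(\log d)^{4p}/m$ from Lemma~\ref{lem:MC}, the learning-rate scaling $\eta^{\mathrm{T}_1}\lesssim \sqrt{d}$ (up to polylogs) from Lemma~\ref{lemma:norm_bound_neuron}, and the sample-size hypothesis $n\gtrsim d\log d\,\mathrm{T}_1^2\kappa^{2\mathrm{T}_1}$, and then average over $i$ to arrive at the stated bound. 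The main obstacle is the bookkeeping in the middle step: every factor of $\epsilon_0$, $\eta^{\mathrm{T}_1}$, $(5/4)^{\mathrm{T}_1}$ and $m$ must be tracked through the $1/\epsilon_0$ rescaling so that the residual weight perturbation, once multiplied by the outer-weight norm $\|\ba^*\|$ and the local Lipschitz constant $C_\sigma$, fits within the $V_{p,\epsilon}(\kappa^{\mathrm{T}_1})^{2p}(\log d)^{2p}/\sqrt{m}$ budget of the claim; the geometric-in-$\mathrm{T}_1$ form of Assumption~\ref{ass:epsilon0} is calibrated precisely to make this cancellation work.
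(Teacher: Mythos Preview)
Your overall strategy matches the paper's: Lipschitzness of $\ell$ reduces the risk gap to a pointwise output difference, local Lipschitzness of $\sigma$ (via the boundedness from Lemma~\ref{lemma:bounded by 1}) reduces this to $\bigl|(\bw_j^{(\mathrm{T}_1)}-\widetilde{\bw}_j^{(\mathrm{T}_1)})^\sT\bx_i\bigr|$, and then Lemma~\ref{lemma:Q_t_not_important} together with Assumption~\ref{ass:epsilon0} and the $\|\ba^*\|$ bound from Lemma~\ref{lem:MC} closes the argument. Your bookkeeping for $\|\bw_j^{(\mathrm{T}_1)}-\widetilde{\bw}_j^{(\mathrm{T}_1)}\|\lesssim \eta^{\mathrm{T}_1}\mathrm{T}_1/\sqrt{n}$ is correct.

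Two points of divergence from the paper. First, you aggregate over neurons with Cauchy--Schwarz, getting a factor $\sqrt{m}$; the paper instead uses the cruder $\sum_j|a_j^*[\cdots]|\le m\|\ba^*\|_\infty\sup_j|\cdots|\le m\|\ba^*\|\sup_j|\cdots|$, picking up a full factor of $m$. Your bound is sharper and equally valid. Second, and more substantively, your step ``a Gaussian tail plus a union bound over $i\in[n+1,2n]$'' introduces a \emph{new} high-probability event that is not part of $\mathcal{E}_\mu$; since the lemma is stated to hold deterministically under $\mathcal{E}_\mu$, this is a technical gap. The paper avoids it by using the deterministic bound $\bigl|(\bw_j^{(\mathrm{T}_1)}-\widetilde{\bw}_j^{(\mathrm{T}_1)})^\sT\bx_i\bigr|\le 2\sqrt{d}\,\|\bw_j^{(\mathrm{T}_1)}-\widetilde{\bw}_j^{(\mathrm{T}_1)}\|$, which is available under $\mathcal{E}_\zeta\subset\mathcal{E}_\mu$. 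This costs a factor $\sqrt{d/\log d}$ relative to your concentration step, but the slack in $\epsilon_0$ (Assumption~\ref{ass:epsilon0}) is more than enough to absorb it, so the final bound is unchanged. If you swap in the crude $\sqrt{d}$ bound for that one step, your proof goes through as written.
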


We provide the proof in Appendix \ref{proof:leamm:lhatgap}.

Combining the above two lemmas, we directly get the bound for \(\widehat{\mathcal{L}}(f_{\bTheta^*})\).
\begin{lemma}
\label{lhat}

Under the same condition as the last lemma we have
\begin{equation}
\label{eq:final-gen}
\widehat{\mathcal{L}}(f_{\bTheta^*})
\lesssim \mathrm{L}\left(\sum_{k \le p} \Delta_k
+\sqrt{\frac{V_{p, \epsilon}^2 (\kappa^{\mathrm{T}_1})^{4p} (\log d)^{4p+2}}{m}}\right).
\end{equation}

\end{lemma}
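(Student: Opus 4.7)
The plan is to derive this bound directly from a triangle-inequality decomposition, using the two preceding lemmas as black boxes. First I would write
\begin{equation}
\widehat{\mathcal{L}}(\bTheta^*) \;\le\; \widehat{\mathcal{L}}(\widetilde{\bTheta^*}) \;+\; \bigl|\widehat{\mathcal{L}}(\bTheta^*) - \widehat{\mathcal{L}}(\widetilde{\bTheta^*})\bigr|,
\end{equation}
which isolates the ``ideal'' empirical loss of the decoupled parameter group $\widetilde{\bTheta^*}$ from the coupling error between $\bTheta^*$ and $\widetilde{\bTheta^*}$.

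Next I would invoke Lemma~\ref{from 2loss to lloss} on the first summand to obtain
\begin{equation}
\widehat{\mathcal{L}}(\widetilde{\bTheta^*}) \;\lesssim\; \mathrm{L}\left(\sum_{k \le p} \Delta_k + \sqrt{\frac{V_{p,\epsilon}^2 (\kappa^{\mathrm{T}_1})^{4p} (\log d)^{4p+2}}{m}}\right),
\end{equation}
and Lemma~\ref{lhatgap} on the second to obtain
\begin{equation}
\bigl|\widehat{\mathcal{L}}(\bTheta^*) - \widehat{\mathcal{L}}(\widetilde{\bTheta^*})\bigr| \;\lesssim\; \mathrm{L}\sqrt{\frac{V_{p,\epsilon}^2 (\kappa^{\mathrm{T}_1})^{4p} (\log d)^{4p}}{m}}.
\end{equation}
Both lemmas apply on the event $\mathcal{E}_{\mu}$ under the same sample-size and time-horizon hypotheses that are inherited here, so no additional probabilistic work is required.

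Finally I would observe that the bound produced by Lemma~\ref{lhatgap} differs from the second term in the Lemma~\ref{from 2loss to lloss} bound only by a $(\log d)^{2}$ factor in favor of the latter, so the coupling error is absorbed into the ideal empirical-loss term up to constants. Summing the two displays and simplifying yields the claimed inequality. The proof has no real obstacle: the conceptual work has already been done in Lemmas~\ref{from 2loss to lloss} and~\ref{lhatgap}, and the only thing to verify is that the dominating term in the combined bound is indeed the $(\log d)^{4p+2}$ piece rather than the $(\log d)^{4p}$ piece, which is immediate.
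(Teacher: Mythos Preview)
Your proposal is correct and matches the paper's own proof essentially line for line: the paper also invokes Lemma~\ref{from 2loss to lloss} and Lemma~\ref{lhatgap}, adds the two bounds, and notes that the $(\log d)^{4p}$ coupling term is dominated by the $(\log d)^{4p+2}$ term.
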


The proof of this bound is provided in \ref{proof lemma::lhat}.

The preceding lemmas construct a parameter $\bTheta^*$ with small empirical risk \(\widehat{\mathcal{L}}(f_{\bTheta^*})\) . 
We now analyze the parameter output by the second-stage procedure,
\begin{equation}
\bTheta^{(\mathrm{T})}=\big(\ba^{(\mathrm{T})},\bb^{(\mathrm{T}_1)},\bW^{(\mathrm{T}_1)}\big),
\qquad \mathrm{T} = \mathrm{T}_1 + \mathrm{T}_2
\end{equation}
where $\ba^{(\mathrm{T})}$ denotes the iterate after $\mathrm{T}$ iterations. We recall that only $\ba$ is updated in the second stage while $\bb$ and $\bW$ remain at their values at time $\mathrm{T}_1$. 
The next lemma gives an upper bound on the empirical risk $\widehat{\mathcal L}\!\big(\bTheta^{(\mathrm{T})}\big)$ of this output.

For notation convenience, we define
$\label{JU}
J := \mathrm{L}\left(\sum_{k\le p}\Delta_k
+ \sqrt{\frac{V_{p,\epsilon}^2(\kappa^{\mathrm{T}_1})^{4p}(\log d)^{4p+2}}{m}}\right)
$ and $U = \sqrt{\frac{V_{p,\epsilon}^2(\kappa^{\mathrm{T}_1})^{4p}(\log d)^{4p}}{m}}$.

\begin{lemma}\label{lem:stage2-ridge}

We consider doing gradient descent on the ridge-regularized empirical loss
\begin{equation}
F(\ba):=\widehat{\mathcal L}\big((\ba,\bb^{(\mathrm{T}_1)},\bW^{(\mathrm{T}_1)})\big)
+\frac{\beta_2}{2}\|\ba\|^2
\end{equation}
with the hyperparameters  $\beta_2 := \frac{J}{U^2}$ and $\eta_2 = \frac{U^2}{\mathrm{L}\sqrt{m}U^2+2J }$.

Then for any \(
\mathrm{T}_2 \gtrsim \frac{U^2 \mathrm{L} m}{J}\log (Um)
\) also under the event \(\mathcal{E}_{\mu}\),
our second stage algorithm returns $\ba^{(\mathrm{T})}$ such that for
$\bTheta^{(\mathrm{T})}:=(\ba^{(\mathrm{T})},\bb^{(\mathrm{T}_1)},\bW^{(\mathrm{T}_1)})$
\begin{equation}
\widehat{\mathcal L}\!\big(\bTheta^{(\mathrm{T})}\big)\ \lesssim\ J
\quad \text{and}\quad
\|\ba^{(\mathrm{T})}\|\ \lesssim\ U.
\end{equation}

\end{lemma}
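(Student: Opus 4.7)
The plan is to recognize that, holding $(\bb^{(\mathrm{T}_1)},\bW^{(\mathrm{T}_1)})$ frozen and only moving $\ba$, the objective $F(\ba)$ is $\beta_2$-strongly convex with a smoothness constant $L_F$ that is controllable in terms of the learned features. Convexity of $\widehat{\mathcal L}$ in $\ba$ is immediate, since $\ell(\cdot,y)$ is convex in its first argument (Assumption~\ref{assumption_loss}) and $\ba\mapsto f_\bTheta(\bx)$ is linear; the ridge term then makes $F$ $\beta_2$-strongly convex. Writing $\boldsymbol{\sigma}_i := \sigma(\bW^{(\mathrm{T}_1)}\bx_i + \bb^{(\mathrm{T}_1)}) \in \mathbb{R}^m$, the Hessian is
\begin{equation}
\nabla^2_{\ba} F(\ba) \;=\; \frac{1}{n}\sum_{i=n+1}^{2n}\ell''\bigl(\ba^\sT\boldsymbol{\sigma}_i,\,y_i\bigr)\,\boldsymbol{\sigma}_i\boldsymbol{\sigma}_i^\sT + \beta_2\bI.
\end{equation}
Lemma~\ref{lemma:bounded by 1} gives $|\bw_j^{(\mathrm{T}_1)\sT}\bx_i|\lesssim 1$ on $\mathcal{E}_\mu$, so each coordinate of $\boldsymbol{\sigma}_i$ is $O(1)$, hence $\|\boldsymbol{\sigma}_i\boldsymbol{\sigma}_i^\sT\|_{\mathrm{op}} \lesssim m$, and together with $|\ell''|\lesssim \mathrm{L}$ this yields $L_F\lesssim \mathrm{L}m + \beta_2$. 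The stated $\eta_2$ is then $\leq 1/L_F$ up to an absolute constant, which is what one needs for a $(1-\eta_2\beta_2)$-contractive GD step.

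Next I would certify the minimizer of $F$ by a witness argument. Lemmas~\ref{lhat} and~\ref{lem:MC} produce $\ba^*$ with $\widehat{\mathcal L}(\bTheta^*)\lesssim J$ and $\|\ba^*\|^2\lesssim U^2$, and the balance $\beta_2=J/U^2$ gives
\begin{equation}
F(\ba^*) \;\leq\; \widehat{\mathcal L}(\bTheta^*) + \tfrac{\beta_2}{2}\|\ba^*\|^2 \;\lesssim\; J.
\end{equation}
Strong convexity forces the unique minimizer $\ba^\dagger$ of $F$ to satisfy $F(\ba^\dagger)\leq F(\ba^*)\lesssim J$, so $\widehat{\mathcal L}\bigl((\ba^\dagger,\bb^{(\mathrm{T}_1)},\bW^{(\mathrm{T}_1)})\bigr)\leq F(\ba^\dagger)\lesssim J$ and $\|\ba^\dagger\|^2 \leq 2F(\ba^\dagger)/\beta_2 \lesssim U^2$. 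The conclusion therefore already holds at $\ba^\dagger$; it only remains to show GD lands near $\ba^\dagger$.

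For the convergence step I would invoke textbook strongly-convex smooth GD: with $\eta_2\leq 1/L_F$ one has $F(\ba^{(t+1)}) - F(\ba^\dagger) \leq (1-\eta_2\beta_2)\bigl(F(\ba^{(t)}) - F(\ba^\dagger)\bigr)$. The initialization $\ba^{(\mathrm{T}_1)} = \ba^{(0)}$ is Rademacher so $\|\ba^{(0)}\|^2 = m$; combined with the Lipschitz bound on $\ell$ and with $|y_i|$, $\|\boldsymbol{\sigma}_i\|$ being bounded on $\mathcal{E}_\mu$, this gives a crude bound $F(\ba^{(\mathrm{T}_1)}) \leq \poly(m,U)$. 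Iterating the contraction for $\mathrm{T}_2 \gtrsim (L_F/\beta_2)\log(Um) \asymp (\mathrm{L}mU^2/J)\log(Um)$ steps drives $F(\ba^{(\mathrm{T})}) - F(\ba^\dagger) \leq F(\ba^\dagger)$ and hence $F(\ba^{(\mathrm{T})})\lesssim J$. As before, this immediately yields both $\widehat{\mathcal L}(\bTheta^{(\mathrm{T})}) \leq F(\ba^{(\mathrm{T})}) \lesssim J$ and $\|\ba^{(\mathrm{T})}\|^2 \leq 2F(\ba^{(\mathrm{T})})/\beta_2 \lesssim U^2$.

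The main obstacle is the smoothness estimate in Step~1: it rests entirely on the uniform feature bound $\max_{i,j}|\bw_j^{(\mathrm{T}_1)\sT}\bx_i|\lesssim 1$ from Lemma~\ref{lemma:bounded by 1}, which is the payoff for having conditioned on $\mathcal{E}_\mu$ throughout Appendix~\ref{sec:appendix_B2}. Under the relaxed Assumption~\ref{assumption_loss_relaxed}, $|\ell''|$ is not a priori uniformly bounded, but because $F$ is monotonically non-increasing along the GD trajectory we have $\|\ba^{(t)}\|\leq \sqrt{2F(\ba^{(\mathrm{T}_1)})/\beta_2}$ for every $t$, so $|\ba^{(t)\sT}\boldsymbol{\sigma}_i|$ stays polynomially bounded along the iteration and the smoothness estimate still goes through up to an adjusted absolute constant, which is why the two cases in the theorem can be handled by the same argument.
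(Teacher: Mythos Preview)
Your proposal is correct and matches the paper's approach: establish $\beta_2$-strong convexity and $O(\mathrm{L}m+\beta_2)$-smoothness of $F$ via the uniform feature bound from Lemma~\ref{lemma:bounded by 1}, certify the minimizer using the witness $\ba^*$ from Lemma~\ref{lhat}, and then invoke linear convergence of strongly-convex smooth GD (the paper tracks iterate contraction via Corollary~\ref{lemmarate} and then does a Lipschitz transfer to $F$, while you track function-value contraction directly---these are equivalent). One small correction to your closing remark: Assumption~\ref{assumption_loss_relaxed} relaxes only the first-derivative bound and still retains $|\ell''|\le\mathrm{L}$, so your smoothness estimate goes through unchanged and the trajectory-boundedness argument you sketch there is unnecessary.
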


The proof is provided in Appendix \ref{proof: lem:stage2-ridge}. The following lemma bounds the gap between empirical loss and population loss for \(\bTheta^{(\mathrm{T})}\).

\begin{lemma}
\label{rcomplex}

Under the event \(\mathcal{E}\), where \(\mathcal{E}\) is defined in Defination~\ref{def:Emu} and the sample-size condition
$n \gtrsim \mathrm{T}_1^2 d\log d\kappa^{2\mathrm{T}_1} + d^{1 + 1/\mathrm{T}_1}\kappa^2$, also $\mathrm{T_1} = o(\log d)$ we have
\begin{equation}
\left|\mathcal{L}(f_{\bTheta^{(\mathrm{T})}})-\widehat{\mathcal{L}}(f_{\bTheta^{(\mathrm{T})}})\right|\lesssim \mathrm{L} \sqrt{\frac{V_{p,\epsilon}^2(\kappa^{\mathrm{T}_1})^{4p}(\log d)^{4p+1}}{n}}.
\end{equation}

\end{lemma}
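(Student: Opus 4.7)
The plan is to reduce the claim to a uniform convergence bound over a small parameter class that provably contains $\bTheta^{(\mathrm{T})}$. By Lemma~\ref{lem:stage2-ridge} we have $\|\ba^{(\mathrm{T})}\|\lesssim U$, and since $\bW^{(\mathrm{T}_1)}$ and $\bb^{(\mathrm{T}_1)}$ are frozen after the first stage and are independent of the second-stage dataset $\mathcal{D}_2$, the output $f_{\bTheta^{(\mathrm{T})}}$ lies (conditional on $\mathcal{D}_1$) in the $\mathcal{D}_2$-independent linear class
\begin{equation}
\mathcal{F}_U \,:=\, \left\{\bx \mapsto \langle \ba,\, \sigma(\bW^{(\mathrm{T}_1)}\bx+\bb^{(\mathrm{T}_1)})\rangle \,:\, \|\ba\| \le CU\right\}
\end{equation}
for an absolute constant $C$. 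It therefore suffices to control $\sup_{f\in \mathcal{F}_U}|\mathcal{L}(f)-\widehat{\mathcal{L}}(f)|$ by standard empirical-process tools.

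Next, set up a truncation. Define $T := \{\bx : \|\bx\|\le 2\sqrt{d},\ |x_k|\le C'\sqrt{\log d}\ \text{for}\ k\le r\}$, so that $\Prb(T^c)\le d^{-D}$. By Lemma~\ref{lemma:bounded by 1} each feature $\bw_j^{(\mathrm{T}_1)}$ has norm $\lesssim 1$, and from the first-stage analysis its component orthogonal to $S^\star=\operatorname{span}(\be_1,\dots,\be_r)$ is of order $\cO(1/\sqrt{d})$. Consequently on $T$ we have $|\bw_j^{(\mathrm{T}_1)\sT}\bx+b_j|\lesssim \sqrt{\log d}$, and smoothness of $\sigma$ together with $\sigma'(0)=0$ gives $|\sigma(\bw_j^{(\mathrm{T}_1)\sT}\bx+b_j)|\lesssim \log d$. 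Writing $\phi(\bx):=\sigma(\bW^{(\mathrm{T}_1)}\bx+\bb^{(\mathrm{T}_1)})$, this yields the pointwise bound $\|\phi(\bx)\|\lesssim \sqrt{m}\log d$ on $T$, whereas the Gaussian average obeys $\mathbb{E}[\|\phi(\bx)\|^2]\lesssim m$. In particular, on $T$, $|f_{\bTheta}(\bx)|\le \|\ba\|\|\phi(\bx)\|\lesssim U\sqrt{m}\log d$ uniformly over $\mathcal{F}_U$.

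With boundedness in hand, split the generalization gap into tail and bulk. On $T^c$, the Lipschitz bound on $\ell$ (Assumption~\ref{assumption_loss} or its relaxed counterpart) combined with the polynomial growth of $f^*$ (Assumption~\ref{assumption_link_first}) and the linear-in-$\bx$ growth of $f_{\bTheta}$ (from $\|\bw_j^{(\mathrm{T}_1)}\|\lesssim 1$), together with Gaussian moment control of $\bx$, absorbs the tail contribution into $\cO(d^{-D/2})$ uniformly over $\mathcal{F}_U$. On $T$, apply Ledoux--Talagrand contraction ($\ell$ is $\mathrm{L}$-Lipschitz in its first argument on the truncated range) together with the standard Rademacher bound for linear classes,
\begin{equation}
\cR_n(\mathcal{F}_U|_T) \;=\; \frac{U}{n}\,\mathbb{E}_{\boldsymbol{\varepsilon}}\!\left\|\sum_{i=1}^{n}\varepsilon_i\phi(\bx_i)\right\| \;\le\; \frac{U}{n}\sqrt{\mathbb{E}\sum_{i=1}^{n}\|\phi(\bx_i)\|^2} \;\lesssim\; \frac{U\sqrt{m}}{\sqrt{n}}.
\end{equation}
Finally a McDiarmid-type concentration with failure probability $d^{-D/2}$, whose bounded-difference constant on $T$ scales as $\mathrm{L} U\sqrt{m}\log d / n$, adds at most $\mathrm{L} U\sqrt{m\log d / n}$. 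Combining and substituting $U^2=V_{p,\epsilon}^2(\kappa^{\mathrm{T}_1})^{4p}(\log d)^{4p}/m$ gives the stated bound.

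The main obstacle is the careful bookkeeping of polylogarithmic factors: exploiting the mean-square bound $\mathbb{E}[\|\phi\|^2]\lesssim m$ in the Rademacher step (rather than the cruder pointwise bound $\|\phi\|\lesssim \sqrt{m}\log d$) is exactly what keeps the final logarithmic exponent at $4p+1$ instead of inflating to $4p+2$ or larger; the extra single $\sqrt{\log d}$ in the claim arises entirely from the $\sqrt{\log(1/\delta)}$ factor of the high-probability concentration step. Under the relaxed loss assumption, the polynomial growth of $\ell$ must be absorbed carefully into the tail estimate by matching Gaussian moments of $\bx$ against the polynomial degree of $f^*$; once this is in place, the Rademacher bound on the truncated region is routine.
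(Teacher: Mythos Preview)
Your overall strategy—restrict to the linear class $\mathcal{F}_U$ over $\ba$ with frozen features, truncate, then apply symmetrization, Ledoux--Talagrand contraction, and the Rademacher bound $\operatorname{Rad}_n(\mathcal{F}_U)\lesssim U\sqrt{m/n}$—matches the paper's proof. The tail/bulk split and the final substitution of $U$ are also the same.

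The gap is in your truncation step. You assert that ``from the first-stage analysis its component orthogonal to $S^\star$ is of order $\cO(1/\sqrt{d})$.'' This is not established anywhere and appears to be false. The available control is $\|(\bI-\boldsymbol{\Pi}^\star)\bw_j^{(\mathrm{T}_1)}\|\lesssim \eta^{\mathrm{T}_1}K$, and under the sample-size condition Lemma~\ref{lemma:sample-size-K-control} only gives $K\lesssim \kappa^{-\mathrm{T}_1}/\sqrt{d}$, while $\eta^{\mathrm{T}_1}\asymp C_\eta^{-\mathrm{T}_1}\sqrt{d}/\log d$. Hence the orthogonal component is of order $(C_\eta\kappa)^{-\mathrm{T}_1}/\log d$, which for $\mathrm{T}_1=o(\log d)$ is only $d^{-o(1)}$, not $1/\sqrt{d}$. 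Multiplying by $\|\bx\|\le 2\sqrt{d}$ on your set $T$ therefore yields a contribution of order $\sqrt{d}\cdot d^{-o(1)}$ to $|\bw_j^{(\mathrm{T}_1)\sT}\bx|$, which is unbounded, so the conclusion $|\bw_j^{(\mathrm{T}_1)\sT}\bx+b_j|\lesssim\sqrt{\log d}$ does not follow from truncating only $\|\bx\|$ and the first $r$ coordinates.

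The paper circumvents this precisely by choosing a \emph{feature-dependent} truncation: it works on the event $\mathcal{M}_\alpha=\bigcap_j\{|\bh_n(\bw_j^{(0)})^\sT\bx|\le \|\bh_n(\bw_j^{(0)})\|\sqrt{4D\log d}\}$, which holds with probability $1-\cO(d^{-D})$ simply because $\bx$ is Gaussian and independent of the frozen features. On $\mathcal{M}_\alpha$ one gets $|\widetilde{\bw_j}^{(\mathrm{T}_1)\sT}\bx|\le 1$ directly (as in Lemma~\ref{lemma:bounded by 1}), with no need to separately control the orthogonal component of the feature against a generic $\|\bx\|$. Once you adopt this truncation, the rest of your argument goes through exactly as you describe.
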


The proof is provided in Appendix \ref{proof: lemma:rcomplex}. Then the following proposition combines the previous lemmas and gives a quick proof map of bounding the excess population risk \(\mathcal{L}(f_{\bTheta^{(\mathrm{T})}})\). 

\begin{proposition}\label{thm:gen-approx}

On the event $\mathcal{E}$ and the sample-size condition
$n \gtrsim \mathrm{T}_1^2 d\log d\kappa^{2\mathrm{T}_1} + d^{1 + 1/\mathrm{T}_1}\kappa^2$, also $\mathrm{T_1} = o(\log d)$ the excess population risk satisfies
\begin{equation}
\label{eq:final-gen}
\mathcal{L}(f_{\bTheta^{(\mathrm{T})}})-\mathcal{L}(f^*)
\lesssim
\mathrm{L}\!\left(
\sum_{k \le p} \Delta_k
+\sqrt{\frac{V_{p,\epsilon}^2\kappa^{4p\mathrm{T}_1} (\log d)^{4p+1}}{n}}
+\sqrt{\frac{V_{p,\epsilon}^2\kappa^{4p\mathrm{T}_1} (\log d)^{4p+2}}{m}}
\right).
\end{equation}

\end{proposition}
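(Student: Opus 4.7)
The plan is to reduce the proposition to a two-term decomposition and then invoke the three preceding lemmas directly. First I would observe that by Assumption \ref{assumption_loss}, $\ell(t,y)|_{t=y}=0$, hence $\mathcal{L}(f^*)=0$, and it suffices to upper bound $\mathcal{L}(f_{\bTheta^{(\mathrm{T})}})$ alone. The triangle inequality then gives
\[
\mathcal{L}(f_{\bTheta^{(\mathrm{T})}})
\;\leq\;
\underbrace{\bigl|\mathcal{L}(f_{\bTheta^{(\mathrm{T})}})-\widehat{\mathcal{L}}(f_{\bTheta^{(\mathrm{T})}})\bigr|}_{\text{generalization gap}}
\;+\;
\underbrace{\widehat{\mathcal{L}}(f_{\bTheta^{(\mathrm{T})}})}_{\text{training error}}.
\]
The generalization gap is exactly the quantity bounded by Lemma \ref{rcomplex}, producing $\lesssim \mathrm{L}\sqrt{V_{p,\epsilon}^2\kappa^{4p\mathrm{T}_1}(\log d)^{4p+1}/n}$, which accounts for the middle summand of the target bound. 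The training error is exactly the quantity bounded by Lemma \ref{lem:stage2-ridge}, yielding $\lesssim J = \mathrm{L}\bigl(\sum_{k\leq p}\Delta_k+\sqrt{V_{p,\epsilon}^2\kappa^{4p\mathrm{T}_1}(\log d)^{4p+2}/m}\bigr)$, which produces the first and third summands. Adding the two pieces gives the stated inequality up to absolute constants.

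Equivalently, one could insert the ideal parameter $\bTheta^*$ to expose the three-term decomposition
\[
\mathcal{L}(f_{\bTheta^{(\mathrm{T})}}) \leq \bigl|\mathcal{L}(f_{\bTheta^{(\mathrm{T})}})-\widehat{\mathcal{L}}(f_{\bTheta^{(\mathrm{T})}})\bigr| + \bigl[\widehat{\mathcal{L}}(f_{\bTheta^{(\mathrm{T})}})-\widehat{\mathcal{L}}(\bTheta^*)\bigr] + \widehat{\mathcal{L}}(\bTheta^*),
\]
where Lemmas \ref{rcomplex}, \ref{lem:stage2-ridge}, and \ref{lhat} each contribute to a single term. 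However, the shorter two-term split is sufficient because the version of Lemma \ref{lem:stage2-ridge} stated here has already absorbed the comparison against $\bTheta^*$ into the $J$ bound through the convexity-based analysis of ridge-regularized gradient descent on the second layer; the role of Lemma \ref{lhat} is to supply the existence of such a parameter with small empirical loss and, via Lemma \ref{lem:MC}, with $\ell^{2}$ norm at most of order $U$, a fact implicitly used inside the statement of Lemma \ref{lem:stage2-ridge}.

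At this stage there is no real obstacle remaining: all the analytic heavy lifting, namely the construction of $\bTheta^*$ through feature learning (Appendix \ref{sec:first_appendix}), the decoupling between $\bW^{(\mathrm{T}_1)}$ and $\widetilde{\bW}^{(\mathrm{T}_1)}$ (Lemma \ref{lhatgap}), the Monte-Carlo concentration of the second-layer weights (Lemma \ref{lem:MC}), the polynomial-via-activation approximation (Lemma \ref{singlek}), and the Rademacher-complexity control over the norm ball $\{\|\ba\|\lesssim U\}$ (Lemma \ref{rcomplex}), has been completed. The only residual bookkeeping is to verify that all three invoked lemmas hold on the single high-probability event $\mathcal{E}$, which is immediate because $\mathcal{E}=\mathcal{E}_{\alpha}\cap\mathcal{E}_{\beta}\cap\mathcal{E}_{\eta}\cap\mathcal{E}_{\zeta}$ is by construction the intersection of the events those lemmas require, so no additional union bound is needed.
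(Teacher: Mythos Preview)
Your proposal is correct and follows essentially the same approach as the paper: decompose $\mathcal{L}(f_{\bTheta^{(\mathrm{T})}})$ into a generalization gap plus a training-error term and apply Lemmas~\ref{rcomplex} and~\ref{lem:stage2-ridge} respectively, with Lemma~\ref{lhat} supplying the comparator $\bTheta^*$ that underlies the $J$ bound. One minor bookkeeping correction: the event $\mathcal{E}$ in the paper is $\mathcal{E}_{\mu}\cap\mathcal{E}_{\tau}$ (Definition~\ref{def:Emu}), not just $\mathcal{E}_{\mu}=\mathcal{E}_{\alpha}\cap\mathcal{E}_{\beta}\cap\mathcal{E}_{\eta}\cap\mathcal{E}_{\zeta}$; the additional Rademacher-concentration event $\mathcal{E}_{\tau}$ is precisely what Lemma~\ref{rcomplex} needs, so your claim that ``no additional union bound is needed'' is right, but for the reason that $\mathcal{E}_{\tau}$ is already baked into $\mathcal{E}$.
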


\begin{proof}[Proof of Proposition \ref{thm:gen-approx}]

Note that to upper bound the population loss $\cL(\bTheta^{(\mathrm{T})})$ of the parameter group $\bTheta^{(\mathrm{T})}$ returned by Algorithm \ref{alg1}, we first construct an ``ideal" parameter group $\bTheta^{\star}$ which lies in a proper parameter class $\bTheta^{\star}\in\cF$ (small $\norm{\bTheta^{\star}}$) and yields small empirical error.

So the first several lemmas this section give an upper bound for \(\widehat{\mathcal{L}}\!\big(f_{\bTheta^{\mathrm{*}}}\big)\) through \(\widetilde{\bTheta}^{\mathrm{*}}\) , which is introduced to decouple the dependence 
\begin{equation}
    \begin{aligned}
\underbrace{\widehat{\mathcal{L}}\!\big(f_{\bTheta^{*}}\big)}_{\text{Lemma~\ref{lhat}}}
&=\underbrace{\Big[\widehat{\mathcal{L}}\!\big(f_{\bTheta^{\mathrm{*}}}\big)
-\widehat{\mathcal{L}}\!\big(f_{\widetilde{\bTheta}^{\mathrm{*}}}\big)\Big]}_{\text{Lemma~\ref{lhatgap}}}
+
\underbrace{\Big[\widehat{\mathcal{L}}\!\big(f_{\widetilde{\bTheta}^{\mathrm{*}}}\big)
-\widehat{\mathcal{L}}\!\big(f^{*}\big)\Big]}_{\text{Lemma~\ref{from 2loss to lloss} }}\\
&\lesssim \mathrm{L}\sqrt{\frac{V_{p, \epsilon}^2 (\kappa^{\mathrm{T}_1})^{4p} (\log d)^{4p}}{m}}+\mathrm{L}\left(\sum_{k \le p} \Delta_k
+\sqrt{\frac{V_{p, \epsilon}^2 (\kappa^{\mathrm{T}_1})^{4p} (\log d)^{4p+2}}{m}}\right)\\
&\lesssim \mathrm{L}\left(\sum_{k \le p} \Delta_k
+\sqrt{\frac{V_{p, \epsilon}^2 (\kappa^{\mathrm{T}_1})^{4p} (\log d)^{4p+2}}{m}}\right).
\end{aligned}
\end{equation}
And recall that we denote
$
J:=\mathrm{L}\left(\sum_{k \le p} \Delta_k
+\sqrt{\frac{V_{p, \epsilon}^2 (\kappa^{\mathrm{T}_1})^{4p} (\log d)^{4p+2}}{m}}\right)
$.

Also note that we can decompose the error $\cL(\bTheta^{(\mathrm{T})})$ in the equation that follows
\begin{equation}
    \begin{aligned}
\mathcal{L}(f_{\bTheta^{(\mathrm{T})}})-\widehat{\mathcal{L}}(f_{\bTheta^*})
&=\underbrace{\bigl[\mathcal{L}(f_{\bTheta^{(\mathrm{T})}})-\widehat{\mathcal{L}}(f_{\bTheta^{(\mathrm{T})}})\bigr]}_{\text{Lemma~\ref{rcomplex}}}
+\underbrace{\bigl[\widehat{\mathcal{L}}(f_{\bTheta^{(\mathrm{T})}})-\widehat{\mathcal{L}}(f_{\bTheta^*})\bigr]}_{\text{Lemma~\ref{lem:stage2-ridge}}}\\
&\lesssim \mathrm{L}\sqrt{\frac{V_{p,\epsilon}^2(\kappa^{\mathrm{T}_1})^{4p}(\log d)^{4p+1}}{n}}+J\\
&\lesssim \mathrm{L}\!\left(
\sum_{k \le p} \Delta_k
+\sqrt{\frac{V_{p,\epsilon}^2\kappa^{4p\mathrm{T}_1} (\log d)^{4p+1}}{n}}
+\sqrt{\frac{V_{p,\epsilon}^2\kappa^{4p\mathrm{T}_1} (\log d)^{4p+2}}{m}}
\right).
\end{aligned}
\end{equation}
Combining this two yields
\begin{equation}
\mathcal{L}(f_{\bTheta^{(\mathrm{T})}})-\mathcal{L}(f^*)
\lesssim \mathrm{L}\left(\sum_{k\le p}\Delta_k
+\sqrt{\frac{V_{p,\epsilon}^2(\kappa^{\mathrm{T}_1})^{4p}(\log d)^{4p+1}}{n}}
+\sqrt{\frac{V_{p,\epsilon}^2(\kappa^{\mathrm{T}_1})^{4p}(\log d)^{4p+2}}{m}}\right)
\end{equation}
which is precisely the claimed bound. This completes the proof.

\end{proof}

Back to the main proof, in the previous proposition, we have the generalization bound
\begin{equation}
\mathcal{L}(f_{\bTheta^{(\mathrm{T})}})
\lesssim \mathrm{L}\left(\sum_{k \le p} \Delta_k
+\sqrt{\frac{V_{p, \epsilon}^2 (\kappa^{\mathrm{T}_1})^{4p} (\log d)^{4p+1}}{n}}
+\sqrt{\frac{V_{p, \epsilon}^2 (\kappa^{\mathrm{T}_1})^{4p} (\log d)^{4p+2}}{m}}\right)
\end{equation}
We note that the bound is depending on both  \(\{\Delta_k\}_{k=0}^{p}\) and \(V_{p,\epsilon} \),  which are left unspecified.
Fixing \(\beta > 0\), we notice that \(\Delta_k\) has the bound relying on \(\epsilon_k\) which is 
\begin{equation}
    \begin{aligned}
    \Delta_k &= \frac{\epsilon_k}{\eta^{k\mathrm{T}_1}} \big|\mathbb{E}_{\bw} [z_{\bT_k}(\bw)]\big| \\
    &\lesssim \epsilon_k\kappa^{k\mathrm{T}_1}\log^{k} d\|\bT_k\|_F\lesssim \epsilon_k\kappa^{p\mathrm{T}_1}\log^{p} d.
\end{aligned}
\end{equation}
On the other hand,  we observe that $V_{k,\epsilon_k}\lesssim \epsilon_k^{-\beta}$. Thus we need to choose proper \(\{\epsilon_k\}_{k=0}^{p}\) to keep the balance of \(V_{k,\epsilon_k}\) and \(\Delta_k\). For notation convenience, let
\begin{equation}
A:=\kappa^{2p\mathrm{T}_1}(\log d)^{2p+1},
\qquad
C_m:=\sqrt{\frac{1}{m}},
\qquad
C_n:=\sqrt{\frac{1}{n}}.
\end{equation}
Thus the $\epsilon$-dependent contribution is upper bounded (up to absolute constants) by
\begin{equation}
\label{eq:phi-def}
\Phi(\{\epsilon_k\}_{k\le p})
\ :=\
 A\big(\sum_{k\le p} \epsilon_k
+
\big(C_m+C_n\big) V_{p,\epsilon} \big)
\end{equation}
Specifically, when \(\beta=0\), from Definition~\ref{def:UA-beta}, we know that the approximation precision \(\rho\) can be zero, which indicates that \(\epsilon_k=0\) for all \(\{\epsilon_k\}_{k=0}^{p}\) and also \(V_{p,\epsilon} \lesssim1\). Thus
\begin{equation}
\Phi
\ =\
 A\big(
\big(C_m+C_n\big) V_{p,\epsilon}\big)\lesssim A \big(C_m+C_n\big)
\end{equation}
Furthermore, for \(\beta>0\) , plugging in $V_{k,\epsilon_k}\lesssim \epsilon_k^{-\beta}$, we have
\begin{equation}
\label{eq:phi-def}
\Phi(\{\epsilon_k\}_{k\le p})
\ :=\
 A\big(\sum_{k\le p} \epsilon_k
+
\big(C_m+C_n\big) \max_{k\le p}\Big\{\epsilon_k^{-\beta}\Big\} \big)
\end{equation}

The following lemma gives the minimum value of $\Phi$.

\begin{lemma}

\label{lem:balance-eps}
Fix \(\beta > 0\). Consider \(\Phi\) as a function of \(\{\epsilon_k\}_{k=0}^p\).
Then the minimizer is unique and attained at the symmetric point
\begin{equation}
\epsilon_0^* = \cdots = \epsilon_p^* = \left( \frac{\beta(C_m + C_n)}{p+1} \right)^{\frac{1}{\beta+1}}.
\end{equation}
Moreover, it holds that
\begin{equation}
\min_{\{\epsilon_k > 0\}} \Phi =A \cdot (\beta + 1) \beta^{-\frac{\beta}{\beta+1}} (p + 1)^{\frac{\beta}{\beta+1}} (C_m + C_n)^{\frac{1}{\beta+1}}\lesssim A (C_m + C_n)^{\frac{1}{\beta+1}}.
\end{equation}

\end{lemma}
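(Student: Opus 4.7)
The plan is to reduce the problem to a one-dimensional optimization, then solve it via calculus. The key observation that drives the reduction is that the max term depends only on the smallest of the $\epsilon_k$'s, since $x\mapsto x^{-\beta}$ is strictly decreasing on $(0,\infty)$ for $\beta>0$. Concretely, $\max_{k\le p}\epsilon_k^{-\beta}=(\min_{k\le p}\epsilon_k)^{-\beta}$, so the penalty is insensitive to any upward deviation of an $\epsilon_k$ above $\min_k\epsilon_k$, while the sum $\sum_k\epsilon_k$ strictly increases with any such deviation. Hence at any minimizer we must have $\epsilon_0=\epsilon_1=\dots=\epsilon_p$; otherwise we could strictly decrease $\Phi$ by lowering any $\epsilon_j$ above the minimum down to the common minimum without affecting the second term.

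With the reduction to a common value $\epsilon>0$, I would define
\begin{equation}
\phi(\epsilon):=A\bigl((p+1)\epsilon+(C_m+C_n)\epsilon^{-\beta}\bigr),
\end{equation}
which is smooth and strictly convex on $(0,\infty)$ since $\epsilon\mapsto\epsilon^{-\beta}$ is strictly convex and $\epsilon\mapsto\epsilon$ is linear. I would then set $\phi'(\epsilon)=A\bigl((p+1)-\beta(C_m+C_n)\epsilon^{-\beta-1}\bigr)=0$, yielding the unique critical (hence global) minimizer
\begin{equation}
\epsilon^*=\left(\frac{\beta(C_m+C_n)}{p+1}\right)^{\frac{1}{\beta+1}},
\end{equation}
which matches the claimed symmetric value.

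Finally, to compute $\Phi$ at this optimum, I would plug $\epsilon^*$ back in and simplify the two contributions separately. The first term becomes $(p+1)\epsilon^*=(p+1)^{\beta/(\beta+1)}\beta^{1/(\beta+1)}(C_m+C_n)^{1/(\beta+1)}$, while the second becomes $(C_m+C_n)(\epsilon^*)^{-\beta}=(p+1)^{\beta/(\beta+1)}\beta^{-\beta/(\beta+1)}(C_m+C_n)^{1/(\beta+1)}$. Factoring out $(p+1)^{\beta/(\beta+1)}(C_m+C_n)^{1/(\beta+1)}\beta^{-\beta/(\beta+1)}$ from both and using $\beta^{1/(\beta+1)}\cdot\beta^{\beta/(\beta+1)}=\beta$, the bracket collapses to $\beta+1$, giving the stated exact expression. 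The final $\lesssim$ bound then follows from $(\beta+1)\beta^{-\beta/(\beta+1)}(p+1)^{\beta/(\beta+1)}=O(1)$ under our treatment of $\beta$ and $p$ as absolute constants.

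There is no serious obstacle here: the only step requiring a moment of thought is the reduction to the symmetric configuration, which hinges on recognizing that the max of $\epsilon_k^{-\beta}$ equals the inverse power of $\min_k\epsilon_k$ rather than being a separable penalty. Once that is in hand, the rest is a one-variable convex optimization whose AM-GM flavored optimum is standard.
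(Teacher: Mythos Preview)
Your proposal is correct and follows essentially the same approach as the paper: reduce to the symmetric point via the observation that $\max_k \epsilon_k^{-\beta}=(\min_k\epsilon_k)^{-\beta}$, then solve the resulting one-variable strictly convex problem by calculus and substitute back. The paper phrases the reduction as the lower bound $\sum_k\epsilon_k\ge(p+1)\epsilon_{\min}$ with equality iff all $\epsilon_k$ coincide, while you argue directly that any non-minimal $\epsilon_j$ can be decreased without changing the max term; these are equivalent, and the remaining computations match.
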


The proof is provided in Appendix \ref{proof:lem:balance-eps}. The following corollary is direct from the previous statement.

\begin{corollary}

\label{coll:beta=0}
When \(\beta = 0\) , we have
$
\inf_{} \Phi = A (C_m + C_n).
$

\end{corollary}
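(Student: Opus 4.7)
The plan is to exploit the fact that when $\beta=0$, the monomial approximation property in Assumption \ref{def_activation}/Definition \ref{def:UA-beta} decouples the weight-bound $V_{k,\rho}$ from the accuracy parameter $\rho$, since $V_{k,\rho} \lesssim_{\sigma,k} \rho^{-\beta} = \rho^0$ is uniformly bounded in $\rho$ by an absolute constant depending only on $k$ (and hence on $p$). This is exactly the degenerate situation already flagged in the paragraph preceding Lemma \ref{lem:balance-eps}, so the corollary should follow from a short direct argument rather than an optimization calculation.

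First, I would recall the definition
\begin{equation}
\Phi(\{\epsilon_k\}_{k\le p})
= A\Bigl(\sum_{k\le p}\epsilon_k + (C_m+C_n)\,V_{p,\epsilon}\Bigr),
\qquad V_{p,\epsilon}=\max_{0\le k\le p}V_{k,\epsilon_k},
\end{equation}
and note that when $\beta=0$ the bound in Definition \ref{def:UA-beta} gives $V_{k,\rho}\le C_\sigma(k)$ uniformly in $\rho>0$. Since $k$ ranges over $\{0,1,\dots,p\}$ and $p$ is an absolute constant by Assumption \ref{assumption_link_first}, we get $V_{p,\epsilon}\lesssim 1$ uniformly in the choice of $\{\epsilon_k\}$.

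Next, I would argue we may drive $\epsilon_k\downarrow 0$ for every $k\le p$. Concretely, for any $\delta>0$, by choosing $\epsilon_k=\delta/(p+1)$ for each $k$, the first summand in $\Phi$ contributes $A\delta$ while the second is uniformly bounded by $A(C_m+C_n)\cdot \max_k C_\sigma(k)\lesssim A(C_m+C_n)$. Taking infimum over $\delta>0$ yields
\begin{equation}
\inf_{\{\epsilon_k>0\}}\Phi \;\lesssim\; A(C_m+C_n),
\end{equation}
which is the claimed bound. (Note that the reverse direction is trivial because $V_{p,\epsilon}\ge 1$ whenever any monomial has nonzero coefficient, so the $(C_m+C_n)$ term cannot be made smaller than order $A(C_m+C_n)$; I would mention this only for completeness.)

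There is essentially no obstacle here: the only subtlety is making sure that the $\epsilon_k\to 0$ limit is legitimate in the earlier construction, which it is because Lemma \ref{singlek} requires only $\epsilon_k\ge d^{-D/(12\beta)}$, a vacuous constraint when $\beta=0$. So the corollary follows immediately from plugging $V_{p,\epsilon}\lesssim 1$ and $\sum_k\epsilon_k\to 0$ into the definition of $\Phi$.
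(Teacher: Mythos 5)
Your proof is correct and matches the paper's own treatment: the key observation in both is that when $\beta=0$, the weight bound $V_{k,\rho}\lesssim_{\sigma,k}\rho^{-\beta}$ from Definition \ref{def:UA-beta} is uniformly bounded in $\rho$, so $V_{p,\epsilon}\lesssim 1$ independently of $\{\epsilon_k\}$, and the $\sum_k\epsilon_k$ term can be driven to zero. The only cosmetic difference is that the paper directly sets $\epsilon_k=0$ (appealing to the fact that $\rho$ can be taken to be zero when $\beta=0$) whereas you take the infimum over $\epsilon_k\downarrow 0$; these are equivalent.

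One small caveat worth noting: the claimed \emph{lower} bound $V_{p,\epsilon}\gtrsim 1$ that you offer ``for completeness'' is not obviously justified for an unbounded activation $\sigma$, but the paper's own display above Lemma \ref{lem:balance-eps} also only establishes the $\lesssim$ direction, and the ``$=$'' in the corollary's statement is clearly an abuse of notation for ``of the same order''; so this does not affect the correctness of the argument as intended.
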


Now we state the main theorem.
For the theorem stated below, we recall that we assume $\mathrm{T_1}=o(\log d)$, $
n \gtrsim d\log d\kappa^{2\mathrm{T_1}} + d^{1+1/\mathrm{T_1}}\kappa^2$ in addition to the assumptions in Section~\ref{sec:setup} and Section~\ref{sec:mainresult}.
\begin{theorem}[Main Theorem]\label{thm:main}

Under the assumptions above, there exists a set of hyperparameters
\begin{equation}
\eta_1 = \frac{1}{C_{\eta}}\!\left(\frac{d}{4D^2\log^2 d}\right)^{\frac{1}{2\mathrm{T_1}}}, \quad
\beta_1= \frac{1}{\eta_1},\quad \eta_2 = \frac{U^2}{\mathrm{L}\sqrt{m}U^2+2J},\quad \beta_2 =\frac{J}{U^2}
\end{equation}
such that for any \(
\mathrm{T}_2 \gtrsim \frac{U^2 \mathrm{L} m}{J}\log (Um)
\), the population risk satisfies
\begin{equation}
\mathbb{E}_{\bx}\!\left[\ell\!\left(f_{\bTheta^{(\mathrm{T})}}(\bx),y\right)\right]
\lesssim \kappa^{2p\mathrm{T_1}}(\log d)^{2p+1}
\left(\sqrt{\tfrac{1}{m}}+\sqrt{\tfrac{1}{n}}\right)^{\tfrac{1}{\beta+1}}
\end{equation}under the high probability event \(\mathcal{E}\), which happens with probability at least $1-\cO(d^{-D/2})$, where $C_{\eta}$ is an absolute constant from Lemma~\ref{lemma:norm_bound_neuron}  and $J,U$ are quantities defined in  \ref{JU}.

\end{theorem}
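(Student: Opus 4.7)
The plan is to view Theorem \ref{thm:main} as a direct corollary of Proposition \ref{thm:gen-approx} once the hyperparameters are instantiated and the free accuracy parameters $\{\epsilon_k\}_{k\le p}$ are optimally chosen via Lemma \ref{lem:balance-eps}. Since Assumption \ref{assumption_loss} gives $\ell(y,y)=0$, the population quantity $\mathcal{L}(f^*)$ vanishes, so the population risk equals the excess risk controlled by Proposition \ref{thm:gen-approx}. I would begin by verifying that the overall high-probability event $\mathcal{E}=\mathcal{E}_\alpha\cap\mathcal{E}_\beta\cap\mathcal{E}_\eta\cap\mathcal{E}_\zeta$ holds with probability at least $1-\cO(d^{-D/2})$: each component event was shown to fail with probability at most $\cO(\operatorname{poly}(m,n)\,d^{-2D})$, and the standing assumption $n,m\lesssim d^{D/4}$ absorbs the polynomial factors so a union bound yields the claimed probability.

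Next I would check that the stated hyperparameter choices meet the hypotheses of all the downstream lemmas. The setting $\eta_1=C_\eta^{-1}\!\big(d/(4D^2\log^2 d)\big)^{1/(2\mathrm{T}_1)}$ and $\beta_1=1/\eta_1$ is exactly the format required by Lemma \ref{lemma:norm_bound_neuron} (with $\iota=4D\log d$ and the constants $r,\iota^2$ absorbed into $C_\eta$), which in turn drives Lemma \ref{lemma:total_perturbation_error} and the feature-learning bounds underlying $\mathcal{E}_\beta$. The second-stage settings $\beta_2=J/U^2$, $\eta_2=U^2/(\mathrm{L}\sqrt{m}\,U^2+2J)$, and $\mathrm{T}_2\gtrsim (U^2\mathrm{L}m/J)\log(Um)$ are precisely those needed by Lemma \ref{lem:stage2-ridge}. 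With these in hand, Proposition \ref{thm:gen-approx} applies and produces
\begin{equation*}
\mathcal{L}(f_{\bTheta^{(\mathrm{T})}})\lesssim \mathrm{L}\Bigl(\sum_{k\le p}\Delta_k+\sqrt{\tfrac{V_{p,\epsilon}^2\kappa^{4p\mathrm{T}_1}(\log d)^{4p+1}}{n}}+\sqrt{\tfrac{V_{p,\epsilon}^2\kappa^{4p\mathrm{T}_1}(\log d)^{4p+2}}{m}}\Bigr).
\end{equation*}

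Substituting the pointwise bound $\Delta_k\lesssim \epsilon_k\kappa^{p\mathrm{T}_1}(\log d)^p\|\bT_k\|_F$ from Lemma \ref{singlek}, and using that $\|\bT_k\|_F=\cO(1)$ for the fixed polynomial target (by Assumption \ref{assumption_link_first} and the tensor expansion), the right-hand side is bounded, up to absolute constants, by $\Phi(\{\epsilon_k\})$ as defined in \eqref{eq:phi-def} with $A=\kappa^{2p\mathrm{T}_1}(\log d)^{2p+1}$, $C_m=1/\sqrt{m}$, $C_n=1/\sqrt{n}$. For $\beta>0$ I would then invoke Lemma \ref{lem:balance-eps} at the symmetric minimizer $\epsilon_k^*=(\beta(C_m+C_n)/(p+1))^{1/(\beta+1)}$, while for $\beta=0$ (the setting of Assumption \ref{assumption_loss_relaxed}) I would invoke Corollary \ref{coll:beta=0} directly with $\epsilon_k=0$. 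In either case the resulting bound is $\lesssim A(C_m+C_n)^{1/(\beta+1)}$, which is the target expression. The only nontrivial bookkeeping is to verify the side condition $\epsilon_k^*\ge d^{-D/(12\beta)}$ required by Lemma \ref{singlek}: since $\epsilon_k^*$ is a power of $1/\sqrt{m}+1/\sqrt{n}$ while $n,m\lesssim d^{D/4}$, we get $\epsilon_k^*\gtrsim d^{-D/(8(\beta+1))}$, which exceeds $d^{-D/(12\beta)}$ for all relevant $\beta$ (and the constants in Lemma \ref{singlek} can be relaxed if needed). I expect the main obstacle to be this bookkeeping rather than any substantive new argument: virtually all of the analytic work has already been discharged in the feature-learning bound (Lemma \ref{lemma:total_perturbation_error}), the random-feature approximation (Lemmas \ref{zt}, \ref{singlek}, \ref{lem:h-aggregate}), the Monte Carlo and stage-two convex optimization bounds (Lemmas \ref{lem:MC}, \ref{lem:stage2-ridge}), and the Rademacher complexity control (Lemma \ref{rcomplex}), so the proof reduces to a careful assembly and an application of the optimization lemma.
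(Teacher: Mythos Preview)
Your proposal is correct and follows essentially the same route as the paper: invoke Proposition \ref{thm:gen-approx} under the stated hyperparameters, then optimize the free accuracy parameters $\{\epsilon_k\}$ via Lemma \ref{lem:balance-eps} (or Corollary \ref{coll:beta=0} when $\beta=0$) to obtain the final bound. One small correction: the event $\mathcal{E}$ in the paper is $\mathcal{E}_\mu\cap\mathcal{E}_\tau$ (see Definition \ref{def:Emu}), not just $\mathcal{E}_\mu=\mathcal{E}_\alpha\cap\mathcal{E}_\beta\cap\mathcal{E}_\eta\cap\mathcal{E}_\zeta$; the extra event $\mathcal{E}_\tau$ is needed for the Rademacher generalization step in Lemma \ref{rcomplex}, but it also holds with probability $1-\cO(d^{-D})$ so your union-bound argument still applies.
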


We further estimate sample and time complexity in the next lemmas, showing that they are both nearly optimal.

\begin{lemma}
\label{sc}

Under the setting \(\mathrm{T}_1 := \lceil\sqrt{\frac{\log d}{\log \kappa}}\rceil\) we have
\begin{equation}
\mathrm{T}_1^2 d\log d\kappa^{2\mathrm{T}_1} + d^{1+1/\mathrm{T}_1}\kappa^2 \lesssim \kappa^2d^{1 + 2\sqrt{\frac{\log \kappa}{\log d}}} (\log d)^2 
\end{equation}
\end{lemma}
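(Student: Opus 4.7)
The plan is to verify this bound by direct substitution and term-by-term comparison, exploiting the identity $\kappa^{2\mathrm{T}_1} = d^{2\mathrm{T}_1 \log\kappa/\log d}$ which will reveal the matching exponential factor on the right-hand side. First I would record the ceiling estimate $\sqrt{\log d/\log\kappa} \le \mathrm{T}_1 \le \sqrt{\log d/\log\kappa}+1$, which immediately gives $\mathrm{T}_1^2 \lesssim \log d/\log\kappa$ (absorbing the $+1$ into an absolute constant, using that $\log\kappa$ is bounded below by a positive constant since $\kappa=\Theta_d(1)>1$) and $1/\mathrm{T}_1 \le \sqrt{\log\kappa/\log d}$.

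Next I would handle the two summands on the left-hand side separately. For the first term, write $\kappa^{2\mathrm{T}_1} \le \kappa^{2}\cdot\kappa^{2\sqrt{\log d/\log\kappa}}$, and then convert $\kappa^{2\sqrt{\log d/\log\kappa}} = \exp\!\bigl(2\sqrt{\log d \cdot \log\kappa}\bigr) = d^{2\sqrt{\log\kappa/\log d}}$. Combining this with $\mathrm{T}_1^2 \lesssim \log d/\log\kappa$ yields
\begin{equation}
\mathrm{T}_1^2\, d\log d\,\kappa^{2\mathrm{T}_1}
\;\lesssim\; \frac{\log d}{\log\kappa}\cdot d\log d \cdot \kappa^{2}\,d^{2\sqrt{\log\kappa/\log d}}
\;\lesssim\; \kappa^{2}\,d^{1+2\sqrt{\log\kappa/\log d}}(\log d)^{2},
\end{equation}
where in the last step the $1/\log\kappa$ factor is absorbed into the hidden constant since $\log\kappa = \Theta(1)$. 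For the second term, $d^{1+1/\mathrm{T}_1}\kappa^{2} \le d\cdot d^{\sqrt{\log\kappa/\log d}}\kappa^{2} \le \kappa^{2}\,d^{1+2\sqrt{\log\kappa/\log d}}$ trivially, and this is dominated by the right-hand side after the additional $(\log d)^{2}$ factor.

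Summing the two bounds gives the claimed inequality. There is no genuine obstacle here; the only care needed is to track the ceiling in $\mathrm{T}_1$ (so that $\mathrm{T}_1\le \sqrt{\log d/\log\kappa}+1$ produces at most a constant multiplicative overhead in $\kappa^{2\mathrm{T}_1}$) and to verify that the key exponential factor $\exp(2\sqrt{\log d\log\kappa})$ appearing on both sides cancels. Since the RHS has extra polylogarithmic slack $(\log d)^{2}$ and an extra factor of $\kappa^{2}$, the bookkeeping is comfortable and the inequality holds with room to spare.
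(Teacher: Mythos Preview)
Your proposal is correct and follows essentially the same approach as the paper's proof: both introduce $\tau=\sqrt{\log d/\log\kappa}$, use the ceiling bound $\mathrm{T}_1\in[\tau,\tau+1]$ to control $\kappa^{2\mathrm{T}_1}\le\kappa^2 d^{2\sqrt{\log\kappa/\log d}}$ and $1/\mathrm{T}_1\le\sqrt{\log\kappa/\log d}$, and then bound the two summands separately. The only cosmetic difference is that you explicitly track the intermediate $\mathrm{T}_1^2\lesssim\log d/\log\kappa$ step before absorbing $1/\log\kappa$ into the constant, whereas the paper absorbs this directly.
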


The proof is provided in Apendix \ref{proof:lemma:sc}. Note that from Theorem~\ref{thm:main}, for $
n \gtrsim
\mathrm{T}_1^2 d\log d\kappa^{2\mathrm{T}_1}
+
d^{1 + 1/\mathrm{T}_1}\kappa^2$  the main result applies. Since our sample-complexity upper bound coincides with that threshold, the corollary follows immediately.

\begin{corollary}[Sample Complexity]
\label{cor:sample complexity}

The sample complexity is  \(\widetilde{\bTheta}(\kappa^2 d^{1 + 2\sqrt{\frac{\log \kappa}{\log d}}})\), where \( d^{2\sqrt{\frac{\log \kappa}{\log d}}} \)  is subpolynomial in \( d \).
    
\end{corollary}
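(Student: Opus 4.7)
The plan is to combine the sample-size requirement from Theorem \ref{thm:main} with the explicit bound of Lemma \ref{sc}, and then verify the subpolynomial claim on the trailing factor. First, I would set $\mathrm{T}_1 := \lceil \sqrt{\log d/\log\kappa}\rceil$ as prescribed. Since $\kappa = \Theta_d(1)$ by assumption, $\sqrt{\log d/\log \kappa} \to \infty$ while staying $o(\log d)$, so this choice is admissible in the hypothesis $\mathrm{T}_1 = o(\log d)$ of Theorem \ref{thm:main}.

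Next, I would invoke Theorem \ref{thm:main}: under the stated hyperparameter schedule, it suffices to take
\begin{equation}
n \gtrsim \mathrm{T}_1^2 d \log d \, \kappa^{2\mathrm{T}_1} + d^{1+1/\mathrm{T}_1}\kappa^2
\end{equation}
to reach the quoted risk bound. Lemma \ref{sc} already shows that for this choice of $\mathrm{T}_1$,
\begin{equation}
\mathrm{T}_1^2 d \log d \, \kappa^{2\mathrm{T}_1} + d^{1+1/\mathrm{T}_1}\kappa^2 \lesssim \kappa^2 d^{1+2\sqrt{\log\kappa/\log d}} (\log d)^2,
\end{equation}
which gives the claimed upper bound on the sufficient sample size.

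Finally, I would verify that $d^{2\sqrt{\log\kappa/\log d}}$ is subpolynomial, and in fact absorbed by the $\widetilde{\Theta}(\cdot)$ notation. This follows by rewriting
\begin{equation}
d^{2\sqrt{\log\kappa/\log d}} = \exp\bigl(2\sqrt{\log\kappa \cdot \log d}\bigr),
\end{equation}
and noting that since $\log\kappa$ is an absolute constant, $\sqrt{\log\kappa/\log d} = o_d(1)$, so the exponent is $o(\log d)$. The $(\log d)^2$ factor is similarly $d^{o_d(1)}$, so the whole upper bound is $\widetilde{\Theta}(\kappa^2 d)$ in the shorthand of the paper, and more precisely $\widetilde{\Theta}(\kappa^2 d^{1+2\sqrt{\log\kappa/\log d}})$ in the form stated.

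There is no substantial obstacle here: the corollary is essentially a bookkeeping step. The only subtle points are (i) confirming admissibility of $\mathrm{T}_1 = \lceil\sqrt{\log d/\log\kappa}\rceil$ in the hypotheses of Theorem \ref{thm:main}, and (ii) making explicit that $d^{2\sqrt{\log\kappa/\log d}}$ falls within the $d^{o_d(1)}$ class hidden by the $\widetilde{\Theta}$ notation; both are immediate from $\kappa = \Theta_d(1)$.
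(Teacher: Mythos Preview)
Your proposal is correct and follows essentially the same approach as the paper: the paper's argument is a one-line appeal to Theorem~\ref{thm:main} and Lemma~\ref{sc}, and you have simply spelled out the details (admissibility of the chosen $\mathrm{T}_1$ and the subpolynomial verification) more explicitly.
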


Next we first estimate \(\mathrm{T}_2\) under particular \(m,n\) choices and other hyperparameters.

\begin{lemma}
\label{lem:T2}
Fix the set of hyperparameters
\begin{equation}
\mathrm{T}_1 := \lceil\sqrt{\frac{\log d}{\log \kappa}}\rceil \qquad  \beta_2 := \frac{J}{U^2}\qquad \eta_2 = \frac{U^2}{\mathrm{L}\sqrt{m}U^2+2J}\qquad \mathrm{T}_2 = \frac{U^2 \mathrm{L} m}{J}\log (Um)
\end{equation}

where $J$ and $U$ is defined as \begin{equation}
J := \mathrm{L}\!\left(\sum_{k\le p}\Delta_k
+ \sqrt{\frac{V_{p,\epsilon}^2(\kappa^{\mathrm{T}_1})^{4p}(\log d)^{4p+2}}{m}}\right)
  ~\text{and}~  U=\sqrt{ \frac{V_{p,\epsilon}^2(\kappa^{\mathrm{T}_1})^{4p}(\log d)^{4p}}{m}}.
\end{equation}
\(\{\Delta_k\}_{k\le p}\)  is defined in Lemma~\ref{singlek} and  \(V_{p,\epsilon}\) is defined in Defination~\ref{defvps}. Note that \(\{\epsilon_k\}_{k=0}^{p}\) was fixed in Lemma~\ref{lem:balance-eps}, conclusively  \(J\) and $U$ are also fixed. 

Furthermore, fix the sample size and net width as
$m=(\log d)^{4p(\beta+1)+1}d^{4p(\beta+1)\sqrt{\frac{\log \kappa}{\log d}}}$, $n= C\kappa^2 (\log d)^2 d^{1+2\sqrt{\frac{\log \kappa}{\log d}}}$ where $C$ is a large universal constant.  Then under the above setting, 
\begin{equation}
 \mathrm{T}_2 \lesssim \kappa^{5p} (\log d)^{8p\beta+13p+2} d^{(8p\beta+13p)\sqrt{\frac{\log \kappa}{\log d}}}.
\end{equation}
By ignoring the logarithmic terms, we can get \(\mathrm{T}_2=\widetilde{\cO}(\kappa^{5p} d^{(8p\beta+13p)\sqrt{\frac{\log \kappa}{\log d}}}).\)
\end{lemma}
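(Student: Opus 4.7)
The plan is to expand $\mathrm{T}_2 = U^2\mathrm{L}m\log(Um)/J$ directly in terms of the specified hyperparameters and the balanced accuracy parameters from Lemma~\ref{lem:balance-eps}, then collect exponents of $\kappa$, $d$, $d^{\sqrt{\log\kappa/\log d}}$, and $\log d$ separately to verify the stated upper bound.

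First, I would apply Lemma~\ref{lem:balance-eps} with the prescribed $m$ and $n$. Since the specified $n \gtrsim \kappa^2 (\log d)^2 d^{1+2\sqrt{\log\kappa/\log d}}$ dominates the prescribed $m$ (so $C_n \le C_m$), the optimal accuracy parameter satisfies $\epsilon_k^{\star} \lesssim m^{-1/(2(\beta+1))}$ for every $k \le p$, which by Definition~\ref{def:UA-beta} yields $V_{p,\epsilon} \lesssim (\epsilon^{\star})^{-\beta} \lesssim m^{\beta/(2(\beta+1))}$. Next, since $J \ge \mathrm{L} U \sqrt{\log d}$ trivially by dropping the nonnegative $\sum_k \Delta_k$ term in the definition of $J$, substituting into $\mathrm{T}_2$ gives
\begin{equation}
\mathrm{T}_2 \ \le\ \frac{U m \log(Um)}{\sqrt{\log d}}.
\end{equation}

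The remainder is an exponent-arithmetic exercise. Using $U = V_{p,\epsilon} \kappa^{2p\mathrm{T}_1}(\log d)^{2p}/\sqrt m$ and the bound on $V_{p,\epsilon}$, I would obtain $Um \lesssim m^{(2\beta+1)/(2(\beta+1))} \kappa^{2p\mathrm{T}_1}(\log d)^{2p}$. Then I would plug in the prescribed $m = (\log d)^{4p(\beta+1)+1} d^{4p(\beta+1)\sqrt{\log\kappa/\log d}}$ to expand $m^{(2\beta+1)/(2(\beta+1))}$ into explicit powers of $\log d$ and $d^{\sqrt{\log\kappa/\log d}}$, and use the ceiling inequality $\kappa^{\mathrm{T}_1} \le \kappa \cdot d^{\sqrt{\log\kappa/\log d}}$ (an immediate consequence of $\mathrm{T}_1 = \lceil\sqrt{\log d/\log\kappa}\rceil$, since $\kappa^{\sqrt{\log d/\log\kappa}} = d^{\sqrt{\log\kappa/\log d}}$) to bound $\kappa^{2p\mathrm{T}_1} \lesssim \kappa^{2p} d^{2p\sqrt{\log\kappa/\log d}}$. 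Finally, since $Um$ is polynomial in $d, \log d, \kappa$ with bounded exponents, $\log(Um) \lesssim \log d$ under the standing assumption $\kappa = \Theta_d(1)$, contributing at most a $\sqrt{\log d}$ factor after division by $\sqrt{\log d}$, which is readily absorbed into $(\log d)^{8p\beta+13p+2}$.

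The stated target $\kappa^{5p}(\log d)^{8p\beta+13p+2} d^{(8p\beta+13p)\sqrt{\log\kappa/\log d}}$ then follows with substantial slack in every exponent family. No analytic difficulty arises; the only obstacle is careful bookkeeping across four separate exponent families during the $m^{(2\beta+1)/(2(\beta+1))}$ and $\kappa^{\mathrm{T}_1}$ substitutions. As a sanity check I would verify that the crude lower bound $J \ge \mathrm{L} U \sqrt{\log d}$ is essentially tight at the balanced $\epsilon^{\star}$: at this choice, $\sum_k \Delta_k$ is smaller than $U\sqrt{\log d}$ by a factor of order $\kappa^{-p\mathrm{T}_1}(\log d)^{-(p+1/2)}$, so dropping this term loses nothing of consequence and the resulting bound on $\mathrm{T}_2$ is in fact considerably sharper than required.
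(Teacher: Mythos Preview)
Your proposal is correct and follows essentially the same route as the paper: lower-bound $J$ by its second term (the $\mathrm{L}U\log d$ piece), substitute the balanced $\epsilon_k^\star$ from Lemma~\ref{lem:balance-eps} to control $V_{p,\epsilon}$, then plug in the prescribed $m$ and the ceiling bound $\kappa^{\mathrm{T}_1}\le\kappa\,d^{\sqrt{\log\kappa/\log d}}$. One minor arithmetic slip: dropping $\sum_k\Delta_k$ actually gives $J\ge \mathrm{L}U\log d$ (the exponent inside the square root is $4p+2$, not $4p+1$), so your bound $\mathrm{T}_2\le Um\log(Um)/\sqrt{\log d}$ is looser than necessary by a factor $\sqrt{\log d}$, but this is harmless. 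Your intermediate estimate $Um\lesssim m^{(2\beta+1)/(2(\beta+1))}\kappa^{2p\mathrm{T}_1}(\log d)^{2p}$ is in fact tighter than the paper's, which passes through the cruder bound $\mathrm{T}_2\lesssim \kappa^{5p\mathrm{T}_1}(\log d)^{5p}m^2$ before substituting; both comfortably land under the stated target.
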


The proof is provided in Appendix \ref{proof:lem:T2}.

We note that the total time complexity is upper bound by $nmd\cdot \mathrm{T}$ up to absolute constants, so plugging \(n\) and \(m\), the next lemma follows directly.

\begin{lemma}[Time complexity]
\label{lem:time-explicit}

Under the same condition as the last lemma, we have \(T_{\text{complexity}}\lesssim \kappa^{5p+2} (\log d)^{12p\beta+17p+5}d^{2+(12p\beta+17p+2)\sqrt{\frac{\log \kappa}{\log d}}}\)
\end{lemma}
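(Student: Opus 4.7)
The plan is to directly substitute the quantities from Lemma~\ref{lem:T2} into the overall per-iteration cost. Since each gradient-descent step on a two-layer network with $m$ neurons, input dimension $d$, and $n$ training samples costs $\Theta(nmd)$ operations, the total time complexity is bounded by $T_{\text{complexity}} \lesssim nmd \cdot (\mathrm{T}_1+\mathrm{T}_2)$. I would first observe that $\mathrm{T}_1 = \lceil\sqrt{\log d/\log \kappa}\rceil$ is subpolynomial in $d$ and hence is absorbed by $\mathrm{T}_2$ (which scales polynomially in $\kappa$ and $d^{o(1)}$ with multiplicative $\log d$ factors), so $\mathrm{T}_1+\mathrm{T}_2 \lesssim \mathrm{T}_2$ up to a constant.

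Next, I would compute $nmd$ by multiplying the two prescribed choices together with one extra factor of $d$. With $n = C\kappa^2(\log d)^2 d^{1+2\sqrt{\log \kappa/\log d}}$ and $m = (\log d)^{4p(\beta+1)+1} d^{4p(\beta+1)\sqrt{\log \kappa/\log d}}$, the exponents of $d$ add to give $1 + 2\sqrt{\log\kappa/\log d} + 4p(\beta+1)\sqrt{\log\kappa/\log d} + 1 = 2 + (4p\beta+4p+2)\sqrt{\log\kappa/\log d}$, while the $\log d$ exponents combine to $4p(\beta+1)+3 = 4p\beta+4p+3$ and the $\kappa$ factor is simply $\kappa^2$. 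Hence
\begin{equation}
nmd \ \lesssim\ \kappa^{2} (\log d)^{4p\beta+4p+3}\, d^{\,2+(4p\beta+4p+2)\sqrt{\log\kappa/\log d}}.
\end{equation}

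Finally, multiply by the bound on $\mathrm{T}_2$ from Lemma~\ref{lem:T2}, namely $\mathrm{T}_2 \lesssim \kappa^{5p}(\log d)^{8p\beta+13p+2} d^{(8p\beta+13p)\sqrt{\log\kappa/\log d}}$. Adding exponents gives $\kappa$-exponent $2+5p = 5p+2$, $\log d$-exponent $(4p\beta+4p+3)+(8p\beta+13p+2) = 12p\beta+17p+5$, and $d$-exponent $2 + (4p\beta+4p+2)\sqrt{\log\kappa/\log d} + (8p\beta+13p)\sqrt{\log\kappa/\log d} = 2 + (12p\beta+17p+2)\sqrt{\log\kappa/\log d}$. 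This yields exactly the claimed bound.

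The calculation is purely mechanical, so there is no substantive obstacle—it is just bookkeeping of exponents. The only mild subtlety is verifying that $\mathrm{T}_1$ is negligible compared to $\mathrm{T}_2$, which follows because $\mathrm{T}_1 = \widetilde{O}(1)$ while $\mathrm{T}_2$ contains at least a $\kappa^{5p}$ factor and a positive power of $\log d$. Since all implicit constants are absolute, no issues with chaining arise.
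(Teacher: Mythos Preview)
Your proposal is correct and follows essentially the same approach as the paper: both bound $T_{\text{complexity}}$ by $nmd\cdot(\mathrm{T}_1+\mathrm{T}_2)$, absorb $\mathrm{T}_1$ into $\mathrm{T}_2$, and then multiply out the explicit choices of $n$, $m$, and the bound on $\mathrm{T}_2$ from Lemma~\ref{lem:T2}. Your exponent bookkeeping is accurate and in fact slightly more explicit than the paper's (you separate out $nmd$ before multiplying by $\mathrm{T}_2$, and you justify $\mathrm{T}_1\lesssim\mathrm{T}_2$, which the paper leaves implicit).
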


The proof is provided in Appendix \ref{proof:lemma tc}.

\subsection{Proof under $\beta=0$ and Assumption~\ref{assumption_loss_relaxed}}
\label{appendixB3}
We now extend our analysis to a more general loss class (Assumption~\ref{assumption_loss_relaxed}) under an additional assumption $\beta=0$. For clarity, we provide a complete proof here.

Under Assumption~\ref{assumption_loss_relaxed} and  \(\beta=0\), Lemma~\ref{proof:lemma:moment-bound-with-failure} to Lemma~\ref{lem:MC} still hold. In particular, given that  \(\beta=0\) we can conclude that \(\Delta_k=0\) for all \(k\le p\)  and \(V_{p,\epsilon}\) is bounded by universal constant. We restate this as the following lemma, which is parallel to Lemma~\ref{lem:MC} in the previous section.

\begin{lemma}
\label{lem2:MC}
Under the sample-size condition
$n \gtrsim \mathrm{T}_1^2 d\log d\kappa^{2\mathrm{T}_1} + d^{1 + 1/\mathrm{T}_1}\kappa^2$ also $\mathrm{T_1} = o(\log d)$.
Consider $a_j^*:= \frac{1}{m}h(a_j,b_j,\bw_j^{(0)})$ where $h(\cdot)$ is defined in Lemma \ref{lem:h-aggregate} and $a_j,b_j,\bw_j^{(0)}$ is initialized as in Algorithm \ref{alg1}. Denote $\widetilde{\bTheta}^*=(\ba^*,\bb^{(\mathrm{T}_1)},\widetilde{\bW}^{(\mathrm{T}_1)})$. 

Then under the event \(\mathcal{E}_{\mu}\), we have
\begin{equation}
\label{eq:empirical-network}
\frac{1}{n}\sum_{i=n+1}^{2n}\left(f_{\widetilde{\bTheta}^*}(\bx_i)-f^*(\bx_i)\right)^2
\lesssim 
d^{-D/4} + \frac{(\kappa^{\mathrm{T}_1})^{4p}(\log d)^{4p+2}}{m}
\end{equation}
and
$\|\ba^*\|^2 \lesssim \frac{1}{m}(\kappa^{\mathrm{T}_1})^{4p}(\log d)^{4p}$.
\end{lemma}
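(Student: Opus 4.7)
The plan is to specialize the argument of Lemma~\ref{lem:MC} to the additional assumption $\beta=0$. Two observations drive the simplification. First, under Definition~\ref{def:UA-beta} with $\beta=0$ we may take $\rho=\epsilon_k=0$ for every $k\le p$: this causes the monomial-approximation remainders $\Delta_k$ of Lemma~\ref{singlek} to vanish identically, while simultaneously the weight bound $V_{k,\epsilon_k}$ collapses to an absolute constant depending only on $\sigma$ and $k$, so $V_{p,\epsilon}=O(1)$. Second, the lemma concerns only the empirical squared loss of $f_{\widetilde{\bTheta}^*}$, which does not invoke the loss function $\ell$ at all; hence the switch from Assumption~\ref{assumption_loss} to Assumption~\ref{assumption_loss_relaxed} is irrelevant for this specific lemma, and the derivation can mirror Lemma~\ref{lem:MC} almost verbatim.

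Concretely, the first step is to invoke Lemma~\ref{lem:h-aggregate} with $\epsilon_k=0$ to produce the aggregated weight $h(a,b,\bw)=\sum_{k\le p}h_{\bT_k}(a,b,\bw)$ whose population network $f_h(\bx):=\mathbb{E}_{a,b,\bw}[h(a,b,\bw)\sigma(\widetilde{\bw}^{(\mathrm{T}_1)\sT}\bx+b)]$ satisfies $\frac{1}{n}\sum_{i=n+1}^{2n}(f_h(\bx_i)-f^*(\bx_i))^2\lesssim d^{-D/4}$ together with the moment and sup-norm controls $\mathbb{E}[h^2]\lesssim(\kappa^{\mathrm{T}_1})^{2p}(\log d)^{2p}$ and $\|h\|_\infty\lesssim(\kappa^{\mathrm{T}_1})^{2p}(\log d)^{2p}$. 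The second step is to recognize $f_{\widetilde{\bTheta}^*}(\bx)=\frac{1}{m}\sum_{j=1}^{m}h(a_j,b_j,\bw_j^{(0)})\sigma(\widetilde{\bw}_j^{(\mathrm{T}_1)\sT}\bx+b_j)$ as a Monte~Carlo estimator for $f_h(\bx)$ built from $m$ i.i.d.\ initializations $\{(a_j,b_j,\bw_j^{(0)})\}_{j=1}^m$. On $\mathcal{E}_\mu$ the indicator truncations baked into $h_{\bT_k}$ keep each summand bounded uniformly, so Hoeffding's inequality applied pointwise in $\bx_i$ combined with a union bound over $i\in[n+1,2n]$ yields, with probability at least $1-O(d^{-D})$,
\begin{equation*}
\max_{n+1\le i\le 2n}\bigl|f_{\widetilde{\bTheta}^*}(\bx_i)-f_h(\bx_i)\bigr|^2 \lesssim \frac{\|h\|_\infty^2\log d}{m} \lesssim \frac{(\kappa^{\mathrm{T}_1})^{4p}(\log d)^{4p+2}}{m}.
\end{equation*}

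Combining the two bounds via the triangle inequality and squaring then yields the first displayed claim, while the parameter-norm estimate is immediate from $|a_j^*|=|h(a_j,b_j,\bw_j^{(0)})|/m\le\|h\|_\infty/m$, giving $\|\ba^*\|^2\le\|h\|_\infty^2/m\lesssim(\kappa^{\mathrm{T}_1})^{4p}(\log d)^{4p}/m$. The only non-routine point I expect is to justify that conditioning on $\mathcal{E}_\mu$ neither breaks the across-$j$ independence required by Hoeffding nor inflates the sup-norm/moment bounds on $h$: for the moment estimates this is actually favorable, since $\mathcal{E}_\mu\supseteq\mathcal{E}_\eta$ \emph{validates} (rather than violates) the truncation indicators appearing in each $h_{\bT_k}$, so the bounds from Lemma~\ref{lem:h-aggregate} transfer directly; for the independence, one can equivalently apply Hoeffding unconditionally and then intersect with $\mathcal{E}_\mu$, noting that both events hold with probability at least $1-O(d^{-D})$ so their intersection still controls the desired empirical loss.
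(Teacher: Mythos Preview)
Your proposal is correct and matches the paper's approach: the paper simply observes that under $\beta=0$ one has $\Delta_k=0$ and $V_{p,\epsilon}\lesssim 1$, so Lemma~\ref{lem:MC} (whose proof is independent of the loss assumption) yields the result verbatim. One small imprecision: the initializations $\{(a_j,b_j,\bw_j^{(0)})\}_{j=1}^m$ are not i.i.d.\ because of the symmetric pairing $a_j=-a_{m-j}$, $\bw_j^{(0)}=\bw_{m-j}^{(0)}$, so Hoeffding must be applied to the $m/2$ independent symmetric pairs as in the proof of Lemma~\ref{lem:MC}, but this does not affect the conclusion.
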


We sequentially present the parallel versions of the other theorems in the previous section. 
In the following lemma, we bound our general loss using the squared loss bound. For notation convenience, we denote $\mathrm{L}_{\sf aug}:=\mathrm{L}\kappa^{2p\mathrm{T}_1}(\log d)^{2p}$. It can be observed that the original conclusion remains almost unchanged, except that \(\mathrm{L}\) is replaced by the new \( \mathrm{L}_{\sf aug} \).

\begin{lemma}
\label{2from 2loss to lloss}

Under
$n \gtrsim \mathrm{T}_1^2 d\log d\kappa^{2\mathrm{T}_1} + d^{1 + 1/\mathrm{T}_1}\kappa^2$, also $\mathrm{T_1} = o(\log d)$ and under the event \(\mathcal{E}_{\mu}\), we have
\begin{equation}
\label{eq:final-gen}
\widehat{\mathcal{L}}(\widetilde{\bTheta^*})
\lesssim \mathrm{L}_{\sf aug}
\sqrt{\frac{ \kappa^{4p\mathrm{T}_1} (\log d)^{4p+2}}{m}}
\end{equation}
.
\end{lemma}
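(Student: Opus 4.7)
The plan is to follow the same route as Lemma \ref{from 2loss to lloss}, but carefully account for the fact that under Assumption \ref{assumption_loss_relaxed} the loss derivative $\partial_t \ell(t,y)$ is no longer uniformly bounded by $\mathrm{L}$ but instead grows linearly as $\mathrm{L}(1+|t|+|y|)$. Since $\ell(y,y)=0$, the mean-value theorem on the first argument gives, for each $i$ in the second dataset,
\begin{equation}
\ell(f_{\widetilde{\bTheta^*}}(\bx_i),y_i) \;\le\; \mathrm{L}\bigl(1+|y_i|+|f_{\widetilde{\bTheta^*}}(\bx_i)|+|f^*(\bx_i)|\bigr)\,\bigl|f_{\widetilde{\bTheta^*}}(\bx_i)-f^*(\bx_i)\bigr|.
\end{equation}
Averaging and applying Cauchy--Schwarz decouples the empirical risk into a ``size'' factor and an $L^2$ error factor:
\begin{equation}
\widehat{\mathcal{L}}(\widetilde{\bTheta^*}) \;\le\; \mathrm{L}\sqrt{\tfrac{1}{n}\!\!\sum_{i=n+1}^{2n}\!\!\bigl(1+|y_i|+|f_{\widetilde{\bTheta^*}}(\bx_i)|+|f^*(\bx_i)|\bigr)^2}\,\cdot\,\sqrt{\tfrac{1}{n}\!\!\sum_{i=n+1}^{2n}\!\!\bigl(f_{\widetilde{\bTheta^*}}(\bx_i)-f^*(\bx_i)\bigr)^2}.
\end{equation}
The second square-root factor is exactly what Lemma \ref{lem2:MC} controls, giving $\widetilde{\cO}\!\bigl(\sqrt{\kappa^{4p\mathrm{T}_1}(\log d)^{4p+2}/m}\bigr)$ (the $d^{-D/4}$ contribution is absorbed into this term under the stated sample-size condition).

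The meat of the argument is therefore bounding the ``size'' factor on the event $\mathcal{E}_\mu \cap \mathcal{E}_\zeta$ by $\cO(\kappa^{2p\mathrm{T}_1}(\log d)^{2p})$, which is precisely the extra factor $\mathrm{L}_{\sf aug}/\mathrm{L}$ appearing in the statement. For $|y_i|=|f^*(\bx_i)|=|g^*(\bU\bx_i)|$, the polynomial-growth of $g^*$ (Assumption \ref{assumption_link_first}) combined with the bound $|(\bx_i)_k|\lesssim\sqrt{\log d}$ for $k\le r$ under $\mathcal{E}_\zeta$ gives $|y_i|\lesssim (\log d)^{p/2}$, which is negligible compared to the target factor. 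For $|f_{\widetilde{\bTheta^*}}(\bx_i)|$, I would write it out neuron-by-neuron: using $|a_j^*|=\tfrac{1}{m}|h(a_j,b_j,\bw_j^{(0)})|\lesssim \tfrac{1}{m}V_{p,\epsilon}\kappa^{2p\mathrm{T}_1}(\log d)^{2p}$ from the $\|h\|_\infty$ bound in Lemma \ref{lem:h-aggregate}, together with Lemma \ref{lemma:bounded by 1} which forces $|\widetilde{\bw}_j^{(\mathrm{T}_1)\top}\bx_i|\le 1$ (and hence $|\sigma(\widetilde{\bw}_j^{(\mathrm{T}_1)\top}\bx_i+b_j)|\lesssim 1$ by smoothness of $\sigma$ on a bounded set), one obtains
\begin{equation}
|f_{\widetilde{\bTheta^*}}(\bx_i)|\;\le\;\sum_{j=1}^m |a_j^*|\,|\sigma(\widetilde{\bw}_j^{(\mathrm{T}_1)\top}\bx_i+b_j)|\;\lesssim\; V_{p,\epsilon}\kappa^{2p\mathrm{T}_1}(\log d)^{2p}.
\end{equation}
Since we are in the $\beta=0$ regime, $V_{p,\epsilon}\lesssim 1$, so the size factor is $\cO(\kappa^{2p\mathrm{T}_1}(\log d)^{2p})=\mathrm{L}_{\sf aug}/\mathrm{L}$. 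Multiplying the two factors and the leading $\mathrm{L}$ yields the claimed bound.

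The main obstacle I anticipate is the pointwise control on $|f_{\widetilde{\bTheta^*}}(\bx_i)|$: unlike Lemma \ref{from 2loss to lloss}, where a bounded loss derivative dispensed with any need for such a bound, here we critically rely on the conjunction of (i) the truncation built into $h$ (the indicators enforcing $|\eta^{\mathrm{T}_1}\bh_n(\bw)^\top \bx_{i+n}|\le 1$), which is exactly what makes $|\widetilde{\bw}_j^{(\mathrm{T}_1)\top}\bx_i|\le 1$ hold on the second dataset, and (ii) the uniform $\|h\|_\infty$ control. Both ingredients live on the event $\mathcal{E}_\mu$, so the final high-probability statement inherits the $1-\cO(d^{-D/2})$ guarantee without further work. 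The remainder of the argument is formally identical to the bounded-loss case, so no new probabilistic estimate is required.
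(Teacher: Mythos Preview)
Your proposal is correct and follows essentially the same approach as the paper: both bound the size factor $1+|f_{\widetilde{\bTheta^*}}(\bx_i)|+|f^*(\bx_i)|$ pointwise by $\cO(\kappa^{2p\mathrm{T}_1}(\log d)^{2p})$ using the truncation built into $h$ and Lemma~\ref{lemma:bounded by 1}, then combine with the squared-loss bound from Lemma~\ref{lem2:MC}. The only cosmetic difference is that you apply Cauchy--Schwarz to decouple the two factors, whereas the paper simply pulls out the uniform size bound and then applies Jensen to the remaining $\tfrac{1}{n}\sum_i|f_{\widetilde{\bTheta^*}}(\bx_i)-f^*(\bx_i)|$; since the size factor is bounded pointwise, both routes give the identical final estimate.
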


\begin{proof}
Under Assumption~\ref{assumption_loss_relaxed}, for any $z,z',y\in\mathbb{R}$,
\begin{equation}
|\ell(z,y) - \ell(z',y)| \le \mathrm{L}\left(1 + |z| + |z'|+\abs{y}\right)|z - z'|.
\end{equation}
Combining with the above Lemma, we have 
\begin{align}
\widehat{\cL}(\widetilde{\bTheta^*})&=\frac{1}{n}\sum_{i=1}^n\ell\left(f_{\widetilde{\bTheta^*}}(\bx_i),f^{\star}(\bx_i)\right)=\frac{1}{n}\sum_{i=1}^n \big(\ell\left(f_{\widetilde{\bTheta^*}}(\bx_i),f^{\star}(\bx_i)\right)-\ell\left(f^{\star}(\bx_i),f^{\star}(\bx_i)\right)\big)\nonumber\\
&\lesssim \frac{1}{n} \sum_{i=1}^n \mathrm{L}\left(1 + |f_{\widetilde{\bTheta^*}}(\bx_i)| + |f^{\star}(\bx_i)|\right)|f_{\widetilde{\bTheta^*}}(\bx_i) - f^{\star}(\bx_i)|
\end{align}
Furthermore, by the same argument as in Lemma~\ref{lemma:bounded by 1}, we have \(\bigl|(\bw_j^{(\mathrm{T})})^\sT\bx\bigr|\lesssim 1\) for all \(j\), thus we have
\begin{equation}
\norm{\sigma\!\bigl(\widetilde{\bW}^{(\mathrm{T_1})}\bx+\bb\bigr)}\lesssim \sqrt{m}
\end{equation}
\begin{equation}
|f_{\widetilde{\bTheta^*}}(\bx_i)|=\ba^{*}\sigma\!\bigl(\widetilde{\bW}^{(\mathrm{T_1})}\bx+\bb\bigr)\lesssim U\sqrt{m}.
\end{equation}
Also note that $U\lesssim \sqrt{ \frac{(\kappa^{\mathrm{T}_1})^{4p}(\log d)^{4p}}{m}}$  and \(|f^{\star}(\bx_i)|\lesssim(\log d)^p\).
Thus by Jensen's inequality and the Lipschitz property of $\ell$, we have\begin{align}
\widehat{\cL}(\widetilde{\bTheta^*})
&\le \frac{1}{n} \sum_{i=1}^n \mathrm{L}\left(1 + |f_{\widetilde{\bTheta^*}}(\bx_i)| + |f^{\star}(\bx_i)|\right)|f_{\widetilde{\bTheta^*}}(\bx_i) - f^{\star}(\bx_i)|\\
& \lesssim \mathrm{L}(\kappa^{\mathrm{T}_1})^{2p}(\log d)^{2p}\frac{1}{n} \left(\sum_{i=1}^n|f_{\widetilde{\bTheta^*}}(\bx_i) - f^{\star}(\bx_i)|\right)\lesssim\mathrm{L}_{\sf aug}
\sqrt{\frac{ \kappa^{4p\mathrm{T}_1} (\log d)^{4p+2}}{m}}.
\end{align}

The proof is complete.
\end{proof}

Next, we consider the gap between the real features we learn \({\bW}^{(\mathrm{T}_1)}\) and the approximate features \(\widetilde{\bW}^{(\mathrm{T}_1)}\). Recall $\widetilde{\bTheta^*}=(\ba^*,\bb^{(\mathrm{T}_1)},\widetilde{\bW}^{(\mathrm{T}_1)})$ and \({\bTheta^*}=(\ba^*,\bb^{(\mathrm{T}_1)},{\bW}^{(\mathrm{T}_1)})\).

\begin{lemma}
\label{lhatgap2}

Under the sample-size condition
$n \gtrsim \mathrm{T}_1^2 d\log d\kappa^{2\mathrm{T}_1} + d^{1 + 1/\mathrm{T}_1}\kappa^2$, also $\mathrm{T_1} = o(\log d)$ and under the event \(\mathcal{E}_{\mu}\), we have
\begin{equation}
\label{eq:final-gen}
\left|\widehat{\mathcal{L}}(\bTheta^*)-\widehat{\mathcal{L}}(\widetilde{\bTheta^*})\right|
\lesssim \mathrm{L}_{\sf aug}\sqrt{\frac{(\kappa^{\mathrm{T}_1})^{4p} (\log d)^{4p}}{m}}.
\end{equation}
\end{lemma}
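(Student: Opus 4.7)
The plan is to mirror the argument used for Lemma~\ref{lhatgap} in the bounded-derivative setting, while upgrading the Lipschitz step to accommodate the relaxed loss in Assumption~\ref{assumption_loss_relaxed}. First I would write
\begin{equation}
\bigl|\widehat{\cL}(\bTheta^*)-\widehat{\cL}(\widetilde{\bTheta^*})\bigr|
\le \frac{1}{n}\sum_{i=n+1}^{2n} \bigl|\ell(f_{\bTheta^*}(\bx_i),y_i) - \ell(f_{\widetilde{\bTheta^*}}(\bx_i),y_i)\bigr|,
\end{equation}
and then apply the mean-value theorem together with Assumption~\ref{assumption_loss_relaxed} to obtain
\begin{equation}
\bigl|\ell(z,y)-\ell(z',y)\bigr| \le \mathrm{L}\bigl(1+|z|+|z'|+|y|\bigr)\,|z-z'|.
\end{equation}
This reduces the task to two subtasks: controlling the envelope $1+|f_{\bTheta^*}(\bx_i)|+|f_{\widetilde{\bTheta^*}}(\bx_i)|+|y_i|$ uniformly over $i$, and controlling the pointwise discrepancy $|f_{\bTheta^*}(\bx_i)-f_{\widetilde{\bTheta^*}}(\bx_i)|$.

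For the envelope, I would recycle the bounds established inside the proof of Lemma~\ref{2from 2loss to lloss}: on the event $\mathcal{E}_\mu$ we have $|f_{\widetilde{\bTheta^*}}(\bx_i)|\lesssim U\sqrt{m}$ and, by the same argument applied to the true features (which satisfy the analogous $|\bw_j^{(\mathrm{T}_1)\sT}\bx_i|\lesssim 1$ bound via Lemma~\ref{lemma:bounded by 1}), the same estimate for $|f_{\bTheta^*}(\bx_i)|$. Combined with the polynomial bound $|y_i|=|f^*(\bx_i)|\lesssim(\log d)^p$ on $\mathcal{E}_\zeta$ and the definition $U\sqrt{m}\lesssim \kappa^{2p\mathrm{T}_1}(\log d)^{2p}$, the envelope is dominated by $\mathrm{L}_{\sf aug}/\mathrm{L}=\kappa^{2p\mathrm{T}_1}(\log d)^{2p}$.

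For the pointwise discrepancy I would decompose
\begin{equation}
f_{\bTheta^*}(\bx_i)-f_{\widetilde{\bTheta^*}}(\bx_i)=\sum_{j=1}^{m} a_j^{*}\!\left[\sigma(\bw_j^{(\mathrm{T}_1)\sT}\bx_i+b_j)-\sigma(\widetilde{\bw}_j^{(\mathrm{T}_1)\sT}\bx_i+b_j)\right]
\end{equation}
and invoke the local Lipschitz property of $\sigma$ on the compact set where both activation arguments lie (guaranteed by Lemma~\ref{lemma:bounded by 1}), turning each summand into a factor $\|\bw_j^{(\mathrm{T}_1)}-\widetilde{\bw}_j^{(\mathrm{T}_1)}\|\,\|\bx_i\|$. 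The feature-closeness bound from Lemma~\ref{lemma:total_perturbation_error}, combined with $\|\ba^*\|\lesssim U$, Cauchy-Schwarz over $j$, and $\|\bx_i\|\lesssim\sqrt{d}$ on $\mathcal{E}_\zeta$, reproduces the same $\sqrt{(\kappa^{\mathrm{T}_1})^{4p}(\log d)^{4p}/m}$-type bound as in Lemma~\ref{lhatgap}.

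Multiplying the envelope factor $\mathrm{L}_{\sf aug}/\mathrm{L}$ against the Lipschitz-times-feature-gap term and averaging over $i$ yields exactly the claimed inequality. I expect the main obstacle to be verifying that the extra envelope factor combines cleanly with the feature-gap bound without leaking additional factors of $V_{p,\epsilon}$; this is why the assumption $\beta=0$ is essential, so that the weight function $h$ and hence $\ba^*$ remain $\cO(1)$-bounded in the relevant norm, allowing the envelope inflation to be precisely $\kappa^{2p\mathrm{T}_1}(\log d)^{2p}$ rather than something larger that would spoil the eventual balancing of error terms.
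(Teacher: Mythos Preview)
Your proposal is correct and matches the paper's approach exactly: the paper omits the proof and simply states that ``the only change is to change $\mathrm{L}$ to $\mathrm{L}_{\sf aug}$,'' which is precisely the envelope-inflation mechanism you spell out. One minor citation slip: the feature-closeness you need is $\|\bw_j^{(\mathrm{T}_1)}-\widetilde{\bw}_j^{(\mathrm{T}_1)}\|$, i.e., the gap between $\prod_t\bM^{(t)}$ and $\widehat{\bSigma}_\ell^{\mathrm{T}_1}$, which is the quantity $\Delta$ from Lemma~\ref{lemma:Q_t_not_important} (as used in the original Lemma~\ref{lhatgap}), not Lemma~\ref{lemma:total_perturbation_error} (which compares to the population $\bSigma_\ell^{\mathrm{T}_1}$); the required bound and the rest of the argument are unaffected.
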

We omit the proof here, since the only change is to change $\mathrm{L}$ to $\mathrm{L}_{\sf aug}$.

Combining the above two lemmas, we directly get the bound for \(\widehat{\mathcal{L}}(f_{\bTheta^*})\).
\begin{lemma}
\label{lhat2}

Under the same condition as Lemma~\ref{2from 2loss to lloss}, we have
$
\widehat{\mathcal{L}}(f_{\bTheta^*})
\lesssim  \mathrm{L}_{\sf aug}\sqrt{\frac{(\kappa^{\mathrm{T}_1})^{4p} (\log d)^{4p+2}}{m}}.$  
\end{lemma}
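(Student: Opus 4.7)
The plan is to combine the two preceding lemmas (Lemma~\ref{2from 2loss to lloss} and Lemma~\ref{lhatgap2}) by a triangle inequality on the empirical risk. Concretely, since both $\widetilde{\bTheta^*}$ and $\bTheta^*$ share the same second-layer weights $\ba^*$ and biases $\bb^{(\mathrm{T}_1)}$ and differ only in their first-layer weights (the decoupled $\widetilde{\bW}^{(\mathrm{T}_1)}$ versus the true $\bW^{(\mathrm{T}_1)}$), I would write
\begin{equation}
\widehat{\mathcal{L}}(f_{\bTheta^*})
\;\le\; \widehat{\mathcal{L}}(f_{\widetilde{\bTheta^*}})
\;+\; \bigl|\widehat{\mathcal{L}}(f_{\bTheta^*}) - \widehat{\mathcal{L}}(f_{\widetilde{\bTheta^*}})\bigr|.
\end{equation}

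For the first term, Lemma~\ref{2from 2loss to lloss} (which is the relaxed-assumption analog of Lemma~\ref{from 2loss to lloss}, with $\mathrm{L}$ replaced by $\mathrm{L}_{\sf aug}=\mathrm{L}\kappa^{2p\mathrm{T}_1}(\log d)^{2p}$ to absorb the linear growth allowance in Assumption~\ref{assumption_loss_relaxed}) immediately gives
\begin{equation}
\widehat{\mathcal{L}}(f_{\widetilde{\bTheta^*}})
\;\lesssim\; \mathrm{L}_{\sf aug}\,\sqrt{\frac{\kappa^{4p\mathrm{T}_1}(\log d)^{4p+2}}{m}}.
\end{equation}
For the second term, Lemma~\ref{lhatgap2} controls the discrepancy between $\widehat{\mathcal{L}}(f_{\bTheta^*})$ and $\widehat{\mathcal{L}}(f_{\widetilde{\bTheta^*}})$ by the slightly smaller quantity $\mathrm{L}_{\sf aug}\sqrt{\kappa^{4p\mathrm{T}_1}(\log d)^{4p}/m}$, which is dominated by the bound above (differing only by a $\sqrt{\log d}$ factor).

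Summing the two bounds and noting that all assumed high-probability events (the event $\mathcal{E}_{\mu}$ and the sample-size condition $n\gtrsim \mathrm{T}_1^2 d\log d\,\kappa^{2\mathrm{T}_1}+d^{1+1/\mathrm{T}_1}\kappa^2$) are already invoked identically in both referenced lemmas, no additional probabilistic step is required. There is no real obstacle here: the proof is a one-line combination once the two companion lemmas are in place, and the main substantive work has already been done in establishing (i) the squared-loss-to-general-loss transfer under Assumption~\ref{assumption_loss_relaxed} (handled by replacing $\mathrm{L}$ by $\mathrm{L}_{\sf aug}$ to accommodate the $(1+|t|+|y|)$ growth of $\partial_t \ell$), and (ii) the stability of $\widehat{\mathcal{L}}$ under the feature perturbation $\bW^{(\mathrm{T}_1)}-\widetilde{\bW}^{(\mathrm{T}_1)}$, which is controlled via Lemma~\ref{lemma:total_perturbation_error} and the uniform bound $\|\bx_i\|\le 2\sqrt{d}$ on $\mathcal{E}_{\zeta}$. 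The only point worth double-checking is that the prefactor $\mathrm{L}_{\sf aug}$ in Lemma~\ref{lhatgap2} indeed dominates the naive $\mathrm{L}$ one would get from a pure Lipschitz bound, since here $|f_{\bTheta^*}(\bx)|$ and $|f_{\widetilde{\bTheta^*}}(\bx)|$ can grow polylogarithmically; this is exactly the role of the $\kappa^{2p\mathrm{T}_1}(\log d)^{2p}$ factor absorbed into $\mathrm{L}_{\sf aug}$.
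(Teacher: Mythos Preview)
Your proposal is correct and takes essentially the same approach as the paper: the paper simply states ``Combining the above two lemmas, we directly get the bound for $\widehat{\mathcal{L}}(f_{\bTheta^*})$,'' referring to Lemma~\ref{2from 2loss to lloss} and Lemma~\ref{lhatgap2}, which is precisely the triangle-inequality combination you wrote out.
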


We now analyze the parameter output by the second-stage procedure,
\begin{equation}
\bTheta^{(\mathrm{T})} =(\ba^{(\mathrm{T})},\bb^{(\mathrm{T}_1)},\bW^{(\mathrm{T}_1)})
\qquad \mathrm{T} = \mathrm{T}_1 + \mathrm{T}_2
\end{equation}
in the new setting.
Define 
$\label{JU2}
J'= \mathrm{L}_{\sf aug}
 \sqrt{\frac{(\kappa^{\mathrm{T}_1})^{4p}(\log d)^{4p+2}}{m}}
$ and  $U'=\sqrt{ \frac{(\kappa^{\mathrm{T}_1})^{4p}(\log d)^{4p}}{m}}$ .
This is similar to the above section, but simpler under the new setting. 

\begin{lemma}\label{lem2:stage2-ridge2}

We consider doing gradient descent on the ridge-regularized empirical loss
\begin{equation}
F(\ba):=\widehat{\mathcal L}\big((\ba,\bb^{(\mathrm{T}_1)},\bW^{(\mathrm{T}_1)})\big)
+\frac{\beta_2}{2}\|\ba\|^2
\end{equation}
with the hyperparameters  $\beta_2 := \frac{J'}{U'^2}$ and $\eta_2 = \frac{U'^2}{\mathrm{L}{m}U'^2+J' }$.

Then for any \(
\mathrm{T}_2 \gtrsim  \frac{{m}}{\log d} \log(m\kappa^T\log d)
\) also under the event \(\mathcal{E}_{\mu}\),
our second stage algorithm returns $\ba^{(\mathrm{T})}$ such that for
$\bTheta^{(\mathrm{T})}:=(\ba^{(\mathrm{T})},\bb^{(\mathrm{T}_1)},\bW^{(\mathrm{T}_1)})$
\begin{equation}
\widehat{\mathcal L}\!\big(\bTheta^{(\mathrm{T})}\big)\ \lesssim\ J'
\quad \text{and}\quad
\|\ba^{(\mathrm{T})}\|\ \lesssim\ U'.
\end{equation}

\end{lemma}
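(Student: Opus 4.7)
The approach is to treat the second-stage objective as a $\beta_2$-strongly convex, $L$-smooth function of $\ba$ alone (with $\bb^{(\mathrm{T}_1)}$ and $\bW^{(\mathrm{T}_1)}$ frozen) and apply the standard contractive analysis of gradient descent, paralleling the argument for Lemma \ref{lem:stage2-ridge}. Convexity in $\ba$ holds because $f_\bTheta(\bx)$ is linear in $\ba$ and $\ell(\cdot,y)$ is convex under Assumption~\ref{assumption_loss_relaxed}; the ridge term supplies the $\beta_2$ of strong convexity. For smoothness, the freshly drawn biases $|b_j|\le 3$ combined with Lemma \ref{lemma:bounded by 1}'s estimate $|(\bw_j^{(\mathrm{T}_1)})^\sT\bx_{i+n}|\lesssim 1$ give $\|\boldsymbol{\phi}_i\|^2\lesssim m$ uniformly, where $\boldsymbol{\phi}_i:=\sigma(\bW^{(\mathrm{T}_1)}\bx_{i+n}+\bb^{(\mathrm{T}_1)})$. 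Since $|\ell''(t,y)|\le \mathrm{L}$ is preserved by Assumption~\ref{assumption_loss_relaxed}, the Hessian $\nabla^2_{\ba}F=\tfrac{1}{n}\sum_i \ell''(f_\bTheta(\bx_{i+n}),y_i)\boldsymbol{\phi}_i\boldsymbol{\phi}_i^\sT+\beta_2\bI$ satisfies $\|\nabla^2_{\ba}F\|_{\mathrm{op}}\lesssim \mathrm{L}m+\beta_2$; the stated $\eta_2$ is essentially the reciprocal of this bound.

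Next, I would use Lemma \ref{lhat2} to produce a reference point $\ba^{\mathrm{ref}}:=\ba^*$ with $\widehat{\cL}(\bTheta^{\mathrm{ref}})\lesssim J'$ and $\|\ba^{\mathrm{ref}}\|^2\lesssim U'^2$, yielding $F(\ba^{\mathrm{ref}})\le\widehat{\cL}(\bTheta^{\mathrm{ref}})+(\beta_2/2)\|\ba^{\mathrm{ref}}\|^2\lesssim J'+\beta_2 U'^2\lesssim J'$ by the choice $\beta_2=J'/U'^2$. Consequently, the unique minimizer $\ba^{\mathrm{opt}}$ of $F$ satisfies $F(\ba^{\mathrm{opt}})\le F(\ba^{\mathrm{ref}})\lesssim J'$, and using $\widehat{\cL}\ge 0$ in the definition of $F$ together with strong convexity gives $(\beta_2/2)\|\ba^{\mathrm{opt}}\|^2\le F(\ba^{\mathrm{opt}})$, so $\|\ba^{\mathrm{opt}}\|^2\lesssim J'/\beta_2=U'^2$.

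Standard gradient descent contraction on a $\beta_2$-strongly convex, $L$-smooth objective with step $\eta_2\le 1/L$ then yields $F(\ba^{(t+1)})-F(\ba^{\mathrm{opt}})\le (1-\eta_2\beta_2)\bigl(F(\ba^{(t)})-F(\ba^{\mathrm{opt}})\bigr)$. The initial value $F(\ba^{(\mathrm{T}_1)})$ is polynomial in $m$: $\ba^{(\mathrm{T}_1)}=\ba^{(0)}$ is Rademacher with $\|\ba^{(0)}\|^2=m$, so $|f_{\bTheta^{(\mathrm{T}_1)}}(\bx_{i+n})|\lesssim m$, and integrating the relaxed Lipschitz bound of Assumption~\ref{assumption_loss_relaxed} yields $\widehat{\cL}(\bTheta^{(\mathrm{T}_1)})\lesssim \mathrm{L}(m^2+(\log d)^{2p})$. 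Iterating the contraction $\mathrm{T}_2$ times and demanding $(1-\eta_2\beta_2)^{\mathrm{T}_2}F(\ba^{(\mathrm{T}_1)})\lesssim J'$ reduces to $\mathrm{T}_2\,\eta_2\beta_2 \gtrsim \log(F(\ba^{(\mathrm{T}_1)})/J')$. Plugging in $\eta_2\beta_2=J'/(\mathrm{L}mU'^2+J')$ and the explicit forms of $J',U',\mathrm{L}_{\sf aug}$ produces the stated threshold on $\mathrm{T}_2$, after which $\widehat{\cL}(\bTheta^{(\mathrm{T})})\le F(\ba^{(\mathrm{T})})\lesssim J'$; one last invocation of strong convexity gives $\|\ba^{(\mathrm{T})}-\ba^{\mathrm{opt}}\|^2\lesssim J'/\beta_2=U'^2$, hence $\|\ba^{(\mathrm{T})}\|\lesssim U'$.

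The main obstacle is the bookkeeping in the final step: one must track how $\mathrm{L}_{\sf aug}$ propagates through $J'$ and $U'$ and how the relaxed growth condition on $\ell'$ inflates the initial loss $F(\ba^{(\mathrm{T}_1)})$ from the $O(1)$ regime of Lemma \ref{lem:stage2-ridge} up to a polynomial in $m$. This is precisely what produces the extra factor in the iteration count relative to $U^2\mathrm{L}m/J$ in the bounded-$\ell'$ version, and it must be carried carefully since it propagates into the final time-complexity estimate of the analog of Lemma \ref{lem:T2}.
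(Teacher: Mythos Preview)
Your proposal is correct and follows essentially the same approach as the paper: both arguments establish $\beta_2$-strong convexity from the ridge term and $O(\mathrm{L}m+\beta_2)$-smoothness from the uniform feature bound $\|\boldsymbol{\phi}_i\|^2\lesssim m$, compare against the constructed $\ba^*$ from Lemma~\ref{lhat2} to control $F(\ba^{\mathrm{opt}})$ and $\|\ba^{\mathrm{opt}}\|$, and then invoke standard strongly-convex gradient descent convergence. The only cosmetic difference is that the paper tracks iterate distance $\|\ba^{(\mathrm{T})}-\ba^{(\infty)}\|$ (setting a tolerance $\epsilon_0=1/(m\mathrm{L}_{\sf aug})$ and converting to a function-value gap via an $\mathrm{L}_{\sf aug}$-Lipschitz estimate), whereas you track the function-value gap $F(\ba^{(t)})-F(\ba^{\mathrm{opt}})$ directly; the two are equivalent through strong convexity and lead to the same $\mathrm{T}_2$ threshold up to constants.
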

\begin{proof}

Consider the training objective
$
F(\ba)=\widehat{\mathcal{L}} \left( \left( \ba, \bb^{(\mathrm{T}_1)}, \bW^{(\mathrm{T}_1)} \right) \right) + \frac{1}{2}\beta_2\|\ba\|^2
$ with $\beta_2 = \frac{J'}{U'^2}$.

Let
\begin{equation}
\ba^{(\infty)} := \arg\min_{\ba\in\mathbb{R}^m} F(\ba)
\end{equation}
be the  minimizer of $F$.

We further denote \(\epsilon_0\) is the prescribed optimization tolerance
$\|\ba^{\mathrm{(T)}}-\ba^{(\infty)}\|\le\epsilon_{0}$.  Furthermore, let \(\epsilon_0=\frac{1}{m\mathrm{L}_{\sf aug}}\).

By optimality of $\ba^{(\infty)}$ and the \emph{constructed} comparator $\ba^\ast$ we have
\begin{equation}\label{eq:F-ainfty-le-Fastar}
F(\ba^{(\infty)}) \le F(\ba^\ast)
= \widehat{\cL}(\ba^\ast) + \frac{\beta_2}{2}\|\ba^\ast\|_2^2.
\end{equation}
Here we only use that $\ba^\ast$ is a feasible point with
$\|\ba^\ast\|\lesssim U'$ and
$\widehat{\cL}(\ba^\ast)\lesssim J$' (Lemma~\ref{lhat2}).

On the other hand,
\begin{equation}
F(\ba^{(\infty)})
=
\widehat{\cL}(\ba^{(\infty)}) + \frac{\beta_2}{2}\|\ba^{(\infty)}\|^2
\ge \frac{\beta_2}{2}\|\ba^{(\infty)}\|^2,
\end{equation}
so combining with \eqref{eq:F-ainfty-le-Fastar} gives
\begin{equation}
\|\ba^{(\infty)}\|^2
\le
\|\ba^\ast\|^2+\frac{2}{\beta_2}\widehat{\cL}(\ba^\ast)
\lesssim
U'^2 + \frac{2J'}{\beta_2}
=
U'^2 + \frac{2J'}{J'/U'^2}
\lesssim U'^2.
\end{equation}
Thus
\begin{equation}\label{eq:ainfty-U}
\|\ba^{(\infty)}\| \lesssim U'.
\end{equation}
and also 
\begin{equation}
\|\ba^{(\mathrm{T})}\|\le \|\ba^{(\infty)}\|+\|\ba^{\mathrm{(T)}}-\ba^{(\infty)}\|\le\epsilon_{0}+U'\lesssim U'
\end{equation}

Moreover, from \eqref{eq:F-ainfty-le-Fastar} and \eqref{eq:ainfty-U},
\begin{align}
\widehat{\cL}(\ba^{(\infty)})
&=
F(\ba^{(\infty)}) - \frac{\beta_2}{2}\|\ba^{(\infty)}\|_2^2
\le F(\ba^\ast)
\lesssim J' + \frac{\beta_2}{2}U'^2
\lesssim J'.
\label{eq:Lhat-ainfty-J}
\end{align}
Thus we have
\begin{align}
    |F(\ba^{(\mathrm{T})}) - F(\ba^{(\infty)}) |&\lesssim\sup_{i \in[n+1,2n]} \left\{\mathrm{L}_{\sf aug} \big|{\ba^{(\mathrm{T})}}^{\sT}\sigma\!\bigl(\bW^{(\mathrm{T})}\bx_i
  +\bb^{(\mathrm{T})}\bigr)-{\ba^{(\infty)}}^{\sT}\sigma\!\bigl(\bW^{(\mathrm{T})}\bx_i+\bb^{(\mathrm{T})}\bigr)\big|\right\}\nonumber
  \\&\qquad+\beta_2(\|\ba^{(\mathrm{T})}\|+\|\ba^{(\infty)}\|)\|\ba^{\mathrm{(T)}}-\ba^{(\infty)}\|\nonumber\\
    &\lesssim \mathrm{L}_{\sf aug} \|\ba^{\mathrm{(T)}}-\ba^{(\infty)}\|\cdot  \sup_{i \in[n+1,2n]}\|\sigma\!\bigl(\bW^{(\mathrm{T})}\bx_i+\bb^{(\mathrm{T})}\bigr)\|\nonumber\\
&+\beta_2(\|\ba^{(\mathrm{T})}\|+\|\ba^{(\infty)}\|)\|\ba^{\mathrm{(T)}}-\ba^{(\infty)}\|
\end{align}

By previous proof  \(\|\sigma\!\bigl(\bW^{(\mathrm{T})}\bx_i+\bb^{(\mathrm{T})}\bigr)\|_{\infty}\) can be bounded by an absolute constant for all \(i\) under event \(\mathcal{E}\).
Thus we can have a loose $\ell^2$ norm bound and
\begin{align}
     |F(\ba^{(\mathrm{T})}) - F(\ba^{(\infty)}) |&\lesssim \mathrm{L}_{\sf aug} \|\ba^{\mathrm{(T)}}-\ba^{(\infty)}\|\cdot\sup_{i \in[n+1,2n]}\|\sigma\!\bigl(\bW^{(\mathrm{T})}\bx_i+\bb^{(\mathrm{T})}\bigr)\|\nonumber\\
     &+\beta_2 (\|\ba^{(\mathrm{T})}\|+\|\ba^{(\infty)}\|)\|\ba^{\mathrm{(T)}}-\ba^{(\infty)}\|\nonumber\\
     &\lesssim \mathrm{L}_{\sf aug} \|\ba^{\mathrm{(T)}}-\ba^{(\infty)}\|\sqrt{m}+\beta_2 U'\|\ba^{\mathrm{(T)}}-\ba^{(\infty)}\|\nonumber\\
     &\lesssim (\mathrm{L}_{\sf aug}\sqrt{m}+\beta_2 U')\epsilon_0\lesssim J'
\end{align}
Combining with the same argument as above, 

\begin{align}
\widehat{\cL}(\ba^{(\mathrm{T})})
\lesssim J'.
\label{eq2:Lhat-ainfty-J}
\end{align}

By Assumption~\ref{assumption_loss_relaxed} on the loss function \( \ell \), the function \( F(\ba) -\frac{\beta_2}{2} \|\ba\|^2 \) is convex and satisfies the gradient Lipschitz condition

By the similar arguments as in the proof of Lemma~\ref{lem:stage2-ridge}
, the lip constant for $F$ satisfy \(L_{\text{lip}}\lesssim \mathrm{L}{m}+J'/U'^2\).

Therefore, by Lemma~\ref{lemmarate}, if the step size \( \eta_2 \le \frac{1}{L_{\text{lip}} } \), gradient descent will approximate \( \ba^{(\infty)} \) to arbitrary accuracy in
\begin{equation}
\mathrm{T}_2 \gtrsim
\frac{1}{\eta_2\beta_2 }
\log\!\Big(\|\ba^{\mathrm{(T_1)}}-\ba^{(\infty)}\|^2/\epsilon_{0}^2\Big)
\end{equation}
steps.
Note that when the step size \( \eta_2 = \frac{1}{\mathrm{L}{m}+J'/U'^2} \)

\begin{equation}
\frac{1}{\eta_2 \beta_2} \lesssim  \frac{U'^2 \mathrm{L} m}{J'}
\end{equation}
Using the assumption that \( m \lesssim d^{\frac{D}{4}} \) and \(\|\ba^{(\infty)}\|\lesssim U , \|\ba^{(\mathrm{T_1})}\|\lesssim1 \), we can further simplify:

\begin{equation}
\frac{1}{\eta_2 \beta_2 } \log \left( \frac{\| \ba^{(\mathrm{T_1})} - \ba^{(\infty)} \|^2}{\epsilon_{0}^2} \right) \lesssim \frac{U'^2 \mathrm{L} m}{J'} \log (U'm  \mathrm{L}_{\sf aug}) \lesssim \frac{{m}}{\log d} \log(m\kappa^T\log d)
\end{equation}
Thus, we conclude that gradient descent will approximate \( \ba^{(\infty)} \) to arbitrary accuracy in at most

\begin{equation}
\mathrm{T}_2 \gtrsim  \frac{{m}}{\log d} \log(m\kappa^T\log d)
\end{equation}
steps. The proof is complete.
\end{proof}


The following lemma bounds the gap between empirical loss and population loss.

\begin{lemma}
\label{rcomplex2}

Under the event \(\mathcal{E}\), where \(\mathcal{E}\) is defined in Defination~\ref{def:Emu} and the sample-size condition
$n \gtrsim \mathrm{T}_1^2 d\log d\kappa^{2\mathrm{T}_1} + d^{1 + 1/\mathrm{T}_1}\kappa^2$, also $\mathrm{T_1} = o(\log d)$ we have
\begin{equation}
\left|\mathcal{L}(f_{\bTheta^{(\mathrm{T})}})-\widehat{\mathcal{L}}(f_{\bTheta^{(\mathrm{T})}})\right|\lesssim \mathrm{L}_{\sf aug}\sqrt{\frac{(\kappa^{\mathrm{T}_1})^{4p}(\log d)^{4p+1}}{n}}.
\end{equation}

\end{lemma}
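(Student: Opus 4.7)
The plan is to mirror the proof of Lemma~\ref{rcomplex}, but track carefully the modified effective Lipschitz constant induced by Assumption~\ref{assumption_loss_relaxed}. The key observation is that on the event $\mathcal{E}$, even though $\partial_t \ell$ grows linearly, we can bound $|f_{\bTheta^{(\mathrm{T})}}(\bx)|$ uniformly on a typical input region, which converts the relaxed assumption into an effective global Lipschitz condition with constant $\mathrm{L}_{\sf aug} = \mathrm{L}\kappa^{2p\mathrm{T}_1}(\log d)^{2p}$. This turns the problem into a standard Rademacher-complexity argument where $\mathrm{L}$ is simply replaced by $\mathrm{L}_{\sf aug}$.

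First I would restrict attention to the parameter class
$\mathcal{F} := \{f_{(\ba,\bb^{(\mathrm{T}_1)},\bW^{(\mathrm{T}_1)})} : \|\ba\|\leq CU'\}$
with $(\bW^{(\mathrm{T}_1)},\bb^{(\mathrm{T}_1)})$ frozen from the first stage. By Lemma~\ref{lem2:stage2-ridge2} we have $\bTheta^{(\mathrm{T})}\in\mathcal{F}$. By Lemma~\ref{lemma:bounded by 1} and the argument used in Lemma~\ref{2from 2loss to lloss}, for any $\bx$ with $\|\bx\|\leq 2\sqrt{d}$ (which holds on $\mathcal{E}_\zeta$ for the training points, and holds with Gaussian probability $1 - e^{-\Omega(d)}$ for a fresh $\bx$), Cauchy--Schwarz combined with $\|\sigma(\bW^{(\mathrm{T}_1)}\bx+\bb^{(\mathrm{T}_1)})\|\lesssim\sqrt{m}$ yields $|f_\bTheta(\bx)|\lesssim U'\sqrt{m} \lesssim \kappa^{2p\mathrm{T}_1}(\log d)^{2p}$. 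A parallel bound holds for $|f^{\star}(\bx)|\lesssim(\log d)^{p}$ under $\mathcal{E}_\zeta$. Consequently, Assumption~\ref{assumption_loss_relaxed} gives an effective Lipschitz constant for $\ell(\cdot,y)$ of order $\mathrm{L}(1+|f_\bTheta(\bx)|+|f^\star(\bx)|)\lesssim \mathrm{L}_{\sf aug}$ on this region, uniformly over $\bTheta\in\mathcal{F}$.

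Next I would split both the population and empirical losses into a typical-input part $\{\|\bx\|\leq C\sqrt{d\log d}\}$ and a tail part. On the typical-input part, the standard Rademacher-complexity machinery (Ledoux--Talagrand contraction applied with constant $\mathrm{L}_{\sf aug}$, followed by a vector-contraction bound for the two-layer network with second-layer norm bound $\|\ba\|\lesssim U'$ and bounded hidden activations) yields a uniform deviation of order
\begin{equation*}
\mathrm{L}_{\sf aug}\cdot\frac{U'\sqrt{m}\sqrt{\log d}}{\sqrt{n}}
\;\lesssim\;\mathrm{L}_{\sf aug}\sqrt{\frac{\kappa^{4p\mathrm{T}_1}(\log d)^{4p+1}}{n}},
\end{equation*}
using $U'\sqrt{m}=\kappa^{2p\mathrm{T}_1}(\log d)^{2p}$, which is exactly the claimed bound. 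For the tail part, both $|f_\bTheta(\bx)|$ and $|f^\star(\bx)|$ grow at most polynomially in $\|\bx\|$ (the former because $\sigma^{(3)}$ is bounded, the latter because $g^\star$ is a polynomial of degree $p$), so $\ell(f_\bTheta(\bx),f^\star(\bx))$ grows at most polynomially, and the Gaussian tail probability $\Prb(\|\bx\|\geq C\sqrt{d\log d})\leq d^{-D}$ for sufficiently large $C$ makes this contribution negligible compared to the main term.

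The main obstacle is the circularity between the class $\mathcal{F}$ and the effective Lipschitz constant: the a priori bound $|f_\bTheta|\lesssim U'\sqrt{m}$ that legitimizes the replacement of Assumption~\ref{assumption_loss_relaxed} by an effective Lipschitz constant must hold uniformly over every element of $\mathcal{F}$, not just for the specific trained parameter $\bTheta^{(\mathrm{T})}$. This is actually benign: the bound on $|f_\bTheta|$ depends only on $\|\ba\|\leq CU'$ and on $\bW^{(\mathrm{T}_1)},\bb^{(\mathrm{T}_1)}$, both of which are properties of the class and not of the sample, so $\mathrm{L}_{\sf aug}$ factors cleanly out of the contraction step. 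Once this uniformity is established, the remainder of the argument is the standard empirical-process template and produces precisely the stated bound.
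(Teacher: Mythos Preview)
Your high-level strategy matches the paper's exactly: truncate to a typical set, observe that on this set Assumption~\ref{assumption_loss_relaxed} yields an effective Lipschitz constant of order $\mathrm{L}_{\sf aug}$, then apply contraction and the Rademacher bound from Lemma~\ref{rcb}, and finally dispose of the tail by polynomial growth versus Gaussian decay.

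There is one concrete gap. Your truncation set $\{\|\bx\|\le 2\sqrt{d}\}$ (or $\{\|\bx\|\le C\sqrt{d\log d}\}$) does \emph{not} imply $\|\sigma(\bW^{(\mathrm{T}_1)}\bx+\bb^{(\mathrm{T}_1)})\|\lesssim\sqrt{m}$. Lemma~\ref{lemma:bounded by 1} and the argument of Lemma~\ref{2from 2loss to lloss} bound $|(\bw_j^{(\mathrm{T}_1)})^\sT\bx_i|\lesssim 1$ only for the training points $\bx_{n+1},\dots,\bx_{2n}$, because this relies on the event $\mathcal{E}_\eta$ inside $\mathcal{E}_\mu$, which controls the individual inner products $|\bh_n(\bw_j^{(0)})^\sT\bx_i|$. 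For a fresh $\bx$ with $\|\bx\|\le 2\sqrt{d}$, Cauchy--Schwarz with the available bound $\|\bw_j^{(\mathrm{T}_1)}\|\lesssim 1$ only gives $|(\bw_j^{(\mathrm{T}_1)})^\sT\bx|\lesssim\sqrt{d}$, so $\sigma$ can be polynomially large and the claimed bound $|f_\bTheta(\bx)|\lesssim U'\sqrt{m}$ fails. Without this bound, you cannot extract the effective Lipschitz constant $\mathrm{L}_{\sf aug}$ for the contraction step.

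The paper avoids this by reusing the feature-tailored truncation event $\mathcal{M}$ from the proof of Lemma~\ref{rcomplex}, namely
\[
\mathcal{M}=\Big(\bigcap_{j\le m/2}\big\{|\bh_n(\bw_j^{(0)})^\sT\bx|\le \|\bh_n(\bw_j^{(0)})\|\sqrt{4D\log d}\big\}\Big)\cap\Big\{\sup_{k\in[r]}|(\bx)_k|\le\sqrt{4D\log d}\Big\}.
\]
On $\mathcal{M}$ one gets $|(\bw_j^{(\mathrm{T}_1)})^\sT\bx|\lesssim 1$ for every neuron (exactly mirroring $\mathcal{E}_\eta$ for a fresh $\bx$), hence $\|\sigma(\bW^{(\mathrm{T}_1)}\bx+\bb^{(\mathrm{T}_1)})\|\lesssim\sqrt{m}$ and $|f_\bTheta(\bx)|\lesssim U'\sqrt{m}$ uniformly over $\mathcal{F}$. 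Since $\mathbb{P}(\bx\notin\mathcal{M})\lesssim d^{-D}$, your tail argument goes through with fourth moments (the paper uses Cauchy--Schwarz on $|f|^2$ and $|f^*|^2$ against $\mathbf{1}\{\bx\notin\mathcal{M}\}$). With this corrected truncation your argument becomes exactly the paper's.
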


\begin{proof}

    Use the same argument as the proof of Lemma~\ref{rcomplex} in Appendix~\ref{proof: lemma:rcomplex} and use the same truncation, we can get 
    \begin{equation}
\sup_{f\in\mathcal{F}}\bigl|\widehat{\mathcal{L}}^{\mathcal{M}}(f)-\mathcal{L}^{\mathcal{M}}(f)\bigr|
\lesssim
 \mathrm{L}_{\sf aug}U'\sqrt{\frac{m\log d}{n}}.
\end{equation}
 We notice that under the event \(\mathcal{E}_{\mu}\), \(\mathcal{M}\) holds for all \(\bx_i\), \(i\in[n+1,2n]\), thus $\widehat{\mathcal{L}}^{\mathcal{M}}(f)=\widehat{\mathcal{L}}(f).$
 
For the other term, we have
\begin{align}
\mathcal{L}^{\mathcal{M}}(f)-\mathcal{L}(f)
&=\mathbb{E}\bigl[\ell(f(\bx),y)\bigr]-\mathbb{E}\bigl[\ell^{\mathcal{M}}(f(\bx),y;\bx)\bigr]=\mathbb{E}\bigl[\ell(f(\bx),y)\mathbf{1}\{\bx\notin\mathcal{M}\}\bigr]\nonumber\\
&\le \mathrm{L}\mathbb{E}\bigl[(1+\abs{f(\bx)}+\abs{f^*(\bx)})\abs{f(\bx)-f^*(\bx)}\mathbf{1}\{\bx\notin\mathcal{M}\}\bigr]\nonumber\\
&\lesssim \mathrm{L}\Bigl(\mathbb{E}\bigl[|f(\bx)|^2\mathbf{1}\{\bx\notin\mathcal{M}\}\bigr]
+\mathbb{E}\bigl[|f^*(\bx)|^2\mathbf{1}\{\bx\notin\mathcal{M}\}\bigr]\Bigr)\nonumber\\
&\lesssim \mathrm{L}\Bigl(\sqrt{\mathbb{E}\bigl[f(\bx)^4\mathbb{P}\{\bx\notin\mathcal{M}\}\bigr]}
+\sqrt{\mathbb{E}\bigl[(f^*(\bx))^4\mathbb{P}\{\bx\notin\mathcal{M}\}\bigr]}\Bigr).
\end{align}
Recall that it has been assumed that the activation function has bounded third derivatives as in Assumption~\ref{assumption_activation}. Therefore, there exists an absolute constant \(C_\sigma>0\) such that
\begin{equation}
|\sigma(u)|\le C_\sigma(1+|u|^{3})\qquad \text{for all }u\in\mathbb{R}.
\end{equation}
Via a crude moment estimation bound (plus some hypercontractivity arguments), we can see the right side is $\lesssim \mathrm{L}U'd^{-\frac{D}{8}}$.
Thus, we have the following final estimation
\begin{equation}
\sup_{f\in\mathcal{F}}\bigl|\widehat{\mathcal{L}}(f)-\mathcal{L}(f)\bigr|
\lesssim
\mathrm{L}_{\sf aug}U'\sqrt{\frac{m\log d}{n}}
+
\mathrm{L}U'd^{-D/8}.
\end{equation}
Since we have assume $n\le d^{D/4}$, we have $
\sup_{f\in\mathcal{F}}\bigl|\widehat{\mathcal{L}}(f)-\mathcal{L}(f)\bigr|
\lesssim \mathrm{L}_{\sf aug}U'\sqrt{\frac{m\log d}{n}}+ \mathrm{L}U'd^{-\frac{D}{8}}$.
Plug in the formula for $U$ and we get our desired result.
\end{proof}
We then have the final proposition.
 \begin{proposition}\label{thm:gen-approx2}
On the event $\mathcal{E}$ and the sample-size condition
$n \gtrsim \mathrm{T}_1^2 d\log d\kappa^{2\mathrm{T}_1} + d^{1 + 1/\mathrm{T}_1}\kappa^2$, also $\mathrm{T_1} = o(\log d)$ the excess population risk satisfies
\begin{equation}
\label{eq:final-gen}
\mathcal{L}(f_{\bTheta^{(\mathrm{T})}})-\mathcal{L}(f^*)
\lesssim
\mathrm{L}_{\sf aug}\kappa^{2p\mathrm{T}_1}(\log d)^{2p+1}\!\left(
\sqrt{\frac{1}{n}}
+\sqrt{\frac{1}{m}}
\right).
\end{equation}

\end{proposition}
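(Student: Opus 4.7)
The plan is to mirror the proof of Proposition \ref{thm:gen-approx}, substituting the $\beta=0$ versions of every auxiliary lemma (Lemma \ref{lem2:MC}, \ref{2from 2loss to lloss}, \ref{lhatgap2}, \ref{lhat2}, \ref{lem2:stage2-ridge2}, \ref{rcomplex2}) that were established earlier in Appendix \ref{appendixB3}. Since $\beta=0$, the monomial approximation can be made exact, i.e.\ $\epsilon_k=0$ for every $k\le p$ and $V_{p,\epsilon}\lesssim 1$, so the bias terms $\Delta_k$ all vanish and the only remaining error contributions come from Monte Carlo sampling over the neurons and from generalization. This cleans up the bound considerably compared with the general $\beta>0$ case.

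The execution consists of the following two telescoping decompositions. First, using that $\widetilde{\bTheta}^{*}$ decouples across neurons while $\bTheta^{*}$ does not, I write
\begin{equation}
\widehat{\mathcal{L}}(f_{\bTheta^{*}})
=\bigl[\widehat{\mathcal{L}}(f_{\bTheta^{*}})-\widehat{\mathcal{L}}(f_{\widetilde{\bTheta}^{*}})\bigr]+\widehat{\mathcal{L}}(f_{\widetilde{\bTheta}^{*}})
\lesssim \mathrm{L}_{\sf aug}\sqrt{\tfrac{\kappa^{4p\mathrm{T}_1}(\log d)^{4p+2}}{m}},
\end{equation}
applying Lemma \ref{lhatgap2} to the first bracket and Lemma \ref{2from 2loss to lloss} to the second (equivalently, I may invoke Lemma \ref{lhat2} directly). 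Second, I decompose the excess risk through the trained parameter:
\begin{equation}
\mathcal{L}(f_{\bTheta^{(\mathrm{T})}})-\widehat{\mathcal{L}}(f_{\bTheta^{*}})
=\bigl[\mathcal{L}(f_{\bTheta^{(\mathrm{T})}})-\widehat{\mathcal{L}}(f_{\bTheta^{(\mathrm{T})}})\bigr]
+\bigl[\widehat{\mathcal{L}}(f_{\bTheta^{(\mathrm{T})}})-\widehat{\mathcal{L}}(f_{\bTheta^{*}})\bigr].
\end{equation}
Lemma \ref{rcomplex2} controls the first bracket by $\mathrm{L}_{\sf aug}\sqrt{\kappa^{4p\mathrm{T}_1}(\log d)^{4p+1}/n}$, while Lemma \ref{lem2:stage2-ridge2}, invoked with $\beta_2=J'/U'^{2}$ and the stated choice of $\eta_2$, yields $\widehat{\mathcal{L}}(f_{\bTheta^{(\mathrm{T})}})\lesssim J'=\mathrm{L}_{\sf aug}\sqrt{\kappa^{4p\mathrm{T}_1}(\log d)^{4p+2}/m}$ for the second bracket. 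Using $\mathcal{L}(f^{*})=0$ from Assumption \ref{assumption_loss} (via $\ell(t,y)|_{t=y}=0$), adding the two decompositions and absorbing constants gives
\begin{equation}
\mathcal{L}(f_{\bTheta^{(\mathrm{T})}})-\mathcal{L}(f^{*})
\lesssim \mathrm{L}_{\sf aug}\kappa^{2p\mathrm{T}_1}(\log d)^{2p+1}\!\left(\sqrt{\tfrac{1}{n}}+\sqrt{\tfrac{1}{m}}\right),
\end{equation}
which is the asserted bound.

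The only nontrivial point (not an obstacle, but worth verifying) is that each lemma being glued together holds on the same good event $\mathcal{E}=\mathcal{E}_{\mu}$ under the same sample complexity $n\gtrsim \mathrm{T}_1^{2}d\log d\,\kappa^{2\mathrm{T}_1}+d^{1+1/\mathrm{T}_1}\kappa^{2}$ and same time horizon constraint $\mathrm{T}_1=o(\log d)$; this is already verified lemma by lemma in Appendix \ref{appendixB3}, so the composition is immediate by a single invocation of $\mathcal{E}$. The main quantitative subtlety is that under the relaxed Assumption \ref{assumption_loss_relaxed} the Lipschitz constant of $\ell$ in its first argument is inflated by a factor $(1+|t|+|y|)$; this is precisely what forces the replacement $\mathrm{L}\leadsto \mathrm{L}_{\sf aug}=\mathrm{L}\kappa^{2p\mathrm{T}_1}(\log d)^{2p}$, since on the event $\mathcal{E}_{\mu}$ the controlled magnitudes of $|f_{\bTheta}(\bx)|$ and $|f^{*}(\bx)|$ scale like $\kappa^{2p\mathrm{T}_1}(\log d)^{2p}$. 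Once this bookkeeping is carried through each of the five lemmas, no additional work is required.
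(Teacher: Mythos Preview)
Your proposal is correct and follows essentially the same approach as the paper: the proof of Proposition~\ref{thm:gen-approx2} is indeed obtained by mirroring the proof of Proposition~\ref{thm:gen-approx}, substituting the $\beta=0$ lemmas (Lemmas~\ref{lhat2}, \ref{lem2:stage2-ridge2}, \ref{rcomplex2}) and using that $\Delta_k=0$, $V_{p,\epsilon}\lesssim 1$. One minor correction: the good event is $\mathcal{E}=\mathcal{E}_{\mu}\cap\mathcal{E}_{\tau}$ (Definition~\ref{def:Emu}), not $\mathcal{E}=\mathcal{E}_{\mu}$; the Rademacher event $\mathcal{E}_{\tau}$ is needed for Lemma~\ref{rcomplex2}, though this does not affect your argument since all lemmas are stated on $\mathcal{E}$ or on subevents thereof.
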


\paragraph{Roadmap to remove $\beta=0$ assumption.}
We note that $\beta=0$ assumption is not essential and can be removed via a careful analysis. To be concrete, one can keep the $\Delta_k$ and $V_{p,\epsilon}$ terms in final generalization bound under the weakened assumption \ref{assumption_loss_relaxed}. Next, we perform another round of balancing those terms. The minimizer shall be slightly different and a bit technical troublesome, but the final result remains the same up to logarithmic factors.

\subsection{Proof of supporting lemmas in Appendix \ref{sec:appendix_B2}}

\subsubsection{Proof of Lemma \ref{lemma:moment-bound-with-failure}}\label{proof:lemma:moment-bound-with-failure}
\begin{proof}
We split the expectation over \(\bw\) into the good event \(\widetilde{\mathcal{E}_{\beta}}\) and its complement
\begin{equation}
\mathbb{E}[\|\boldsymbol{\Pi}^*\br\|^j]
= \mathbb{E}\left[\|\boldsymbol{\Pi}^*\br\|^j\bone_{\widetilde{\mathcal{E}_{\beta}}}\right]
+ \mathbb{E}\left[\|\boldsymbol{\Pi}^*\br\|^j\bone_{\neg\widetilde{\mathcal{E}_{\beta}}}\right].
\end{equation}
We first deal with the first term.

By the high-probability bound, we have
\(\|\boldsymbol{\Pi}^*\br(\bw)\| \le K\) for all \(\bw\), so
\begin{equation}
\mathbb{E}\left[\|\boldsymbol{\Pi}^*\br\|^j \bone_{\widetilde{\mathcal{E}_{\beta}}}\right]
\le K^j\mathbb{P}(\widetilde{\mathcal{E}_{\beta}})
\le K^j.
\end{equation}
Next for the second one, we only need to invoke the trivial operator norm bound.
By definition we have 
$
\left\|\widehat{\bSigma}_\ell^{\mathrm{T}_1}\right\| \le (1 + \gamma)^{\mathrm{T}_1}$.
Thus
\begin{equation}
\|\br(\bw)\| \le \|\widehat{\bSigma}_\ell^{\mathrm{T}_1}\| + \|\bSigma_\ell\|^{\mathrm{T}_1}
\le 2M
\end{equation}
Hence
\begin{equation}
\mathbb{E}\left[\|\boldsymbol{\Pi}^*\br\|^j\bone_{\neg\widetilde{\mathcal{E}_{\beta}}}\right]
\le (2M)^j\mathbb{P}(\neg\widetilde{\mathcal{E}_{\beta}})
\lesssim (2M)^j r\mathrm{T}_1d^{-D}
\end{equation}
Therefore we have  
\begin{equation}
\E\|\boldsymbol{\Pi}^*\br\|^j
\lesssim K^j + (2M)^j r\mathrm{T}_1d^{-D}
\quad\text{and}\quad
\left[\E\|\boldsymbol{\Pi}^*\br\|^j\right]^{1/j}
\lesssim
K\left[1 + \left(\frac{2M}{K}\right)^j r\mathrm{T}_1d^{-D}\right]^{1/j}
\end{equation}
To bound the right side, further note
\begin{equation}
\frac{2M}{K}
=\frac{2(1+\gamma)^{\mathrm{T}_1}}
     {\mathrm{T}_1\gamma\sqrt{2rD\log d/d}}
=\frac{2(1+\gamma)^{\mathrm{T}_1}}
{\mathrm{T}_1\gamma\sqrt{2rD\log d/d}}
\quad
\gamma\le 1/(2\mathrm{T}_1)
\end{equation}
Since \((1+\gamma)^{\mathrm{T}_1}\le e^{\mathrm{T}_1\gamma}\le\sqrt{e}\), we get
\begin{equation}
\frac{2M}{K}\lesssim \frac{1}{\mathrm{T}_1\gamma}\sqrt{\frac{d}{rD\log d}}
\lesssim \frac{1}{\mathrm{T}_1 C_H}\sqrt{\frac{n}{rD\log d}}\lesssim \sqrt{n}
\end{equation}
Thus via choosing a large enough universal constant $D$, we have for all $j \le 4p$ 
\begin{equation}
\left(\frac{2M}{K}\right)^j r\mathrm{T}_1d^{-D} \lesssim n^{2p}d^{-D} \lesssim 1
\end{equation}
and we finally obtain the desired conclusion via combining two terms.
\end{proof}

\subsubsection{Proof of Lemma \ref{lemma:sample-size-K-control}}\label{proof: lemma:sample-size-K-control}
\begin{proof}
We want to ensure $\frac{d}{\lambda_r^{2\mathrm{T}_1}} K^2 \le c^2$.
Taking square roots on both sides, we note that this is equivalent to
\begin{equation}
\frac{\sqrt{d}}{\lambda_r^{\mathrm{T}_1}}K \le c
\quad\text{and}\quad
K \le c\frac{\lambda_r^{\mathrm{T}_1}}{\sqrt{d}}.
\end{equation}
To get a sufficient condition, we bound the two terms in \(K\) separately. Recall that we set
\begin{equation}
K = C\left(\mathrm{T}_1\gamma\sqrt{\frac{rD\log d}{d}} + \gamma^{\mathrm{T}_1}\right)
\end{equation}
therefore we need 
\begin{equation}
\begin{aligned}
C\mathrm{T}_1\gamma\sqrt{\frac{rD\log d}{d}} &\le c\frac{\lambda_r^{\mathrm{T}_1}}{2\sqrt{d}} \quad\text{and}\quad
C\gamma^{\mathrm{T}_1} \le c\frac{\lambda_r^{\mathrm{T}_1}}{2\sqrt{d}}.
\end{aligned}
\end{equation}
Under these, we get \(K \le c\frac{\lambda_r^{\mathrm{T}_1}}{\sqrt{d}}\) as desired.
Note that 
\begin{equation}
\gamma = C_H \sqrt{\frac{d}{n}} \quad\text{and}\quad CC_H\mathrm{T}_1 \sqrt{\frac{d}{n}}\sqrt{\frac{rD\log d}{d}}
\lesssim \mathrm{T}_1 \sqrt{\frac{rD\log d}{n}}\end{equation}
Then, to satisfy the first one, we require
$\mathrm{T}_1 \sqrt{\frac{rD\log d}{n}}
\lesssim \frac{\lambda_r^{\mathrm{T}_1}}{\sqrt{d}}$,
which is equivalent to
\begin{equation}
n \gtrsim \frac{\mathrm{T}_1^2 rDd\log d}{\lambda_r^{2\mathrm{T}_1}}
= \mathrm{T}_1^2 rDd\log d\kappa^{2\mathrm{T}_1}
\end{equation}
For the second one, we require
$
\sqrt{\frac{d}{n}} \lesssim \left( \frac{\lambda_r^{\mathrm{T}_1}}{\sqrt{d}} \right)^{1/\mathrm{T}_1}
= \frac{\lambda_r}{(\sqrt{d})^{1/\mathrm{T}_1}}
$
which is equivalent to
\begin{equation}
n \gtrsim \frac{d^{1 + 1/\mathrm{T}_1}}{\lambda_r^2}
= d^{1 + 1/\mathrm{T}_1}\kappa^2
\end{equation}
Combining the two parts above, the final conclusion directly follows.
\end{proof}

\subsubsection{Proof of Lemma \ref{lemmma gnw}}\label{proof: lemmma gnw}
\begin{proof}
From Lemma \ref{lemma:sample-size-K-control} we know that $K \lesssim \frac{\lambda_r^{\mathrm{T}_1}}{\sqrt{d}}$.
Then on the intersection of event \(\mathcal{E}_{\alpha} \cap \widetilde{\mathcal{E}_{\beta}}\) the residual satisfies
\begin{equation}
\|\br(\bw)\| \lesssim K \lesssim \frac{\lambda_r^{\mathrm{T}_1}}{\sqrt{d}} \le \sqrt{\log d/d}.
\end{equation}
Moreover \(\bh_n(\bw) = \bh(\bw) + \br(\bw)\) and \(\bh(\bw) = \bSigma_\ell^{\mathrm{T}_1}\bw\). Under \(\widetilde{\mathcal{E}_{\beta}}\), we have
$\|\bh(\bw)\| \lesssim \sqrt{\log d/d}$.
Therefore, under the event \(\mathcal{E}_{\alpha} \cap \widetilde{\mathcal{E}_{\beta}}\) we have
\begin{equation}
\|\bh_n(\bw)\| \le \|\bh(\bw)\| + \|\br(\bw)\| \lesssim \sqrt{\log d/d}.
\end{equation}

\end{proof}
\subsubsection{Proof of Lemma \ref{rf1}}\label{proof:lemma:rf1}
\begin{proof}
Note that since every vector in \(\operatorname{span}(\operatorname{Mat}(\mathbb{E}_{\bw}[(\boldsymbol{\Pi}^{\star} \bh_n(\bw))^{\otimes 2k}]))\) is a vectorized symmetric \(k\) tensor, it suffices to show that
\begin{equation}
\mathbb{E}[(\boldsymbol{\Pi}^{\star} \bh_n(\bw))^{\otimes 2k}](\bT, \bT)
\gtrsim \left(\frac{d}{\lambda_r^{2\mathrm{T}_1}}\right)^{-k}
\end{equation}
for all symmetric \(k\) tensors \(\bT\) supported on $S^*$ with \(\|\bT\|_F^2 = 1\). 

For notation convenience,  we denote $\bw_{\epsilon}:=\bw/\epsilon_0$ where we know $\bw_{\epsilon}\sim \operatorname{Unif}(\mathbb{S}^{d-1})$.
Recalling that \(\boldsymbol{\Pi}^{\star} \bh_n(\bw) = \bSigma_{\ell}^{\mathrm{T}_1} \bw_{\epsilon} + \boldsymbol{\Pi}^{\star} \br(\bw)\) and the tensor binomial expansion, we have
\begin{equation}
\left(\boldsymbol{\Pi}^{\star} \bh_n(\bw) \right)^{\otimes k} = \sum_{i=0}^{k} \binom{k}{i} \left( \bSigma_{\ell}^{\mathrm{T}_1} \bw_{\epsilon}\right)^{\otimes (k-i)} \otimes \left( \boldsymbol{\Pi}^{\star} \br(\bw) \right)^{\otimes i}
\end{equation}
Thus for any symmetric \(k\)-tensor \(\bT\), we have
\begin{equation}
\left\langle \bT, \left( \boldsymbol{\Pi}^{\star} \bh_n(\bw) \right)^{\otimes k} \right\rangle = \left\langle \bT, \left( \bSigma_{\ell}^{\mathrm{T}_1}\bw_{\epsilon} \right)^{\otimes k} \right\rangle + \delta(\bw)
\end{equation}
where
\begin{equation}
\delta(\bw) := \sum_{i=1}^{k} \binom{k}{i} \langle \bT, \left(\bSigma_{\ell}^{\mathrm{T}_1}\bw_{\epsilon}\right)^{\otimes (k-i)} \otimes \left(\boldsymbol{\Pi}^{\star} \br(\bw)\right)^{\otimes i} \rangle
\end{equation}

Note that by Young's inequality,
\begin{equation}
\mathbb{E}\left[\left\langle \bT, (\boldsymbol{\Pi}^{\star} \bh_n(\bw))^{\otimes k} \right\rangle^2\right]
\ge \mathbb{E}\left[\frac{1}{2} \left\langle \bT, \left(\bSigma_\ell^{\mathrm{T}_1}\bw_{\epsilon}\right)^{\otimes k} \right\rangle^2\right] - \mathbb{E}\left[\delta(\bw)^2\right]
\end{equation}
Therefore we need to upper bound $\mathbb{E}[\delta(\bw)^2]$ and lower bound $\mathbb{E}[\frac{1}{2}\langle \bT, (\bSigma_\ell^{\mathrm{T}_1}\bw_{\epsilon})^{\otimes k}\rangle^2]$.
Intuitively the $\delta(\bw)$ term should be negligible since $\br(\bw)$ is of small norm.

Note that there is a crude upper bound for $\delta(\bw)$.
Contracting \(\bT\) in its first \(k-i\) slots and applying the Frobenius Cauchy-Schwarz inequality gives
\begin{equation}
\left|\langle \bT, \left(\bSigma_{\ell}^{\mathrm{T}_1} \bw_{\epsilon}\right)^{\otimes (k-i)} \otimes \left(\boldsymbol{\Pi}^{\star} \br(\bw)\right)^{\otimes i} \rangle\right| \le \left\|\bT\left(\left(\bSigma_{\ell}^{\mathrm{T}_1}\bw_{\epsilon}\right)^{\otimes (k-i)}\right)\right\|_F \left\|\boldsymbol{\Pi}^{\star} \br(\bw)\right\|^i
\end{equation}
Hence
\begin{align}
    |\delta(\bw)| &\le \sum_{i=1}^{k} \binom{k}{i} \left\|\bT\left(\left(\bSigma_{\ell}^{\mathrm{T}_1} \bw_{\epsilon}\right)^{\otimes (k-i)}\right)\right\|_F \left\|\boldsymbol{\Pi}^{\star} \br(\bw)\right\|^i \nonumber\\
    &\lesssim \sum_{i=1}^{k} \left\|\bT\left(\left(\bSigma_{\ell}^{\mathrm{T}_1} \bw_{\epsilon}\right)^{\otimes (k-i)}\right)\right\|_F \left\|\boldsymbol{\Pi}^{\star} \br(\bw)\right\|^i.
\end{align}
Via Cauchy–Schwarz again, we have
\begin{align}
\mathbb{E}[\delta(\bw)^2]
&\lesssim \sum_{i=1}^k \sqrt{\mathbb{E}[\|\bT((\bSigma_\ell^{\mathrm{T}_1}\bw_{\epsilon})^{\otimes (k-i)})\|_F^4]
\mathbb{E}[\|\boldsymbol{\Pi}^{\star} \br(\bw)\|^{4i}]} \nonumber\\
&\lesssim \sum_{i=1}^k \mathbb{E}[\|\bT((\bSigma_\ell^{\mathrm{T}_1}\bw_{\epsilon})^{\otimes (k-i)})\|_F^2]
\sqrt{\mathbb{E}[\|\boldsymbol{\Pi}^{\star} \br(\bw)\|^{4i}]}
\end{align}
where the second line is derived via the Gaussian hypercontractivity lemma that we will state at Lemma~\ref{lemma: gaussian hypercontractivity} in Section~\ref{sec:Supporting Technical Lemmas} .

Next, let \(\widehat{\bT}\) be the symmetric \(k\) tensor defined by
\(\widehat{\bT}(\bv_1, \ldots, \bv_k) = \bT(\bSigma_\ell^{\mathrm{T}_1}\bv_1, \ldots, \bSigma_\ell^{\mathrm{T}_1}\bv_k)\).
For each $i$ and each unit-Frobenius norm tensor $\bV$ supported on $S^*$ we have
\begin{equation}
\langle \bT((\bSigma_{\ell}^{\mathrm{T}_1} \bw_{\epsilon})^{\otimes (k-i)}), \bV \rangle = \widehat{\bT}({\bw_{\epsilon}}^{\otimes (k-i)}, (\bSigma_{\ell}^{\mathrm{T}_1})^{\dagger} \bV)
\end{equation}
i.e. by moving $\bSigma_{\ell}^{\mathrm{T}_1}$ from the arguments of $\bT$ onto the test tensor one applies $(\bSigma_{\ell}^{\mathrm{T}_1})^{\dagger}$ to each slot of $\bV$. By Cauchy-Schwarz again,
\begin{equation}
\abs{\bT\left(\left(\bSigma_{\ell}^{\mathrm{T}_1}\left(\bw/\epsilon_0\right)\right)^{\otimes (k-i)}\right), \bV } \le \|\widehat{\bT}({\bw_{\epsilon}}^{\otimes (k-i)})\|_F \|(\bSigma_{\ell}^{\mathrm{T}_1})^{\dagger}\bV\|_F.
\end{equation}
Since $\|(\bSigma_{\ell}^{\mathrm{T}_1})^{\dagger} \bV\|_F \le 1/\lambda_r^{i\mathrm{T}_1}$ for $\|\bV\|_F = 1$, taking the supremum over unit $V$ gives the pointwise bound
\begin{equation}
\|\bT\left(\left(\bSigma_{\ell}^{\mathrm{T}_1}\bw_{\epsilon}\right)^{\otimes (k-i)}\right)\|_F \le 1/\lambda_r^{i\mathrm{T}_1} \|\widehat{\bT}({\bw_{\epsilon}}^{\otimes (k-i)})\|_F
\end{equation}
Squaring and taking expectation over $\bw$ yields
\begin{equation}
\mathbb{E}[\|\bT((\bSigma_{\ell}^{\mathrm{T}_1} \bw_{\epsilon})^{\otimes (k-i)})\|_F^2] \le \frac{1}{\lambda_r^{2i\mathrm{T}_1}} \mathbb{E}[\|\widehat{\bT}({\bw_{\epsilon}}^{\otimes (k-i)})\|_F^2]
\end{equation}
Combining with previous calculation, we have the estimation below
\begin{align}
\mathbb{E}[\|\bT((\bSigma_\ell^{\mathrm{T}_1}\bw_{\epsilon})^{\otimes (k-i)})\|_F^2]
&\le \frac{1}{\lambda_r^{2i\mathrm{T}_1}}
\mathbb{E}[\|\widehat{\bT}({\bw_{\epsilon}}^{\otimes (k-i)})\|^2] \nonumber\\
&\lesssim \left(\frac{d}{\lambda_r^{2\mathrm{T}_1}}\right)^i
\mathbb{E}[\langle \bT, (\bSigma_\ell^{\mathrm{T}_1}\bw_{\epsilon})^{\otimes k}\rangle^2]
\end{align}
where the second inequality is deduced from the lemma below.

\begin{lemma}\label{lemma::tensor expansion}
Let $\bw \sim \operatorname{Unif}(\mathbb{S}^{d-1})$ and for each tensor $\bT$ and each $k \le p$ we have \begin{equation}
\mathbb{E}\|\bT(\bw^{\otimes k})\|_F^2 \lesssim d^{p-k} \mathbb{E}\langle \bT, \bw^{\otimes p}\rangle^2 .
\end{equation}
\end{lemma}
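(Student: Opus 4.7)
My plan is to reduce the claim to the analogous Gaussian inequality and then establish the Gaussian version via the Hermite (Wiener chaos) decomposition together with iterated Gaussian integration by parts.

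First, I would write $\bg = r\bw$ with $\bg \sim \normal(\bfzero,\bI_d)$, $r = \|\bg\|$, and $\bw = \bg/\|\bg\| \sim \Unif(\mathbb{S}^{d-1})$; classically $r$ and $\bw$ are independent, with $r^2 \sim \chi^2_d$. By homogeneity, $\bT(\bg^{\otimes k}) = r^k\bT(\bw^{\otimes k})$ and $\langle \bT,\bg^{\otimes p}\rangle = r^p\langle \bT,\bw^{\otimes p}\rangle$, so taking expectations and using independence gives
\begin{equation}
\E_{\bg}\|\bT(\bg^{\otimes k})\|_F^2 = \E[r^{2k}]\,\E_{\bw}\|\bT(\bw^{\otimes k})\|_F^2, \quad \E_{\bg}\langle \bT,\bg^{\otimes p}\rangle^2 = \E[r^{2p}]\,\E_{\bw}\langle \bT,\bw^{\otimes p}\rangle^2.
\end{equation}
Since $\E[r^{2j}] = d(d+2)\cdots(d+2j-2)$, the ratio $\E[r^{2p}]/\E[r^{2k}] = (d+2k)(d+2k+2)\cdots(d+2p-2)$ is of order $d^{p-k}$ (with $p$ constant). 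Hence the claim reduces to the Gaussian inequality
\begin{equation}
\E_{\bg}\|\bT(\bg^{\otimes k})\|_F^2 \;\lesssim\; \E_{\bg}\langle \bT,\bg^{\otimes p}\rangle^2.
\end{equation}

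Second, I would prove this Gaussian bound by Hermite analysis. Set $P(\bx):=\langle \bT,\bx^{\otimes p}\rangle$, a degree-$p$ homogeneous polynomial. Because $\bT$ is symmetric, direct differentiation yields $(\nabla^{p-k}P)(\bx) = (p!/k!)\,\bT(\bx^{\otimes k})$, viewed as a $(p-k)$-tensor of free indices, so that $\|\nabla^{p-k}P(\bx)\|_F^2 = (p!/k!)^2\|\bT(\bx^{\otimes k})\|_F^2$. Decomposing $P = \sum_{m\le p} P_m$ in its Hermite (Wiener chaos) expansion under the standard Gaussian measure, each $P_m$ is $L^2$-orthogonal and is an eigenfunction of the Ornstein--Uhlenbeck generator with eigenvalue $-m$; iterating the exact identity $\E\|\nabla f\|^2 = \sum_m m\,\E[P_m^2]$ a total of $p-k$ times gives
\begin{equation}
\E\|\nabla^{p-k}P\|_F^2 \;=\; \sum_{m=p-k}^{p}\frac{m!}{(m-(p-k))!}\,\E[P_m^2] \;\le\; \frac{p!}{k!}\,\E[P^2].
\end{equation}
Combining with the derivative formula above yields $\E_{\bg}\|\bT(\bg^{\otimes k})\|_F^2 \le (k!/p!)\,\E_{\bg}\langle \bT,\bg^{\otimes p}\rangle^2$, which (since $p$ is a constant) is the desired Gaussian bound.

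Chaining the two steps gives
\begin{equation}
\E_{\bw}\|\bT(\bw^{\otimes k})\|_F^2 \;=\; \frac{\E[r^{2p}]}{\E[r^{2k}]}\cdot \frac{\E_{\bg}\|\bT(\bg^{\otimes k})\|_F^2}{\E_{\bg}\langle \bT,\bg^{\otimes p}\rangle^2}\cdot \E_{\bw}\langle \bT,\bw^{\otimes p}\rangle^2 \;\lesssim\; d^{p-k}\,\E_{\bw}\langle \bT,\bw^{\otimes p}\rangle^2,
\end{equation}
as claimed. The most delicate technical point I anticipate is justifying the iterated integration-by-parts identity for tensor-valued derivatives; I would verify it coordinatewise from the scalar Hermite identity $H_m' = m H_{m-1}$ (using that each component of $\nabla^j P_m$ lives in the $(m-j)$-th Wiener chaos), or equivalently invoke the standard fact that $\nabla$ shifts Hermite degree down by one in multidimensional Hermite-tensor form.
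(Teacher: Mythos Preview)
Your proposal is correct and takes a genuinely different route from the paper. The paper argues by matricizing $\bT$ and sandwiching both sides by $\|\bT\|_F^2$: it bounds $\mathbb{E}\|\bT(\bw^{\otimes k})\|_F^2 \le \|\bT\|_F^2\,\|C_k\|_{\mathrm{op}}$ with $C_k=\mathbb{E}[\bw^{\otimes 2k}]$ and $\|C_k\|_{\mathrm{op}}\asymp d^{-k}$, and in the other direction uses $\mathbb{E}\langle \bT,\bw^{\otimes p}\rangle^2 \gtrsim d^{-p}\|\bT\|_F^2$ for symmetric $\bT$; chaining these gives $d^{p-k}$. Your approach instead transfers to the Gaussian via $\chi_d$-moments and then proves the dimension-free Gaussian inequality $\mathbb{E}_{\bg}\|\bT(\bg^{\otimes k})\|_F^2\le (k!/p!)\,\mathbb{E}_{\bg}\langle \bT,\bg^{\otimes p}\rangle^2$ directly from the Ornstein--Uhlenbeck spectrum, via the iterated identity $\mathbb{E}\|\nabla^{j}P_m\|_F^2 = \tfrac{m!}{(m-j)!}\,\mathbb{E}[P_m^2]$. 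Your route avoids the intermediate $\|\bT\|_F^2$ step entirely and yields an explicit constant; the paper's route is shorter but its lower-bound step $\mathbb{E}\langle \bT,\bw^{\otimes p}\rangle^2\gtrsim d^{-p}\|\bT\|_F^2$ is itself most transparently justified by the very Hermite fact you invoke (the top chaos component of $\langle \bT,\bg^{\otimes p}\rangle$ contributes $p!\,\|\bT\|_F^2$). Both arguments implicitly require $\bT$ to be a symmetric $p$-tensor, which is the only case used in the paper.
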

The proof of Lemma \ref{lemma::tensor expansion} is provided in Appendix \ref{sphere lemma}

 We also know from Lemma~\ref{lemma:moment-bound-with-failure} that for any integer \(1 \le j \le 4p\) we have under the high probability event of \(\mathcal{E}_{\alpha}\), which happens with probability at least $1-\cO(d^{-D})$ that
\begin{equation}
\left[\E\|\boldsymbol{\Pi}^*\br(\bw)\|^j\right]^{1/j}
\lesssim K
\end{equation}
Therefore, we have
\begin{equation}
\mathbb{E}[\delta(\bw)^2]
\lesssim
\mathbb{E}[\langle \bT, (\bSigma_\ell^{\mathrm{T}_1}\bw_{\epsilon})^{\otimes k}\rangle^2]
\sum_{i=1}^k \left(\frac{d}{\lambda_r^{2\mathrm{T}_1}} K^2\right)^i
\end{equation}
Because we have assumed \(n \gtrsim rDd\log d\kappa^{2\mathrm{T}_1} + d^{1 + 1/\mathrm{T}_1}\kappa^2\), we have the following from Lemma~\ref{lemma:sample-size-K-control} that 
\begin{equation}
\frac{d}{\lambda_r^{2\mathrm{T}_1}} K^2 \le c^2
\end{equation}
with some prechosen absolute constant $c$.
Thus via picking a small $c$ and plugging the above bound into the previous equation we have
\begin{equation}
\mathbb{E}_{\bw}[\delta(\bw)^2]
\le \frac{1}{4}
\mathbb{E}[\langle \bT, (\bSigma_\ell^{\mathrm{T}_1}\bw_{\epsilon})^{\otimes k}\rangle^2].
\end{equation}
Combining everything gives
\begin{align}
\mathbb{E}[\langle \bT, (\boldsymbol{\Pi}^{\star} \bh_n(\bw))^{\otimes k}\rangle^2]
&\ge \mathbb{E}[\frac{1}{2}\langle \bT, (\bSigma_\ell^{\mathrm{T}_1}\bw_{\epsilon})^{\otimes k}\rangle^2]
- \mathbb{E}_{\bw}[\delta(\bw)^2] \nonumber\\
&\ge \frac{1}{4}
\mathbb{E}[\langle \bT, (\bSigma_\ell^{\mathrm{T}_1}\bw_{\epsilon})^{\otimes k}\rangle^2] \\
&\gtrsim d^{-k} \|\widehat{\bT}\|_F^2 \nonumber\\
&\ge \left(\frac{d}{\lambda_r^{2\mathrm{T}_1}}\right)^{-k}
\end{align}
where the third line is via taking the expectation over $\bw$, the detailed proof for this can be also found in Appendix \ref{sphere lemma}. The proof is finished.
\end{proof}

\subsubsection{Proof of Lemma \ref{zt}}\label{proof: lemma:zt}

\begin{proof}
Let
\begin{equation}
z_{\bT}(\bw) := \mathrm{Vec}(\bT)^{\sT}
\mathrm{Mat}(\mathbb{E}[\bh_n(\bw)^{\otimes 2k}])^{\dagger}
\mathrm{Vec}(\bh_n(\bw)^{\otimes k}).
\end{equation}
Note that $\mathrm{Vec}(\bT) \in \mathrm{span}(\mathrm{Mat}(\mathbb{E}[\bh_n(\bw)^{\otimes 2k}]))$ by Lemma~\ref{rf1}. Therefore,
\begin{align}
\mathbb{E}_{\bw}[z_{\bT}(\bw)(\bh_n(\bw)^\sT\bx)^k]
&= \mathbb{E}_{\bw}[\mathrm{Vec}(\bT)^{\sT}
\mathrm{Mat}(\mathbb{E}[\bh_n(\bw)^{\otimes 2k}])^{\dagger}
\mathrm{Vec}(\bh_n(\bw)^{\otimes k})
\mathrm{Vec}(\bh_n(\bw)^{\otimes k})^{\sT}
\mathrm{Vec}(\bx^{\otimes k})] \nonumber\\
&= \langle \bT, \bx^{\otimes k} \rangle.
\end{align}
For the bounds on $z_{\bT}(\bw)$ we have the following estimation
\begin{align}
\mathbb{E}_{\bw}[z_{\bT}(\bw)^2]
&= \mathbb{E}_{\bw}[\mathrm{Vec}(\bT)^{\sT}
\mathrm{Mat}(\mathbb{E}[\bh_n(\bw)^{\otimes 2k}])^{\dagger}
\mathrm{Vec}(\bh_n(\bw)^{\otimes k})
\mathrm{Vec}(\bh_n(\bw)^{\otimes k})^{\sT}
\mathrm{Mat}(\mathbb{E}[\bh_n(\bw)^{\otimes 2k}])^{\dagger}
\mathrm{Vec}(\bT)] \nonumber\\
&= \mathrm{Vec}(\bT)^{\sT}
\mathrm{Mat}(\mathbb{E}[\bh_n(\bw)^{\otimes 2k}])^{\dagger}
\mathrm{Vec}(\bT) \nonumber\\
&\lesssim \left(\frac{d}{\lambda_{r}^{2\mathrm{T}_1}}\right)^k \|\bT\|_F^2,
\end{align}
and
\begin{equation}
|z_{\bT}(\bw)| =
\left|\mathrm{Vec}(\bT)^{\sT}
\mathrm{Mat}(\mathbb{E}[\bh_n(\bw)^{\otimes 2k}])^{\dagger}
\mathrm{Vec}(\bh_n(\bw)^{\otimes k})\right|
\lesssim
\left(\frac{d}{\lambda_{r}^{2\mathrm{T}_1}}\right)^k \|\bT\|_F \|\bh_n(\bw)\|^k.
\end{equation}
The proof is complete.
\end{proof}

\subsubsection{Proof of Lemma~\ref{lemma:bounded by 1}}\label{proof lemma:bounded by 1}

\begin{proof}

Recall that from Lemma \ref{lemmma gnw} and \ref{xgnw}, under the high probability event of \(\mathcal{E}_{\alpha} \cap \widetilde{\mathcal{E}_{\beta}} \cap \widetilde{\mathcal{E}}_{\eta}\), which happens with probability at least $1-\cO(d^{-D})$ the following bounds hold
\begin{equation}
\max_{j \in [n+1, 2n]} \|\bh_n(\bw)^\sT \bx_j\| \lesssim \sqrt{\frac{\log^2 d}{d}} \quad\text{and}\quad \|\bh_n(\bw)\| \lesssim \sqrt{\frac{\log d}{d}}.
\end{equation}
Those bounds will be helpful for us to construct the low-dimensional approximation in the second stage.

Recall that we have set $\eta = \eta_1 = \frac{1}{C}\left(\frac{d}{\log^2 d}\right)^{1/(2\mathrm{T}_1)}$ where the universal constant $C$ is chosen sufficiently large so we can absorb the implicit constants hidden in the \(\lesssim\) bounds. This means we choose $C$ to make sure on the same high-probability event the following controls are valid 
\begin{equation}
\eta^{\mathrm{T}_1} \|\bh_n(\bw)\| \le 1 \quad\text{and}\quad \max_{j\in [n+1,2n]}|\eta^{\mathrm{T}_1} \bh_n(\bw)^\sT \bx_j| \le 1
\end{equation}
which indicates that \begin{equation}
\|\widetilde{\bw}^{(\mathrm{T}_1)}\| = \left\|\frac{1}{\epsilon_0}(-a_j)^{\mathrm{T}_1}\eta^{\mathrm{T}_1}\widehat{\bSigma}_\ell^{\mathrm{T}_1}\bw\right\| \le 1
\end{equation}
Note that $\bw^{(\mathrm{T}_1)} = \frac{1}{\epsilon_0}(-a_j)^{\mathrm{T}_1} \eta^{\mathrm{T}_1} \left(\prod_{t=0}^{\mathrm{T}_1-1} \bM^{(t)}\right) \bw$
and by Lemma \ref{lemma:total_perturbation_error}, we have 
\begin{equation}
\|\bw^{(\mathrm{T}_1)} - \widehat{\bw}^{(\mathrm{T}_1)}\| \lesssim K \lesssim \eta^{\mathrm{T}_1}\sqrt{\frac{\log d}{d}}
\end{equation}
Thus $\|{\bw}^{(\mathrm{T}_1)}\| \le 2\|\widetilde{\bw}^{(\mathrm{T}_1)}\| \lesssim 1$
and we complete our proof.
    
\end{proof}

\subsubsection{Proof of Lemma \ref{singlek}}\label{proof lemma:singlek}

\begin{proof}
For any training point \(\bx\) and given initialization \(\bw\), on events $\mathcal{E}_{\alpha} \cap \mathcal{E}_{\zeta} \cap \widetilde{\mathcal{E}_{\beta}} \cap \widetilde{\mathcal{E}}_{\eta}$ we have the following control
\begin{equation}
z := (-\eta)^{\mathrm{T}_1} \bh_n(\bw)^\sT \bx \in [-1, 1].
\end{equation}
By our previous assumption on the activation function, we have the following low dimensional approximation result uniformly for $z\in [-1,1]$
\begin{equation}
\left|\mathbb{E}_{a, b}[\bv_k^{(\epsilon_k)}(a, b) \sigma(az + b)] - z^k\right| \le \epsilon_k.
\end{equation}
Divide both sides by \( (-\eta)^{k\mathrm{T}_1} \) and substitute the features back we obtain
\begin{equation}
\left|\frac{1}{(-\eta)^{k\mathrm{T}_1}} \mathbb{E}_{a, b}[\bv_k^{(\epsilon_k)}(a, b) \sigma((-\eta a)^{\mathrm{T}_1} \bh_n(\bw)^\sT\bx + b)] - (\bh_n(\bw)^\sT \bx)^k\right| \le \frac{\epsilon_k}{\eta^{k\mathrm{T}_1}}.
\end{equation}
Recall that from Corollary \ref{zt} under the high probability event $\mathcal{E}_{\alpha}$ we have 
\begin{equation}
\mathbb{E}_{\bw}[z_{\bT}(\bw) (\bh_n(\bw)^\sT \bx)^k] = \left\langle \bT, \bx^{\otimes k} \right\rangle,
\end{equation}
And also recall from Lemma~\ref{xgnw} and the statement before the lemma we know that under the high probability event \(\mathcal{E}_{\alpha} \cap \mathcal{E}_{\zeta} \cap \widetilde{\mathcal{E}_{\beta}} \cap \widetilde{\mathcal{E}}_{\eta}\)

\begin{equation}
|\eta^{\mathrm{T}_1} \bh_n(\bw)^\sT \bx_{i+n}| \le 1
\end{equation}

\begin{equation}
\eta^{\mathrm{T}_1} \|\bh_n(\bw)\| \le 1
\end{equation}

thus
\begin{equation}
\begin{aligned}
&\left|f_{h_{\bT}}(\bx) - \left\langle \bT, \bx^{\otimes k} \right\rangle\right|\\
&\le \frac{\epsilon_k}{\eta^{k\mathrm{T}_1}} \left|\mathbb{E}_{\bw}[z_{\bT}(\bw) \bone_{\mathcal{E}_{\alpha} \cap \mathcal{E}_{\zeta} \cap \widetilde{\mathcal{E}_{\beta}} \cap \widetilde{\mathcal{E}}_{\eta}}]\right| 
+ \frac{1}{\eta^{k\mathrm{T}_1}}\left|\mathbb{E}_{\bw}[z_{\bT}(\bw) \bone_{\mathcal{E}_{\alpha} \cap \mathcal{E}_{\zeta} \cap (\widetilde{\mathcal{E}_{\beta}} \cap \widetilde{\mathcal{E}}_{\eta})^c}]\right|\\
&\quad + \frac{1}{\eta^{k\mathrm{T}_1}}\left|\mathbb{E}_{\bw}[z_{\bT}(\bw)\bv_k^{(\epsilon_k)}(a, b) \sigma((-\eta a)^{\mathrm{T}_1} \bh_n(\bw)^\sT\bx + b)\bone_{\mathcal{E}_{\alpha} \cap \mathcal{E}_{\zeta} \cap (\widetilde{\mathcal{E}_{\beta}} \cap \widetilde{\mathcal{E}}_{\eta})^c}]\right|\\
&\le \frac{\epsilon_k}{\eta^{k\mathrm{T}_1}} \left|\mathbb{E}_{\bw}[z_{\bT}(\bw)]\right| 
+ \frac{1}{\eta^{k\mathrm{T}_1}}\left|\mathbb{E}_{\bw}[z_{\bT}(\bw) \bone_{\mathcal{E}_{\alpha} \cap \mathcal{E}_{\zeta} \cap (\widetilde{\mathcal{E}_{\beta}} \cap \widetilde{\mathcal{E}}_{\eta})^c}]\right|\\
&\quad + \eta^{-k\mathrm{T}_1}\left|\mathbb{E}_{\bw}[z_{\bT}(\bw)\bv_k^{(\epsilon_k)}(a, b) \sigma((-\eta a)^{\mathrm{T}_1} \bh_n(\bw)^\sT\bx + b)\bone_{\mathcal{E}_{\alpha} \cap \mathcal{E}_{\zeta} \cap (\widetilde{\mathcal{E}_{\beta}} \cap \widetilde{\mathcal{E}}_{\eta})^c}]\right|
\end{aligned}
\end{equation}

We bound the third term first.
Denote
\begin{equation}
X(a, b,\bw) := \frac{1}{\eta^{k\mathrm{T}_1}}\bv_k^{(\epsilon_k)}(a, b) z_{\bT}(\bw) \sigma((-\eta a)^{\mathrm{T}_1} \bh_n(\bw)^\sT \bx + b).
\end{equation}
By Cauchy-Schwarz,
\begin{equation}
\left|\mathbb{E}_{a,b,\bw}[X(a,b,\bw) \bone_{\mathcal{E}_{\alpha} \cap \mathcal{E}_{\zeta} \cap (\widetilde{\mathcal{E}_{\beta}} \cap \widetilde{\mathcal{E}}_{\eta})^c}]\right| \le \sqrt{\mathbb{E}_{a,b,\bw}[X(a,b,\bw)^2]} \sqrt{\mathbb{P}(\mathcal{E}_{\alpha} \cap \mathcal{E}_{\zeta} \cap (\widetilde{\mathcal{E}_{\beta}} \cap \widetilde{\mathcal{E}}_{\eta})^c)}
\end{equation}
We bound the two factors on the right. 
Recall that it has been assumed that the activation function has bounded third derivatives as in Assumption \ref{assumption_activation}. Therefore, there exists a constant $C_\sigma > 0$ such that
\begin{equation}
|\sigma(u)| \le C_\sigma (1 + |u|^{3}) \quad \text{for all } u \in \mathbb{R}.
\end{equation}
and also
\begin{equation}
|X(a,b,\bw)| \le V_{k,\epsilon_k} |z_{\bT}(\bw)| C_\sigma (1 + |\eta^{\mathrm{T}_1} \bh_n(\bw)^\sT \bx + b|^{3}).
\end{equation}
We recall that the following crude bound works for all $\bw$
\begin{equation}
\|\bh_n(\bw)\| \le (1+\gamma)^{\mathrm{T}_1} \le 2^{\mathrm{T}_1}
\end{equation}
where \(\gamma\) is defined in 
via directly bounding the operator norm of $\prod_{t=0}^{\mathrm{T}_1-1} \bM^{(t)}$.
Notice that under the event \( \mathcal{E}_{\zeta}\), we have \(\norm{\bx}\le 2\sqrt{d}\).

Thus via another crude bound we have \(|\eta^{\mathrm{T}_1} \bh_n(\bw)^\sT \bx + b| \lesssim d\) and
\begin{equation}
|X| \lesssim V_{k,\epsilon_k} |z_{\bT}(\bw)| d^{3}.
\end{equation}
In addition, from Definition \ref{def:UA-beta} we also have 
$V_{k, \epsilon_k} = \|\bv_k^{(\epsilon_k)}\|_{\infty} \lesssim \epsilon_k^{-\beta} \lesssim d^{D/12}$. Also invoking $\mathbb{E}_{\bw}[z_{\bT}^2] \lesssim d^k \kappa^{2\mathrm{T}_1k}\|\bT\|_F^2$ from \ref{zt}, note that for \(\mathrm{T}_1 = o(\log d)\), we have \(\kappa^{\mathrm{T}_1} \lesssim d\), thus it shows that we obtain
\begin{equation}
\sqrt{\mathbb{E}[X(a,b,\bw)^2]} \le V_{k, \epsilon_k} d^{3}\sqrt{\mathbb{E}[z_{\bT}^2]} \lesssim d^{D/12 + 3k/2 + 3} \|\bT\|_F.
\end{equation}
Second, Lemma \ref{lemmma gnw} and \ref{xgnw} give
\begin{equation}
\mathbb{P}(\mathcal{E}_{\alpha} \cap \mathcal{E}_{\zeta} \cap (\widetilde{\mathcal{E}_{\beta}} \cap \widetilde{\mathcal{E}}_{\eta})^c) \le \mathbb{P}((\widetilde{\mathcal{E}_{\beta}} \cap \widetilde{\mathcal{E}}_{\eta})^c) \lesssim d^{-D/2}
\end{equation}
when $\iota = 2D \log d$. Hence
\begin{equation}
\left|\mathbb{E}_{a,b,\bw}[X(a,b,\bw) \bone_{\mathcal{E}_{\alpha} \cap \mathcal{E}_{\zeta}\cap (\widetilde{\mathcal{E}_{\beta}}\cap \widetilde{\mathcal{E}}_{\eta})^c}]\right| \lesssim d^{-D/8} \|\bT\|_F 
\end{equation}
for sufficiently large universal constant $D$.

The above argument shows that the third term is negligible. 

Then we bound the second term. Using essentially the same method and argument as above, we obtain
\begin{align}
\frac{1}{\eta^{k\mathrm{T}_1}}\abs{\mathbb{E}_{\bw}[z_{\bT}(\bw) \bone_{\mathcal{E}_{\alpha} \cap \mathcal{E}_{\zeta} \cap (\widetilde{\mathcal{E}_{\beta}} \cap \widetilde{\mathcal{E}}_{\eta})^c}]}
&\lesssim \frac{1}{\eta^{k\mathrm{T}_1}} \sqrt{\mathbb{E}_{\bw}[z_{\bT}^2]} \sqrt{\mathbb{P}(\mathcal{E}_{\alpha} \cap \mathcal{E}_{\zeta} \cap (\widetilde{\mathcal{E}_{\beta}} \cap \widetilde{\mathcal{E}}_{\eta})^c)}\nonumber\\
&\lesssim d^{3k/2} \|\bT\|_F d^{-D/4}\nonumber\\
&\lesssim d^{-D/8} \|\bT\|_F
\end{align}

For the first one, we define the approximation error
\begin{equation}
\Delta_k := \frac{\epsilon_k}{\eta^{k\mathrm{T}_1}} \abs{\mathbb{E}_{\bw}[z_{\bT}(\bw)]}.
\end{equation}
By Cauchy–Schwarz,
\begin{equation}
\Delta_k \le \frac{\epsilon_k}{\eta^{k\mathrm{T}_1}} \sqrt{\mathbb{E}_{\bw}[z_{\bT}(\bw)^2]}.
\end{equation}
Substitute the second-moment bound mentioned in Corollary \ref{zt} and we obtain
\begin{equation}
\mathbb{E}_{\bw}[z_{\bT}(\bw)^2] \lesssim \left(\frac{d}{\lambda_r^{2\mathrm{T}_1}}\right)^k \|\bT\|_F^2
\end{equation}
thus
\begin{equation}
\Delta_k \lesssim \epsilon_k \frac{1}{\eta^{k\mathrm{T}_1}} \left(\frac{d}{\lambda_r^{2\mathrm{T}_1}}\right)^{k/2} \|\bT\|_F \lesssim  \epsilon_k \kappa^{k\mathrm{T}_1}\log^k d\|\bT\|_F.
\end{equation}

For the other moments, from Definition \ref{def:UA-beta} we have \(\|\bv_k^{(\epsilon_k)}\|_\infty \le V_{k, \epsilon_k}\) and
\begin{equation}
\mathbb{E}[h_{\bT}^2] \le \frac{V_{k, \epsilon_k}^2}{\eta^{2k\mathrm{T}_1}} \mathbb{E}[z_{\bT}(\bw)^2] \lesssim V_{k, \epsilon_k}^2 \frac{1}{\eta^{2k\mathrm{T}_1}} \left(\frac{d}{\lambda_r^{2\mathrm{T}_1}}\right)^k \|\bT\|_F^2.
\end{equation}
Substitute \( \eta \) back and we can get
\begin{equation}
\mathbb{E}[h_{\bT}^2] \lesssim V_{k, \epsilon_k}^2(\kappa^{2\mathrm{T}_1} \log^2 d)^k \|\bT\|_F^2.
\end{equation}
Similarly, using \(|z_{\bT}(\bw)| \lesssim \left(\frac{d}{\lambda_r^{2\mathrm{T}_1}}\right)^k \|\bT\|_F \|\bh_n(\bw)\|^k\) again and noticing the truncation terms in the definition of $h_{\bT}(a,b,\bw)$, we have
\begin{equation}
\|h_{\bT}\|_\infty \lesssim V_{k, \epsilon_k} (\kappa^{2\mathrm{T}_1} \log^2 d )^k \|\bT\|_F
\end{equation}
and the lemma is proved.
\end{proof}

\subsubsection{Proof of Lemma \ref{lem:h-aggregate}}\label{proof:lem:h-aggregate}
\begin{proof}
Since $h(a,b,\bw):=\sum_{k\le p} h_{\bT_k}(a,b,\bw)$, combining with linearity of expectation, we have 
\begin{equation}f_{h}(\bx) = \sum_{k \le p} f_{h_{\bT_k}}(\bx)\end{equation}
For each $k$, define the empirical error vector $\bo_k \in \mathbb{R}^n$ by
\begin{equation}
\bo_k(i) := f_{h_{\bT_k}}(\bx_i) - \langle \bT_k, \bx_i^{\otimes k} \rangle, \quad i = n+1, \ldots, 2n.
\end{equation}
Then for the aggregated function we have, for all \(\bx_i\) from the second stage
\begin{equation}
f_{h}(\bx_i) - f^*(\bx_i) = \sum_{k \le p} \bo_k(i),
\end{equation}
For now throughout this proof we equip $\mathbb{R}^n$ with the empirical $\ell_2$ norm, defined for any vector $\bv = (v(1), \ldots, v(n)) \in \mathbb{R}^n$ by
\begin{equation}
\|\bv\|_n^2 := \frac{1}{n} \sum_{i=1}^n v(i)^2.
\end{equation}
Hence we can write the empirical squared loss by 
\begin{equation}
\frac{1}{n} \sum_{i=n+1}^{2n} (f_{h}(\bx_i) - f^*(\bx_i))^2 = \left\| \sum_{k \le p} \bo_k \right\|_n^2 
\end{equation}
Note that we have
\begin{equation}
\left\|\sum_{k\le p}\bo_k \right\|_n^2 \le \left( \sum_{k \le p} \|\bo_k\|_n \right)^2
\end{equation}
By Corollary~\ref{cor:one-order-emp}, each single-term error satisfies
$\|\bo_k\|_n^2 \lesssim \Delta_k^2 + d^{-D/4}\|\bT_k\|_F^2$. Thus
\begin{equation}
\|\bo_k\|_n \lesssim \Delta_k + d^{-D/8}\|\bT_k\|_F.
\end{equation}
Notice that by Lemma~\ref{lem:tensor expansion} we have $\|\bT_k\|_F \lesssim r^{(p-k)/4} \lesssim 1$.
 Therefore
\begin{align}
    \left(\sum_{k \le p} \|\bo_k\|_n\right)^2 &\lesssim \left(\sum_{k \le p} \Delta_k + d^{-D/8}\sum_{k \le p}\|\bT_k\|_F\right)^2 \nonumber\\
    &\lesssim \left(\sum_{k \le p} \Delta_k + d^{-D/8}p\right)^2\nonumber\\
    &\lesssim \left(\sum_{k \le p} \Delta_k\right)^2 + \left(d^{-D/8}\right)^2\nonumber\\
    &\lesssim \left(\sum_{k \le p} \Delta_k\right)^2 + d^{-D/4}
\end{align}
where the last step absorbs $r,p$ into the $\lesssim$ constant. This proves equation  \eqref{eq:emp-sum}.

For the moment bounds, we have by Cauchy--Schwarz
\begin{equation}
\mathbb{E}[h^2] = \mathbb{E}\left[\left(\sum_{k \le p} h_{\bT_k}\right)^2\right]
= \sum_{k, \ell \le p} \mathbb{E}[h_{\bT_k} h_{\bT_\ell}]
\le \left(\sum_{k \le p} \sqrt{\mathbb{E}[h_{\bT_k}^2]}\right)^2
\end{equation}
Substituting the second-moment bound mentioned in Lemma~\ref{singlek}, we obtain for each $k$,
\begin{equation}
\mathbb{E}[h_{\bT_k}^2] \lesssim V_{k,\epsilon_k}^2 (\kappa^{2\mathrm{T}_1} \log^2 d)^k \|\bT_k\|_F^2 
\end{equation}
hence
\begin{equation}
\sqrt{\mathbb{E}[h_{\bT_k}^2]} \lesssim V_{p,\epsilon} (\kappa^{\mathrm{T}_1} \log d)^k \|\bT_k\|_F.
\end{equation}
Using the bound $\|\bT_k\|_F \lesssim r^{\frac{p-k}{4}}\lesssim 1$ and summing over $k\le p$, we obtain
\begin{equation}
\sum_{k \le p} \sqrt{\mathbb{E}[h_{\bT_k}^2]} 
\lesssim V_{p,\epsilon} \sum_{k \le p} (\kappa^{\mathrm{T}_1})^k (\log d)^k
\lesssim V_{p,\epsilon} (\kappa^{\mathrm{T}_1})^p (\log d)^p.
\end{equation}
where the last step absorbs the polynomial degree $p$ into the $\lesssim$ constant. Further, squaring both sides yields
\begin{equation}
\mathbb{E}[h^2] \lesssim V_{p,\epsilon}^2 (\kappa^{\mathrm{T}_1})^{2p} (\log d)^{2p}
\end{equation}
which is the first inequality of Equation \eqref{eq:h-moments}. For the sup-norm, substitute the sup-norm bound mentioned in Lemma~\ref{singlek} we know, for each $k$, 
\begin{equation}
 \|h_{\bT_k}\|_\infty \lesssim V_{k, \epsilon_k} (\kappa^{2\mathrm{T}_1} \log^2 d)^k \|\bT_k\|_F.
\end{equation}
Combining the \(k\) bounds together we have 
\begin{equation}
\|h\|_\infty \le \sum_{k \le p} \|h_{\bT_k}\|_\infty
\lesssim \sum_{k \le p} V_{k,\epsilon_k} (\kappa^{2\mathrm{T}_1} (\log d)^2)^k \|\bT_k\|_F
\lesssim V_{p,\epsilon} (\kappa^{2\mathrm{T}_1} (\log d)^2)^p 
\end{equation}
This proves the second one of Equation \eqref{eq:h-moments}, concluding our proof.
\end{proof}

\subsubsection{Proof of Lemma \ref{lem:MC}}\label{proof:lem:MC}
\begin{proof}
We first bound \(\|\ba^*\|\) where $\ba^* = (a_1^*,\dots,a_m^*)$.
Since 
\begin{equation}
a_j^*:= \frac{1}{m}h(a_j,b_j,\bw_j)
\end{equation}
and via Lemma \ref{lem:h-aggregate} we have the following moment estimation
\begin{equation}
\label{eq:h-moments}
\|h\|_\infty \lesssim V_{p,\epsilon} (\kappa^{\mathrm{T}_1})^{2p}(\log d)^{2p}.
\end{equation}
Thus we directly get as desired
\begin{equation}
\|\ba^*\|^2 \lesssim \frac{V_{p,\epsilon}^2(\kappa^{\mathrm{T}_1})^{4p}(\log d)^{4p}}{m}
\end{equation}

To bound the square loss for finite width, take the basic decomposition
\begin{equation}
\frac{1}{n} \sum_{i=n+1}^{2n} \left(f_{\widetilde{\bTheta}}(\bx_i) - f^*(\bx_i)\right)^2
\le 2 \frac{1}{n} \sum_{i=n+1}^{2n} \left(f_{\widetilde{\bTheta}^*}(\bx_i) - f_{h}(\bx_i)\right)^2
+ 2 \frac{1}{n} \sum_{i=n+1}^{2n} \left(f_{h}(\bx_i) - f^*(\bx_i)\right)^2.
\end{equation}
We next bound these two terms separately.

For the second term, by Lemma~\ref{lem:h-aggregate} we have 
\begin{equation}
2 \frac{1}{n} \sum_{i=n+1}^{2n} \left(f_{h}(\bx_i) - f^*(\bx_i)\right)^2 \lesssim \left(\sum_{k \le p}\Delta_k\right)^2 + d^{-D/4}
\end{equation}
It thus suffices to control the Monte Carlo fluctuation term (the first term).
Note that the parameters are initialized symmetrically and this symmetry is preserved by first stage training, that is, the first half neurons are free. Therefore we should group neurons into symmetric pairs to help the proof.

For $j \le m/2$, define 
\begin{equation}
Z_j(\bx) := a_j^* \sigma((\widetilde{\bw_j}^{(\mathrm{T}_1)})^\sT\bx + b_j)
          + a_{m-j}^* \sigma((\widetilde{\bw_{m-j}}^{(\mathrm{T}_1)})^\sT\bx + b_{m-j}),
\quad \widetilde{\bw_j}^{(\mathrm{T}_1)} = (-\eta a_j)^{\mathrm{T}_1} \bh_n(\bw_j)
\end{equation}
so that $f_{\widetilde{\bTheta^*}}(\bx) - f_{h}(\bx) = \sum_{j \le m/2} \left(Z_j(\bx) - \mathbb{E}[Z_j(\bx)]\right)$.
Introduce the truncated version
\begin{equation}
\overline{Z}_j(\bx) := Z_j(\bx)
\mathbf{1}_{\{|\widetilde{\bw_j}^{(\mathrm{T}_1)\sT}\bx|\le 1,
            |\widetilde{\bw}_{m-j}^{(\mathrm{T}_1)\sT}\bx|\le 1\}}
\end{equation}
By the concentration event $\mathcal{E}_{\mu}$, which occurs with probability at least $1-O(d^{-D/2})$ we have $\overline{Z}_j(\bx_i)=Z_j(\bx_i)$ for all $n+1\le i \le 2n$ and all $j\le m/2$. Hence, on $\mathcal{E}$, we have the following decomposition

\begin{equation}
f_{\widetilde{\bTheta}^*}(\bx_i) - f_{h}(\bx_i)
= \sum_{j \le m/2} \left(\overline{Z}_j(\bx_i) - \mathbb{E}\overline{Z}_j(\bx_i)\right)
  + \underbrace{\frac{m}{2}\left(\mathbb{E}\overline{Z}_j(\bx_i) - \mathbb{E}Z_j(\bx_i)\right)}_{:=B_i},
\end{equation}
We bound the second term \(B_i\) first.

Recall
\begin{equation}
B_i := \frac{m}{2} (\mathbb{E} \overline{Z}_j(\bx_i) - \mathbb{E} Z_j(\bx_i)) = \frac{m}{2} \mathbb{E}[(\overline{Z}_j(\bx_i) - Z_j(\bx_i)) \mathbf{1}_{\mathcal{E}_{\alpha}\cap \mathcal{E}_{\zeta} \cap (\mathcal{E}_{\beta}\cap\mathcal{E}_{\eta})^c}],
\end{equation}
because $\overline{Z}_j = Z_j$ on $\mathcal{E}$. By Cauchy-Schwarz,
\begin{align}
|B_i| &\le \frac{m}{2} \mathbb{E}[|Z_j(\bx_i)| \mathbf{1}_{\mathcal{E}_{\alpha} \cap \mathcal{E}_{\zeta}\cap (\mathcal{E}_{\beta}\cap\mathcal{E}_{\eta})^c}] + \frac{m}{2} \mathbb{E}[|\overline{Z}_j(\bx_i)| \mathbf{1}_{\mathcal{E}_{\alpha}\cap \mathcal{E}_{\zeta} \cap (\mathcal{E}_{\beta}\cap\mathcal{E}_{\eta})^c}]\nonumber\\
   & \le \frac{m}{2} \left( \sqrt{\mathbb{E} Z_j(\bx_i)^2} + \sqrt{\mathbb{E} \overline{Z}_j(\bx_i)^2} \right) \sqrt{\mathbb{P}(\mathcal{E}_{\alpha}\cap \mathcal{E}_{\zeta} \cap (\mathcal{E}_{\beta}\cap\mathcal{E}_{\eta})^c)}\nonumber\\
    &\le m \sqrt{\mathbb{E} Z_j(\bx_i)^2} \sqrt{\mathbb{P}(\mathcal{E}_{\alpha} \cap \mathcal{E}_{\zeta}\cap (\mathcal{E}_{\beta}\cap\mathcal{E}_{\eta})^c)}.
\end{align}
The final inequality follows from the fact that truncation does not increase magnitude, that is,
$|\overline{Z}_j(\bx_i) |\le |Z_j(\bx_i)|$.
For a symmetric pair
\begin{equation}
    Z_j(\bx_i) = a_j^* \sigma(\widetilde{\bw_j}^{(\mathrm{T}_1)\sT} \bx_i + b_j) + a_{m-j}^* \sigma(\widetilde{\bw_{m-j}}^{(\mathrm{T}_1)\sT}\bx_i + b_{m-j})
\end{equation}
Recall that it has been assumed that the activation function has bounded third derivatives as in Assumption \ref{assumption_activation}. Therefore, there exists an absolute constant $C_\sigma > 0$ such that
\begin{equation}
|\sigma(u)| \le C_\sigma (1 + |u|^{3}) \quad \text{for all } u \in \mathbb{R}.
\end{equation}
Then, for any fixed $i, j$,
\begin{equation}
\mathbb{E}[Z_j(\bx_i)^2] \lesssim (a_j^*)^2 (1+|\eta^{\mathrm{T}_1} \bh_n(\bw_j^{(0)})^\sT \bx_i + b_j|^{3})^2 + (a_{m-j}^*)^2 (1+|\eta^{\mathrm{T}_1} \bh_n(\bw_{m-j}^{(0)})^\sT \bx_i + b_{m-j}|^{3})^2.
\end{equation}
Notice that under the event \( \mathcal{E}_{\zeta}\), we have \(\bx_i\) is bounded for all $i\in [n+1,2n]$ by \(\norm{\bx_i}\le 2\sqrt{d}\). Thus via another crude bound, since  we have \(\big| \eta^{\mathrm{T}_1} \bh_n(\bw_j^{(0)})^\sT \bx_i + b _j\big|\lesssim d\)  and \(\big| \eta^{\mathrm{T}_1} \bh_n(\bw_{m-j}^{(0)})^\sT \bx_i + b_{m-j} \big|\lesssim d\) and
we have 
\begin{align}
    \sqrt{\mathbb{E} Z_j(\bx_i)^2 } &\lesssim  d^3\sqrt{(a_j^*)^2 +(a_{m-j}^*)^2}\nonumber\\
    &\lesssim d^3 \frac{V_{p,\epsilon}(\kappa^{\mathrm{T}_1})^{4p}(\log d)^{4p}}{m} 
\end{align}
In addition, from Definition \ref{def:UA-beta} we also have 
$
V_{k, \epsilon_k}= \| \bv_k^{(\epsilon_k)}\|_{\infty} \lesssim \epsilon_k^{-\beta} \lesssim d^{\frac{D}{12}}
$, thus \(V_{p,\epsilon}^2 \lesssim d^{\frac{D}{6}}\)

Also note that for \(\mathrm{T}_1=o(\log d)\),  we have \(\kappa^{\mathrm{T}_1} \lesssim d \), thus it shows that

\begin{equation}
\sqrt{\mathbb{E} Z_j(\bx_i)^2 } \lesssim \frac{d^{4+4p+\frac{D}{12}}}{m}
\end{equation}
This gives the intermediate bound
\begin{equation}
    |B_i| \lesssim m \frac{d^{4+4p+\frac{D}{12}}}{m} \sqrt{\mathbb{P}( (\mathcal{E}_{\beta}\cap\mathcal{E}_{\eta})^c)}\lesssim d^{4+4p+\frac{D}{12}}\sqrt{\mathbb{P}( (\mathcal{E}_{\beta}\cap\mathcal{E}_{\eta})^c)}.
\end{equation}
By the concentration estimates proved above we have $\Pr((\mathcal{E}_{\beta}\cap\mathcal{E}_{\eta})^c) \lesssim d^{-D/2}$ (union bound over the $m/2$ independent draws yields the extra factor $m$), hence
\begin{equation}
    |B_i| \lesssim d^{4+4p+\frac{D}{12}} d^{-D/4}\lesssim d^{-D/8}.
\end{equation}
for sufficiently large universal constant $D$.

Fix any $\bx\in\{\bx_i\}_{i=n+1}^{2n}$. The variables $\{\overline{Z}_j(\bx)-\mathbb{E}\overline{Z}_j(\bx)\}_{j\le m/2}$ are i.i.d.\ and centered, so we can use Berstein to bound the sum. Notice that \((\bw_j^{(\mathrm{T}_1)})^\sT\bx\) and \(\bw_{m-j}^{(\mathrm{T}_1)\sT}\bx\) is truncated by the defination of \(Z_j(\bx)\), we have \(\sigma((\bw_j^{(\mathrm{T}_1)})^\sT\bx + b_j)\) and \(\sigma((\bw_{m-j}^{(\mathrm{T}_1)})^\sT\bx + b_{m-j})
\) also bounded by a universal constant. 

Combining with \( a_j^* = \frac{1}{m} h(a_j, b_j, \bw_j) \), we have for a universal constant \( C \)

\begin{equation}
|\overline{Z}_j(\bx) - \mathbb{E} \overline{Z}_j(\bx)| \le \frac{C}{m} \|h\|_\infty.
\end{equation}

\begin{lemma}[Hoeffding's inequality]
    Let \( X_1, X_2, \ldots, X_n \) be i.i.d. random variables with mean \( \mu \) and \( X_i \in [a, b] \). Then,

\begin{equation}
\mathbb{P} \left\{ \left| \frac{1}{n} \sum_{i=1}^{n} X_i - \mu \right| \geq t \right\} \le 2 \exp \left( - \frac{2nt^2}{(b-a)^2} \right).
\end{equation}

\end{lemma}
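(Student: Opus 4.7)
The plan is to follow the classical Chernoff-Cramér approach. Let $S_n := \sum_{i=1}^n (X_i - \mu)$ so that the event $\{|\tfrac{1}{n}\sum X_i - \mu| \geq t\}$ is the same as $\{|S_n| \geq nt\}$. First, I would exponentiate and use Markov's inequality: for every $\lambda > 0$,
\begin{equation}
\mathbb{P}(S_n \geq nt) \leq e^{-\lambda n t}\, \mathbb{E}\!\left[e^{\lambda S_n}\right] = e^{-\lambda n t}\prod_{i=1}^n \mathbb{E}\!\left[e^{\lambda(X_i - \mu)}\right],
\end{equation}
where the factorization uses independence. The matching lower-tail bound $\mathbb{P}(S_n \leq -nt)$ follows by applying the same argument to $-X_i$, and a union bound of the two tails produces the factor $2$ in the final inequality.

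Next I would prove Hoeffding's lemma, which is the only nontrivial step: if $Y$ is a centered random variable with $Y \in [a-\mu, b-\mu]$, then $\mathbb{E}[e^{\lambda Y}] \leq \exp(\lambda^2(b-a)^2/8)$. The standard route is to write $Y = \theta(b-\mu) + (1-\theta)(a-\mu)$ by convexity of the exponential, where $\theta = (Y-(a-\mu))/(b-a)$, so
\begin{equation}
e^{\lambda Y} \leq \theta e^{\lambda(b-\mu)} + (1-\theta) e^{\lambda(a-\mu)}.
\end{equation}
Taking expectations and using $\mathbb{E}[Y]=0$ gives $\mathbb{E}[e^{\lambda Y}] \leq \phi(\lambda)$ for an explicit function $\phi$. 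Defining $\psi(\lambda) := \log\phi(\lambda)$, one checks $\psi(0)=\psi'(0)=0$ and that $\psi''(\lambda) \leq (b-a)^2/4$ by recognizing $\psi''$ as the variance of a Bernoulli-like tilted distribution bounded above by the range squared over four. A second-order Taylor expansion then yields $\psi(\lambda) \leq \lambda^2(b-a)^2/8$.

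Combining the two steps gives $\mathbb{P}(S_n \geq nt) \leq \exp(-\lambda n t + n\lambda^2(b-a)^2/8)$ for every $\lambda > 0$. The last step is to optimize: the right-hand side is minimized at $\lambda^\star = 4t/(b-a)^2$, yielding $\exp(-2nt^2/(b-a)^2)$. Adding the symmetric lower-tail estimate delivers the stated bound with the factor of $2$.

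The main obstacle is Hoeffding's lemma, specifically the uniform bound $\psi''(\lambda) \leq (b-a)^2/4$; the rest of the argument is essentially bookkeeping around Markov's inequality and a one-variable minimization. In the context of this paper, Hoeffding's inequality is a standard textbook tool being quoted for later use in controlling the Monte Carlo fluctuation term, so the proposal is to simply import the classical proof as sketched above rather than develop any new machinery.
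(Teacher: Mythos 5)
Your proposal is a correct and complete sketch of the classical Chernoff--Cram\'er proof of Hoeffding's inequality: bound the moment generating factor via Hoeffding's lemma (the only real work, and you correctly identify the key estimate $\psi''(\lambda)\le (b-a)^2/4$), apply Markov to $e^{\lambda S_n}$, optimize over $\lambda$, and union-bound the two tails to get the factor $2$. The paper itself does not prove this lemma---it is stated as a standard textbook tool invoked to control the Monte Carlo fluctuation in the second-stage analysis---so there is no paper proof to contrast against; your import of the classical argument is exactly what is implicitly intended.
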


Due to the fact that \( |\overline{Z}_j(x)| \lesssim \frac{1}{m} \|h\|_\infty \), Hoeffding inequality then yields with probability at least \( 1 - \cO(d^{-D}) \)

\begin{equation}
\left| \sum_{j \le m/2} (\overline{Z}_j(\bx) - \mathbb{E} \overline{Z}_j(\bx)) \right| \lesssim \frac{\log d \cdot \|h\|_\infty}{\sqrt{m}}.
\end{equation}

Recall from Lemma~\ref{lem:h-aggregate} we have \( \|h\|_\infty \lesssim V_{p, \varepsilon} (\kappa^{\mathrm{T}_1})^{2p} (\log d)^{2p} \). Plugging these bounds into the above one, it shows that

\begin{equation}
\left| \sum_{j \le m/2} (\overline{Z}_j(\bx) - \mathbb{E} \overline{Z}_j(\bx)) \right| \lesssim \frac{V_{p, \varepsilon} (\kappa^{\mathrm{T}_1})^{2p} (\log d)^{2p+1}}{\sqrt{m}}.
\end{equation}
Therefore
\begin{align}
  \left|f_{\widetilde{\bTheta}^*}(\bx_i) - f_{h}(\bx_i)\right|
&\le \left|\sum_{j \le m/2} \left(\overline{Z}_j(\bx_i) - \mathbb{E}\overline{Z}_j(\bx_i)\right)\right|
  + \frac{m}{2}\left|\mathbb{E}\overline{Z}_j(\bx_i) - \mathbb{E}Z_j(\bx_i)\right|\nonumber\\
&  \lesssim \sqrt{\frac{V_{p,\epsilon}^2(\kappa^{\mathrm{T}_1})^{4p}(\log d)^{4p+2}}{m}} + d^{-D/8}
\end{align}
Squaring both sides and applying a union bound over the $n$ training points, we obtain the following.
\begin{equation}
\frac{1}{n}\sum_{i=n+1}^{2n}\bigl(f_{\widetilde{\bTheta}^*}(\bx_i)-f_{h}(\bx_i)\bigr)^2
\lesssim \frac{V_{p,\epsilon}^2(\kappa^{\mathrm{T}_1})^{4p}(\log d)^{4p+2}}{{m}}+d^{-D/4}
\end{equation}
By the initial decomposition and Lemma~\ref{lem:h-aggregate},
\begin{equation}
\frac{1}{n}\sum_{i=n+1}^{2n}\bigl(f_{\widetilde{\bTheta}^*}(\bx_i)-f^*(\bx_i)\bigr)^2
\lesssim\left(\sum_{k\le p}\Delta_k\right)^2+d^{-\frac{D}{4}}
+\frac{V_{p,\epsilon}^2(\kappa^{\mathrm{T}_1})^{4p}(\log d)^{4p+2}}{{m}},
\end{equation}
which is \eqref{eq:empirical-network}. This completes the proof.
\end{proof}

\subsubsection{Proof of Lemma \ref{from 2loss to lloss}}\label{proof lemma:from 2loss to lloss}
\begin{proof}

From Lemma~\ref{lem:MC} and Lemma~\ref{lem:h-aggregate}, we know that with high probability at least $1-\cO(d^{-D/2})$ there exists $\ba^*\in\mathbb{R}^m$ such that for $\widetilde{\bTheta}^*=(\ba^*,\bb^{(\mathrm{T}_1)},\widetilde{\bW}^{(\mathrm{T}_1)})$,
\begin{align}
    \frac{1}{n}\sum_{i=n+1}^{2n}\left(f_{\widetilde{\bTheta}^*}(\bx_i) - f^*(\bx_i)\right)^2
&\lesssim \left(\sum_{k \le p}\Delta_k\right)^2 + d^{-D/4}
+ \frac{V_{p,\epsilon}^2(\kappa^{\mathrm{T}_1})^{4p}(\log d)^{4p+2}}{m}\nonumber\\
&\lesssim \left(\sum_{k \le p}\Delta_k\right)^2
+ \frac{V_{p,\epsilon}^2(\kappa^{\mathrm{T}_1})^{4p}(\log d)^{4p+2}}{m}
\end{align}

The last inequality is because \(m \le d^{D/4}\), so \(d^{-D/4} \lesssim \frac{1}{m}\), thus \(d^{-D/4}\) can be absorbed by \(\frac{V_{p,\epsilon}^2(\kappa^{\mathrm{T}_1})^{4p}(\log d)^{4p+2}}{m}\)

By Jensen's inequality and the Lipschitz property of $\ell$, we have
\begin{equation}
\begin{aligned}
\widehat{\mathcal{L}}(f_{\widetilde{\bTheta^*}})-\widehat{\mathcal{L}}(f^*)
&=\frac{1}{n}\sum_{i=1}^{n}\bigl[\ell(f_{\widetilde{\bTheta^*}}(\bx_{i+n}),y_i)-\ell(f^*(\bx_{i+n}),y_i)\bigr]\\
&\le \mathrm{L}\frac{1}{n}\sum_{i=1}^{n}|f_{\widetilde{\bTheta^*}}(\bx_{i+n}) - f^*(\bx_{i+n})|\\
&\le \mathrm{L}\sqrt{\frac{1}{n}\sum_{i=1}^{n}\left(f_{\widetilde{\bTheta^*}}(\bx_{i+n}) - f^*(\bx_{i+n})\right)^2}\\
&\lesssim \mathrm{L}\left(\sum_{k \le p}\Delta_k
+ \sqrt{\frac{V_{p,\epsilon}^2(\kappa^{\mathrm{T}_1})^{4p}(\log d)^{4p+2}}{m}}\right).
\end{aligned}
\end{equation}
The proof is complete.

\end{proof}

\subsubsection{Proof of Lemma \ref{lhatgap}}\label{proof:leamm:lhatgap}

\begin{proof}
Note that
\begin{align}
     \left|\widehat{\mathcal{L}}(f_{\widetilde{\bTheta^*}})-\widehat{\mathcal{L}}({f_{\bTheta^*}})\right|
&=\left|\frac{1}{n}\sum_{i=1}^{n}\left[\ell(f_{\widetilde{\bTheta^*}}(\bx_{i+n}),y_i)-\ell(f_{{\bTheta^*}}(\bx_{i+n}),y_i)\right]\right|\nonumber\\
&\le \mathrm{L} \sup_{i \in [n]} \left|f_{\widetilde{\bTheta^*}}(\bx_{i+n}) - f_{{\bTheta^*}}(\bx_{i+n})\right|\nonumber\\
&\le \mathrm{L} m\sup_{i \in [n],j\in[m]} \left|\ba_j^* \sigma((\widetilde{\bw_j}^{(\mathrm{T}_1)})^\sT\bx_{i+n} + b_j) - \ba_j^* \sigma(({\bw_j}^{(\mathrm{T}_1)})^\sT\bx_{i+n} + b_j)\right|\nonumber\\
&\le \mathrm{L} m\|\ba^*\|\sup_{i \in [n],j\in[m]} \left|\sigma((\widetilde{\bw_j}^{(\mathrm{T}_1)})^\sT\bx_{i+n} + b_j) - \sigma(({\bw_j}^{(\mathrm{T}_1)})^\sT\bx_{i+n} + b_j)\right|.
\end{align}

By Assumption~\ref{assumption_activation} we know that \( \sigma'(0) = 0\), \( \sigma''(0) = 1\) and \(\abs{\sigma^{(3)}(\cdot)}\le M\).  
So we can bound the gap of \(\sigma((\widetilde{\bw_j}^{(\mathrm{T}_1)})^\sT\bx + b_j)- \sigma(({\bw_j}^{(\mathrm{T}_1)})^\sT\bx + b_j)\) by the following lemma.

\begin{lemma}\label{lemma:sigma lip}
If an activation function \(\sigma\) satisfies \( \sigma'(0) = 0\), \( \sigma''(0) = 1\) and \(\abs{\sigma^{(3)}(\cdot)}\le M\), then we have for all \(x,y\in\mathbb{R}\):
\begin{equation}\label{eq:symm-lip-final}
|\sigma(x)-\sigma(y)|
\le \left[\tfrac12(|x|+|y|)+\tfrac{M}{2}(|x|^2+|y|^2)\right]\cdot |x-y|.
\end{equation}
\end{lemma}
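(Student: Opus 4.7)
The plan is to combine a second-order Taylor bound on $\sigma'$ with the fundamental theorem of calculus applied along the segment from $y$ to $x$. First I would observe that, since $\sigma'(0)=0$ and $\sigma''(0)=1$, Taylor's theorem with remainder gives, for every $t \in \mathbb{R}$, some $\xi$ between $0$ and $t$ with
\begin{equation}
\sigma'(t) = t + \tfrac{1}{2}\sigma^{(3)}(\xi)\,t^2,
\qquad\text{hence}\qquad
|\sigma'(t)| \le |t| + \tfrac{M}{2}\,t^2,
\end{equation}
using the uniform bound $|\sigma^{(3)}|\le M$ from the hypothesis.

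Next I would write the difference as a line integral along the segment connecting $y$ and $x$. Parametrising $t_s := (1-s)y + sx$ for $s\in[0,1]$, the fundamental theorem of calculus gives
\begin{equation}
\sigma(x)-\sigma(y) \;=\; (x-y)\int_0^1 \sigma'\bigl(t_s\bigr)\,ds,
\end{equation}
so that $|\sigma(x)-\sigma(y)| \le |x-y|\int_0^1 |\sigma'(t_s)|\,ds$. Plugging in the pointwise Taylor bound from the first step, I would estimate the integral by bounding $|t_s|\le (1-s)|y|+s|x|$ and, by convexity of $u\mapsto u^2$, $|t_s|^2 \le (1-s)|y|^2 + s|x|^2$. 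Integrating in $s$,
\begin{equation}
\int_0^1 |t_s|\,ds \le \tfrac{1}{2}(|x|+|y|),
\qquad
\int_0^1 |t_s|^2\,ds \le \tfrac{1}{2}(|x|^2+|y|^2),
\end{equation}
so the integral is at most $\tfrac{1}{2}(|x|+|y|) + \tfrac{M}{4}(|x|^2+|y|^2)$, which is tighter than the claimed factor $\tfrac{1}{2}(|x|+|y|)+\tfrac{M}{2}(|x|^2+|y|^2)$. Multiplying by $|x-y|$ yields \eqref{eq:symm-lip-final}.

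There is no real obstacle here: the statement is a clean consequence of a two-term Taylor expansion of $\sigma'$ around zero followed by an average along the segment between $x$ and $y$. The only mildly delicate point is keeping the constants in the quadratic term symmetric in $|x|$ and $|y|$, which is handled cleanly by the convexity bound $|t_s|^2\le(1-s)|y|^2+s|x|^2$ rather than a naive triangle inequality applied to $((1-s)|y|+s|x|)^2$. Looseness in these constants is harmless since the lemma only needs the weaker coefficient $M/2$.
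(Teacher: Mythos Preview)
Your proof is correct and rests on the same core observation as the paper's, namely the Taylor bound $|\sigma'(t)-t|\le \tfrac{M}{2}t^2$ (equivalently $|\sigma'(t)|\le |t|+\tfrac{M}{2}t^2$). The organizational choice differs slightly: the paper defines the remainder $r(t)=\sigma(t)-\tfrac12 t^2-\sigma(0)$, handles the quadratic part via $|x^2-y^2|=(|x|+|y|)|x-y|$, and bounds $|r(x)-r(y)|$ by the mean value theorem with $\sup_{\xi\in[x,y]}|r'(\xi)|\le \tfrac{M}{2}\max\{|x|,|y|\}^2$, then uses $\max\{|x|,|y|\}^2\le |x|^2+|y|^2$. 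Your direct integration along the segment, together with the convexity estimate $|t_s|^2\le (1-s)|y|^2+s|x|^2$, is a bit cleaner and in fact yields the sharper constant $M/4$ in the quadratic term; the paper's MVT-plus-max route loses a factor of two there. Either way the stated inequality with $M/2$ follows.
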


\begin{proof}[Proof of Lemma \ref{lemma:sigma lip}]
Define \(r(t):=\sigma(t)-\tfrac12 t^2-\sigma(0)\).  
Since \(\sigma''(0)=1\) and \(\|\sigma^{(3)}\|_\infty\le M\), we have that \(\sigma''\) is \(M\)-Lipschitz. Hence
\begin{equation}
|r'(t)|=|\sigma'(t)-t|
=\left|\int_{0}^{t}(\sigma''(s)-1)\,ds\right|
\le \tfrac{M}{2}|t|^2 .
\end{equation}
By the mean value theorem,
\begin{equation}
|r(x)-r(y)| \le \sup_{\xi\in[x,y]}|r'(\xi)|\cdot|x-y|
\le \tfrac{M}{2}\max\{|x|,|y|\}^2|x-y|.
\end{equation}
Therefore,
\begin{equation}
|\sigma(x)-\sigma(y)|
\le \tfrac12|x^2-y^2|+|r(x)-r(y)|
\le \tfrac12(|x|+|y|)|x-y|
+\tfrac{M}{2}\max\{|x|,|y|\}^2|x-y|.
\end{equation}
Finally, since \(\max\{a,b\}^2 \le a^2+b^2\), we get the conclusion.
\end{proof}
Back to the proof of Lemma \ref{lhatgap}, under event \(\mathcal{E}_{\mu}\), from Lemma~\ref{lemma:bounded by 1}   notice that for all \(\bx\) in \( \{ \bx_i\}_{i=n+1}^{2n}\), we have  \((\bw_j^{(\mathrm{T}_1)})^\sT\bx\) and \(\bw_{m-j}^{(\mathrm{T}_1)\sT}\bx\) are truncated, so \(\sigma((\bw_j^{(\mathrm{T}_1)})^\sT\bx + b_j)\) and \(\sigma((\bw_{m-j}^{(\mathrm{T}_1)})^\sT\bx + b_{m-j})\) are bounded by a universal constant.  Respectively, we have  \((\widetilde{\bw_j^{(\mathrm{T}_1)}})^\sT\bx\) and \(\widetilde{\bw_{m-j}^{(\mathrm{T}_1)\sT}\bx}\) are truncated, so \(\sigma((\widetilde{\bw_j}^{(\mathrm{T}_1)})^\sT\bx + b_j)\) and \(\sigma((\widetilde{\bw_{m-j}}^{(\mathrm{T}_1)})^\sT\bx + b_{m-j})\) are also bounded by a universal constant. 
Thus we have
\begin{align}
\bigl|\widehat{\mathcal{L}}(f_{\widetilde{\bTheta}^{*}})-\widehat{\mathcal{L}}(f_{\bTheta^{*}})\bigr|
&\le \mathrm{L}m\|\ba^*\|
\sup_{i \in [n], j\in[m]}
\bigl|\sigma((\widetilde{\bw_j}^{(\mathrm{T}_1)})^\sT\bx_{i+n} + b_j)- \sigma(({\bw_j}^{(\mathrm{T}_1)})^\sT\bx_{i+n} + b_j)\bigr|\nonumber\\
&\lesssim \mathrm{L}m\|\ba^*\|
\sup_{i \in [n], j\in[m]}
\bigl| (\widetilde{\bw_j}^{(\mathrm{T}_1)})^\sT\bx_{i+n} - ({\bw_j}^{(\mathrm{T}_1)})^\sT\bx_{i+n}  \bigr|\nonumber\\
&\lesssim \mathrm{L}m\|\ba^*\|\sqrt{d}
\sup_{i \in [n], j\in[m]}
\bigl\| \widetilde{\bw_j}^{(\mathrm{T}_1)}- {\bw_j}^{(\mathrm{T}_1)}  \bigr\|\nonumber\\
&\lesssim \mathrm{L}m\frac{1}{\epsilon_0}\|\ba^*\|\sqrt{d}\eta^{\mathrm{T}_1}\|\Delta\|
\end{align}
where \(\Delta\) is defined in Lemma~\ref{lemma:Q_t_not_important}, and from this lemma, we have the bound
\begin{equation}
\|\Delta\|\lesssim \left(\tfrac{5}{4}\right)^{\mathrm{T}_1-1} \cdot M d^2 \cdot \epsilon_0^2
\end{equation}
By Assumption~\ref{ass:epsilon0} , we know that $\epsilon_0 \le \frac{1}{C_{\epsilon}m n^{1/2}d^{7/2}}\left(\frac{4}{5}\right)^{\mathrm{T}_1}$, where $C_{\epsilon}$ is a universal constant. Moreover, we have
$
\|\ba^*\|^2\lesssim\frac{V_{p,\epsilon}^2(\kappa^{\mathrm{T}_1})^{4p}(\log d)^{4p}}{m}$.
Thus, it holds that
\begin{align}
\bigl|\widehat{\mathcal{L}}(f_{\widetilde{\bTheta}^{*}})-\widehat{\mathcal{L}}(f_{\bTheta^{*}})\bigr|
&\lesssim \mathrm{L}m\frac{1}{\epsilon_0}\|\ba^*\|\sqrt{d}\eta^{\mathrm{T}_1}\|\Delta\|\nonumber\\
&\lesssim \mathrm{L}m\|\ba^*\|d\mathrm{T}_1 \cdot \left(\tfrac{5}{4}\right)^{\mathrm{T}_1-1} \cdot M d^2 \cdot \epsilon_0\nonumber\\
&\lesssim \mathrm{L}m
\sqrt{\frac{V_{p,\epsilon}^2(\kappa^{\mathrm{T}_1})^{4p}(\log d)^{4p}}{m}}
d^3\log d M\frac{1}{C_{\epsilon} m\sqrt{n}d^{\frac{7}{2}}}\nonumber\\
&\lesssim \mathrm{L}
\sqrt{\frac{V_{p,\epsilon}^2(\kappa^{\mathrm{T}_1})^{4p}(\log d)^{4p}}{m}}\log d \frac{1}{\sqrt{n}}\lesssim \mathrm{L}
\sqrt{\frac{V_{p,\epsilon}^2(\kappa^{\mathrm{T}_1})^{4p}(\log d)^{4p}}{m}}.
\end{align}
The proof is complete.
\end{proof}

\subsubsection{Proof of Lemma \ref{lhat}}\label{proof lemma::lhat}

\begin{proof}

Lemma~\ref{from 2loss to lloss} implies that
\begin{equation}
\widehat{\mathcal{L}}(f_{\widetilde{\bTheta^*}}) \lesssim \mathrm{L} \left( \sum_{k \le p} \Delta_k + \sqrt{\frac{V_{p, \epsilon}^2 (\kappa^{\mathrm{T}_1})^{4p} (\log d)^{4p+2}}{m}} \right).
\end{equation}
Lemma~\ref{lhatgap} further suggests that
\begin{equation}
\left| \widehat{\mathcal{L}}(f_{\widetilde{\bTheta}^{*}}) - \widehat{\mathcal{L}}(f_{\bTheta^{*}}) \right| \lesssim \mathrm{L} \sqrt{\frac{V_{p,\epsilon}^2 (\kappa^{\mathrm{T}_1})^{4p} (\log d)^{4p}}{m}}.
\end{equation}
By combining these two bounds, we easily obtain
\begin{equation}
\left| \widehat{\mathcal{L}}(f_{\bTheta^*}) - \widehat{\mathcal{L}}(f^*) \right| \lesssim \mathrm{L} \left( \sum_{k \le p} \Delta_k + \sqrt{\frac{V_{p, \epsilon}^2 (\kappa^{\mathrm{T}_1})^{4p} (\log d)^{4p+2}}{m}} \right).
\end{equation}
The proof is complete.

\end{proof}

\subsubsection{Proof of Lemma \ref{lem:stage2-ridge}}\label{proof: lem:stage2-ridge}

\begin{proof}

From Lemma~\ref{lhat}, we have the following bound
\begin{equation}
\label{eq:final-gen}
\widehat{\mathcal{L}}(f_{\bTheta^*}) \lesssim \mathrm{L} \left( \sum_{k \le p} \Delta_k + \sqrt{\frac{V_{p, \epsilon}^2 (\kappa^{\mathrm{T}_1})^{4p} (\log d)^{4p+2}}{m}} \right) = J
\end{equation}
Consider the training objective
$
F(\ba)=\widehat{\mathcal{L}} \left( \left( \ba, \bb^{(\mathrm{T}_1)}, \bW^{(\mathrm{T}_1)} \right) \right) + \frac{1}{2}\beta_2\|\ba\|^2
$ with $\beta_2 = \frac{J}{U^2}$.

Let
\begin{equation}
\ba^{(\infty)} := \arg\min_{\ba\in\mathbb{R}^m} F(\ba)
\end{equation}
be the  minimizer of $F$.

We further denote \(\epsilon_0\) is the prescribed optimization tolerance
$\|\ba^{\mathrm{(T)}}-\ba^{(\infty)}\|\le\epsilon_{0}$.  Furthermore, let \(\epsilon_0=\frac{1}{m\mathrm{L}_{}}\).

By optimality of $\ba^{(\infty)}$ and the \emph{constructed} comparator $\ba^\ast$ we have
\begin{equation}\label{eq:F-ainfty-le-Fastar}
F(\ba^{(\infty)}) \le F(\ba^\ast)
= \widehat{\cL}(\ba^\ast) + \frac{\beta_2}{2}\|\ba^\ast\|_2^2.
\end{equation}
Here we only use that $\ba^\ast$ is a feasible point with
$\|\ba^\ast\|\lesssim U$ and
$\widehat{\cL}(\ba^\ast)\lesssim J$ (Lemma~\ref{lhat}).

On the other hand,
\begin{equation}
F(\ba^{(\infty)})
=
\widehat{\cL}(\ba^{(\infty)}) + \frac{\beta_2}{2}\|\ba^{(\infty)}\|^2
\ge \frac{\beta_2}{2}\|\ba^{(\infty)}\|^2,
\end{equation}
so combining with \eqref{eq:F-ainfty-le-Fastar} gives
\begin{equation}
\|\ba^{(\infty)}\|^2
\le
\|\ba^\ast\|^2+\frac{2}{\beta_2}\widehat{\cL}(\ba^\ast)
\lesssim
U^2 + \frac{2J}{\beta_2}
=
U^2 + \frac{2J}{J/U^2}
\lesssim U^2.
\end{equation}
Thus
\begin{equation}\label{eq:ainfty-U}
\|\ba^{(\infty)}\| \lesssim U.
\end{equation}
and also 
\begin{equation}
\|\ba^{(\mathrm{T})}\|\le \|\ba^{(\infty)}\|+\|\ba^{\mathrm{(T)}}-\ba^{(\infty)}\|\le\epsilon_{0}+U\lesssim U
\end{equation}
\begin{align}
\widehat{\cL}(\ba^{(\infty)})
&=
F(\ba^{(\infty)}) - \frac{\beta_2}{2}\|\ba^{(\infty)}\|_2^2
\le F(\ba^\ast)
\lesssim J+ \frac{\beta_2}{2}U^2
\lesssim J.
\label{eq:Lhat-ainfty-J}
\end{align}
Thus we have
\begin{align}
    |F(\ba^{(\mathrm{T})}) - F(\ba^{(\infty)}) |&\lesssim\sup_{i \in[n+1,2n]} \left\{\mathrm{L}_{} \big|{\ba^{(\mathrm{T})}}^{\sT}\sigma\!\bigl(\bW^{(\mathrm{T})}\bx_i
  +\bb^{(\mathrm{T})}\bigr)-{\ba^{(\infty)}}^{\sT}\sigma\!\bigl(\bW^{(\mathrm{T})}\bx_i+\bb^{(\mathrm{T})}\bigr)\big|\right\}\nonumber
  \\&\qquad+\beta_2(\|\ba^{(\mathrm{T})}\|+\|\ba^{(\infty)}\|)\|\ba^{\mathrm{(T)}}-\ba^{(\infty)}\|\nonumber\\
    &\lesssim \mathrm{L}_{} \|\ba^{\mathrm{(T)}}-\ba^{(\infty)}\|\cdot  \sup_{i \in[n+1,2n]}\|\sigma\!\bigl(\bW^{(\mathrm{T})}\bx_i+\bb^{(\mathrm{T})}\bigr)\|\nonumber\\
&+\beta_2(\|\ba^{(\mathrm{T})}\|+\|\ba^{(\infty)}\|)\|\ba^{\mathrm{(T)}}-\ba^{(\infty)}\|
\end{align}

By previous proof  \(\|\sigma\!\bigl(\bW^{(\mathrm{T})}\bx_i+\bb^{(\mathrm{T})}\bigr)\|_{\infty}\) can be bounded by an absolute constant for all \(i\) under event \(\mathcal{E}\).
Thus we can have a loose $\ell^2$ norm bound and
\begin{align}
     |F(\ba^{(\mathrm{T})}) - F(\ba^{(\infty)}) |&\lesssim \mathrm{L}_{} \|\ba^{\mathrm{(T)}}-\ba^{(\infty)}\|\cdot\sup_{i \in[n+1,2n]}\|\sigma\!\bigl(\bW^{(\mathrm{T})}\bx_i+\bb^{(\mathrm{T})}\bigr)\|\nonumber\\
     &+\beta_2 (\|\ba^{(\mathrm{T})}\|+\|\ba^{(\infty)}\|)\|\ba^{\mathrm{(T)}}-\ba^{(\infty)}\|\nonumber\\
     &\lesssim \mathrm{L}_{} \|\ba^{\mathrm{(T)}}-\ba^{(\infty)}\|\sqrt{m}+\beta_2 U\|\ba^{\mathrm{(T)}}-\ba^{(\infty)}\|\nonumber\\
     &\lesssim (\mathrm{L}_{}\sqrt{m}+\beta_2 U)\epsilon_0
\end{align}
Denote  \(\epsilon_{\text{opt}}=(L\sqrt{m}+\beta_2 U)\epsilon_{0}=\frac{L\sqrt{m}+\beta_2 U}{mL}\). 
It is easy to know that \(\epsilon_{opt}= \frac{L\sqrt{m}+\beta_2 U}{mL} =\frac{1}{\sqrt{m}}+\frac{J}{Um\mathrm{L}}\lesssim J\).
Thus, we obtain

\begin{equation}
\widehat{\mathcal{L}}(\ba^{(\mathrm{T})})  \lesssim J.
\end{equation}

Denote\begin{equation}\phi_i := \sigma(\bW^{(\text{T}_1)}\bx_i + \bb^{(\text{T}_1)}) \in \mathbb{R}^m, \quad i = n+1, \ldots, 2n.\end{equation}
By Assumption~\ref{assumption_loss} on the loss function \( \ell \), the function \( F(\ba) -\frac{\beta_2}{2} \|\ba\|^2 \) is convex and satisfies the gradient Lipschitz condition
\begin{equation}
\begin{aligned}
\|\nabla\widehat{\mathcal{L}}(\ba_1) - \nabla\widehat{\mathcal{L}}(\ba_2)\| &= \left\|\frac{1}{n}\sum_{i=n+1}^{2n }\left(\ell'(\ba_1^\top \phi_i) - \ell'(\ba_2^\top \phi_i)\right)\phi_i\right\| \\
&\leq \frac{1}{n}\sum_{i=n+1}^{2n } \left|\ell'(\ba_1^\top \phi_i) - \ell'(\ba_2^\top \phi_i)\right| \|\phi_i\| \\
&\leq \frac{\mathrm{L}_{}}{n}\sum_{i=n+1}^{2n } \left|(\ba_1 - \ba_2)^\top \phi_i\right| \|\phi_i\| \\
&= \mathrm{L}_{}\left(\frac{1}{n}\sum_{i=n+1}^{2n } \|\phi_i\|^2\right) \|\ba_1 - \ba_2\|.
\end{aligned}
\end{equation}
By previous arguments,  \(\|\sigma\!\bigl(\bW^{(\mathrm{T})}\bx_i+\bb^{(\mathrm{T})}\bigr)\|_{\infty}\) can be bounded by an absolute constant for all \(i \in [n+1,2n]\) under event \(\mathcal{E}\). In addition, we have
\(\frac{1}{n} \sum_{i} \|\phi_i\|^2 = \operatorname{Tr}( \frac{1}{n} \sum_{i} \phi_i \phi_i^\top) \leq \sup_i \|\phi_i\|^2.\)
Thus the Lip constant for $F$ satisfies \(L_{\text{lip}}\lesssim \mathrm{L}{m}+J/U^2\).
Therefore, by Lemma~\ref{lemmarate}, if the step size \( \eta_2 \le \frac{1}{L_{\text{lip}}} \), gradient descent will approximate \( \ba^{(\infty)} \) to arbitrary accuracy in
\begin{equation}
\mathrm{T}_2 \gtrsim
\frac{1}{\eta_2\beta_2 }
\log\!\Big(\|\ba^{\mathrm{(T_1)}}-\ba^{(\infty)}\|^2/\epsilon_{0}^2\Big)
\end{equation}
steps.

Note that when the step size \( \eta_2 = \frac{1}{\mathrm{L}{m}+J/U^2} \)

\begin{equation}
\frac{1}{\eta_2 \beta_2 } \lesssim \frac{\mathrm{L}{m}+J/U^2}{\beta_2}  \lesssim \frac{U^2 \mathrm{L} m}{J}
\end{equation}

Using the assumption that \( m \lesssim d^{\frac{D}{4}} \) and \(\|\ba^{(\infty)}\|\lesssim U, \|\ba^{(\mathrm{T_1})}\|\lesssim1 \), we can further simplify:

\begin{equation}
\frac{1}{\eta_2 \beta_2 } \log \left( \frac{\| \ba^{(\mathrm{T_1})} - \ba^{(\infty)} \|^2}{\epsilon_{0}^2} \right) \lesssim \frac{U^2 \mathrm{L} m}{J} \log (Um) 
\end{equation}

Thus, we conclude that gradient descent will approximate \( \ba^{(\infty)} \) to arbitrary accuracy in at most

\begin{equation}
\mathrm{T}_2 \gtrsim \frac{U^2 \mathrm{L} m}{J}\log (Um)
\end{equation}
steps. The proof is complete.
\end{proof}

\subsubsection{Proof of Lemma \ref{rcomplex}}\label{proof: lemma:rcomplex}

\begin{proof}

Let $\mathcal{F}\big(\bb^{(\mathrm{T})},\bW^{(\mathrm{T})}\big):=\{f(\bx) = \ba^\sT \sigma(\bW^{(\mathrm{T})} \bx + \bb^{(\mathrm{T})}),\bTheta:\|\ba\|\lesssim U$\}. Note that under $\mathcal{E}$ we have $f_{\bTheta^{(\mathrm{T})}}\in\mathcal{F}$.
We first define a new event for \(\bx \sim \mathcal{N}(0, I)
\) under fixed \(\bW^{(\mathrm{T})}\)  and \(\bb^{(\mathrm{T})}\)
\begin{equation}
\mathcal{M}_{\alpha}^{(j)} := \left\{ |\bh_n(\bw_j^{(0)})^\sT \bx| \le \|\bh_n(\bw_j^{(0)})\|\sqrt{4D\log d}\right\}
\label{eventex}
\end{equation}
Define the finite-width event for the initialization as the intersection $
\mathcal{M}_\alpha:= \bigcap_{j=1}^{m/2} \mathcal{M}_{\alpha}^{(j)}$.
It is straightforward to have $ \mathbb{P}(\mathcal{M}_{\alpha}| \bH_{\ell},\{\bw_j^{(0)}\}_{j=0}^{m} ) \geq 1 - \cO(d^{-D})$. We also define another event for \(\bx\) in order to bound its first $r$ components  
\begin{equation}
\mathcal{M}_\beta := \left\{ \sup_{k\in [r]}|(\bx)_k| \le \sqrt{4D\log d}   \right\}\quad
\label{eventex}
\end{equation}
Applying a union bound gives immediately $\mathbb{P}(\mathcal{M}_\beta )\geq 1 - \cO(d^{-D})$.

Denote \(\mathcal{M}=\mathcal{M}_{\alpha}\cap\mathcal{M}_\beta\).
Define the truncated loss
\begin{equation}
\ell^{\mathcal{M}}(t,y) := \ell(t,y)\mathbf{1}\{\bx \in \mathcal{M}\}.
\end{equation}
\begin{equation}
\widehat{\mathcal{L}}^{\mathcal{M}}(f)
:= \frac{1}{n}\sum_{i=1}^{n}\ell^{\mathcal{M}}\!\bigl(f(\bx_{i+n}),y_{i+n}\bigr),
\qquad
\mathcal{L}^{\mathcal{M}}(f) := \mathbb{E}\bigl[\ell^{\mathcal{M}}(f(\bx),y)\bigr].
\end{equation}
By the symmetrization lemma,
\begin{equation}
\sup_{f\in\mathcal{F}}\bigl|\widehat{\mathcal{L}}^{\mathcal{M}}(f)-\mathcal{L}^{\mathcal{M}}(f)\bigr|
\le
2\mathbb{E}_{\sigma}\!\left[\sup_{f\in\mathcal{F}}\frac{1}{n}\sum_{i=1}^{n}\xi_i
\ell^{\mathcal{M}}\!\bigl(f(\bx_{i+n}),y_{i+n}\bigr)\right]
\end{equation}
where \(\{\xi_i\}_{i=1}^n\) are i.i.d.\ Rademacher signs independent of \(\{(\bx_{i+n},y_{i+n})\}_{i=1}^n\).
Denote \(S:=\{(\bx_{i+n},y_{i+n})\}_{i=1}^n\).
For fixed \((\bx_{i+n},y_{i+n})\), the mapping \(t\mapsto \ell^{\mathcal{M}}(t,y_{i+n})\) has Lipschitz constant in \(t\) satisfying
\begin{equation}
\bigl|\ell^{\mathcal{M}}(t,y)-\ell^{\mathcal{M}}(s,y)\bigr|
=|\ell(t,y)-\ell(s,y)|\cdot\mathbf{1}\{\bx\in\mathcal{M}\}
\le \mathrm{L}|t-s|.
\end{equation}
Thus, by the Ledoux–Talagrand contraction inequality Lemma~\ref{lemma: contraction} applied coordinatewise to the maps \(t\mapsto \ell^{\mathcal{M}}(t,y_{i+n})\),
\begin{equation}
\mathbb{E}_{\xi}\!\left[\sup_{f}\frac{1}{n}\sum_{i=1}^{n}\xi_i
\ell^{\mathcal{M}}\!\bigl(f(\bx_{i+n}),y_{i+n}\bigr)\right]
\le
\mathrm{L}
\mathbb{E}_{\xi}\!\left[\sup_{f}\frac{1}{n}\sum_{i=1}^{n}\xi_i f(\bx_{i+n})\right]
= \mathrm{L}\widehat{\operatorname{Rad}}_n(\mathcal{F}).
\end{equation}

Recalling \(\operatorname{Rad}_n(\mathcal{F})=\mathbb{E}[\widehat{\operatorname{Rad}}_n(\mathcal{F})]\), we obtain the in-expectation bound
\begin{equation}
\mathbb{E}_S\!\left[\sup_{f\in\mathcal{F}}\bigl|\widehat{\mathcal{L}}^{\mathcal{M}}(f)-\mathcal{L}^{\mathcal{M}}(f)\bigr|\right]
\le 2\mathrm{L}\operatorname{Rad}_n(\mathcal{F}).
\end{equation}
Furthermore, 
for \(\bx\) under the event \(\mathcal{M}\), by the same argument as in Lemma~\ref{lemma:bounded by 1}, we have \(\bigl|(\bw_j^{(\mathrm{T})})^\sT\bx\bigr|\lesssim 1\) for all \(j\), thus we have
\begin{equation}
\norm{\sigma\!\bigl(\bW^{(\mathrm{T})}\bx+\bb^{(\mathrm{T})}\bigr)}\lesssim \sqrt{m}
\end{equation}
\begin{equation}
|f(\bx)|=\ba^{\sT}\sigma\!\bigl(\bW^{(\mathrm{T})}\bx+\bb^{(\mathrm{T})}\bigr)\lesssim U\sqrt{m}.
\end{equation}
\begin{align}
\ell^{\mathcal{M}}(f(\bx),y;\bx)
&= \ell(f(\bx),y)\mathbf{1}\{\bx\in\mathcal{M}\}\nonumber\\
&\le \bigl(\ell(f(\bx),y)-\ell(f^*(\bx),y)\bigr)\mathbf{1}\{\bx\in\mathcal{M}\}\nonumber\\
&\le \mathrm{L}\abs{f(\bx)-f^*(\bx)}\mathbf{1}\{\bx\in\mathcal{M}\}\nonumber\\
&\lesssim \mathrm{L}U\sqrt{m}+ \mathrm{L}(\log d)^p\lesssim \mathrm{L}U\sqrt{m}.
\end{align}

We now give the following two high probability events in order to bound the gap between empirical loss and population loss.

\begin{definition}[Event $\mathcal{E}_{\tau}$ and \(\mathcal{E}\)]\label{def:Emu}
We say the event $\mathcal{E}_{\tau}$ holds if
\begin{equation}
\sup_{f\in\mathcal{F}}\bigl|\widehat{\mathcal{L}}^{\mathcal{M}}(f)-\mathcal{L}^{\mathcal{M}}(f)\bigr|
\lesssim\!\left(\mathrm{L}\operatorname{Rad}_n(\mathcal{F})
+\mathrm{L}U\sqrt{m}\sqrt{\frac{\log d}{n}}\right),
\end{equation}
Using the standard empirical-to-population deviation bound by Lemma~\ref{lemma: rademacher complexity generalization} , we have $\mathcal{E}_{\tau}$ holds with probability at least $1-\cO(d^{-D})$.

Furthermore, we define event
\begin{equation}
\mathcal{E} := \mathcal{E}_{\mu} \cap \mathcal{E}_{\tau}.
\end{equation}
Hence, by the preceding probability bounds (and a union bound), $\mathcal{E}$ occurs with probability at least $1-\mathcal{O}(d^{-D/2})$.
\end{definition}

By Lemma~\ref{rcb} we have $\operatorname{Rad}_n(\mathcal{F})
\lesssim U \sqrt{\frac{m}{n}}$.
Thus
\begin{equation}
\sup_{f\in\mathcal{F}}\bigl|\widehat{\mathcal{L}}^{\mathcal{M}}(f)-\mathcal{L}^{\mathcal{M}}(f)\bigr|
\lesssim
\mathrm{L}U\sqrt{\frac{m\log d}{n}}.
\end{equation}
We notice that under the event \(\mathcal{E}_{\mu}\), \(\mathcal{M}\) holds for all \(\bx_i\), \(i\in[n+1,2n]\), thus $\widehat{\mathcal{L}}^{\mathcal{M}}(f)=\widehat{\mathcal{L}}(f).$
For the other term, we have
\begin{align}
\mathcal{L}^{\mathcal{M}}(f)-\mathcal{L}(f)
&=\mathbb{E}\bigl[\ell(f(\bx),y)\bigr]-\mathbb{E}\bigl[\ell^{\mathcal{M}}(f(\bx),y;\bx)\bigr]\nonumber\\
&=\mathbb{E}\bigl[\ell(f(\bx),y)\mathbf{1}\{\bx\notin\mathcal{M}\}\bigr]\nonumber\\
&\le \mathrm{L}\mathbb{E}\bigl[\abs{f(\bx)-f^*(\bx)}\mathbf{1}\{\bx\notin\mathcal{M}\}\bigr]\nonumber\\
&\le \mathrm{L}\left(\mathbb{E}\bigl[|f(\bx)|\mathbf{1}\{\bx\notin\mathcal{M}\}\bigr]
+\mathbb{E}\bigl[|f^*(\bx)|\mathbf{1}\{\bx\notin\mathcal{M}\}\bigr]\right)\nonumber\\
&\lesssim \mathrm{L}\left(\sqrt{\mathbb{E}\bigl[f(\bx)^2\mathbb{P}\{\bx\notin\mathcal{M}\}\bigr]}
+\sqrt{\mathbb{E}\bigl[(f^*(\bx))^2\mathbb{P}\{\bx\notin\mathcal{M}\}\bigr]}\right).
\end{align}

Recall that it has been assumed that the activation function has bounded third derivatives as in Assumption~\ref{assumption_activation}. Therefore, there exists an absolute constant \(C_\sigma>0\) such that
\begin{equation}
|\sigma(u)|\le C_\sigma(1+|u|^{3})\qquad \text{for all }u\in\mathbb{R}.
\end{equation}
Via a crude moment estimation bound, we can see the right side is $\lesssim \mathrm{L}Ud^{-\frac{D}{8}}$.
Thus, we have the following final estimation
\begin{equation}
\sup_{f\in\mathcal{F}}\bigl|\widehat{\mathcal{L}}(f)-\mathcal{L}(f)\bigr|
\lesssim
\mathrm{L}U\sqrt{\frac{m\log d}{n}}
+
\mathrm{L}Ud^{-D/8}.
\end{equation}
Since we have assume $n\le d^{D/4}$, we have $
\sup_{f\in\mathcal{F}}\bigl|\widehat{\mathcal{L}}(f)-\mathcal{L}(f)\bigr|
\lesssim \mathrm{L}U\sqrt{\frac{m\log d}{n}}$.
Plug in the formula for $U$ and we get our desired result.

\end{proof}

\subsubsection{Proof of Lemma \ref{lem:balance-eps}}\label{proof:lem:balance-eps}

\begin{proof}
Let \(\epsilon_{\min} := \min_{0 \le k \le p} \epsilon_k\). Because \(x \mapsto x^{-\beta}\) is strictly decreasing on \((0, \infty)\), we have $
\max_{0 \le k \le p} \epsilon_k^{-\beta} = \epsilon_{\min}^{-\beta}$.
Also note that  \(\sum_{k=0}^p \epsilon_k \geq (p + 1) \epsilon_{\min}\), with equality if and only if all \(\epsilon_k\) are equal. Consequently,
\begin{equation}
\Phi(\{\epsilon_k\}) \geq A \big( (p + 1) \epsilon_{\min} + (C_m + C_n) \epsilon_{\min}^{-\beta}\big)
\end{equation}
and equality holds precisely at symmetric choices \(\epsilon_0 = \cdots = \epsilon_p\). Therefore the problem reduces to minimizing the strictly convex one-variable function
\begin{equation}
g(\epsilon) := (p + 1) \epsilon + (C_m + C_n) \epsilon^{-\beta} \quad (\epsilon > 0).
\end{equation}
Since
\begin{equation}
g'(\epsilon) = (p + 1) - \beta (C_m + C_n) \epsilon^{-(\beta+1)}
\end{equation}
and
\begin{equation}
g''(\epsilon) = \beta (\beta + 1) (C_m + C_n) \epsilon^{-(\beta+2)} > 0,
\end{equation}
there is a unique critical point, which is the global minimizer
\begin{equation}
\epsilon^* = \left( \frac{\beta (C_m + C_n)}{p+1} \right)^{\frac{1}{\beta+1}}
\end{equation}
Substituting back and using \((C_m + C_n)(\epsilon^*)^{-\beta} = \frac{p+1}{\beta} \epsilon^*\) yields
\begin{equation}
\min g = (\beta + 1) \beta^{-\frac{\beta}{\beta+1}} (p + 1)^{\frac{\beta}{\beta+1}} (C_m + C_n)^{\frac{1}{\beta+1}}.
\end{equation}
Multiplying by \(\mathrm{L}A\) gives the stated expression for \(\min \Phi\).

Finally, as \(p\) is fixed, its contribution can be absorbed into the \(\lesssim\) constant, also note that $1 \le (\beta + 1) \beta^{-\frac{\beta}{\beta+1}} \le 2$
thus we get
\begin{align}
    \min_{\{\epsilon_k > 0\}} \Phi &= A \cdot (\beta + 1) \beta^{-\frac{\beta}{\beta+1}} (p + 1)^{\frac{\beta}{\beta+1}} (C_m + C_n)^{\frac{1}{\beta+1}}\nonumber\\
    &\lesssim A (C_m + C_n)^{\frac{1}{\beta+1}}\nonumber\\
    &\lesssim \kappa^{2p\mathrm{T}_1}\log^{2p+1} d \quad \left(\sqrt{\frac{1}{m}} + \sqrt{\frac{1}{n}}\right)^{\frac{1}{\beta+1}}.
\end{align}
The proof is complete.

\end{proof}

\subsubsection{Proof of Lemma \ref{sc}}\label{proof:lemma:sc}

\begin{proof}
Let $\tau:=\sqrt{\frac{\log d}{\log \kappa}}$, so $\mathrm{T}_1=\lceil\tau\rceil\in[\tau,\tau+1]$. Then
\begin{equation}
\kappa^{2\mathrm{T}_1}
=\exp\!\bigl(2\mathrm{T}_1\log\kappa\bigr)
\in\left[d^{2\sqrt{\frac{\log \kappa}{\log d}}},\ \kappa^2 d^{2\sqrt{\frac{\log \kappa}{\log d}}}\right],
\end{equation}
and therefore
\begin{equation}
\mathrm{T}_1^2 d\log d\cdot \kappa^{2\mathrm{T}_1}\ \lesssim_{}\kappa^2(\log d)^2 d^{1+2\sqrt{\frac{\log \kappa}{\log d}}}
\end{equation}
Moreover, since $\frac{1}{\mathrm{T}_1}\le \frac{1}{\tau}=\sqrt{\frac{\log \kappa}{\log d}}$, we have
\begin{equation}
d^{1+1/\mathrm{T}_1}\kappa^2 \le \kappa^2 d^{1+\sqrt{\frac{\log \kappa}{\log d}}}
\end{equation}
and combining the two bounds yields
\begin{equation}
\mathrm{T}_1^2 d\log d\kappa^{2\mathrm{T}_1} + d^{1+1/\mathrm{T}_1}\kappa^2 \lesssim \kappa^2 \log(d)d^{1+2\sqrt{\frac{\log \kappa}{\log d}}} \lesssim d^{1+2\sqrt{\frac{\log \kappa}{\log d}}} (\log d)^2.
\end{equation}
The proof is complete.
\end{proof}

\subsubsection{Proof of Lemma \ref{lem:T2}}\label{proof:lem:T2}

\begin{proof}

Plugging in \(U \text{ and  }  J\) we get
\begin{align}
    \mathrm{T}_2 &= \frac{U^2 \mathrm{L} m}{J}\log (Um)\nonumber\\
   & \lesssim \frac{\frac{V_{p,\epsilon}^2(\kappa^{\mathrm{T}_1})^{4p}(\log d)^{4p}}{m}\mathrm{L}m}{\mathrm{L}\!
\sqrt{\frac{V_{p,\epsilon}^2(\kappa^{\mathrm{T}_1})^{4p}(\log d)^{4p+2}}{m}}}\log\left(\sqrt{V_{p,\epsilon}^2(\kappa^{\mathrm{T}_1})^{4p}(\log d)^{4p}m}\right)\lesssim {V_{p,\epsilon}^2\kappa^{5p\mathrm{T}_1}(\log d )^{5p}{m}}
\end{align}
Note that we have  $V_{k,\epsilon_k}\lesssim \epsilon_k^{-\beta}$ by Defination~\ref{def:UA-beta} , and also note that \(\epsilon_k= \left( \frac{\beta(C_m + C_n)}{p+1} \right)^{\frac{1}{\beta+1}}\) form Lemma \ref{lem:balance-eps}, thus
\begin{align}
    \mathrm{T}_2 &\lesssim{V_{p,\epsilon}^2\kappa^{5p\mathrm{T}_1}(\log d )^{5p}{m}}\lesssim \epsilon_k^{-\beta} \kappa^{5p\mathrm{T}_1}(\log d )^{5p}{m}\nonumber\\
&\lesssim\left(\frac{p+1}{\beta(\sqrt{\frac{1}{m}}+\sqrt{\frac{1}{n}})}\right) ^{\frac{\beta}{\beta+1}}\kappa^{5p\mathrm{T}_1} (\log d)^{5p} m\nonumber\\
&\lesssim \min{(m,n)}^{\frac{\beta}{2(\beta+1)}} \kappa^{5p\mathrm{T}_1} (\log d)^{5p}m\lesssim \kappa^{5p\mathrm{T}_1} (\log d)^{5p}m^2
\end{align}
Moreover plugging \(m=(\log d)^{4p(\beta+1)+1}d^{4p(\beta+1)\sqrt{\frac{\log \kappa}{\log d}}}\)  and \(\mathrm{T}_1 := \lceil\sqrt{\frac{\log d}{\log \kappa}}\rceil\) in, we get 
\begin{align}
    \mathrm{T}_2 &\lesssim \kappa^{5p\mathrm{T}_1} (\log d)^{5p}m^2 \lesssim \kappa^{5p} (\log d)^{8p\beta+13p+2} d^{(8p\beta+13p)\sqrt{\frac{\log \kappa}{\log d}}}.
\end{align}
 The proof is complete.
\end{proof}

\subsubsection{Proof of Lemma \ref{lem:time-explicit}}\label{proof:lemma tc}

\begin{proof}

By Lemma~\ref{lem:T2} we have 
 \begin{equation} \mathrm{T}_2 \lesssim \kappa^{5p} (\log d)^{8p\beta+13p+2} d^{(8p\beta+13p)\sqrt{\frac{\log \kappa}{\log d}}}.\end{equation} Plugging $m=(\log d)^{4p(\beta+1)+1}d^{4p(\beta+1)\sqrt{\frac{\log \kappa}{\log d}}}$, $n= C\kappa^2(\log d)^2 d^{1+2\sqrt{\frac{\log \kappa}{\log d}}}$ in we get
 \begin{align}
nmd\cdot \mathrm{T}&=nm d\cdot (\mathrm{T}_1+\mathrm{T}_2)\nonumber\\
&\lesssim \kappa^2(\log d)^2 d^{1+2\sqrt{\frac{\log \kappa}{\log d}}} (\log d)^{4p(\beta+1)+1}d^{4p(\beta+1)\sqrt{\frac{\log \kappa}{\log d}}} \kappa^{5p} (\log d)^{8p\beta+13p+2} d^{(8p\beta+13p)\sqrt{\frac{\log \kappa}{\log d}}}\nonumber\\
&\lesssim  \kappa^{5p+2} (\log d)^{12p\beta+17p+5}d^{2+(12p\beta+17p+2)\sqrt{\frac{\log \kappa}{\log d}}}\\
 \end{align}
 The proof is complete.
\end{proof}




\section{Technical background}
\label{appendix technical}
\subsection{Hermite polynomials}\label{appendix_hermite}

We briefly introduce Hermite Polynomials and Hermite Tensors.
\begin{definition}[1D Hermite polynomials] The $k$-th normalized probabilist's Hermite polynomial, $h_k: \mathbb{R} \rightarrow \mathbb{R}$, is the degree $k$ polynomial defined as
\begin{equation}
    h_k(x) = \frac{(-1)^k}{\sqrt{k!}}\frac{\frac{d^k\mu_\beta}{dx^k}(x)}{\mu_\beta(x)},
\end{equation}
where $\mu_\beta(x) = \exp(-x^2/2)/\sqrt{2\pi}$ is the density of the standard Gaussian.
\end{definition}
The first such Hermite polynomials are \begin{equation}
h_0(z)=1, h_1(z)=z, h_2(z)=\frac{z^2-1}{\sqrt{2}}, h_3(z)=\frac{z^3-3 z}{\sqrt{6}}, \cdots
\end{equation}
Denote $\beta=\mathcal{N}(0,1)$ to be the standard Gaussian in 1D. A key fact is that the normalized Hermite polynomials form an orthonormal basis of $L^2(\beta)$; that is $\EE_{x \sim \beta}[h_j(x)h_k(x)] = \delta_{jk}$.

The multidimensional analogs of the Hermite polynomials are Hermite tensors.
\begin{definition}[Hermite tensors] The $k$-th Hermite tensor $He_k: \mathbb{R}^d \rightarrow (\mathbb{R}^{d})^{\otimes k}$ is
\begin{equation}
    He_k(\bx) := \frac{(-1)^k}{\sqrt{k!}}\frac{\nabla^k \mu_\gamma(\bx)}{\mu_\gamma(\bx)},
\end{equation}
where $\mu_\gamma(\bx) = \exp(-\frac12\|\bx\|^2)/(2\pi)^{d/2}$ is the density of the $d$-dimensional Gaussian.
\end{definition}
The Hermite tensors form an orthonormal basis of $L^2(\gamma)$; that is, for any $f \in L^2(\gamma)$, one can write the Hermite expansion
\begin{equation}
    f(\bx) = \sum_{k \geq 0}\langle \bC_k(f), He_k(\bx)\rangle \quad \text{where} \quad \bC_k(f) := \mathbb{E}_{\bx \sim \gamma}[f(\bx)He_k(\bx)] \in (\mathbb{R}^d)^{\otimes k}.
\end{equation}

\subsection{Uniform generalization bounds}

\begin{definition}The empirical Rademacher complexity of a function class $\mathcal{F}$ on finite samples is defined as
\begin{equation}
\widehat{\operatorname{Rad}}_n(\mathcal{F})=\mathbb{E}_{\xi}\left[\sup _{f \in \mathcal{F}} \frac{1}{n} \sum_{i=1}^n \xi_i f(X_i)\right]
\end{equation}
where $\xi_1,\xi_2,\dots,\xi_n$ are \iid Rademacher random variables $\mathbb{P}(\xi_i=1)=\mathbb{P}(\xi_i=-1)=\frac{1}{2}$. Let $\operatorname{Rad}_n(\mathcal{F})=\EE[\widehat{\operatorname{Rad}}_n(\mathcal{F})]$ be the population Rademacher complexity.
\end{definition}

Then we recall the uniform law of large number via Rademacher complexity, which can be found in Theorem 3.3 in ~\cite{MohriRostamizadehTalwalkar2018}


\begin{lemma}\label{lemma: rademacher complexity generalization}
Assume that $f$ ranges in $[0,R]$ for all $f \in \mathcal{F}$. For any $\delta \in(0,1)$, with probability at least $1-\delta$ over the i.i.d. training set $S=\left\{X_1, \ldots, X_n\right\}$ we have
\begin{equation}
\label{eqn: Rad-gen-bound}
\sup _{f \in \mathcal{F}}\left|\frac{1}{n} \sum_{i=1}^n f\left(X_i\right)-\mathbb{E} f(X)\right| \le 2\operatorname{Rad}_n(\mathcal{F})+ R 
\sqrt{\frac{\log (4 / \delta)}{n}}
\end{equation}
\end{lemma}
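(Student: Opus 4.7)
The statement is the classical two-step Rademacher complexity generalization bound, so the plan is to follow the textbook pipeline: (i) a concentration-of-measure argument (McDiarmid's bounded-differences inequality) that controls the deviation of the random quantity
\begin{equation}
\Phi(S) \;:=\; \sup_{f \in \mathcal{F}} \Bigl|\tfrac{1}{n}\sum_{i=1}^n f(X_i) - \mathbb{E} f(X)\Bigr|
\end{equation}
around its expectation, followed by (ii) a symmetrization argument that upper-bounds $\mathbb{E}\Phi(S)$ in terms of the population Rademacher complexity $\operatorname{Rad}_n(\mathcal{F})$.

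First I would verify the bounded-differences property: since every $f \in \mathcal{F}$ takes values in $[0,R]$, replacing one coordinate $X_i$ with an independent copy $X_i'$ changes $\frac{1}{n}\sum_j f(X_j)$ by at most $R/n$, uniformly in $f$, so the supremum $\Phi$ likewise changes by at most $R/n$. McDiarmid's inequality then yields $\mathbb{P}(\Phi(S) - \mathbb{E}\Phi(S) \ge t) \le \exp(-2nt^2/R^2)$, and inverting with the appropriate tail level (absorbing a factor $2$ into $\log(4/\delta)$ to accommodate the two-sided absolute-value formulation) gives, with probability at least $1-\delta$, the bound $\Phi(S) \le \mathbb{E}\Phi(S) + R\sqrt{\log(4/\delta)/n}$ up to the claimed constants.

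Next I would execute the symmetrization step. Introducing an independent ghost sample $S' = \{X_1',\dots,X_n'\}$ and using $\mathbb{E} f(X) = \mathbb{E}_{S'}[\tfrac{1}{n}\sum_i f(X_i')]$ with Jensen's inequality, we have
\begin{equation}
\mathbb{E}\Phi(S) \;\le\; \mathbb{E}_{S,S'} \sup_{f\in\mathcal{F}} \Bigl|\tfrac{1}{n}\sum_{i=1}^n \bigl(f(X_i) - f(X_i')\bigr)\Bigr|.
\end{equation}
The summands $f(X_i) - f(X_i')$ are symmetric in distribution, so multiplying by i.i.d.\ Rademacher variables $\xi_i$ preserves the joint law; splitting the resulting bound via the triangle inequality and using that $S$ and $S'$ share the same marginal gives $\mathbb{E}\Phi(S) \le 2\operatorname{Rad}_n(\mathcal{F})$. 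Combining this with the McDiarmid bound yields the lemma.

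The result is essentially classical, so I do not expect any real obstacle; the only mild subtlety is reconciling the absolute-value formulation of $\Phi$ with the definition of $\operatorname{Rad}_n(\mathcal{F})$ in this paper (which takes the supremum without absolute value). This is handled in the symmetrization step by the triangle-inequality splitting above, which costs a factor of $2$ and delivers the factor $2\operatorname{Rad}_n(\mathcal{F})$ appearing in the statement; tracking absolute constants in McDiarmid to arrive at precisely $\sqrt{\log(4/\delta)/n}$ rather than $\sqrt{\log(2/\delta)/(2n)}$ is a matter of bookkeeping.
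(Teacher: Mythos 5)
The paper does not prove this lemma; it cites Theorem~3.3 of Mohri--Rostamizadeh--Talwalkar, which indeed proceeds via McDiarmid and symmetrization exactly as you propose, so your approach is the same as the reference's.

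That said, the specific way you organize the argument does not quite deliver the stated constant. You apply McDiarmid directly to the absolute-valued quantity $\Phi(S)=\sup_f\bigl|\tfrac{1}{n}\sum_i f(X_i)-\mathbb{E}f\bigr|$ and then symmetrize. The symmetrization plus triangle-inequality split gives
\begin{equation}
\mathbb{E}\Phi(S)\ \le\ 2\,\mathbb{E}_{S,\xi}\sup_{f\in\mathcal F}\Bigl|\tfrac{1}{n}\textstyle\sum_i \xi_i f(X_i)\Bigr|,
\end{equation}
but the paper's $\operatorname{Rad}_n(\mathcal F)$ is defined \emph{without} the absolute value inside the supremum. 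Since $\mathcal F\subset[0,R]$ is not closed under negation, the absolute-valued version can exceed $\operatorname{Rad}_n(\mathcal F)$ by up to a further factor of $2$ (bounding $\sup_f|g(f)|$ by $\sup_f g(f)+\sup_f(-g(f))$ and using the sign-symmetry of $\xi$), so this route only yields $\mathbb{E}\Phi(S)\le 4\operatorname{Rad}_n(\mathcal F)$, not $2\operatorname{Rad}_n(\mathcal F)$. The clean way to hit the constant $2$ is to run the one-sided argument twice: apply McDiarmid and symmetrization (both without absolute values) separately to $\Phi^{+}:=\sup_f\bigl(\tfrac1n\sum_i f(X_i)-\mathbb{E}f\bigr)$ and to $\Phi^{-}:=\sup_f\bigl(\mathbb{E}f-\tfrac1n\sum_i f(X_i)\bigr)$, each with failure probability $\delta/2$; each has expectation at most $2\operatorname{Rad}_n(\mathcal F)$ (noting $\operatorname{Rad}_n(-\mathcal F)=\operatorname{Rad}_n(\mathcal F)$), and since $\Phi=\max(\Phi^{+},\Phi^{-})$, a union bound gives $\Phi\le 2\operatorname{Rad}_n(\mathcal F)+R\sqrt{\log(2/\delta)/(2n)}\le 2\operatorname{Rad}_n(\mathcal F)+R\sqrt{\log(4/\delta)/n}$ as claimed. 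Everything else in your outline (the $R/n$ bounded-differences estimate, the ghost-sample/Jensen step) is correct.
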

Then we recall the contraction Lemma in \cite[Exercise 6.7.7]{vershynin2018high} to compute Rademacher complexity.

 \begin{lemma}[Contraction Lemma] \label{lemma: contraction}
 Let $\varphi_i: \mathbb{R} \mapsto \mathbb{R}$ with $i=1, \ldots, n$ be $\beta$-Lispchitz continuous. Then
\begin{equation}
\frac{1}{n} \mathbb{E}_{\xi} \sup _{f \in \mathcal{F}} \sum_{i=1}^n \xi_i \varphi_i \circ f\left(x_i\right) \le \beta \widehat{\operatorname{Rad}}_n(\mathcal{F})
\end{equation}
\end{lemma}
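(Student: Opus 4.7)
The plan is to prove the bound by contracting one Rademacher coordinate at a time and iterating over all $n$ coordinates. Concretely, I will establish the single-index inequality: for each $k \in [n]$, conditionally on $\xi_{-k} := (\xi_i)_{i\neq k}$ and setting $G(f) := \sum_{i\neq k} \xi_i\,\varphi_i(f(x_i))$,
\begin{equation}
\mathbb{E}_{\xi_k}\!\sup_{f\in\mathcal{F}}\bigl[G(f) + \xi_k\,\varphi_k(f(x_k))\bigr]
\;\le\;
\mathbb{E}_{\xi_k}\!\sup_{f\in\mathcal{F}}\bigl[G(f) + \beta\,\xi_k\, f(x_k)\bigr].
\end{equation}
Applying this inequality sequentially for $k=1,\ldots,n$ (updating $G$ after each step to absorb the already-contracted terms and using Fubini to reorder the $\xi$-expectations) yields
\begin{equation}
\mathbb{E}_{\xi}\sup_{f\in\mathcal{F}}\sum_{i=1}^n \xi_i\,\varphi_i(f(x_i))
\;\le\; \beta\,\mathbb{E}_{\xi}\sup_{f\in\mathcal{F}}\sum_{i=1}^n \xi_i\, f(x_i),
\end{equation}
and dividing through by $n$ gives the stated contraction bound.

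The single-index step is proved by expanding the two-point expectation over $\xi_k \in \{\pm 1\}$ and symmetrising with two independent copies $f,g \in \mathcal{F}$:
\begin{align}
\mathbb{E}_{\xi_k}\!\sup_{f}[G(f) + \xi_k\,\varphi_k(f(x_k))]
&= \tfrac{1}{2}\sup_{f}[G(f) + \varphi_k(f(x_k))] + \tfrac{1}{2}\sup_{g}[G(g) - \varphi_k(g(x_k))] \nonumber \\
&= \tfrac{1}{2}\sup_{f,g}\bigl[G(f) + G(g) + \varphi_k(f(x_k)) - \varphi_k(g(x_k))\bigr].
\end{align}
The $\beta$-Lipschitz hypothesis gives $\varphi_k(f(x_k)) - \varphi_k(g(x_k)) \le \beta\,|f(x_k) - g(x_k)|$, so the quantity above is bounded by $\tfrac{1}{2}\sup_{f,g}[G(f)+G(g) + \beta|f(x_k)-g(x_k)|]$. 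The pair-sum $G(f)+G(g)$ is symmetric under $(f,g)\leftrightarrow(g,f)$, so the absolute value can be removed without changing the supremum, and reversing the symmetrisation step recovers the right-hand side $\mathbb{E}_{\xi_k}\sup_f[G(f)+\beta\xi_k f(x_k)]$.

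The main conceptual subtlety is this symmetrisation manoeuvre that lets one drop $|f(x_k)-g(x_k)|$ in favour of $f(x_k)-g(x_k)$; it relies on the explicit $(f,g)$-symmetry of the pair-sum and is the reason no extra condition such as $\varphi_k(0)=0$ is needed in this one-sided formulation. Modulo this step, the rest is bookkeeping: Fubini to isolate each $\xi_k$, and a clean induction on $k$ to compose the $n$ one-step contractions into the full inequality.
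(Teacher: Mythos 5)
The paper states this lemma without proof, citing it as Exercise 6.7.7 of Vershynin, so there is no internal argument to compare against. Your proof is the standard Ledoux--Talagrand-style argument for the one-sided contraction inequality (sup outside the expectation, no absolute value around the sum), and it is correct: the one-coordinate reduction via the two-point expansion over $\xi_k\in\{\pm1\}$, the passage to the coupled supremum over $(f,g)$, the use of the $\beta$-Lipschitz bound, and the key observation that the $(f,g)\leftrightarrow(g,f)$ symmetry of $G(f)+G(g)$ lets you discard the absolute value are all exactly the right moves, and iterating coordinate-by-coordinate with Fubini assembles the full bound. You are also right that $\varphi_k(0)=0$ is not needed in this one-sided form; that hypothesis only enters the two-sided version where one keeps an absolute value around the entire sum. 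No gaps.
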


The next Lemma estimates the Rademacher complexity.
\begin{lemma}
    \label{rcb}
    Let $f(\bx) = \ba^\sT \sigma(\bW\bx + \bb)$ be our two layer neural network with our previous assumptions on the activation functions in Assumption  ~\ref{assumption_activation}. Let
\begin{equation}
\mathcal{F} = \{ f_\bTheta : \|\ba\| \le B_a, \|\bw_j\| \le C, \norm{\bb}_{\infty}\le C \}
\end{equation}
where $C$ is a universal constant.
Then we have the following estimation
$\operatorname{Rad}_n(\cF)\lesssim \frac{B_a\sqrt{m}}{\sqrt{n}}$.

\end{lemma}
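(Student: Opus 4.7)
The plan is to peel off the outer weights $\ba$ first, decouple across the $m$ hidden neurons, and then reduce the problem to the Rademacher complexity of a single-neuron class. The first two steps are purely algebraic; the third step is where one invokes the regularity of $\sigma$.

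\textbf{Step 1 (peel off $\ba$).} Since $\sup_{\|\ba\|\le B_a} \ba^\sT \bv = B_a\|\bv\|$ for any $\bv \in \mathbb{R}^m$, taking the inner supremum over $\ba$ yields
\begin{equation}
\widehat{\operatorname{Rad}}_n(\mathcal{F}) \;=\; \frac{B_a}{n}\, \mathbb{E}_\xi \sup_{\bW,\bb} \Big\|\sum_{i=1}^n \xi_i\, \sigma(\bW\bx_i + \bb)\Big\|,
\end{equation}
where the supremum ranges over all $\bW$ with rows $\|\bw_j\|\le C$ and all $\bb$ with $\|\bb\|_\infty \le C$.

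\textbf{Step 2 (decouple across neurons).} The constraint set is the product $\prod_{j=1}^m \{(\bw_j,b_j): \|\bw_j\|\le C,\,|b_j|\le C\}$, and the squared norm splits as $\sum_{j=1}^m (\sum_i \xi_i \sigma(\bw_j^\sT\bx_i + b_j))^2$, with the $j$-th summand depending only on $(\bw_j,b_j)$. The supremum therefore commutes with the sum, and by symmetry each of the $m$ identical subproblems reduces to a single $\sup_{\|\bw\|\le C,|b|\le C}$. Taking square roots,
\begin{equation}
\widehat{\operatorname{Rad}}_n(\mathcal{F}) \;\lesssim\; B_a\sqrt{m}\; \widehat{\operatorname{Rad}}_n(\mathcal{G}),
\qquad \mathcal{G} := \{\bx \mapsto \sigma(\bw^\sT\bx + b) : \|\bw\|\le C,\, |b|\le C\},
\end{equation}
where the absolute value introduced by $\sqrt{\cdot}$ costs at most a factor of $2$ relative to the signed Rademacher complexity.

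\textbf{Step 3 (single-neuron bound).} Apply the contraction lemma (Lemma~\ref{lemma: contraction}) to peel off $\sigma$, which has effective Lipschitz constant $L_\sigma = O(1)$ on the bounded regime of $\bw^\sT\bx + b$ under the weight constraints. This reduces $\widehat{\operatorname{Rad}}_n(\mathcal{G})$ to the affine class $\mathcal{H} = \{\bx \mapsto \bw^\sT\bx + b : \|\bw\|\le C,\, |b|\le C\}$, for which
\begin{equation}
\widehat{\operatorname{Rad}}_n(\mathcal{H}) \;\le\; \frac{C}{n}\Big\|\sum_i \xi_i \bx_i\Big\| + \frac{C}{n}\Big|\sum_i \xi_i\Big|.
\end{equation}
Taking expectation over the Gaussian data and using Jensen's inequality gives $\operatorname{Rad}_n(\mathcal{H}) \lesssim 1/\sqrt{n}$, so that combining the three steps yields $\operatorname{Rad}_n(\mathcal{F}) \lesssim B_a\sqrt{m}/\sqrt{n}$ as claimed.

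\textbf{Main obstacle.} The chief subtlety lies in Step~3: Assumption~\ref{assumption_activation} controls only $\sigma^{(3)}$, so $\sigma$ is not globally Lipschitz, and the straightforward contraction argument produces a Lipschitz constant that scales with the range of $\bw^\sT\bx + b$. One must exploit that this range is effectively bounded---either via Gaussian concentration of $\bw^\sT\bx$ (absorbing a mild dimensional factor into the implicit constant), or, more cleanly, via the truncation events invoked in the application (e.g. inside the proof of Lemma~\ref{rcomplex}), on which $|\bw^\sT\bx_i + b| = O(1)$ holds uniformly and hence $L_\sigma$ is a universal constant. The subsequent linear Rademacher bound and the decoupling steps are then routine.
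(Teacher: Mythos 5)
Your Steps 1 and 2 track the paper's proof: you peel off $\ba$ by Cauchy--Schwarz and reduce to the $\ell_2$ norm of the hidden-layer vector, then decouple over neurons using the product structure of the constraint set. The divergence is in Step 3, and this is where your argument has a genuine gap that you yourself flag but do not resolve. The contraction lemma (Lemma~\ref{lemma: contraction}) requires a Lipschitz constant for $\sigma$, but under Assumption~\ref{assumption_activation} only $\sigma^{(3)}$ is bounded, so $|\sigma'(u)| \lesssim |u| + |u|^2$ grows without bound and there is no global Lipschitz constant. Neither proposed salvage works within the lemma as stated: the ``absorb a mild dimensional factor'' route gives an effective Lipschitz constant $\sim \max_i |\bw^\sT\bx_i + b|^2 \sim \log n$, which is not a universal constant and therefore does not match the claimed bound $\lesssim B_a\sqrt{m}/\sqrt{n}$; the truncation route modifies the function class, so it proves a different lemma than the one being cited downstream.

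The paper avoids contraction entirely. After the $\ba$-step it applies Jensen, $\mathbb{E}\,Y \le \sqrt{\mathbb{E}\,Y^2}$, and then uses $\mathbb{E}_\xi[\xi_i\xi_{i'}] = \delta_{ii'}$ to kill the cross terms, landing on
\begin{equation}
\operatorname{Rad}_n(\mathcal{F}) \lesssim \frac{B_a}{\sqrt{n}}\sqrt{\mathbb{E}_\bx\Big[\sum_{j=1}^m \sigma(\bw_j^\sT\bx + b_j)^2\Big]}.
\end{equation}
The final ingredient is a uniform population second-moment bound rather than any Lipschitz property: since $\|\bw_j\| \le C$ and $|b_j| \le C$ with $C$ universal, $\bw_j^\sT\bx + b_j$ is a Gaussian with mean and variance bounded by universal constants, and the Taylor expansion $\sigma(u) = \sigma(0) + \tfrac12 u^2 + \tfrac16\sigma^{(3)}(\xi)u^3$ gives $|\sigma(u)| \lesssim 1 + |u|^3$, so $\mathbb{E}_\bx[\sigma(\bw^\sT\bx+b)^2] \lesssim 1$ uniformly over the constraint set. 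This is the right mechanism for an activation with controlled polynomial growth but no Lipschitz structure, and it is what the clean $B_a\sqrt{m/n}$ bound actually hinges on. To repair your proof, replace the contraction step with this moment estimate: after your Step 2 decoupling, apply Jensen to the scalar Rademacher sum for a single neuron and invoke the uniform bound on $\mathbb{E}_\bx[\sigma(\bw^\sT\bx + b)^2]$.

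One further remark worth knowing: your sup-commutes-with-sum observation in Step 2 is more explicit than the paper's writeup, which passes from $\mathbb{E}_{\bx,\xi}[\sup_{f}\|\sum_i\xi_i\sigma(\bW\bx_i+\bb)\|^2]$ directly to $n\,\mathbb{E}_\bx[\sum_j\sigma(\bw_j^\sT\bx+b_j)^2]$ without addressing the supremum; in general $\mathbb{E}_\xi$ cannot be interchanged with a supremum and only the cross-term-free bound survives. Your decoupling observation is the right way to make that passage precise, so keep it -- just pair it with the moment bound rather than contraction.
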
 
\begin{proof}

\begin{equation}
    \begin{aligned}
\operatorname{Rad}_n(\cF) &= \mathbb{E}_{\bx,\boldsymbol{\xi}} \left[ \sup_{f \in \mathcal{F}} \left| \frac{1}{n} \sum_i \xi_i (\ba^\sT \sigma(\bW \bx_i + \bb)) \right| \right] \\
&\le \frac{B_a}{n} \mathbb{E}_{\bx,\boldsymbol{\xi}} \left[ \sup_{f \in \mathcal{F}} \left\| \sum_i \xi_i \sigma(\bW \bx_i + \bb) \right\| \right] \\
&\le \frac{B_a}{n}\sqrt{\mathbb{E}_{\bx,\boldsymbol{\xi}}\left[\sup_{f \in \mathcal{F}} \left\| \sum_i \xi_i \sigma(\bW \bx_i + \bb) \right\|^2\right]}\\
& = \frac{B_a}{\sqrt{n}}\sqrt{\mathbb{E}_{\bx}\left[\sum_{j=1}^m \sigma(\bw_j^\sT \bx + b_j)^2\right]} \lesssim \frac{B_a\sqrt{m}}{\sqrt{n}}
\end{aligned}
\end{equation}
    
\end{proof}

\subsection{Univariate approximation}
\label{sec:Approximation}
We will consider the following activation function
\begin{equation}
\sigma(t) = 
\begin{cases} 
2|t| - 1, & |t| \geq 1, \\
t^2, & |t| < 1.
\end{cases}
\end{equation}

\begin{lemma}
There exists $v_0(a, b)$, supported on $\{\pm 1\} \times [2, 3]$, such that for any $|z| \le 1$
\begin{equation}
\mathbb{E}_{a, b}[v_0(a, b) \sigma(az + b)] = 1, \quad \sup_{a, b} |v(a, b)| \lesssim 1.\end{equation}
\end{lemma}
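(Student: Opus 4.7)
The plan is to exploit the fact that once we localize $b$ to the interval $[2,3]$, the argument $az+b$ of the activation is kept uniformly in the linear regime $|az+b|\ge 1$ of $\sigma$, so that $\sigma$ acts as an affine function of $z$ on this support. Concretely, for $a\in\{\pm 1\}$, $b\in[2,3]$, and $|z|\le 1$ we have
\begin{equation}
az+b \;\in\; [b-1,\,b+1]\;\subseteq\;[1,4],
\end{equation}
so $|az+b|\ge 1$ and therefore $\sigma(az+b)=2(az+b)-1=2az+2b-1$ everywhere on the support $\{\pm1\}\times[2,3]\times[-1,1]$.

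Given this, I would search for $v_0$ of the product form $v_0(a,b)=c\,\mathbf{1}_{\{b\in[2,3]\}}$ for a scalar $c$ to be determined; in particular $v_0$ has no dependence on the sign variable $a$. Inserting this ansatz into the target identity and using Rademacher symmetry of $a$ together with $b\sim\mathrm{Unif}[-3,3]$,
\begin{equation}
\E_{a,b}\!\left[v_0(a,b)\,\sigma(az+b)\right]
=c\cdot\E_{a,b}\!\left[\mathbf{1}_{\{b\in[2,3]\}}\bigl(2az+2b-1\bigr)\right]
=c\cdot\underbrace{\E_a[2a]}_{=0}\cdot\E_b[z\,\mathbf{1}_{\{b\in[2,3]\}}]
+c\cdot\E_b\!\left[(2b-1)\mathbf{1}_{\{b\in[2,3]\}}\right],
\end{equation}
so the $z$-dependent part vanishes identically. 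The constant part is
\begin{equation}
c\cdot\E_b\!\left[(2b-1)\mathbf{1}_{\{b\in[2,3]\}}\right]
=\frac{c}{6}\int_{2}^{3}(2b-1)\,\mathrm{d}b
=\frac{c}{6}\bigl[b^{2}-b\bigr]_{2}^{3}
=\frac{c}{6}\cdot 4=\frac{2c}{3}.
\end{equation}
Setting $c=\tfrac{3}{2}$ makes the expectation equal to $1$ for every $|z|\le 1$, which is the required identity with $\rho=0$.

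Finally, $\|v_0\|_\infty=\tfrac{3}{2}\lesssim 1$, and $v_0$ is by construction supported on $\{\pm1\}\times[2,3]$, yielding all the claimed properties. I do not anticipate a real obstacle here: the only ``content'' in the argument is the geometric observation that translating $b$ into $[2,3]$ forces $\sigma$ onto its outer linear pieces, after which the construction is a one-line moment computation. If one later needed higher-order monomials $z^k$ in the uniform-approximation setup of Assumption~\ref{def_activation}, the same template would generalize by using $v_k(a,b)$ with controlled sign structure in $a$ to kill the unwanted lower-order moments, but for the constant target $z^0=1$ the above suffices.
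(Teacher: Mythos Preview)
Your proof is correct. Both you and the paper use the same key observation: for $b\in[2,3]$ and $|z|\le 1$ the argument $az+b$ lies in the linear piece of $\sigma$, so $\sigma(az+b)=2az+2b-1$ and the problem reduces to a moment computation. The difference is in how the $z$-coefficient is annihilated. The paper restricts to $a=1$ and takes $v_0(a,b)=12\cdot\mathbf{1}_{a=1}(b-\tfrac{5}{2})\cdot\tfrac{\mathbf{1}_{b\in[2,3]}}{\mu(b)}$, relying on $\int_2^3(b-\tfrac52)\,db=0$ to kill the $z$ term; you instead keep both signs of $a$ with a constant weight and use $\mathbb{E}_a[a]=0$. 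Your construction is slightly simpler (constant weight, smaller $\|v_0\|_\infty=\tfrac32$), while the paper's mechanism of killing moments via a centered polynomial in $b$ is more in line with how the later lemmas for $z^k$ are built, where one cannot rely on Rademacher symmetry alone.
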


\begin{proof}
Let $v_0(a, b) = 12 \cdot \mathbf{1}_{a = 1}(b - \frac{5}{2}) \cdot \frac{\mathbf{1}_{b \in [2, 3]}}{\mu(b)}$. Then, since $z + b \geq 1$,
\begin{equation}
\mathbb{E}_{a, b}[v_0(a, b) \sigma(az + b)] = 6 \int_2^3 (b - \frac{5}{2}) \sigma(z + b) db
\end{equation}
\begin{equation}
= 6 \int_2^3 (b - \frac{5}{2})(2z + 2b - 1) db
\end{equation}
\begin{equation}
= z \cdot 6 \int_2^3 (b - \frac{5}{2}) db + 6 \int_2^3 (b - \frac{5}{2})(2b - 1) db
\end{equation}
\begin{equation}
= 1.
\end{equation}
The proof is complete.
\end{proof}

\begin{lemma}
There exists $v_1(a, b)$, supported on $\{\pm 1\} \times [2, 3]$, such that for any $|z| \le 1$
\begin{equation}
\mathbb{E}_{a, b}[v_1(a, b) \sigma(az + b)] = z, \quad \sup_{a, b} |v(a, b)| \lesssim 1.
\end{equation}
\end{lemma}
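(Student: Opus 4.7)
The plan is to exploit the same regime used in the construction of $v_0$: when $b \in [2,3]$ and $|z| \le 1$, we have $az + b \ge b - |z| \ge 1$ for either sign of $a$, so $|az+b| \ge 1$ and moreover $az+b > 0$. Hence on the support $\{\pm 1\} \times [2,3]$ the activation collapses to its affine piece
\begin{equation}
\sigma(az+b) = 2|az+b| - 1 = 2(az+b) - 1,
\end{equation}
which is \emph{linear} in $z$ with slope $2a$. This linearization is exactly what makes recovering the monomial $z^1$ possible without any approximation error.

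Given this, for any weight $v_1$ supported on $\{\pm 1\}\times[2,3]$,
\begin{equation}
\mathbb{E}_{a,b}\bigl[v_1(a,b)\sigma(az+b)\bigr] \;=\; 2z\,\mathbb{E}_{a,b}\bigl[a\,v_1(a,b)\bigr] \;+\; \mathbb{E}_{a,b}\bigl[v_1(a,b)(2b-1)\bigr].
\end{equation}
To obtain the right-hand side $z$ exactly, I would impose that $v_1$ be \emph{odd} in $a$. Then the second term vanishes automatically (its $b$-marginal is independent of $a$ and $\mathbb{E}[a]=0$), and only a tunable linear-in-$z$ term survives. Concretely, following the normalization used for $v_0$, I would try
\begin{equation}
v_1(a,b) \;:=\; \tfrac{1}{2}\,a\cdot\frac{\mathbf{1}_{b\in[2,3]}}{\mu(b)} \;=\; 3a\,\mathbf{1}_{b\in[2,3]},
\end{equation}
where $\mu(b) = \tfrac{1}{6}\mathbf{1}_{[-3,3]}(b)$ is the density of $\mathrm{Unif}[-3,3]$. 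A direct two-line calculation then gives $\mathbb{E}_{a,b}[a\,v_1(a,b)] = \tfrac{1}{2}$, so the displayed expectation equals $z$ as required.

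Finally, the uniform bound $\sup_{a,b}|v_1(a,b)| = 3 \lesssim 1$ is immediate, and the support condition $\{\pm 1\}\times[2,3]$ holds by construction. There is no real obstacle here: the key geometric observation is just that on $b\in[2,3]$ the activation is globally linear for all $|z|\le 1$, after which odd-in-$a$ symmetry isolates the monomial $z$ and a constant normalization finishes the proof. This parallels the construction of $v_0$ in the previous lemma, where instead an even-in-$a$ choice with a linear polynomial in $b$ was used to hit the constant monomial $z^0$.
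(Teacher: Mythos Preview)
Your proof is correct and actually simpler than the paper's. The key observation you use --- that on $\{\pm 1\}\times[2,3]$ and $|z|\le 1$ one has $az+b\ge 1$, so $\sigma(az+b)=2az+2b-1$ --- is the same as in the paper, but the two constructions diverge from there. The paper restricts to $a=1$ only and takes $v_1(a,b)=\mathbf{1}_{a=1}(-24b+61)\cdot\frac{\mathbf{1}_{b\in[2,3]}}{\mu(b)}$, i.e.\ it uses a linear polynomial in $b$ chosen so that $\int_2^3(-24b+61)\,db=1$ and $\tfrac12\int_2^3(-24b+61)(2b-1)\,db=0$, simultaneously normalizing the $z$-coefficient and killing the constant term. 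You instead exploit oddness in $a$: with $v_1(a,b)=3a\,\mathbf{1}_{b\in[2,3]}$, the $(2b-1)$ term drops out automatically because $\mathbb{E}[a]=0$, and a single scalar normalization yields the coefficient of $z$. Your route is more symmetric, requires no polynomial-in-$b$ tuning, and gives a smaller sup-norm ($3$ versus roughly $78$ for the paper's choice); the paper's route has the minor aesthetic advantage of matching the $a=1$-only pattern used for $v_0$.
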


\begin{proof}
     Let $v_0(a, b) = \mathbf{1}_{a = 1}(-24b + 61) \cdot \frac{\mathbf{1}_{b \in [2, 3]}}{\mu(b)}$. Then, since $z + b \geq 1$,
\begin{equation}
\mathbb{E}_{a, b}[v_1(a, b) \sigma(az + b)] = \frac{1}{2} \int_2^3 (-24b + 61) \sigma(z + b) db
\end{equation}
\begin{equation}
= \frac{1}{2} \int_2^3 (-24b + 61)(2z + 2b - 1) db
\end{equation}
\begin{equation}
= z \int_2^3 (-24b + 61) db + \frac{1}{2} \int_2^3 (-24b + 61)(2b - 1) db
\end{equation}
\begin{equation}
= z.
\end{equation}
The proof is complete. 
\end{proof}

\begin{lemma}
There exists $v_2(a, b)$, supported on $\{\pm 1\} \times [-2, 3]$, such that for any $|z| \le 1$
\begin{equation}
\mathbb{E}_{a, b}[v_2(a, b) \sigma(az + b)] = z^2, \quad \sup_{a, b} |v(a, b)| \lesssim 1.
\end{equation}
\end{lemma}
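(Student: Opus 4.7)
The plan is to imitate the strategy used for $v_0$ and $v_1$, but now exploit the \emph{quadratic} region $|t|<1$ of $\sigma$ to generate the $z^2$ coefficient, and then subtract a multiple of $v_0$ to cancel the resulting constant. Concretely, I would take $v_2$ to be supported on $\{1\}\times[-2,3]$, consisting of (i) a flat weight $1/\mu(b)$ on $b\in[-2,2]$, plus (ii) a correction proportional to $v_0(a,b)$ on $b\in[2,3]$.

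The key computation is $\tfrac12\int_{-2}^{2}\sigma(z+b)\,db = \tfrac12[\Sigma(z+2)-\Sigma(z-2)]$, where $\Sigma$ is the antiderivative of $\sigma$, given piecewise by $\Sigma(t)=t^{3}/3$ on $|t|<1$, $\Sigma(t)=t^{2}-t+1/3$ on $t\ge 1$, and $\Sigma(t)=-t^{2}-t-1/3$ on $t\le -1$ (constants fixed by continuity). For every $|z|\le 1$ we have $z+2\in[1,3]$ and $z-2\in[-3,-1]$, so \emph{both} evaluation points lie in the linear branches of $\Sigma$. A direct expansion then gives
\begin{equation}
\Sigma(z+2)-\Sigma(z-2) = (z+2)^{2}-(z+2)+(z-2)^{2}+(z-2)+\tfrac{2}{3} = 2z^{2}+\tfrac{14}{3},
\end{equation}
so the intermediate piecewise pieces cancel and the result is an exact polynomial in $z$. (This is the crucial point: the interval $[-2,2]$ for $b$ is wide enough that the endpoints escape the quadratic region for all $|z|\le 1$, forcing the output to be polynomial.)

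With this in hand I would define
\begin{equation}
v_2(a,b) := \frac{\mathbf{1}_{a=1,\,b\in[-2,2]}}{\mu(b)} - \frac{7}{3}\,v_0(a,b),
\end{equation}
where $v_0$ is the weight constructed in the previous lemma (supported on $\{1\}\times[2,3]$ with $\mathbb{E}_{a,b}[v_0\sigma(az+b)]=1$). Then, averaging over $a\in\{\pm1\}$ and $b\sim\mathrm{Unif}[-3,3]$,
\begin{equation}
\mathbb{E}_{a,b}\bigl[v_2(a,b)\sigma(az+b)\bigr] = \tfrac12\int_{-2}^{2}\sigma(z+b)\,db - \tfrac{7}{3}\cdot 1 = z^{2}+\tfrac{7}{3}-\tfrac{7}{3} = z^{2}.
\end{equation}
The support is $\{1\}\times[-2,3]\subset\{\pm1\}\times[-2,3]$ as required, and $\|v_2\|_\infty\le 1/\mu(b)+\tfrac{7}{3}\|v_0\|_\infty\lesssim 1$ since $\mu(b)=1/6$ on $[-3,3]$ and $\|v_0\|_\infty\lesssim 1$ from the previous construction.

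The only obstacle is bookkeeping: one must check the piecewise antiderivative carefully at the interface $|t|=1$ and verify that both $z\pm 2$ remain in the linear branch for every $|z|\le 1$, so that the integral evaluates exactly to a polynomial rather than to something with a residual $|z|^{3}$ term (which would appear if the $b$-range were narrower, e.g.\ $[-1,1]$). Once the symmetric range $[-2,2]$ is chosen the cancellation is automatic.
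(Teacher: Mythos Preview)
Your proposal is correct and is essentially identical to the paper's proof: the paper defines exactly the same weight $v_2(a,b)=\mathbf{1}_{a=1}\frac{\mathbf{1}_{b\in[-2,2]}}{\mu(b)}-\tfrac{7}{3}v_0(a,b)$ and computes $\int_{-2}^{2}\sigma(z+b)\,db=2z^2+\tfrac{14}{3}$ (by substituting $b\mapsto b-z$ and splitting at $\pm1$ rather than via the antiderivative, but this is the same calculation). The only difference is stylistic.
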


\begin{proof}
First, see that
\begin{equation}
\int_{-2}^2 \sigma(z + b) db = \int_{-2+z}^{2+z} \sigma(b) db
\end{equation}
\begin{equation}
= \int_{-1}^{-1} (-2b - 1) db + \int_{-1}^1 b^2 db + \int_1^{2+z} (2b - 1) db
\end{equation}
\begin{equation}
= [- b^2 - b]_{-2+z}^{-1} + \frac{2}{3} + [b^2 - b]_1^{2+z}
\end{equation}
\begin{equation}
= (z - 2)^2 + (z - 2) + \frac{2}{3} + (z + 2)^2 - (z + 2)
\end{equation}
\begin{equation}
= 2z^2 + \frac{14}{3}.
\end{equation}
Let $v_2(a, b) = \mathbf{1}_{a = 1} \frac{\mathbf{1}_{b \in [-2, 2]}}{\mu(b)} - \frac{7}{3} v_0(a, b)$ Then
\begin{equation}
\mathbb{E}_{a, b}[v_2(a, b) \sigma(az + b)] = \frac{1}{2} \int_{-2}^2 \sigma(z + b) db - \frac{7}{3}
\end{equation}
\begin{equation}
= z^2 + \frac{7}{3} - \frac{7}{3}=z^2.
\end{equation}
The proof is complete.
\end{proof}

\begin{lemma}
Let $v(b) = -\frac{1}{2} k(k - 1)(k - 2)(1 - b)^{k-3} \cdot \frac{\mathbf{1}_{b \in [0, 1]}}{\mu(b)}$. Then
\begin{equation}
\mathbb{E}_b[v_k(b) \sigma(z + b)] = z^k \cdot \mathbf{1}_{z > 0} - \frac{k(k - 1)}{2} z^2 - kz - 1.
\end{equation}
\end{lemma}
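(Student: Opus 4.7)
The plan is to convert the expectation into a deterministic integral over $b$ and evaluate it by exploiting the piecewise polynomial structure of $\sigma$. Since $b \sim \mathrm{Unif}[-3,3]$ with density $\mu(b)=1/6$, the factor $\mathbf{1}_{b\in[0,1]}/\mu(b)$ simply restricts the integration to $[0,1]$, so
\begin{equation}
\mathbb{E}_b[v_k(b)\sigma(z+b)] = -\tfrac{1}{2}k(k-1)(k-2)\int_0^1 (1-b)^{k-3}\sigma(z+b)\,db.
\end{equation}
The key identity I would record first is the uniform decomposition $\sigma(t)=t^2-(|t|-1)_+^2$, which follows by checking $|t|<1$ and $|t|\ge 1$ separately. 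For $z\in[-1,1]$ and $b\in[0,1]$ we have $z+b\ge -1$, so $(|z+b|-1)_+=(z+b-1)_+$, giving $\sigma(z+b)=(z+b)^2-(z+b-1)_+^2$. The integral then splits into a smooth polynomial piece and a truncated piece; the polynomial piece will produce the $-\tfrac{k(k-1)}{2}z^2-kz-1$ part of the target and the truncated piece will produce $z^k\mathbf{1}_{z>0}$.

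For the polynomial piece, the substitution $u=1-b$ yields
\begin{equation}
\int_0^1 (1-b)^{k-3}(z+b)^2\,db=\int_0^1 u^{k-3}(z+1-u)^2\,du=\tfrac{(z+1)^2}{k-2}-\tfrac{2(z+1)}{k-1}+\tfrac{1}{k}.
\end{equation}
Expanding $(z+1)^2$ and multiplying by $-\tfrac{1}{2}k(k-1)(k-2)$, the $z^2$ coefficient is immediately $-\tfrac{k(k-1)}{2}$, while I would verify that $-k(k-1)+k(k-2)=-k$ for the $z$ coefficient and $-\tfrac{k(k-1)}{2}+k(k-2)-\tfrac{(k-1)(k-2)}{2}=-1$ for the constant term, giving the desired $-\tfrac{k(k-1)}{2}z^2-kz-1$. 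For the truncated piece, note $(z+b-1)_+^2=0$ on $[0,1]$ unless $z>0$ and $b>1-z$. The substitution $u=1-b$ followed by the Beta-type rescaling $u=zv$ gives
\begin{equation}
\int_0^1 (1-b)^{k-3}(z+b-1)_+^2\,db=\mathbf{1}_{z>0}\int_0^z u^{k-3}(z-u)^2\,du=\mathbf{1}_{z>0}\,z^k\,B(k-2,3),
\end{equation}
and $B(k-2,3)=\tfrac{2}{k(k-1)(k-2)}$, so multiplying by $+\tfrac{1}{2}k(k-1)(k-2)$ produces exactly $z^k\mathbf{1}_{z>0}$.

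Summing the two contributions gives the claimed identity. The only real obstacle is the bookkeeping in the cancellations of the polynomial coefficients; I would keep these computations symbolic, collect powers of $z$ at the end, and verify the three coefficient identities above. (Implicitly the statement assumes $|z|\le 1$, consistent with the other approximation lemmas in this appendix; for $z<-1$ the decomposition of $|z+b|$ would pick up an additional $(-z-b-1)_+^2$ contribution which vanishes on $b\in[0,1]$ whenever $z\ge -1$, so this case restriction is harmless.)
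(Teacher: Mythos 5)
Your proof is correct, but it takes a genuinely different route from the paper's. The paper observes that $v_k(b)$ is (up to the uniform density factor) proportional to $\frac{d}{db}\bigl[\tfrac{1}{2}k(k-1)(1-b)^{k-2}\bigr]$ and integrates by parts twice: the two boundary terms at $b=0$ produce $-\tfrac{1}{2}k(k-1)\sigma(z)-\tfrac{1}{2}k\sigma'(z)$, which on $|z|\le 1$ equals $-\tfrac{k(k-1)}{2}z^2-kz$, and the remaining integral against $\sigma''(z+b)=2\mathbf{1}_{|z+b|<1}$ is elementary and yields $z^k\mathbf{1}_{z>0}-1$ after a short case split on the sign of $z$. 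Your argument instead rests on the pointwise identity $\sigma(t)=t^2-(|t|-1)_+^2$, reduces $(|z+b|-1)_+$ to $(z+b-1)_+$ on the relevant range (valid since $z\ge -1$ and $b\ge 0$), and then evaluates the two resulting integrals explicitly: the smooth piece by the substitution $u=1-b$ and polynomial expansion, and the truncated piece by recognizing a Beta integral with $B(k-2,3)=\tfrac{2}{k(k-1)(k-2)}$. I checked your three coefficient cancellations and the Beta evaluation; they are all accurate. The paper's route is a bit slicker because the polynomial coefficients fall out of the boundary evaluations with no algebra, whereas yours requires the explicit bookkeeping, but your route is arguably more transparent about \emph{why} the answer has the shape it does (a pure $z^k$ contribution from the rectified part of $\sigma$ plus a degree-two polynomial from the quadratic part). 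Your closing remark on the $z\ge -1$ restriction is also correct and consistent with the regime $|z|\le 1$ in which this lemma and its companion are applied.
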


\begin{proof}
Plugging in $v_k(b)$ and applying integration by parts yields
\begin{equation}
\mathbb{E}_b[v_k(b) \sigma(z + b)] = \int_0^1 -\frac{1}{2} k(k - 1)(k - 2)(1 - b)^{k-3} \sigma(z + b) db
\end{equation}
\begin{equation}
= [\frac{1}{2} k(k - 1)(1 - b)^{k-2} \sigma(z + b)]_0^1 - \int_0^1 \frac{1}{2} k(k - 1)(1 - b)^{k-2} \sigma'(z + b) db
\end{equation}
\begin{equation}
= -\frac{1}{2} k(k - 1) \sigma(z) + [\frac{1}{2} k(1 - b)^{k-1} \sigma'(z + b)]_0^1 - \int_0^1 \frac{1}{2} k(1 - b)^{k-1} \sigma''(z + b) db
\end{equation}
\begin{equation}
= -\frac{1}{2} k(k - 1) \sigma(z) - \frac{1}{2} k \sigma'(z) - \int_0^1 k(1 - b)^{k-1} \mathbf{1}_{|z + b| \le 1} db
\end{equation}

When $1 \geq z > 0$, we have
\begin{equation}
- \int_0^1 k(1 - b)^{k-1} \mathbf{1}_{|z + b| \le 1} db = - \int_0^{1-z} k(1 - b)^{k-1} db = [(1 - b)^k]_0^{1-z} = z^k - 1.
\end{equation}
When $-1 \le z \le 0$, we have
\begin{equation}
- \int_0^1 k(1 - b)^{k-1} \mathbf{1}_{|z + b| \le 1} db = - \int_0^1 k(1 - b)^{k-1} db = -1.
\end{equation}
Since $z \in [-1, 1]$, we have that $\sigma(z) = z^2$ and $\sigma'(z) = 2z$. Therefore for $z \in [-1, 1]$
\begin{equation}
\mathbb{E}_b[v_k(b)\sigma(z + b)] = z^k \cdot \mathbf{1}_{z > 0} - \frac{k(k - 1)}{2} z^2 - kz - 1.
\end{equation}
The proof is complete.
\end{proof}

\begin{lemma}
\label{vk}
There exists $v_k(a, b)$, supported on $\{\pm 1\} \times [-2, 3]$, such that for any $|z| \le 1$
\begin{equation}
\mathbb{E}_{a, b}[v_k(a, b) \sigma(az + b)] = z^k, \quad \sup_{a, b} |v_k(a, b)| \lesssim \text{poly}(k).
\end{equation}
\end{lemma}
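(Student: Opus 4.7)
\medskip

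\noindent\textbf{Proof proposal for Lemma~\ref{vk}.}
The plan is to bootstrap the one-sided identity from the preceding lemma by using the Rademacher factor $a$ to generate both a $z^k\mathbf{1}_{z>0}$ and a $z^k\mathbf{1}_{z<0}$ piece, combine them (with the right sign) to reconstitute $z^k$, and finally subtract off the low-degree polynomial residue using the weights $v_0,v_1,v_2$ already built.

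First, let me recall that the previous lemma produces
\begin{equation}
\tilde v_k(b):=-\tfrac12 k(k-1)(k-2)(1-b)^{k-3}\cdot\frac{\mathbf{1}_{b\in[0,1]}}{\mu(b)},
\qquad
\mathbb{E}_b\bigl[\tilde v_k(b)\sigma(z+b)\bigr]=z^k\mathbf{1}_{z>0}-\tfrac{k(k-1)}{2}z^2-kz-1 .
\end{equation}
Substituting $z\mapsto -z$ and using $\sigma(-z+b)$ in place of $\sigma(z+b)$ yields the reflected identity
\begin{equation}
\mathbb{E}_b\bigl[\tilde v_k(b)\sigma(-z+b)\bigr]=(-1)^k z^k\mathbf{1}_{z<0}-\tfrac{k(k-1)}{2}z^2+kz-1 .
\end{equation}
Hence I would set, for suitable constants $\alpha,\beta$ to be chosen by parity,
\begin{equation}
v_k^{\text{main}}(a,b):=\alpha\,\tilde v_k(b)\mathbf{1}_{a=1}+\beta\,\tilde v_k(b)\mathbf{1}_{a=-1},
\end{equation}
so that $\mathbb{E}_{a,b}[v_k^{\text{main}}(a,b)\sigma(az+b)]=\tfrac{\alpha}{2}[z^k\mathbf{1}_{z>0}-\tfrac{k(k-1)}{2}z^2-kz-1]+\tfrac{\beta}{2}[(-1)^k z^k\mathbf{1}_{z<0}-\tfrac{k(k-1)}{2}z^2+kz-1]$. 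Matching the coefficient of $z^k$ to $1$ forces $\alpha=2$ and $\beta=2(-1)^k$, which recovers $z^k=z^k\mathbf{1}_{z>0}+z^k\mathbf{1}_{z<0}$ on $[-1,1]$ regardless of parity.

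Second, with those choices the residue collapses to
\begin{equation}
-\tfrac{k(k-1)}{2}\bigl(1+(-1)^k\bigr)z^2-k\bigl(1-(-1)^k\bigr)z-\bigl(1+(-1)^k\bigr),
\end{equation}
which equals $-k(k-1)z^2-2$ for even $k$ and $-2kz$ for odd $k$. In either case the residue is a polynomial of degree at most $2$ in $z$ with coefficients bounded by $\mathrm{poly}(k)$, so I would kill it by setting
\begin{equation}
v_k(a,b):=v_k^{\text{main}}(a,b)+c_0\,v_0(a,b)+c_1\,v_1(a,b)+c_2\,v_2(a,b),
\end{equation}
with $(c_0,c_1,c_2)=(2,0,k(k-1))$ for even $k$ and $(0,2k,0)$ for odd $k$; by the earlier three lemmas this cancels the residue exactly and leaves $\mathbb{E}_{a,b}[v_k(a,b)\sigma(az+b)]=z^k$ uniformly on $|z|\le1$.

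Third, for the bounds and the support check: $\tilde v_k$ is supported on $b\in[0,1]$ while $v_0,v_1$ live on $[2,3]$ and $v_2$ on $[-2,2]\cup[2,3]=[-2,3]$, so the total support is indeed $\{\pm1\}\times[-2,3]$. For $b\in[0,1]$ we have $(1-b)^{k-3}\le 1$ and $1/\mu(b)$ is an absolute constant (since $\mu=\mathrm{Unif}[-3,3]$), so $\|\tilde v_k\|_\infty\lesssim k^3$; combined with $|c_0|,|c_1|,|c_2|\lesssim k^2$ and the $\mathcal O(1)$ bounds on $v_0,v_1,v_2$, this yields $\|v_k\|_\infty\lesssim\mathrm{poly}(k)$. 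The only subtlety I anticipate is bookkeeping the parity cases carefully and verifying that the $z\mapsto-z$ reflection truly lands inside the regime where the preceding lemma applies; both are mechanical but must be done cleanly so that the integration-by-parts expansion in the earlier lemma remains valid for both $\sigma(z+b)$ and $\sigma(-z+b)$ with $b\in[0,1]$ and $|z|\le 1$.
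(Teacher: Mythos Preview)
Your proposal is correct and follows essentially the same approach as the paper: use the one-sided integration-by-parts identity from the preceding lemma together with its $z\mapsto -z$ reflection, combine according to parity to produce $z^k$ plus a quadratic-or-lower residue, and then cancel that residue with suitable multiples of $v_0,v_1,v_2$. Your parity-dependent choices $(\alpha,\beta)=(2,2(-1)^k)$ and $(c_0,c_1,c_2)=(2,0,k(k-1))$ for even $k$, $(0,2k,0)$ for odd $k$ match the paper's construction exactly, and your support and $\mathrm{poly}(k)$ sup-norm verifications are in fact more explicit than what the paper writes.
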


\begin{proof}
We focus on $k \geq 3$. We have that
\begin{equation}
\mathbb{E}_b[v_k(b)\sigma(z + b)] = z^k \cdot \mathbf{1}_{z > 0} - \frac{k(k - 1)}{2} z^2 - kz - 1.
\end{equation}
\begin{equation}
\mathbb{E}_b[v_k(b)\sigma(-z + b)] = (-z)^k \cdot \mathbf{1}_{z < 0} - \frac{k(k - 1)}{2} z^2 + kz - 1.
\end{equation}
Therefore if $k$ is even
\begin{equation}
\mathbb{E}_b[v(b)\sigma(z + b) + v(b)\sigma(-z + b)] = z^k - k(k - 1)z^2 - 2.
\end{equation}
Let $v_k(a, b) = 2av_k(b) + k(k - 1)v_2(a, b) + 2$. Then
\begin{equation}
\mathbb{E}_{a, b}[v_k(a, b) \sigma(az + b)] = \mathbb{E}_b[v_k(b)\sigma(z + b) + v_k(b)\sigma(z - b)] + k(k - 1)z^2 + 2 = z^k.
\end{equation}
If $k$ is odd,
\begin{equation}
\mathbb{E}_b[v(b)\sigma(z + b) - v(b)\sigma(-z + b)] = z^k - 2kz.
\end{equation}
Let $v_k(a, b) = 2av_k(b) + 2av_1(a, b)$. Then
\begin{equation}
\mathbb{E}_{a, b}[v_k(a, b) \sigma(az + b)] = \mathbb{E}_b[v_k(b)\sigma(z + b) - v_k(b)\sigma(z - b)] + 2kz = z^k.
\end{equation}
The proof is complete.
\end{proof}

\subsection{Convex optimization}

Denote \( f(\bx) \) as a \( C^1 \) function defined in \( \mathbb{R}^d \). Assume that
\begin{itemize}
    \item There exists \( m > 0 \) such that \( f(\bx) - \frac{m}{2} \|\bx\|^2 \) is convex.
    \item \( \|\nabla f(\bx) - \nabla f(\by)\| \le L \|\bx - \by\| \).
\end{itemize}

The following result is standard and can be found in most convex optimization textbooks like \cite{boyd2004convex}. 

\begin{lemma}
There exists a unique \( \bx^* \) such that \( f(\bx^*) = \inf_\bx f(\bx) \). And if we start at the point \( \bx^0 \) and do gradient descent with learning rate \( \eta \), if \( \eta \le \frac{1}{m+L} \), then we will get
\begin{equation}
\|\bx^k - \bx^*\|^2 \le c^k \|\bx^0 - \bx^*\|^2
\end{equation}
where \( c = 1 - \eta \frac{2mL}{m+L} \).
\end{lemma}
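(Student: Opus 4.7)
}
The plan is to write the difference as a telescoping sum and split into two terms that measure two distinct sources of error: the time‑varying higher‑order perturbation $\bQ^{(t)}$ (from the Taylor remainders of $\sigma$ and $\ell$), and the finite‑sample deviation $\bH_\ell=\widehat{\bSigma}_\ell-\bSigma_\ell$. Concretely, I would use the triangle inequality
\begin{equation}
\left\|\left(\textstyle\prod_{t=0}^{\mathrm{T}_1-1}\bM^{(t)}-\bSigma_\ell^{\mathrm{T}_1}\right)\bw_j^{(0)}\right\|
\le
\underbrace{\left\|\left(\textstyle\prod_{t=0}^{\mathrm{T}_1-1}\bM^{(t)}-\widehat{\bSigma}_\ell^{\mathrm{T}_1}\right)\bw_j^{(0)}\right\|}_{(\mathrm{I})}
+
\underbrace{\left\|\left(\widehat{\bSigma}_\ell^{\mathrm{T}_1}-\bSigma_\ell^{\mathrm{T}_1}\right)\bw_j^{(0)}\right\|}_{(\mathrm{II})}.
\end{equation}
Throughout, I would condition on the high‑probability event $\mathcal{E}_{\bH}\cap\mathcal{E}_{\bD}$, which controls $\|\bH_\ell\|_{\op}\lesssim\gamma$ and $\|\bx_i\|\le 2\sqrt{d}$, and absorb these events at the end by a union bound.

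For $(\mathrm{I})$, the key is that under Assumption~\ref{ass:epsilon0} the neurons stay small throughout training, so $\|\bQ^{(t)}\|_{\op}$ is tiny. First I would invoke Lemma~\ref{lemma:norm_bound_neuron} to propagate $\|\bw_j^{(t)}\|\le\sqrt{d}\,\epsilon_0$ for all $0\le t\le\mathrm{T}_1-1$, which via Corollary~\ref{corollary:qopbound} gives $\|\bQ^{(t)}\|_{\op}\lesssim m\mathrm{L}^2M^2 d^{7/2}\epsilon_0$. Next I would expand $\prod_t(\widehat{\bSigma}_\ell+\bQ^{(t)})$ multiplicatively and collect all terms containing at least one $\bQ^{(t)}$ factor, using $\|\widehat{\bSigma}_\ell\|_{\op}\le 5/4$ (on $\mathcal{E}_\bH$) to bound $\sum_{s\ge 1}\binom{\mathrm{T}_1}{s}(5/4)^{\mathrm{T}_1-s}\|\bQ\|_{\op}^s$. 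This is Lemma~\ref{lemma:Q_t_not_important}, yielding $(\mathrm{I})\lesssim \mathrm{T}_1(5/4)^{\mathrm{T}_1-1}m d^{7/2}\epsilon_0^2$. Substituting the $\epsilon_0$ upper bound from Assumption~\ref{ass:epsilon0} shrinks this to $\mathrm{T}_1\gamma\sqrt{\log d/d}\,\epsilon_0$, matching the target rate.

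For $(\mathrm{II})$, this is precisely the content of Lemma~\ref{lemma:final-perturbation-ell} / Corollary~\ref{coll:pertubation}: binomially expanding $(\bSigma_\ell+\bH_\ell)^{\mathrm{T}_1}$ into $2^{\mathrm{T}_1}$ monomials and separating the pure $\bH_\ell^{\mathrm{T}_1}$ term from the mixed terms. In every mixed term some $\bSigma_\ell^m$ appears, which projects onto the rank‑$r$ subspace $S^\star$; one then contracts $\bw_j^{(0)}$ through the rightmost block of $\bH_\ell$'s and applies Lemma~\ref{lemma:proj-Hk-w-ell} (Lipschitz concentration for $\bw_j^{(0)}\sim\mathrm{Unif}(\epsilon_0\mathbb{S}^{d-1})$) to gain the $\sqrt{\log d/d}$ factor on every mixed monomial, while only the pure term contributes $\gamma^{\mathrm{T}_1}$. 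Summing the geometric series in $\gamma\le 1/(2\mathrm{T}_1)$ yields $(\mathrm{II})\lesssim \epsilon_0(\mathrm{T}_1\gamma\sqrt{\log d/d}+\gamma^{\mathrm{T}_1})$.

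The main obstacle is calibrating Assumption~\ref{ass:epsilon0} correctly so that the quadratic $\epsilon_0^2$ blow‑up in $(\mathrm{I})$ (which also carries an $m\,d^{7/2}$ prefactor and a $(5/4)^{\mathrm{T}_1}$ amplification) is absorbed into the linear‑in‑$\epsilon_0$ target rate. This is what forces the exponential $(4/5)^{\mathrm{T}_1}$ factor and the $m n^{1/2} d^{7/2}$ denominator in the prescribed $\epsilon_0$, and it is the only place where the delicate interplay between $\mathrm{T}_1$ (which may diverge), $m$, $n$, and $d$ really matters. Once this bookkeeping is done, combining $(\mathrm{I})$ and $(\mathrm{II})$ and taking a union bound over $j\in[m]$ (absorbing the factor $m\le d^{D/4}$ into the $d^{-D}$ failure probability by picking $D$ sufficiently large) delivers the stated bound.
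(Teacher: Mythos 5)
Your proposal does not address the statement you were asked to prove. The statement is the standard linear-convergence result for gradient descent on a strongly convex and smooth function: under the hypotheses that $f(\bx)-\frac{m}{2}\|\bx\|^2$ is convex and $\nabla f$ is $L$-Lipschitz, the minimizer $\bx^*$ exists and is unique, and gradient descent with step size $\eta\le \frac{1}{m+L}$ contracts the squared distance to $\bx^*$ by the factor $c=1-\eta\frac{2mL}{m+L}$ per step. What you wrote instead is a proof sketch for Lemma~\ref{lemma:total_perturbation_error}, the total perturbation error bound from the feature-learning phase (decomposing $\prod_t \bM^{(t)}-\bSigma_\ell^{\mathrm{T}_1}$ into a time-varying $\bQ^{(t)}$ part and a finite-sample $\bH_\ell$ part). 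These are entirely unrelated results: the former is a deterministic fact in convex analysis about a single gradient-descent map, the latter is a high-probability concentration bound about a noisy matrix power iteration.

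To prove the stated lemma you would instead argue roughly as follows. Existence and uniqueness of $\bx^*$ follow from strong convexity and coercivity. For the contraction, the standard route uses the co-coercivity inequality for $m$-strongly convex, $L$-smooth $f$:
\begin{equation}
\langle \nabla f(\bx)-\nabla f(\by),\ \bx-\by\rangle \ \ge\ \frac{mL}{m+L}\|\bx-\by\|^2 + \frac{1}{m+L}\|\nabla f(\bx)-\nabla f(\by)\|^2.
\end{equation}
Applying this with $\by=\bx^*$ (so $\nabla f(\bx^*)=0$), expanding $\|\bx^{k+1}-\bx^*\|^2=\|\bx^k-\eta\nabla f(\bx^k)-\bx^*\|^2$, and using $\eta\le\frac{1}{m+L}$ to kill the $\|\nabla f(\bx^k)\|^2$ term gives $\|\bx^{k+1}-\bx^*\|^2 \le \big(1-\eta\frac{2mL}{m+L}\big)\|\bx^k-\bx^*\|^2$, which iterates to the claim. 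None of the machinery in your proposal (events $\mathcal{E}_{\bH},\mathcal{E}_{\bD}$, Lemma~\ref{lemma:norm_bound_neuron}, Corollary~\ref{corollary:qopbound}, Lemma~\ref{lemma:proj-Hk-w-ell}, the $\epsilon_0$ calibration) is relevant here.
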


We compute the minimal number of the required iterations.
\begin{corollary}
\label{lemmarate}
Let $\epsilon\in\big(0,\|\bx^{0}-\bx^\ast\|\big)$ be a target squared-distance accuracy. In order to have
$\|\bx^{k}-\bx^\ast\|^2\le \epsilon^2$ it is enough to have
\begin{equation}
k \gtrsim
\frac{m+L}{\eta mL}
\log\!\Big(\|\bx^{0}-\bx^\ast\|^2/\epsilon^2\Big)
\end{equation}
\end{corollary}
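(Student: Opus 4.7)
The plan is to derive the iteration bound as a direct corollary of the contraction estimate $\|\bx^{k}-\bx^\ast\|^2\le c^k\|\bx^{0}-\bx^\ast\|^2$ with $c=1-\eta\frac{2mL}{m+L}\in[0,1)$ stated in the preceding lemma. First I would observe that to certify $\|\bx^{k}-\bx^\ast\|^2\le \epsilon^2$ it suffices to require $c^k\|\bx^{0}-\bx^\ast\|^2\le \epsilon^2$, which, after taking logarithms and using $c<1$, becomes
\begin{equation}
k \;\ge\; \frac{\log\!\big(\|\bx^{0}-\bx^\ast\|^2/\epsilon^2\big)}{\log(1/c)}.
\end{equation}

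Next I would lower bound the denominator $\log(1/c)$ in closed form. Since the hypothesis $\eta\le 1/(m+L)$ together with $m\le L$ (no loss of generality) yields $\eta\frac{2mL}{m+L}\in[0,1)$, the elementary inequality $-\log(1-x)\ge x$ for $x\in[0,1)$ gives
\begin{equation}
\log(1/c) \;=\; -\log\!\Big(1-\eta\tfrac{2mL}{m+L}\Big) \;\ge\; \eta\,\frac{2mL}{m+L}.
\end{equation}
Substituting this lower bound into the previous display yields the sufficient condition
\begin{equation}
k \;\ge\; \frac{m+L}{2\eta mL}\,\log\!\big(\|\bx^{0}-\bx^\ast\|^2/\epsilon^2\big),
\end{equation}
which matches the claimed bound up to the absolute constant hidden in $\gtrsim$.

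There is essentially no obstacle: the only small point to verify is that $\epsilon<\|\bx^{0}-\bx^\ast\|$ makes the logarithm positive so that the division by $\log(1/c)$ preserves the direction of the inequality, and that $c\in[0,1)$ (which follows from the stepsize condition $\eta\le 1/(m+L)$) so that the logarithm is well defined and strictly positive. These are all assumed in the statement, so the corollary follows immediately from the geometric decay estimate combined with the one-line concavity inequality $-\log(1-x)\ge x$.
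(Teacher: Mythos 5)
Your proof is correct and is exactly the ``direct computation'' the paper alludes to without spelling out: unwind the geometric contraction $\|\bx^k-\bx^\ast\|^2\le c^k\|\bx^0-\bx^\ast\|^2$, take logs, and lower bound $\log(1/c)=-\log(1-\eta\tfrac{2mL}{m+L})$ by $\eta\tfrac{2mL}{m+L}$ via $-\log(1-x)\ge x$. One minor remark: the side condition ``$m\le L$'' you invoke to certify $c\in[0,1)$ is unnecessary — the stepsize assumption $\eta\le\tfrac{1}{m+L}$ alone gives $\eta\tfrac{2mL}{m+L}\le\tfrac{2mL}{(m+L)^2}\le\tfrac12<1$ by AM–GM (and in any case $m\le L$ holds automatically for an $m$-strongly-convex function with $L$-Lipschitz gradient).
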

The above corollary can be verified from a direct computation.

\subsection{Sphere lemmas}
\label{sphere lemma}
\begin{lemma}
\label{lem:chi-moment}
Let $\mathbf{\nu} \sim \chi(d)$. Then,
\begin{equation}
\mathbb{E}[\mathbf{\nu}^{2k}] = \prod_{j=0}^{k-1} (d+2j) = d(d+2) \cdots (d+2k-2) = \bTheta(d^k).
\end{equation}
\end{lemma}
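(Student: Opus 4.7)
The plan is to recognize that $\nu^2$ is a $\chi^2(d)$ random variable, for which moments are classical, and then reduce the computation to a ratio of Gamma functions.

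First, I would recall that by definition one may represent $\nu \stackrel{d}{=} \sqrt{X_1^2 + \cdots + X_d^2}$ with $X_i \stackrel{\text{iid}}{\sim} \normal(0,1)$, so $\nu^2$ follows a chi-squared distribution with $d$ degrees of freedom and admits the density
\begin{equation}
p_{\nu^2}(x) = \frac{1}{2^{d/2}\Gamma(d/2)}\, x^{d/2-1} e^{-x/2}, \qquad x > 0.
\end{equation}
Taking the $k$-th moment and recognizing the integrand as a scaled Gamma density gives
\begin{equation}
\E[\nu^{2k}] = \int_0^\infty x^{k} p_{\nu^2}(x)\, dx = \frac{2^k\, \Gamma(d/2 + k)}{\Gamma(d/2)}.
\end{equation}

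Next, I would apply the functional equation $\Gamma(z+1) = z\Gamma(z)$ iteratively $k$ times to obtain
\begin{equation}
\frac{\Gamma(d/2 + k)}{\Gamma(d/2)} = \prod_{j=0}^{k-1}\left(\frac{d}{2} + j\right),
\end{equation}
so that
\begin{equation}
\E[\nu^{2k}] = 2^k \prod_{j=0}^{k-1}\left(\frac{d}{2} + j\right) = \prod_{j=0}^{k-1}(d + 2j),
\end{equation}
which is the claimed identity.

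Finally, the $\Theta(d^k)$ bound follows immediately: since $k$ is treated as a constant, each factor $d+2j$ with $0 \le j \le k-1$ satisfies $d \le d + 2j \le d + 2k$, so the product lies in $[d^k, (d+2k)^k] = \Theta(d^k)$. There is no real obstacle here; the only mild care is to keep track of the factor $2^k$ when converting between the half-integer Gamma shift and the integer product $d(d+2)\cdots(d+2k-2)$, and to note that the $\Theta$ notation hides constants depending on $k$, which is consistent with the conventions used throughout the paper.
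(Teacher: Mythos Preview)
Your proof is correct and follows essentially the same approach as the paper: both recognize $\nu^2\sim\chi^2(d)$, compute $\E[\nu^{2k}]=2^k\Gamma(d/2+k)/\Gamma(d/2)$ via the Gamma integral, and then use the recursion $\Gamma(z+1)=z\Gamma(z)$ to rewrite this as $\prod_{j=0}^{k-1}(d+2j)$, with the $\Theta(d^k)$ bound immediate for fixed $k$.
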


\begin{proof}
Write $\mathbf{\nu}^2 \sim \chi^2(d)$ with density proportional to $x^{\frac d2-1} e^{-x/2}$ on $x>0$.
Using the Gamma function identity,
\begin{equation}
\mathbb{E}[\mathbf{\nu}^{2k}]
= \mathbb{E}\!\left[(\mathbf{\nu}^2)^k\right]
= 2^k \frac{\Gamma\!\left(k+\frac d2\right)}{\Gamma\!\left(\frac d2\right)}
= \prod_{j=0}^{k-1} (d+2j).
\end{equation}
Since $\prod_{j=0}^{k-1} (d+2j) \asymp d^k$ for fixed $k$ (e.g., by Stirling or monotonicity of the ratio), we obtain $\bTheta(d^k)$.
\end{proof}

\begin{lemma}
\label{lem:sphere-moment}
Let $\overline{\bw} \sim \operatorname{Unif}(\mathbb{S}^{d-1})$. Then,
\begin{equation}
\mathbb{E} \left[ \overline{\bw}^{\otimes 2k} \right] = \frac{\mathbb{E}_{\bw \sim \normal(0, \bI_d)} [\bw^{2k}]}{\mathbb{E}_{\mathbf{\nu} \sim \chi(d)} [\mathbf{\nu}^{2k}]}.
\end{equation}
\end{lemma}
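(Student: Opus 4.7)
The statement is essentially the polar-decomposition identity for Gaussian tensors, so the plan is to invoke the standard decomposition $\bw = \nu\,\overline{\bw}$ and exploit independence.

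First, I would recall the well-known fact that if $\bw \sim \normal(\bzero, \bI_d)$, then writing $\nu := \|\bw\|$ and $\overline{\bw} := \bw/\|\bw\|$ gives $\nu \sim \chi(d)$ and $\overline{\bw} \sim \operatorname{Unif}(\mathbb{S}^{d-1})$, and the two are \emph{independent}. The independence is the heart of the argument and follows from the rotational invariance of the standard Gaussian density (any orthogonal transformation acts only on $\overline{\bw}$ and preserves the joint law).

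Next, I would write
\begin{equation}
\bw^{\otimes 2k} \;=\; (\nu\,\overline{\bw})^{\otimes 2k} \;=\; \nu^{2k}\,\overline{\bw}^{\otimes 2k},
\end{equation}
and take expectations. By the independence of $\nu$ and $\overline{\bw}$, the expectation factorizes:
\begin{equation}
\mathbb{E}\!\left[\bw^{\otimes 2k}\right]
\;=\;\mathbb{E}\!\left[\nu^{2k}\right]\cdot \mathbb{E}\!\left[\overline{\bw}^{\otimes 2k}\right].
\end{equation}
Since $\mathbb{E}[\nu^{2k}] = \mathbb{E}_{\nu \sim \chi(d)}[\nu^{2k}]$ is a strictly positive scalar (finite by Lemma~\ref{lem:chi-moment}), I can divide both sides to obtain
\begin{equation}
\mathbb{E}\!\left[\overline{\bw}^{\otimes 2k}\right]
\;=\;\frac{\mathbb{E}_{\bw \sim \normal(\bzero,\bI_d)}\!\left[\bw^{\otimes 2k}\right]}{\mathbb{E}_{\nu \sim \chi(d)}\!\left[\nu^{2k}\right]},
\end{equation}
which matches the claimed identity (reading the tensor power on the top as $\bw^{\otimes 2k}$, consistent with the tensor-valued object on the left).

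There is essentially no obstacle here; the only thing worth being careful about is (i) justifying the independence of $\nu$ and $\overline{\bw}$ (standard, from rotational invariance), and (ii) verifying finiteness of $\mathbb{E}[\nu^{2k}]$ so that the division is legitimate, which is immediate from the preceding Lemma~\ref{lem:chi-moment}. The entire argument is therefore a one-line polar decomposition plus independence, and I would present it in exactly that order.
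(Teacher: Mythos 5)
Your proof is correct and is exactly the paper's argument: polar decomposition $\bw = \nu\,\overline{\bw}$ with $\nu\sim\chi(d)$ and $\overline{\bw}\sim\operatorname{Unif}(\mathbb{S}^{d-1})$ independent, factorize the expectation, and divide by the finite scalar $\mathbb{E}[\nu^{2k}]$. The paper states this in one line; you just spell out the independence and finiteness checks.
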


\begin{proof}
This follows from $\bw = \mathbf{\nu} \overline{\bw}$ with $\mathbf{\nu} \sim \chi(d)$, $\overline{\bw} \sim \operatorname{Unif}(\mathbb{S}^{d-1})$ independent.
\end{proof}

\begin{corollary}
By Wick/Isserlis's formula for Gaussian moments,
\begin{equation}
\mathbb{E}_{\bw \sim \normal(0,\bI_d)}[\bw^{\otimes 2k}] = (2k-1)!!\mathrm{Sym}(\bI^{\otimes k})
\end{equation} Meanwhile, Lemma~\ref{lem:chi-moment} shows $\mathbb{E}_{\nu \sim \chi(d)}[\nu^{2k}] = \bTheta(d^k)$.
Hence overall we have
\begin{equation}
\mathbb{E}[\overline{\bw}^{\otimes 2k}] \asymp d^{-k}\mathrm{Sym}(\bI^{\otimes k}).
\end{equation}
\end{corollary}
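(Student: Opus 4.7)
The plan is to combine Lemma~\ref{lem:sphere-moment}, Lemma~\ref{lem:chi-moment}, and the Wick/Isserlis formula for Gaussian moment tensors, treating the claim as essentially an algebraic consequence of results already in the excerpt.

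First, I would write down Wick's formula at the level of tensor entries. For $\bw\sim\normal(\boldsymbol{0},\bI_d)$, indexing a $2k$-tensor entry by $(\alpha_1,\ldots,\alpha_{2k})\in[d]^{2k}$, the standard combinatorial identity gives
\begin{equation}
\mathbb{E}[w_{\alpha_1}\cdots w_{\alpha_{2k}}]
=\sum_{\pi\in P_2(2k)}\prod_{(i,j)\in\pi}\delta_{\alpha_i\alpha_j},
\end{equation}
where $P_2(2k)$ is the set of perfect pairings of $\{1,\ldots,2k\}$, with cardinality $(2k-1)!!$. Each summand is a coordinate permutation of the tensor $\bI^{\otimes k}$. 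Because the left-hand side is symmetric in the indices and the summands jointly form a single $S_{2k}$-orbit, a counting argument (the stabilizer of any pairing inside $S_{2k}$ has size $2^k k!$, and $(2k)!/(2^k k!)=(2k-1)!!$) identifies the sum with $(2k-1)!!\cdot\mathrm{Sym}(\bI^{\otimes k})$, yielding the first displayed equation.

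Next, I would invoke Lemma~\ref{lem:chi-moment} directly, which gives $\mathbb{E}_{\nu\sim\chi(d)}[\nu^{2k}]=\prod_{j=0}^{k-1}(d+2j)=\Theta(d^k)$, with constants that depend only on the fixed integer $k$. Then Lemma~\ref{lem:sphere-moment} expresses
\begin{equation}
\mathbb{E}[\overline{\bw}^{\otimes 2k}]
=\frac{\mathbb{E}_{\bw\sim\normal(\boldsymbol{0},\bI_d)}[\bw^{\otimes 2k}]}{\mathbb{E}_{\nu\sim\chi(d)}[\nu^{2k}]}
=\frac{(2k-1)!!\,\mathrm{Sym}(\bI^{\otimes k})}{\Theta(d^k)}
\asymp d^{-k}\,\mathrm{Sym}(\bI^{\otimes k}),
\end{equation}
since $(2k-1)!!$ is a constant depending only on $k$.

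The only substantive step is the combinatorial identification of the Wick sum with $(2k-1)!!\,\mathrm{Sym}(\bI^{\otimes k})$, and even this reduces to the observation that both tensors are $S_{2k}$-symmetric, supported on the same orbit of $\bI^{\otimes k}$, and have matching coefficient on any one representative pairing such as $\{(1,2),(3,4),\ldots,(2k-1,2k)\}$. Everything else is a direct plug-in of lemmas already stated, so no genuine obstacle is expected.
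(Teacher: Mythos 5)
Your proposal is correct and follows the same route the paper takes: apply Wick/Isserlis to identify $\mathbb{E}[\bw^{\otimes 2k}]$ with $(2k-1)!!\,\mathrm{Sym}(\bI^{\otimes k})$, divide by the $\chi(d)$ moment from Lemma~\ref{lem:chi-moment}, and combine via Lemma~\ref{lem:sphere-moment}. You simply spell out the orbit-stabilizer count $(2k)!/(2^k k!)=(2k-1)!!$ behind the Wick-to-$\mathrm{Sym}$ identification, which the paper treats as implicit.
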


\begin{corollary}
\label{cor:high-prob}
Let $\bT$ be a symmetric $p$-tensor with $\dim(\text{span}(\bT)) = r$. With probability at least $1 - 2d^{-D}$
\begin{equation}
\|\bT(\overline{\bw}^{\otimes k})\|_F \lesssim \|\bT\|_F \sqrt{\frac{r^{\lfloor \frac{k}{2} \rfloor} (\log d)^k}{d^k}}.
\end{equation}
\end{corollary}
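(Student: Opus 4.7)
The plan is to compute $\mathbb{E}\|\bT(\overline{\bw}^{\otimes k})\|_F^2$ carefully via the spherical moment formula and then upgrade it to a high-probability bound by invoking polynomial hypercontractivity, since $\|\bT(\overline{\bw}^{\otimes k})\|_F^2$ is a nonnegative polynomial of degree $2k$ in the entries of $\overline{\bw}$.

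First I would reduce to the case where $\bT$ is supported on $[r]^p$: choose an orthonormal basis $\{\be_1,\ldots,\be_r\}$ of $\operatorname{span}(\bT)$ and extend it to $\mathbb{R}^d$, so that $\bT$ has zero entries outside $[r]^p$. Writing $\bv := \boldsymbol{\Pi}^{\star}\overline{\bw}$ for the projection of $\overline{\bw}$ onto the first $r$ coordinates, we have $\bT(\overline{\bw}^{\otimes k}) = \bT(\bv^{\otimes k})$, so only the first $r$ coordinates of $\overline{\bw}$ enter the calculation.

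Next I would expand
\begin{equation}
\mathbb{E}\|\bT(\overline{\bw}^{\otimes k})\|_F^2
=\sum_{\bi\in[r]^{p-k}}\sum_{\bj,\bj'\in[r]^k}
T_{\bi\bj}T_{\bi\bj'}\,
\mathbb{E}\bigl[\overline{\bw}_{j_1}\cdots \overline{\bw}_{j_k}\overline{\bw}_{j'_1}\cdots \overline{\bw}_{j'_k}\bigr]
\end{equation}
and apply the spherical moment identity of Lemma~\ref{lem:sphere-moment}, which expresses $\mathbb{E}[\overline{\bw}^{\otimes 2k}]$ as $\asymp d^{-k}\,\mathrm{Sym}(\bI^{\otimes k})$. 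The resulting sum decomposes over perfect pairings of the $2k$ indices $\{j_1,\ldots,j_k,j'_1,\ldots,j'_k\}$. A pairing with $s$ pairs internal to the $\bj$-group necessarily has exactly $s$ pairs internal to the $\bj'$-group (by parity), together with $k-2s$ ``cross'' pairs; hence $s\le\lfloor k/2\rfloor$. Each internal pair corresponds to a diagonal trace contraction of $\bT$ along two of its slices, and the contribution from such a pairing can be bounded schematically by
\begin{equation}
\sum_{\bi}\Bigl(\sum_{j_1,\ldots,j_s} T_{\bi,j_1,j_1,\ldots,j_s,j_s,\star}\Bigr)^{\!2}
\le r^{s}\sum_{\bi,j_1,\ldots,j_s,\star} T_{\bi,j_1,j_1,\ldots,j_s,j_s,\star}^{2}
\le r^{s}\|\bT\|_F^2,
\end{equation}
using Cauchy--Schwarz once for each internal pair (each such application picks up a single factor of $r$) and then dominating the ``diagonal'' sum $\sum_{j}T_{\ldots,j,j,\ldots}^2$ by the free Frobenius sum. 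Summing the $\cO(1)$-many pairings and using $s\le\lfloor k/2\rfloor$ yields
\begin{equation}
\mathbb{E}\|\bT(\overline{\bw}^{\otimes k})\|_F^2
\;\lesssim\;\frac{r^{\lfloor k/2\rfloor}}{d^{k}}\,\|\bT\|_F^2.
\end{equation}

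Finally, I would apply polynomial concentration on the sphere. Since $X:=\|\bT(\overline{\bw}^{\otimes k})\|_F^2$ is a nonnegative polynomial of degree $2k$, Gaussian--spherical hypercontractivity (Lemma~\ref{lemma: gaussian hypercontractivity}, applied either directly on the sphere or after writing $\overline{\bw}=\bg/\|\bg\|$ with $\|\bg\|^2$ concentrating around $d$) produces $\mathbb{E}[X^q]\lesssim (Cqk)^{qk}(\mathbb{E} X)^{q}$ for every integer $q\ge 1$. Markov's inequality with the optimal choice $q\asymp D\log d$ then gives
\begin{equation}
\mathbb{P}\!\Bigl(X>(CD\log d)^{k}\,\mathbb{E} X\Bigr)\le 2d^{-D},
\end{equation}
and combining with the expectation bound above and taking square roots yields the stated corollary. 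The main obstacle is the combinatorial Step~2: the naive Cauchy--Schwarz $\|\bT(\bv^{\otimes k})\|_F \le \|\bT\|_F\|\bv\|^{k}$ together with $\|\bv\|^2\lesssim r\log d/d$ produces a loose factor of $r^{k/2}$, and the improvement to $r^{\lfloor k/2\rfloor}$ --- a genuine saving of $\sqrt{r}$ for odd $k$ --- requires the careful Wick-type decomposition by pairings and the observation that at most $\lfloor k/2\rfloor$ internal pairings per group are available.
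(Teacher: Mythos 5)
Your proof follows the same overall route as the paper: bound the second moment $\mathbb{E}\|\bT(\overline{\bw}^{\otimes k})\|_F^2$ to extract the $r^{\lfloor k/2\rfloor}/d^k$ factor, then upgrade to a high-probability bound via polynomial hypercontractivity plus Markov. Your Wick-pairing expansion of the moment bound is in fact \emph{more} explicit than the paper's: the paper simply bounds the operator norm of $\mathrm{Mat}(\mathbb{E}[\overline{\bw}^{\otimes 2k}])$ restricted to $\mathrm{Sym}^k(\mathbb{R}^r)$ and attributes the $r^{\lfloor k/2\rfloor}$ to ``pairing counts'' without spelling out the combinatorics, whereas your observation that each internal pair in one index group costs exactly one factor of $r$ via Cauchy--Schwarz against the all-ones vector, and that parity forces at most $\lfloor k/2\rfloor$ such pairs per group, is exactly the missing bookkeeping. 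The two arguments encode the same computation; yours just carries it out by hand. (A small quibble with your side remark: the naive bound $\|\bT(\bv^{\otimes k})\|_F\le\|\bT\|_F\|\bv\|^k$ with $\|\bv\|^2\lesssim r/d$ produces a factor $r^k$ in $\mathbb{E}\|\cdot\|_F^2$, not $r^{k/2}$, so the gain over naive is actually larger than a single $\sqrt{r}$ even for even $k$.)

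There is one genuine omission in the concentration step. You apply hypercontractivity to $X=Y^2$, a nonnegative degree-$2k$ polynomial, and assert $\mathbb{E}[X^q]\lesssim(Cqk)^{qk}(\mathbb{E} X)^q$. But the polynomial moment-comparison inequality (spherical or Gaussian) only gives $\|X\|_{L^q}\lesssim_{q,k}\|X\|_{L^2}$, relating $\mathbb{E}[X^q]$ to $(\mathbb{E}[X^2])^{q/2}$; to reach $(\mathbb{E} X)^q$ you also need the reverse-moment comparison $\|X\|_{L^2}\lesssim_k\|X\|_{L^1}$. This is true for low-degree polynomials (use $\|X\|_2^2\le\|X\|_4^{4/3}\|X\|_1^{2/3}$ by H\"older, then $\|X\|_4\lesssim_k\|X\|_2$ by hypercontractivity), but you should state it rather than fold it silently into the constant. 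The paper sidesteps this by applying hypercontractivity coordinate-wise to the degree-$k$ polynomial components of $\overline{\bw}\mapsto\bT(\overline{\bw}^{\otimes k})$ and summing via Minkowski, which yields $\|Y\|_{L^{2q}}\lesssim q^{k/2}\|Y\|_{L^2}$ directly with no $L^1$-to-$L^2$ comparison needed. On a related point, the natural tool here is the spherical hypercontractivity of Lemma~\ref{lem:spherical-hc}, not the Gaussian Lemma~\ref{lemma: gaussian hypercontractivity} you cite: since $\overline{\bw}_i=g_i/\|\bg\|$ is not a polynomial in $\bg$, the ``reduce to Gaussian'' route you mention is less immediate than it sounds.
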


\begin{proof}
Denote $Y := \|\bT(\overline{\bw}^{\otimes k})\|_F$. 
We first bound the second moment. Flatten $\bT$ as a matrix $\bM\in\mathbb{R}^{d^{p-k}\times d^k}$ so that $\bT(\overline{\bw}^{\otimes k}) = \bM (\overline{\bw}^{\otimes k})$. Then
\begin{equation}
\mathbb{E}[Y^2]
= \mathbb{E}\|\bM(\overline{\bw}^{\otimes k})\|_F^2
= \langle \bM, \bM C_k \rangle
= \|\bM C_k^{1/2}\|_F^2
\le \|\bM\|_F^2\|C_k\|_{\mathrm{op}},
\end{equation}
where $C_k := \mathbb{E}[\overline{\bw}^{\otimes k}(\overline{\bw}^{\otimes k})^\top]
= \mathbb{E}[\overline{\bw}^{\otimes 2k}]$ on the $d^k$-dimensional $k$-mode space. Next,
from Lemma~\ref{lem:sphere-moment} and Wick/Isserlis' formula for Gaussian even moments, $C_k = c_{k,d}\mathrm{Sym}(I^{\otimes k})$ with $c_{k,d} \asymp d^{-k}$.
Restricting to the $r$-dimensional span of $\bT$ (on each paired contraction) yields an extra factor $r^{\lfloor k/2\rfloor}$ from pairing counts. Hence
\begin{equation}
\label{eq:EY2}
\mathbb{E}[Y^2] \lesssim \frac{r^{\lfloor k/2\rfloor}}{d^{k}}\|\bM\|_F^2
= \frac{r^{\lfloor k/2\rfloor}}{d^{k}}\|\bT\|_F^2.
\end{equation}

We then bound \(Y\) with high probability.
The map $\overline{\bw}\mapsto \bT(\overline{\bw}^{\otimes k})$ is a homogeneous polynomial of degree $k$ restricted to $\mathbb{S}^{d-1}$.
By (spherical) hypercontractivity for degree-$k$ polynomials on $\mathbb{S}^{d-1}$, Lemma~\ref{lemma: gaussian hypercontractivity},
for all $q\ge 2$,
\begin{equation}
\|Y\|_{L_{2q}(\mathbb{S}^{d-1})}
\le(C\sqrt{q})^k\|Y\|_{L_2(\mathbb{S}^{d-1})}.
\end{equation}
Combining with equation \eqref{eq:EY2} gives $
\|Y\|_{L_{2q}}\lesssim(C\sqrt{q})^k\|\bT\|_F \sqrt{\frac{r^{\lfloor k/2\rfloor}}{d^{k}}}$ and
applying Markov's inequality to $Y^{2q}$ with $q \asymp D\log d$ yields
\begin{equation}
\mathbb{P}\!\left(Y \gtrsim \|\bT\|_F
\sqrt{\frac{r^{\lfloor k/2\rfloor}(\log d)^k}{d^{k}}}\right)
\le 2d^{-D}
\end{equation}
which concludes the proof.
\end{proof}
\subsubsection{Proof of Lemma~\ref{lemma::tensor expansion}}

\begin{proof}
As in the previous proof, flatten $\bT$ into $\bM\in\mathbb{R}^{d^{p-k}\times d^k}$. Then
\begin{equation}
\mathbb{E}\|\bT(\overline{\bw}^{\otimes k})\|_F^2
= \|\bM C_k^{1/2}\|_F^2
\le \|\bM\|_F^2 \|C_k\|_{\mathrm{op}},
\quad
C_k=\mathbb{E}[\overline{\bw}^{\otimes 2k}]
= c_{k,d}\mathrm{Sym}(\bI^{\otimes k}),
\end{equation}
so $\|C_k\|_{\mathrm{op}}\asymp c_{k,d}\asymp d^{-k}$.
On the other hand,
\begin{equation}
\mathbb{E}\langle \bT,\overline{\bw}^{\otimes p}\rangle^2
= \big\langle \bT,\ \mathbb{E}[\overline{\bw}^{\otimes 2p}]\bT\big\rangle
= c_{p,d}\|\bT\|_F^2
\quad\text{with}\quad c_{p,d}\asymp d^{-p}.
\end{equation}
Therefore 
\begin{equation}
\mathbb{E}\|\bT(\overline{\bw}^{\otimes k})\|_F^2
\lesssim \|\bT\|_F^2 d^{-k}
\lesssim d^{p-k}c_{p,d}\|\bT\|_F^2
= d^{p-k}\mathbb{E}\langle \bT,\overline{\bw}^{\otimes p}\rangle^2
\end{equation}
which proves the claim.
\end{proof}

\subsection{Supporting technical lemmas}\label{sec:Supporting Technical Lemmas}
\begin{lemma}[Norm Concentration for Gaussian Samples]\label{lemma:x-norm}
Let $\bx_1, \ldots, \bx_n \sim \normal(0, \bI_d)$ be i.i.d. Gaussian vectors. Then with probability at least $1 - n\operatorname{exp}(-cd)$, we have
\begin{equation}
\|\bx_i\| \le 2\sqrt{d} \quad \forall i\in [n]
\end{equation}
where $c$ is some positive universal constant.
\end{lemma}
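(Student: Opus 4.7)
The statement is a standard concentration result, so the proof will be short. My plan is to combine a single-vector tail bound with a union bound over the $n$ samples.

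First I would handle a single Gaussian vector $\bx \sim \normal(\boldsymbol{0}, \bI_d)$. The function $\bx \mapsto \|\bx\|$ is $1$-Lipschitz with respect to the Euclidean norm, and $\mathbb{E}\|\bx\| \le \sqrt{\mathbb{E}\|\bx\|^2} = \sqrt{d}$ by Jensen. By the Gaussian Lipschitz concentration inequality (e.g., the standard Borell--TIS inequality), for any $t > 0$,
\begin{equation}
\mathbb{P}\!\left(\|\bx\| \ge \sqrt{d} + t\right) \le \exp\!\left(-\tfrac{t^2}{2}\right).
\end{equation}
Alternatively, one can work directly with the chi-squared distribution of $\|\bx\|^2$ and use Laurent--Massart-type tail bounds. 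Either route gives the same scaling.

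Next, I would specialize by taking $t = \sqrt{d}$, which yields
\begin{equation}
\mathbb{P}\!\left(\|\bx\| \ge 2\sqrt{d}\right) \le \exp\!\left(-\tfrac{d}{2}\right) \le \exp(-cd)
\end{equation}
for $c = 1/2$ (any absolute $c \in (0, 1/2]$ works).

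Finally, I would apply a union bound over $i \in [n]$: since the $\bx_i$ are i.i.d.,
\begin{equation}
\mathbb{P}\!\left(\exists\, i \in [n]:\ \|\bx_i\| > 2\sqrt{d}\right) \le n \exp(-cd),
\end{equation}
which gives the desired conclusion with probability at least $1 - n\exp(-cd)$.

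There is no real obstacle here; the entire argument is a textbook application of Gaussian concentration plus a union bound. The only choice is which concentration inequality to cite (Borell--TIS vs.\ Laurent--Massart), and either is a one-line citation.
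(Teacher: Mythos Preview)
Your proposal is correct and follows essentially the same approach as the paper: obtain a single-vector tail bound $\mathbb{P}(\|\bx\|\ge 2\sqrt{d})\le \exp(-cd)$ and then union bound over $i\in[n]$. The paper cites the chi-squared concentration (Vershynin, Theorem~2.8.2) rather than Borell--TIS, but you explicitly mention this alternative route, and the two are interchangeable here.
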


\begin{proof}
Note that $\|\bx\|^2$ follows a chi-sqaured distribution with freedom $d$ and standard concentration for the chi-squared distribution, for instance Theorem 2.8.2 in \cite{vershynin2018high} gives us
\begin{equation}
\mathbb{P}(\|\bx_i\| \geq 2\sqrt{d}) \le \operatorname{exp}(-cd)
\end{equation}
with some universal positive constant $c$. The result follows with a union bound argument.
\end{proof}

\begin{lemma}[Coordinate concentration parameterized by $D$]\label{lemma:x-coord-D}
Let $\bx_1,\ldots,\bx_n \stackrel{\text{i.i.d.}}{\sim} \normal(0,I_d)$ and fix $r\in[d]$.
For any $D>0$, with probability at least $1-2nrd^{-D}$,
\begin{equation}
\max_{i\in[n], k\le r}\ |(\bx_i)_k|\le 2\sqrt{D\log d}.
\end{equation}
\end{lemma}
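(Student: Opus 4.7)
The plan is straightforward: this is a coordinate-wise Gaussian tail bound combined with a union bound, and no novel ingredient is required. First I would observe that for each fixed $i \in [n]$ and $k \in [r]$, the coordinate $(\bx_i)_k$ is a standard scalar Gaussian. Thus the classical sub-Gaussian tail inequality gives
\begin{equation}
\mathbb{P}\bigl(|(\bx_i)_k| \ge t\bigr) \le 2\exp\!\bigl(-t^2/2\bigr)
\qquad \text{for all } t > 0.
\end{equation}

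Next I would plug in $t = 2\sqrt{D\log d}$, which yields $t^2/2 = 2D\log d$, so the per-coordinate failure probability is at most $2\exp(-2D\log d) = 2\,d^{-2D}$. This is actually stronger than the target $2d^{-D}$, which gives a comfortable safety margin that will absorb the union-bound factor.

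Then I would take a union bound over the $nr$ pairs $(i,k) \in [n]\times [r]$, giving
\begin{equation}
\mathbb{P}\!\left(\max_{i\in[n],\, k\le r}|(\bx_i)_k| \ge 2\sqrt{D\log d}\right) \le 2nr\,d^{-2D} \le 2nr\,d^{-D},
\end{equation}
which immediately yields the claimed lower bound on the complementary probability.

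There is no real obstacle here; the only minor bookkeeping point is to keep the constant in front of $\sqrt{D\log d}$ large enough that the tail bound beats the $nr$ union-bound factor. Under the running assumptions $n \lesssim d^{D/4}$ and $r = O(1)$ (treated as an absolute constant in Section~\ref{sec:setup}), the choice $t = 2\sqrt{D\log d}$ is comfortably sufficient; indeed even $t = \sqrt{2D\log d}$ would work to obtain the stated $2nrd^{-D}$ bound, so the statement has slack built in for replacing $D$ by a larger constant if later applications demand it.
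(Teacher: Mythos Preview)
Your proposal is correct and follows exactly the same approach as the paper: apply the standard Gaussian tail bound $\mathbb{P}(|Z|\ge t)\le 2e^{-t^2/2}$ with $t=2\sqrt{D\log d}=\sqrt{4D\log d}$ to obtain a per-coordinate failure probability of $2d^{-2D}$, then take a union bound over the $nr$ pairs $(i,k)$. The paper's proof is a one-line version of what you wrote, and your observation about the built-in slack (the bound actually gives $2nr\,d^{-2D}\le 2nr\,d^{-D}$) is accurate.
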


\begin{proof}
For $Z\sim\normal(0,1)$, $\mathbb P(|Z|\geq t)\le 2e^{-t^2/2}$. Setting $t=\sqrt{4D\log d}$ gives
$\mathbb P(|Z|\geq \sqrt{4D\log d})\le 2d^{-2D}$. Then we do a union bound over all $i\in[n]$ and $k\le r$.
\end{proof}

Let the Hermite expansion of our target be
$
f(\bx) = \sum_{k=0}^p \frac{1}{k!}\langle \bC_k, H e_k(\bx) \rangle
$
where $\bC_k \in (\mathbb{R}^d)^{\otimes k}$ is the $k$-tensor defined by
$\bC_k := \mathbb{E}_{\bx} [\nabla^k f(\bx)]$ and supported on $S^*$. Here $He_k$ is the unnormalized $k$-th Hermite tensor.
\begin{lemma}[Parseval's Identity]
\label{parse}
\begin{equation}
1 = \mathbb{E}_{\bx} [f(\bx)^2] = \sum_{k=0}^p \frac{1}{k!}\|\bC_k\|_F^2
\end{equation}
\end{lemma}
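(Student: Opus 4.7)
The plan is to prove Parseval's identity directly from the orthogonality of Hermite tensors under the standard Gaussian measure. Since $f$ is assumed to be a polynomial of degree $p$, the Hermite expansion terminates at level $p$, so there are no convergence issues and only finitely many tensors $\bC_k$ are involved. The left equality (``$1=$'') is the normalization convention on the target at this point of the appendix (cf.\ Assumption~\ref{assumption_link_first}, in which $\mathbb{E}[g^{*2}]$ is $O(1)$; we may as well rescale so $\mathbb{E}[f^2]=1$).

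The main ingredient is the orthogonality relation for unnormalized Hermite tensors: for any symmetric $\bA\in(\mathbb{R}^d)^{\otimes j}$ and $\bB\in(\mathbb{R}^d)^{\otimes k}$,
\begin{equation}
\mathbb{E}_{\bx\sim\normal(0,\bI_d)}\!\bigl[\langle \bA,He_j(\bx)\rangle\,\langle \bB,He_k(\bx)\rangle\bigr] \;=\; \delta_{jk}\,k!\,\langle \bA,\bB\rangle_F .
\end{equation}
This can be derived two equivalent ways. First, by iterated Stein integration by parts: since $He_k(\bx)=(-1)^k\nabla^k\mu_\gamma(\bx)/\mu_\gamma(\bx)$, integrating $\langle \bB,He_k(\bx)\rangle$ against any smooth test function $\varphi$ produces $\mathbb{E}[\langle \bB,\nabla^k\varphi(\bx)\rangle]$, and applying this to $\varphi(\bx)=\langle \bA,He_j(\bx)\rangle$ forces $j\ge k$, with an analogous calculation giving $k\ge j$, hence $j=k$; the surviving diagonal coefficient is $k!\langle \bA,\bB\rangle_F$ from $\nabla^k He_k = k!\cdot \mathrm{Sym}(\bI^{\otimes k})$. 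Alternatively, one can tensorize the univariate identity $\mathbb{E}[h_j(x)h_k(x)]=\delta_{jk}$ using the product structure of $\normal(0,\bI_d)$ (and convert between normalized $h_k$ and unnormalized $He_k$).

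With orthogonality in hand, I plug the Hermite expansion into $\mathbb{E}[f^2]$ and collect terms:
\begin{equation}
\mathbb{E}_{\bx}[f(\bx)^2]
=\sum_{j,k=0}^{p}\frac{1}{j!\,k!}\,\mathbb{E}_{\bx}\!\bigl[\langle \bC_j,He_j(\bx)\rangle\langle \bC_k,He_k(\bx)\rangle\bigr]
=\sum_{k=0}^{p}\frac{k!}{(k!)^2}\|\bC_k\|_F^2
=\sum_{k=0}^{p}\frac{1}{k!}\|\bC_k\|_F^2.
\end{equation}
The off-diagonal terms ($j\neq k$) vanish by orthogonality, and the diagonal factor $k!/(k!)^2=1/k!$ yields the claimed identity. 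The leftmost ``$1$'' is then the normalization.

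There is no substantive obstacle here; the whole content is orthogonality of Hermite tensors, which is standard. The only items requiring care are bookkeeping: being consistent about the normalized ($h_k$, as in Appendix~\ref{appendix_hermite}) versus unnormalized ($He_k$) Hermite convention — the factor $1/k!$ in the expansion together with $\bC_k=\mathbb{E}[\nabla^k f(\bx)]$ matches the unnormalized convention — and checking by integration by parts that $\bC_k=\mathbb{E}[f(\bx)\,He_k(\bx)]$, which is precisely what identifies $\bC_k$ as the Hermite projection of $f$ and makes the expansion self-consistent.
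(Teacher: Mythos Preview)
Your proof is correct; the paper does not actually prove this lemma, treating it as a standard fact, so your direct orthogonality argument fills in exactly what is needed. Your care with the normalized versus unnormalized convention is warranted (the paper's Definition in Appendix~\ref{appendix_hermite} carries a $1/\sqrt{k!}$, while the expansion preceding Lemma~\ref{parse} with $\bC_k=\mathbb{E}[\nabla^k f]$ and prefactor $1/k!$ is consistent only with the unnormalized $He_k$), and you resolved it the right way.
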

As an immediate consequence of Lemma \ref{parse} we have $\|\bC_k\|_F^2 \le k!$.
Note that $f(\bx)$ also has the following tensor expansion 
\begin{equation}
f(\bx)=\sum_{k\le p}\langle \bT_k,\bx^{\otimes k}\rangle 
\end{equation}
The following lemma shows $\|\bT_k\|_F\lesssim r^{\frac{p-k}{4}}\lesssim 1$.


\begin{lemma}
\label{lem:tensor expansion}
There exist $\bT_0,\dots,\bT_p$ such that
\begin{equation}
f(\bx)=\sum_{k\le p}\langle \bT_k,\bx^{\otimes k}\rangle
\end{equation}
and $\|\bT_k\|_F\lesssim r^{\frac{p-k}{4}}\lesssim 1$ for $k\le p$.
\end{lemma}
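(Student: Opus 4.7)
The plan is to start from the Hermite expansion of $f^{*}$ stated just above Lemma~\ref{parse}, namely $f^{*}(\bx)=\sum_{k=0}^{p}\tfrac{1}{k!}\langle \bC_k, \mathrm{He}_k(\bx)\rangle$, and convert it into the monomial basis. Because $f^{*}(\bx)=g^{*}(\bU\bx)$ with $\bU\in\mathbb{R}^{r\times d}$ having orthonormal rows, the tensors $\bC_k=\mathbb{E}[\nabla^{k}f^{*}(\bx)]=(\bU^{\sT})^{\otimes k}\,\mathbb{E}[\nabla^{k}g^{*}(\bU\bx)]$ are all supported on the $r$-dimensional subspace $S^{*}$ spanned by the rows of $\bU$; equivalently, $\bC_k=(\boldsymbol{\Pi}^{*})^{\otimes k}\bC_k$. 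Parseval's identity (Lemma~\ref{parse}) combined with Assumption~\ref{assumption_link_first} then gives $\|\bC_k\|_F^{2}\lesssim k!$.

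Next I will apply the standard Hermite-to-monomial identity
$$\mathrm{He}_k(\bx) \;=\; \sum_{j=0}^{\lfloor k/2 \rfloor}\frac{(-1)^{j}\,k!}{j!\,(k-2j)!\,2^{j}}\,\mathrm{Sym}\!\left(\bx^{\otimes(k-2j)}\otimes \bI_d^{\otimes j}\right),$$
substitute it into the Hermite expansion of $f^{*}$, and regroup terms by the degree $m=k-2j$ of $\bx$. This directly yields the desired representation $f^{*}(\bx)=\sum_{m=0}^{p}\langle \bT_m,\bx^{\otimes m}\rangle$ with
$$\bT_m \;=\; \sum_{\substack{m\le k\le p\\ k-m\,\mathrm{even}}}\frac{(-1)^{(k-m)/2}}{((k-m)/2)!\,m!\,2^{(k-m)/2}}\,\mathrm{Tr}^{(k-m)/2}(\bC_k),$$
where $\mathrm{Tr}^{j}$ denotes $j$ pairwise contractions of slots of $\bC_k$ against the $d\times d$ identity $\bI_d$.

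It then remains to bound the Frobenius norm of each trace contraction. Since $\bC_k$ lies entirely in $S^{*}$, contracting a pair of its slots against $\bI_d$ is equivalent to contracting them against $\boldsymbol{\Pi}^{*}\bI_d\boldsymbol{\Pi}^{*}$, which acts as the $r\times r$ identity on $S^{*}$. A one-line Cauchy--Schwarz on the two traced indices then gives $\|\mathrm{Tr}(\bC)\|_F\le\sqrt{r}\,\|\bC\|_F$ for any symmetric tensor $\bC$ supported on $S^{*}$. Iterating $j$ times yields $\|\mathrm{Tr}^{j}(\bC_k)\|_F\le r^{j/2}\|\bC_k\|_F \lesssim r^{j/2}\sqrt{k!}$. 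Plugging this into the formula for $\bT_m$ and using that $p$ and $r$ are absolute constants, the dominant contribution comes from $k\in\{p,p-1\}$ (whichever has the correct parity), yielding $\|\bT_m\|_F\lesssim r^{(p-m)/4}\lesssim 1$ as claimed.

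The argument is entirely algebraic and I anticipate no substantive obstacle; the only step that requires any care is ensuring that the trace contraction benefits from the rank-$r$ support of $\bC_k$ (producing the $\sqrt{r}$ factor) rather than paying the ambient $\sqrt{d}$ factor, which is precisely why the observation $\bC_k=(\boldsymbol{\Pi}^{*})^{\otimes k}\bC_k$ is the essential input. Everything else is a bookkeeping exercise that could also be done purely in $r$-dimensional coordinates by writing $g^{*}(\bz)=\sum_{k}\langle \bS_k,\bz^{\otimes k}\rangle$ and using $\bT_k=(\bU^{\sT})^{\otimes k}\bS_k$, which preserves Frobenius norm since $\bU$ has orthonormal rows.
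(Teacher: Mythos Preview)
Your proposal is correct and follows essentially the same route as the paper: both expand each Hermite tensor in the monomial basis (equivalently, read off $\bT_m=\nabla^m f(0)/m!$), arrive at the identical formula $\bT_m=\sum_{2j\le p-m}\frac{(-1)^j}{m!\,j!\,2^{j}}\,\bC_{m+2j}(\bI^{\otimes j})$, and then bound each contraction using $\|\bC_k\|_F^2\le k!$ together with the rank-$r$ support of $\bC_k$ to get the $r^{j/2}$ factor per $j$ traces. Your write-up is in fact more explicit than the paper's on why the contraction costs $\sqrt{r}$ rather than $\sqrt{d}$, which is exactly the point the paper leaves implicit.
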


\begin{proof}
Note that from the Taylor expansion of $f(\bx)$ we have
\begin{equation}
\bT_k=\frac{\nabla^k f(0)}{k!}
     =\sum_{j\le p-k}\frac{\bC_{j+k}(He_j(0))}{k!j!}
     =\sum_{2j\le p-k}\frac{(-1)^j(2j-1)!!\bC_{2j+k}(I^{\otimes j})}{k!(2j)!}.
\end{equation}
We recall from the consequence of Lemma \ref{parse} that $\|\bC_k\|_F^2 \le k!$ and therefore, 
\begin{equation}
\|\bT_k\|_F\lesssim\sum_{2j\le p-k}\|\bC_{2j+k}(I^{\otimes j})\|
\lesssim r^{\frac{p-k}{4}}\lesssim 1.
\end{equation}
\end{proof}

We shall use the next Gaussian hypercontractivity lemma from Theorem 4.3, \cite{Prato2007WickPI}.

\begin{lemma}\label{lemma: gaussian hypercontractivity}
For any $\ell \in \mathbb{N}$ and $f \in L^2(\gamma)$ to be a degree $\ell$ polynomial where the input distribution $\gamma=\normal(0,\bI_d)$, for any $q \geq 2$, we have
\begin{equation}
\mathbb{E}_{\bz \sim \gamma}\left[f(\bz)^q\right] \le C_{q,\ell}\left(\mathbb{E}_{\bz \sim \gamma}\left[f(\bz)^2\right]\right)^{q / 2}
\end{equation}
where we use $C_{q,\ell}$ to denote some universal positive constant that only depends on $q,\ell$.
\end{lemma}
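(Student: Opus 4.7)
The plan is to reduce the multivariate statement to Nelson's hypercontractivity for the Ornstein--Uhlenbeck semigroup, applied component-wise to the Wiener chaos decomposition of $f$. Concretely, I would first decompose $f$ using the Hermite tensor expansion
\begin{equation}
f(\bz) = \sum_{k=0}^{\ell} f_k(\bz), \qquad f_k(\bz) := \tfrac{1}{k!}\langle \bC_k, \mathrm{He}_k(\bz)\rangle,
\end{equation}
so that the $f_k$ live in mutually orthogonal Wiener chaoses, giving $\mathbb{E}[f^2] = \sum_{k=0}^{\ell}\mathbb{E}[f_k^2]$ by Parseval. Here the truncation at degree $\ell$ is the crucial structural input: a degree-$\ell$ polynomial has no chaos components of order exceeding $\ell$.

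Next I would introduce the OU semigroup $T_\rho g(\bz) := \mathbb{E}_{\bz'\sim\normal(0,\bI_d)}[g(\rho\bz+\sqrt{1-\rho^2}\bz')]$ and invoke the classical fact that each Wiener chaos is an eigenspace, $T_\rho f_k = \rho^k f_k$. Nelson's hypercontractivity theorem (this is where I cite Prato's book as the paper does) asserts that $T_\rho$ maps $L^2(\gamma)$ into $L^q(\gamma)$ with norm $1$ whenever $\rho \le 1/\sqrt{q-1}$. Choosing $\rho = 1/\sqrt{q-1}$ and applying this to $f_k$ yields
\begin{equation}
\rho^{k}\|f_k\|_{L^q} = \|T_\rho f_k\|_{L^q} \le \|f_k\|_{L^2}, \qquad \text{hence} \qquad \|f_k\|_{L^q} \le (q-1)^{k/2}\|f_k\|_{L^2}.
\end{equation}

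Finally I would assemble the pieces by triangle inequality followed by Cauchy--Schwarz across the $\ell+1$ chaos levels:
\begin{equation}
\|f\|_{L^q} \le \sum_{k=0}^{\ell}\|f_k\|_{L^q} \le \sum_{k=0}^{\ell}(q-1)^{k/2}\|f_k\|_{L^2} \le \sqrt{\ell+1}\,(q-1)^{\ell/2}\,\|f\|_{L^2}.
\end{equation}
Raising both sides to the $q$-th power produces the stated bound with explicit constant $C_{q,\ell} = [(\ell+1)(q-1)^\ell]^{q/2}$, which depends only on $q$ and $\ell$ as required.

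There is no real obstacle in this argument since every ingredient is classical; the only subtlety is the choice to apply hypercontractivity chaos-by-chaos rather than to $f$ directly, which is what converts the $L^2\to L^q$ contraction (available only for $\rho\le 1/\sqrt{q-1}<1$) into a two-sided norm comparison on the finite-dimensional space of polynomials of degree at most $\ell$. If one preferred to avoid citing Nelson's theorem, an alternative route would be to prove the inequality directly for Hermite polynomials of a single Gaussian variable via Gaussian integration by parts and tensorize across coordinates, but the semigroup argument above is shorter and matches the reference given.
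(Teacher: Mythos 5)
Your proof is correct, but note that the paper does not actually prove this lemma at all: it states it as a known result and cites Theorem~4.3 of \cite{Prato2007WickPI} (just as it does for the spherical version in the lemma immediately after). So there is no ``paper proof'' to compare against; you are supplying an argument where the paper supplies a reference, and the argument you give is the standard semigroup proof of Gaussian hypercontractivity for bounded-degree polynomials. All the steps check out: the chaos decomposition truncates at level $\ell$ by the degree bound, each chaos is an eigenspace of $T_\rho$ with eigenvalue $\rho^k$, Nelson's theorem gives $\|f_k\|_{L^q}\le (q-1)^{k/2}\|f_k\|_{L^2}$, and the triangle inequality plus Cauchy--Schwarz across the $\ell+1$ levels assembles the claim with the explicit constant $C_{q,\ell}=[(\ell+1)(q-1)^{\ell}]^{q/2}$. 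One minor remark for tidiness: since the $f_k$ are orthogonal and $T_\rho$ acts diagonally, you can instead set $g:=\sum_{k\le\ell}\rho^{-k}f_k$ so that $f=T_\rho g$; then one application of Nelson's theorem gives
\begin{equation}
\|f\|_{L^q}=\|T_\rho g\|_{L^q}\le \|g\|_{L^2}=\Big(\textstyle\sum_{k\le\ell}\rho^{-2k}\|f_k\|_{L^2}^2\Big)^{1/2}\le \rho^{-\ell}\|f\|_{L^2}=(q-1)^{\ell/2}\|f\|_{L^2},
\end{equation}
which drops the $\sqrt{\ell+1}$ and avoids the triangle inequality entirely, yielding $C_{q,\ell}=(q-1)^{q\ell/2}$. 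Also, strictly speaking the stated inequality should read $\mathbb{E}[|f(\bz)|^q]$ for non-integer $q$; your argument bounds $\|f\|_{L^q}^q$ and thus proves the intended absolute-value form.
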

We also need this Spherical hypercontractivity lemma from \cite{beckner1992sobolev,odonnell2014analysis}.

\begin{lemma}\label{lem:spherical-hc}
Let $Y_m$ be a spherical harmonic of degree $m$ on $\mathbb{S}^{d-1}$.  
Then for all $1<p\le q<\infty$ we have
\begin{equation}
\|Y_m\|_{L^q(\mathbb{S}^{d-1})}
\le
\Big(\frac{q-1}{p-1}\Big)^{m/2}
\|Y_m\|_{L^p(\mathbb{S}^{d-1})}.
\end{equation}
In particular, for any polynomial $P$ of degree at most $m$ and $q\geq 2$ we have
$
\|P\|_{L^q(\mathbb{S}^{d-1})}
\le (q-1)^{\frac{m}{2}}\|P\|_{L^2(\mathbb{S}^{d-1})}$.
\end{lemma}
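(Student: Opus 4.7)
\textbf{Proof plan for Lemma~\ref{lem:spherical-hc}.}

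My plan is to reduce both claims to a single hypercontractivity statement: the Beckner--Poisson operator $U_\rho$ on $L^2(\mathbb{S}^{d-1})$, defined on spherical-harmonic components by $U_\rho Y_k := \rho^k Y_k$ and extended by linearity, is a contraction from $L^p(\mathbb{S}^{d-1})$ to $L^q(\mathbb{S}^{d-1})$ whenever $\rho^2 \le (p-1)/(q-1)$. I will obtain this by lifting to Gaussian space and invoking Nelson's hypercontractivity $\|T_t f\|_{L^q(\gamma)} \le \|f\|_{L^p(\gamma)}$ for $e^{-2t} \le (p-1)/(q-1)$, using the fact that the harmonic extension $P_m(x) := \|x\|^m Y_m(x/\|x\|)$ of a spherical harmonic lies in the $m$-th Wiener chaos and hence satisfies $T_t P_m = e^{-mt} P_m$. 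This Wiener-chaos membership follows because the symmetric tensor $T$ representing $P_m$ is trace-free, so every contraction of $T$ with the identity vanishes, giving $\langle T, \mathrm{He}_m(x)\rangle = \langle T, x^{\otimes m}\rangle = P_m(x)$. I expect this structural identification to be the most delicate step.

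Given the eigenfunction identity, the first assertion follows via polar decomposition. Writing a standard Gaussian vector as $x = R\theta$ with $R = \|x\|$ and $\theta \sim \operatorname{Unif}(\mathbb{S}^{d-1})$ independent, homogeneity of $P_m$ gives $\|P_m\|_{L^s(\gamma)} = \|R^m\|_{L^s}\,\|Y_m\|_{L^s(\mathbb{S}^{d-1})}$ for all $s \in [1,\infty)$. Plugging $P_m$ into Nelson's inequality at the sharp value $e^{-2t} = (p-1)/(q-1)$ and dividing by $e^{-mt}\|R^m\|_{L^q}$ yields
\begin{equation*}
\|Y_m\|_{L^q(\mathbb{S}^{d-1})} \;\le\; \Big(\tfrac{q-1}{p-1}\Big)^{m/2}\cdot\frac{\|R^m\|_{L^p}}{\|R^m\|_{L^q}}\,\|Y_m\|_{L^p(\mathbb{S}^{d-1})}.
\end{equation*}
Since $R^m \ge 0$ and probability-measure $L^p$ norms are nondecreasing in $p$, the ratio is at most $1$, proving the first claim.

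For the ``in particular'' statement, I will use the operator-norm form of the Beckner bound at $p=2$, namely that $U_\rho : L^2(\mathbb{S}^{d-1}) \to L^q(\mathbb{S}^{d-1})$ is contractive for $\rho^2 = 1/(q-1)$; this full operator-norm form (as opposed to the single-harmonic bound of the previous paragraph) is the content of the Beckner inequality and is the output of the Gaussian lifting argument once all Wiener chaoses are permitted simultaneously. Given a polynomial $P$ of degree at most $m$, decompose its restriction to the sphere as $P = \sum_{k=0}^m Y_k$ and set $\tilde P := \sum_{k=0}^m \rho^{-k} Y_k$ with $\rho = (q-1)^{-1/2}$. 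Then $U_\rho \tilde P = P$, so by the contraction bound and Parseval,
\begin{equation*}
\|P\|_{L^q(\mathbb{S}^{d-1})} = \|U_\rho \tilde P\|_{L^q} \;\le\; \|\tilde P\|_{L^2} = \Big(\sum_{k=0}^m \rho^{-2k}\|Y_k\|_{L^2}^2\Big)^{1/2} \;\le\; \rho^{-m}\,\|P\|_{L^2(\mathbb{S}^{d-1})},
\end{equation*}
and $\rho^{-m} = (q-1)^{m/2}$. The main obstacle across the argument is upgrading the single-harmonic inequality to the operator-norm form of the Beckner inequality (so that superpositions of spherical harmonics of different degrees do not degrade the constant); once that is in hand the remaining steps reduce to routine polar decomposition, monotone rearrangement of radial moments, and Parseval bookkeeping.
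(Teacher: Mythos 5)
The paper does not prove Lemma~\ref{lem:spherical-hc}; it cites it from Beckner and from O'Donnell's book. Your proposal therefore does more than the paper: you supply an actual derivation. The first half of your argument --- the single-harmonic inequality --- is correct and complete. The key structural fact, that the harmonic extension $P_m(x)=\|x\|^m Y_m(x/\|x\|)$ is a trace-free symmetric form and hence $\langle T,\mathrm{He}_m(x)\rangle=\langle T,x^{\otimes m}\rangle=P_m(x)$, correctly places $P_m$ in the $m$-th Wiener chaos, so $T_tP_m=e^{-mt}P_m$. The polar factorization $\|P_m\|_{L^s(\gamma)}=\|R^m\|_{L^s}\,\|Y_m\|_{L^s(\mathbb{S}^{d-1})}$ is valid by independence of $R$ and $\theta$ and homogeneity, and the radial ratio $\|R^m\|_{L^p}/\|R^m\|_{L^q}\le 1$ for $p\le q$ drops out. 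Plugging Nelson's inequality at $e^{-2t}=(p-1)/(q-1)$ gives exactly the stated constant.

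The gap is in your claim that the operator-norm Beckner bound $\|U_\rho\|_{L^p(\mathbb{S}^{d-1})\to L^q(\mathbb{S}^{d-1})}\le 1$ ``is the output of the Gaussian lifting argument once all Wiener chaoses are permitted simultaneously.'' It is not. The lifting you use for the single-harmonic case relies on two facts that both fail for superpositions: first, the harmonic extension of a general polynomial $P=\sum_{k\le m}Y_k$ is $\hat P(x)=\sum_k R^k Y_k(\theta)$, and $\hat P$ restricted back to the sphere equals $P$, but in $L^q(\gamma)$ the radial factors $R^k$ differ across $k$, so $\mathbb{E}\bigl[|T_t\hat P(x)|^q\bigr]=\mathbb{E}\bigl[|\sum_k e^{-kt}R^kY_k(\theta)|^q\bigr]$ does not factor as (radial moment)$\times\|U_\rho P\|_{L^q(\mathbb{S}^{d-1})}^q$; second, the homogenized substitute $\|x\|^m P(x/\|x\|)=\sum_k R^{m-k}P_k(x)$ is only a polynomial when $m-k$ is even for every $k$ in the decomposition, so even the chaos identification breaks. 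Consequently your Parseval step for the ``in particular'' clause is resting on Beckner's theorem as a black box, exactly as the paper itself does by citation. That is a legitimate route --- the paper does the same --- but you should state it as a citation to Beckner (or to the noncommutative/two-point iteration proof in O'Donnell) rather than as a consequence of your Gaussian lift. Alternatively, if the sharp constant $(q-1)^{m/2}$ is not needed (and in the paper it is only used inside Corollary~\ref{cor:high-prob} up to an unspecified absolute constant), a triangle-inequality plus Cauchy--Schwarz argument from your single-harmonic bound yields $\|P\|_{L^q}\le C_q(q-1)^{m/2}\|P\|_{L^2}$ without invoking Beckner at all, which would make your proof fully self-contained at the price of an inessential constant.
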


\end{document}